\pdfoutput=1

\documentclass[11pt,a4paper]{article}

\usepackage[colorlinks=true,linkcolor=blue]{hyperref}
\usepackage{cmap}
\usepackage[utf8]{inputenc}
\usepackage[T1]{fontenc}
\usepackage[english]{babel}
\usepackage{amssymb,amsmath}
\usepackage{amsthm}
\usepackage{bbm}
\usepackage{microtype}
\usepackage{lmodern}

{
\rmfamily
\DeclareFontShape{T1}{lmr}{m}{sc}{<->ssub*cmr/m/sc}{}
\DeclareFontShape{T1}{lmr}{b}{sc}{<->ssub*cmr/b/sc}{}
\DeclareFontShape{T1}{lmr}{bx}{sc}{<->ssub*cmr/bx/sc}{}
}

%
%


\newcommand{\thmheadercommand}[1]{\textbf{\scshape{}#1.\\*}}

\newtheoremstyle{yannthm}{\topsep}{\topsep}{\slshape}{}{\scshape\bfseries}{.}{.5em}{%
\thmname{#1}\thmnumber{ #2}\thmnote{#3}%
}
\newtheoremstyle{yannthm2}{\topsep}{\topsep}{}{}{\scshape\bfseries}{.}{.5em}{%
\thmname{#1}\thmnumber{ #2}\thmnote{#3}%
}

\def\d{\operatorname{d}\!{}}

\def\Z{{\mathbb{Z}}}

\def\R{{\mathbb{R}}}

\renewcommand{\geq}{\geqslant}
\renewcommand{\leq}{\leqslant}

\renewcommand{\emptyset}{\varnothing}

\newcommand{\deq}{\mathrel{\mathop{:}}=}

\newcommand{\from}{\colon} 

\def\eps{\varepsilon}
\renewcommand{\epsilon}{\varepsilon}
\renewcommand{\phi}{\varphi}

\let\oldPr\Pr
\renewcommand{\Pr}{\oldPr\nolimits}
\newcommand{\E}{\mathbb{E}}

\DeclareMathOperator{\Ker}{Ker}
\DeclareMathOperator{\Img}{Im}
\DeclareMathOperator{\Tr}{Tr}

\DeclareMathOperator{\Id}{Id}

\DeclareMathOperator{\diag}{diag}

\newcommand{\abs}[1]{\left\lvert#1\right\rvert}
\newcommand{\norm}[1]{\left\lVert#1\right\rVert}

\newcommand{\1}{\mathbbm{1}}

{
\theoremstyle{yannthm}
\newtheorem{defi}{Definition}
\newtheorem*{defi*}{Definition}
\newtheorem{prop}[defi]{Proposition}
\newtheorem*{prop*}{Proposition}
\newtheorem{thm}[defi]{Theorem}
\newtheorem*{thm*}{Theorem}
\newtheorem{lem}[defi]{Lemma}
\newtheorem*{lem*}{Lemma}

\newtheorem*{cor*}{Corollary}

\newtheorem*{ex*}{Example}

\newtheorem*{subenonce}{}

\theoremstyle{yannthm2}

\newtheorem*{exo*}{Exercise}

\newtheorem{rem}[defi]{Remark}
\newtheorem*{rem*}{Remark}

\newtheorem*{subenonce2}{}

}

\newcommand{\transp}[1]{#1^{\!\top}\!}

\usepackage{natbib}

\usepackage{xcolor}
\usepackage[pdftex]{graphicx}
\usepackage{wrapfig}
\usepackage{caption}
\usepackage{subcaption}

\newcommand{\todo}[1]{} 
\newcommand{\option}[1]{#1} 

\def\del{\operatorname{\delta}\hspace{-.4ex}{}} 

\newcommand{\tar}{\mathrm{tar}}

\newcommand{\starget}{s_{\mathrm{tar}}}
\newcommand{\Prob}{\mathbb{P}}
\newcommand{\tildem}{\tilde m}
\newcommand{\tildeq}{\tilde q}

\newcommand{\diagrho}{\grave \rho}

\newcommand{\rhosimple}{\rho_{\mathrm{simple}}}
\newcommand{\rhoref}{\rho_{\mathrm{ref}}}
\newcommand{\rhoSA}{\rho_{\mathrm{SA}}}

\title{Learning Successor States and Goal-Dependent Values: A Mathematical Viewpoint}

%

\author{%
Léonard Blier, Corentin Tallec, Yann Ollivier
}

\begin{document}

\maketitle

\begin{abstract}
In reinforcement learning, temporal difference-based algorithms can be
sample-inefficient: for instance, with sparse rewards, no learning occurs
until a reward is observed. This can be remedied by
learning richer objects, such as a model of the environment, or
\emph{successor states}. Successor states model the expected future state
occupancy from any given state \citep{dayan1993improving,kulkarni2016deep}, and summarize all paths in
the environment for a given policy. They are related to
\emph{goal-dependent value functions}, which learn how to reach arbitrary
states.

We formally derive the temporal difference algorithm for successor state
and goal-dependent value function learning, either for discrete or for
continuous environments with function approximation. Especially, we
provide finite-variance estimators even in continuous environments,
where the reward for exactly reaching a goal state becomes infinitely
sparse.

Successor states satisfy more than just the Bellman equation: a
\emph{backward} Bellman operator and a \emph{Bellman--Newton} (BN) operator
encode path compositionality in the environment.  The BN
operator is akin to second-order gradient descent methods, and provides
the ``true'' update of the value function when acquiring more
observations from the environment, with explicit tabular bounds.  In the tabular case and with
infinitesimal learning rates, mixing the usual and backward Bellman
operators provably improves eigenvalues for asymptotic convergence, and
the asymptotic convergence of the BN operator is provably
better than TD, with a rate independent from
the environment. However, the BN method is more complex and
less robust to sampling noise.

Finally, a \emph{forward-backward} (FB) finite-rank parameterization of
successor states enjoys reduced variance and improved samplability,
provides a direct model of the value function, has fully understood fixed
points corresponding to long-range dependencies (but ignores small-scale
dependencies), approximates the BN method, and provides two canonical
representations of states as a byproduct.
\end{abstract}

\todo{
Mention that $L^1$ or $L^2$ error on $M$ is expected error on the value
function for a random goal state.
\\
NOT FOR NOW:\\
Check param $M=m\rho$ is included everywhere
\\
trace formulas
\\
Diracs: is it true it's better to take $s_2=s$ in the discrete case? cf
sampling the Id term helps\\
target network argument?\\
continuity L2/L1 norm\\
Gradient formulas\\
continuous-time convergence analysis with $\rho$?\\ + linear case?
\\
$FMB$ in section FB vs BN
\\Notation $g$ for $s_2$? hard to do consistently
\\Small-rank model of $P(s,\d s')=\transp{\phi(s)}\psi(s')\rho(\d s')$:
then \\$M(s_1,s_2)=\Id+\gamma \transp{\phi(s_1)} (\Id-\gamma \E_{s\sim \rho}
\psi(s)\transp{\phi(s)})^{-1}\psi(s_2)\rho(\d s_2)$. Interesting? amounts
to learning features where $P$ is linear, and using the exact $M$ of that
model. The inner term $\Id-\gamma \E_{s\sim \rho}
\psi(s)\transp{\phi(s)}$ is a finite-sized matrix that can be estimated
and inverted explicitly. Nothing to learn for $M$, but learning $P$
becomes a density estimation problem. Check literature, this model has
been considered before.
}

\tableofcontents

\section{Introduction, Overview of Results}
\label{sec:introduction}


The \emph{successor state operator} of a Markov reward process is an
object that directly encodes the passage from a reward function to the
corresponding value function. In particular, it expresses the value
functions of all possible reward functions for a given, fixed policy.

\emph{Goal-dependent value functions} are a related object with many
similar properties.  They describe the optimal value functions and
policies for a specific set of tasks: typically, for all rewards located at all
possible target states. In this case, the policy depends on the target
state.

Here we offer a formal treatment of these objects in both finite and
continuous spaces. We present several learning algorithms and associated
results. In particular, we focus on proper treatment of the infinitely-sparse
reward problem encountered by TD-style approaches in continuous spaces if
the reward is located at a precise state.

Possible advantages of working with these objects include:
\begin{itemize}
\item Contrary to TD, learning starts even before any rewards are
observed. Sucessor
state learning extracts information from every observed transition, by
learning how to reach every visited state.
Subsequent reward observations provide an instantaneous
update to the value function via the successor state operator.

This learning is done without reward signals, illustrating an
``unsupervised reinforcement learning'' approach. 
Successor state lie in between model-free and model-based reinforcement
learning approaches, providing a 
representation of the future of a state without having to synthesize
future states or unrolling synthetic trajectories. Algorithmically, they
rely on having two states as inputs rather than generating a state.

\item Successor states and goal-dependent values exploit relationships
between how to reach different states. With function approximation,
generalization occurs between different target states. But even in a
tabular setting with no generalization, these objects satisfy more
algebraic relations than the usual Bellman equation: a \emph{backward}
Bellman equation and a \emph{Bellman--Newton} equation, expressing path
compositionality in the Markov process (Fig~\ref{fig:combining_paths}).
This leads to quantifiable asymptotic gains.

\item
Successor states and goal-dependent values can be used to solve
several problems at once, such as learning to
reach arbitrary states.
Even for optimizing a single reward, 
they can be used for auxiliary tasks such as going to an arbitrary
state, which could be useful for exploration, or to provide good state
representations.
\end{itemize}

For learning value functions dependent on goal states, an obvious
approach is to apply any standard reinforcement learning algorithm, with
reward $1$ when the visited state is equal to the goal state (e.g.,
\citep{pmlr-v37-schaul15}). But this breaks down in continuous spaces, as
the reward function becomes infinitely sparse (a random trajectory is
never going to reach any predefined goal exactly). Even in discrete
spaces, the reward becomes exponentially sparse as the number of
components increase.

This problem is avoided by a suitable mathematical treatment.
The intuition behind several of our results is the following: If the goal
is to learn how to reach arbitrary states, then this is not a sparse
reward problem, although straightforward TD implementations treat it as
such; it is a problem with rewards everywhere. Approaches such as
\emph{hindsight experience replay} \citep{HER} attempt to exploit this
intuition by resampling goals a posteriori in an off-policy algorithm, but
it is unclear to us how much of the problem HER solves in continuous
spaces. 
The mathematical treatment here proves that finite-variance algorithms
exist for such problems, even in continuous spaces. 


\paragraph{Overview of results.}
In a nutshell, successor states summarize all possible paths in the
environment for a given policy (Section~\ref{sec:paths}).  For finite
spaces, the entries $M_{ss'}$ of the successor state matrix describe the
expected discounted time spent in state $s'$ by a trajectory starting at
$s$ \citep{dayan1993improving}: $M_{ss'}=\E\left[\sum_{t\geq 0}\gamma^t
\1_{s_t=s'}|s_0=s\right]$. The entry $M_{ss'}$ is also the value function
at $s$ if the reward is $1$ at $s'$ and $0$ everywhere else. As such, $M$
contains the information about reaching every state in the environment,
not just those states providing a reward. For a fixed policy, the value
function depends linearly on the reward: in a finite state space, for any
reward function, represented as a vector $R$ over states, its associated
value function is $V=MR$.

\todo{keep these explanations?}

The goal-dependent value function $V_{ss'}$ at state $s$ for goal $s'$
(another state) is defined as the value function at $s$ of the optimal
policy for reaching a unit reward located at $s'$. The difference with
$M_{ss'}$ is now that the policy depends on $s'$ instead of being fixed.
Learning this object allows for learning how to reach different goal
states. Contrary to $M_{ss'}$, $V_{ss'}$ does not contain information on
how to optimize dense rewards (mixtures of goal states), only rewards
located at a single state. It is also possible to define $V$ for more
general types of goals rather just a target state, although the goals
must be predefined and mixtures of goals are not possible a posteriori.

The bulk of the text presents theoretically well-motivated algorithms to
learn these objects
directly for any two states $s,s'$.
The main contributions of this text are the following.

\begin{itemize}

\item We formally define successor states and goal-dependent value (and
$Q$) functions in general state spaces (Sections~\ref{sec:defM} and
\ref{sec:goals}), extending the discrete case of
\citep{dayan1993improving}. For continuous states, this involves some
measure theory (Section~\ref{sec:defM_cont}), but the intuition is clear from the discrete case
(Section~\ref{sec:defM_discrete}). 

Successor states are always well-defined for a given policy
(Theorem~\ref{thm:Mdef}). But goal-dependent value functions are
generally not unique in continuous spaces
(Section~\ref{sec:Qoptexistence}); still, there exists a canonical
solution (Theorem~\ref{thm:optQ}), smaller than all others. 


\item We formally derive the 
temporal difference (TD) algorithm for successor state learning, both for
discrete spaces, and for continuous spaces with function approximation
(Theorem~\ref{thm:paramtd}), beyond the tabular setting
of \citep{dayan1993improving}.
A naive application of TD on a state-goal product space, with reward $1$
when the state reaches the goal, degenerates in continuous spaces: the
reward becomes infinitely sparse (it is $0$ with probability $1$ and
$\infty$ with probability $0$). Instead, the TD estimators we
provide have finite variance
(Section~\ref{sec:sparse}, Proposition~\ref{prop:stategoalequiv_param}).

Known convergence results for TD extend to this setting:
tabular case with any sampling policy, linear
parameterization on-policy, arbitrary function approximation assuming
reversibility of the Markov process (Section~\ref{sec:conv}). 

Likewise, we formally derive the TD algorithm for goal-dependent $Q$ and
$V$ functions (Section~\ref{sec:goals}), with finite-variance estimators
even in continuous spaces. The goals may be target states, or target
values for some vector-valued function of the states
(Section~\ref{sec:goaldependentV}).

\item Algorithmically, successor states and goal-dependent values are
represented by function approximators depending on two states (the
current state and a goal state) instead of one. TD learning
works in a black-box environment by sampling from a set of observed
transitions $s \to s'$ between states, and sampling goal states
(typically from the same distribution). No reward signal is needed.

Most variants of TD still apply: $V$ or $Q$ learning, target networks,
multi-step returns...  (Appendix~\ref{sec:basictd}). Notably,
Appendix~\ref{sec:relative} describes \emph{relative TD} to deal directly
with a decay factor $\gamma=1$ and to reduce variance for $\gamma$ close
to $1$.

Successor states and goal-dependent values can be used to learn an
optimal policy
for a particular reward, or to learn goal-dependent policies.
Many different options are described
in Section~\ref{sec:V}, such as $Q$-learning or  policy gradient, with
several ways to learn the value function from successor states.

\item 
Successor states satisfy more than one Bellman equation:
we 
introduce \emph{backward TD} for
successor states (Section~\ref{sec:backw-bellman},
Theorem~\ref{thm:right-bellman}), and the corresponding
parametric update (Theorem~\ref{thm:paramtd_right},
Appendix~\ref{sec:backwardTD_param}).

Successor states encode all paths in the Markov process for a fixed
policy (Section~\ref{sec:paths}).
The usual (forward) Bellman equation $M=\Id+\gamma PM$ adds a
newly observed transition at the front of all known paths, while the backward
Bellman equation $M=\Id+\gamma MP$ extends known paths by adding newly observed
transitions at the back. This backward equation exists for successor
states but not goal-dependent value functions. In the tabular setting
and with small learning rates, combining forward and backward TD turns
out to improve the eigenvalues of the learning process
(Section~\ref{sec:mixingfbtd}).

\item We introduce ``second-order'' methods for learning successor
states, which are to TD what Newton-type methods are to first-order
gradient descent (Section~\ref{sec:BN}). 
In addition to the usual (forward) and the backward Bellman equations,
there is a third
Bellman equation satisfied by $M$, which leads to the
\emph{Bellman--Newton operator} $M\gets 2M-M^2+\gamma MPM$
(Section~\ref{sec:BNop}). It 
also enjoys a path interpretation,
learning by path concatenation and doubling the length of known
paths (Proposition~\ref{prop:doublingpaths}). The forward and backward
Bellman operators only increase the length of known paths by $1$
(Appendix~\ref{sec:bellman-newton}).

Asymptotically and in the small learning rate limit, the Bellman--Newton
operator converges provably faster than TD
(Section~\ref{sec:BNconttime}), with an asymptotic rate \emph{independent
of the environment and policy}. However, in practice this method is less resistant to sample
noise: smaller learning rates are necessary, so the comparison with TD
is less clear. There is also a parametric version of the Bellman--Newton
operator (Theorem~\ref{thm:paramBN}), but it is numerically fickle.

We also study the estimation of $M$ by direct inversion of $\Id-\gamma
\hat P$ using an empirical estimate $\hat P$ of the transition matrix $P$
in a finite state space. The resulting update of $M$ when adding each new
observation is the same as a Bellman--Newton update with learning rate
$1/t$ (Theorems~\ref{thm:onlineM} and~\ref{thm:onlineM_exp}). In finite
spaces, we provide
an explicit non-asymptotic bound for the convergence of $M$ and the value
function $V$ based on these empirical estimates
(Theorem~\ref{thm:convergence}).

\item Representing the successor state operator as a dot product
$\transp{F(s_1)}B(s_2)$ between features of the starting state $s_1$ and
the target state $s_2$ has many nice properties (Section~\ref{sec:FB}).
Here, the ``forward'' and ``backward'' feature functions $F$ and $B$ are
both learned to approximate $M$: this may have independent interest
for representation learning.

First, this method provides a direct representation of the value function without
additional learning (Eq.~\ref{eq:VFB}). 

Second, when learning 
$F$ and $B$ by any of the
algorithms above, in expectation the updates factorize between $s_1$ and
$s_2$ (Proposition~\ref{prop:FBupdates}). This allows for variance
reduction, and for purely trajectory-wise algorithms which only use the
currently observed transition $s\to s'$ without sampling an additional
target state $s_2$ (Section~\ref{sec:FBeq}), in contrast to the general
form of TD for $M$.

Third, this representation keeps some properties of the Bellman--Newton
method without its shortcomings; they actually coincide when the
transition matrix of the process is symmetric
(Theorem~\ref{thm:FBandBN}).

Finally, the fixed points of TD for this representation can be fully
characterized in the tabular and overparameterized cases
(Propositions~\ref{prop:fbFB}--\ref{prop:bfFB} in Appendix~\ref{app:FB},
and Section~\ref{sec:FBeq}). They are related to eigenspaces of the
transition matrix $P$. Notably, in the tabular or overparameterized case,
if forward TD is used to learn $F$ and backward TD to learn
$B$, then the fixed points are exactly local minimizers of the error
between the $\transp{F}B$ model and the true successor state operator
(Proposition~\ref{prop:fbFB}). In contrast, for ordinary TD on the value
function and a linear model, the fixed points are not minimizers of the
error to the true value function.

\end{itemize}

%

\paragraph{Some related work on successor states.}
\todo{move to relevant places?}
The successor state operator is linked to various existing objects under
various names (fundamental matrix, occupation matrix, successor
representations, successor features...).
Successor states have even been identified
in the neurosciences
\citep{stachenfeld2017hippocampus}.

For discount factor $\gamma=1$, the successor
matrix $M$ is known as the
\emph{fundamental matrix} \citep{kemenysnell1960,
bremaud1999markov,grinstead1997introduction} of a Markov process
(up to subtracting the invariant measure). 
\footnote{Namely, in Markov chain theory, the
fundamental matrix is defined with an additional rank-one term which
avoids all problems with $\gamma=1$ and is analogous to \emph{relative
TD}. 
The case $\gamma<1$ is
obtained from it
\citep[\S5.1.1]{bertsekasvol2_ed4}.
In this introduction, to stay closer
to RL practice, we take
$\gamma<1$ and define $M$ without this term.
The case $\gamma=1$
is treated
in Appendix~\ref{sec:relative}
(relative TD for $M$).}
The fundamental matrix encodes many properties of the Markov
chain, such as value functions (\citep{bertsekasvol2_ed4}, as we use here) or hitting times
\citep{kemenysnell1960}. 
In a reinforcement learning context, and with $\gamma<1$,
this matrix goes back at least to \citep{dayan1993improving}.

Learning successor states by temporal difference is mentioned in
\citep{dayan1993improving} for the tabular case and with linear
approximations; the parametric case has never been derived as far as we
know.

In a deep learning context,
several recent
works have used the related \emph{successor representations}
\citep{kulkarni2016deep},
e.g.,
for transfer \citep{NIPS2017_6994, borsa2018universal,8206049,
lehnert2017advantages,ma2018universal,barreto2020fast},
hierarchical RL \citep{c.2018eigenoption} or exploration
\citep{machado2019countbased}.

In particular, the Deep Successor Representation algorithm 
\citep{kulkarni2016deep} approximates successor states
by learning a state representation $\phi(s)$ together with
a successor representation $m(s)$ defined as the
expected discounted representation of future states from $s$: $m(s) =
\E\left[\sum_{t\geq0}\gamma^t \phi(s_{t})|s_{0}=s\right]$.
As $\phi = m = 0$ is
a fixed point of the method, a reconstruction loss must be used to prevent
collapse. Here we directly learn the successor states $M_{ss'}$ for every
pair of states in the
original space.

Successor states provide the value function for every goal state: this
is related to learning multiple RL tasks
\citep{sutton2011_horde,pmlr-v37-schaul15, Jin2020RewardFreeEF,
Pinto2017LearningTP} which performs joint $V$- or
$Q$-learning for a set of goals. \option{To some extent, this makes it possible
to reach or transfer to previously unseen goals \citep{pmlr-v37-schaul15}.}

Recently, \citep{van2020expected} proposed an algorithm to learn a model
of eligibility traces; we prove in Appendix~\ref{sec:traces} that the
expected eligibility traces at each state is proportional to the
transpose of the successor state matrix (``predecessor'' states).

Our second-order algorithms in Section~\ref{sec:BN} are based on an implicit process estimation approach. Process estimation
is also used in
\citep{Pananjady2019ValueFE} to obtain convergence bounds for the value
function in finite MDPs,
under a ``synchronous'' setting \option{(a transition is observed from every
state at every step)}. They
prove that process estimation is minimax-optimal for this setting.


More generally, successor state learning comes in the context of
\emph{unsupervised RL}, in which relevant features of the
environment are learned without the supervision of a reward signal. Many
works have suggested that unsupervised RL improves sample efficiency \citep{Sun2019ModelbasedRI}.
Notably, this
includes model-based methods \citep{franccois2018introduction}.
Contrary to the latter, successor state learning does not
require synthesizing accurate future states; \option{to some extent, a transition model
is implicitly learned via a function $m(s,s')$ that describes how much
$s'$ lies in the future of $s$ with the current policy.} 



\todo{Stuff on goal-dependent!!!}
\todo{proto-value-functions}

\section{Notation for Markov Reward Processes}
\label{sec:notation}
\label{sec:defin-prop}


We consider a \emph{Markov reward process} (MRP) $\mathcal{M}=\langle
\mathcal{S}, P, r, \gamma\rangle$ with state space $\mathcal{S}$
(discrete or continuous), transition probabilities $P_{ss'}$ from $s$ to
$s'$, random reward signal $r_s$ at state $s$, and discount factor $0 \leq
\gamma < 1$ \citep{sutton2018reinforcement}.
We do not assume that the state
space $\mathcal{S}$ is finite.

\newcommand{\BS}{B(\mathcal{S})}

In the finite
case, $P_{ss'}$ can be viewed as a matrix.  In the general case, for each
$s\in \mathcal{S}$, $P(s,\d s')$ is a \emph{probability measure} on $s'$ that
depends on $s$. From now on, we use the notation $P(s,\d s')$ to cover
both cases. \footnote{
Formally, we take the setting from
\citep{Hairer_markovnotes1}. The
state space $\mathcal{S}$ is assumed to be a complete, separable metric
space (\emph{Polish space}),  such as a finite or countable space or
$\R^n$. It is equipped with its Borel $\sigma$-algebra (the $\sigma$-algebra
generated by all open sets). This guarantees that integration behaves as
expected.  $P(s,\d s')$ is assumed to be a \emph{Markov kernel}, namely, a
measurable map from $\mathcal{S}$ to probability measures over
$\mathcal{S}$.} 

A Markov \emph{decision} process, with a given policy,
with actions
$a\in \mathcal{A}$, transition probabilities $P(s,a,\d s')$, and policy
$\pi(s,a)$, defines two Markov reward processes:
one on states via $P(s,\d s')\deq \sum_a
\pi(s,a)P(s,a,\d s')$,
and another on state-action pairs via
$P((s,a),(\d s',a'))\deq P(s,a,\d s')\,\pi(s',a')$.
(start at $(s,a)$, get $s'$, then choose the next action at $s'$).
Thus, we work on states and value functions, but all results extend to
state-action pairs and $Q$ functions.

For now the policy is fixed: we deal with policy evaluation and
successor states under that policy. Goal-dependent policies are treated
in Section~\ref{sec:goals}.

We denote $R(s)\deq \E[r_s]$ the expected reward at $s$. The \emph{value
function} $V$ is $V(s_0) \deq
\E\left[\sum_{k}\gamma^{k}r_{s_k}\right]$ where $s_0,s_1,\ldots,s_t$ is a
trajectory starting at $s_0$ sampled from the process.
We denote by $\1_{s}$ the vector equal to $1$ at the $s$ coordinate and $0$
elsewhere.

\paragraph{Data model.}
We assume access to 
observations from the Markov reward process, such as a fixed dataset of
stored transitions, or some sample trajectories.
Each observation is a
triplet $(s,s',r_s)$ with $s'\sim P(s,\d s')$ and $r_s$ the associated
reward. Consecutive observations need not be independent.
We denote by $\rho(\d s)$ be the distribution
of states $s$ coming from the observations. We cannot choose the states $s$: $\rho$ is unknown and we do not make
any assumptions on it.
For instance, if we have access to trajectories 
from the process, obtained by some exploration policy, then $\rho$ would
be the law of states visited under that policy. If we just have a finite
dataset of transitions, $\rho$ would be the (unknown) law from which this
dataset was sampled.


\paragraph{Markov kernels as operators.} Interpreting $P$ and the
successor state as operators on functions over $\mathcal{S}$
clarifies the statements of the results below.
We follow the standard theory of Markov kernels
\citep{Hairer_markovnotes1, Hairer_markovnotes2}.
We denote by $\BS$ the set of bounded measurable functions on
$S$. $P$ acts on such functions as follows.
If
$f$ is a function in $\BS$, $Pf$ is
defined as $(Pf)(s) \deq \E_{s'\sim P(s, \d s')}\left[f(s')\right]$.  This
is compatible with the matrix notation $Pf$ in the finite case, viewing
$f$ as a vector. In the text, we freely identify Markov kernels with the
corresponding operators.

If $P_{1}$ and $P_{2}$ are two such Markov kernel
operators, their composition $P_{1}P_{2}$ is again a Markov kernel operator,
and coincides with matrix multiplication in the finite case. In
particular, $P^{n}$ represents $n$ steps of $P$. The identity
operator $\Id$ corresponds to always staying in the same state, namely, a
transition operator $P(s,\d s')=\delta_s(\d s')$ with $\delta_s$ the
Dirac measure at $s$.

We denote $\Delta\deq \Id-\gamma P$, the discrete Laplace operator of the
Markov process.
Finally,
if $A$ is an operator acting on functions over $\mathcal{S}$, we denote
its inverse by $A^{-1}$, if it exists. 

\paragraph{Norms.} Both $P(s,\d s')$ and the successor state operator $M(s,\d s')$ are
measures on $s'$ that depend on $s$. We will use the following norms on
such objects: if $\rho(\d s)$ is some reference probability measure on
$\mathcal{S}$, and $M_1(s,\d s')$ and $M_2(s,\d s')$ are two such objects, we define
\begin{equation}
\label{eq:norm}
\norm{M_1-M_2}^2_\rho \deq \E_{s\sim \rho,\,s'\sim \rho}\,
(m_1(s,s')-m_2(s,s'))^2
\end{equation}
where $m_1(s,s')\deq M_1(s,\d s')/\rho(\d s')$ is the density of $M_1$
with respect to $\rho$ (if it exists; if not, the norm is infinite), and likewise for $M_2$. We will also use
the \emph{total variation} norm
\begin{equation}
\norm{M_1-M_2}_{\rho,\mathrm{TV}}\deq \E_{s\sim \rho}
\norm{M_1(s,\cdot)-M_2(s,\cdot)}_{\mathrm{TV}}
\end{equation}
with $\norm{p_1-p_2}_{\mathrm{TV}}\deq \sup_{A\subset \mathcal{S}} \abs{p_1(A)-p_2(A)}$
the usual total variation distance between two measures.

\section{The Successor State Operator of a Markov Process}
\label{sec:defM}

As an introduction before defining successor states over general state
spaces, we start with the case of finite state spaces, for which all the
objects can be seen as vectors matrices. This is the case treated in
\citep{dayan1993improving}.

\subsection{The Successor State Matrix in a Finite State Space}
\label{sec:defM_discrete}

Informally, for finite state spaces, given two states $s_1$ and $s_2$ in a Markov process, the
successor state matrix $M$ is a matrix whose entry $M_{s_1s_2}$ is the
expected discounted time spent at $s_2$ if starting the process at $s_1$
\citep{dayan1993improving}.

$M_{s_1s_2}$ is also the value function at $s_1$ if the reward is located
at $s_2$ ($R_s=\1_{s=s_2}$). Thus, columns of $M$ contain the value functions
of all single-target rewards. For a fixed Markov process (e.g., fixed
environment and policy), the value function is a linear function of the
reward. Thus,
by linearity, for any reward, the
associated value function is $V=MR$. 
Namely, $M$ contains information about the value function of \emph{every}
reward.

We gather several equivalent definitions of the matrix $M$ in the
following Proposition. Since this is a particular case of the more general
results below, we do not include a proof.

\begin{prop}[ (Successor state matrix of a finite Markov process)]
Consider a Markov process on a finite state space, with transition matrix
$P$.
The following definitions of the \emph{successor state matrix} $M$ are
equivalent:
\begin{enumerate}
\item
$M$ is the inverse of the Laplace operator $\Delta=\Id-\gamma P$,
\begin{equation}
M=(\Id-\gamma P)^{-1}=\sum_{n\geq 0} \gamma^n P^n.
\end{equation}

\item $M$ is the matrix that transforms a reward function into the
corresponding value function: for any reward function $R$, the associated
value function is
\begin{equation}
V=MR.
\end{equation}

\item For each state $s$, the column $s$ of the matrix $M$ represents the
value function of a Markov reward process whose reward is $1$ when at state $s$ and $0$
everywhere else ($R=\1_s$).

\item $M$ is the unique fixed point of the Bellman operator
\begin{equation}
M\gets \Id -\gamma P M
\end{equation}
or equivalently of the \emph{backward} Bellman operator
\begin{equation}
M\gets \Id - \gamma MP.
\end{equation}

\item For each state $s$, the row $s$ of the matrix $M$ represents the
expected occupation time at each state, for trajectories starting at $s$,
with discounting $\gamma$:
\begin{equation}
M_{ss'}=\sum_{t\geq 0} \gamma^t \Prob(s_t=s'|s_0=s)
\end{equation}
where $s_0,\ldots,s_t$ is a random trajectory in the Markov process.

\item The entry $ss'$ of the matrix $M$, is the number of paths from $s$
to $s'$, weighted by their probability in the process, and with decay $\gamma$
according to their length:
\begin{equation}
M_{ss'}=\sum_n \sum_{\begin{subarray}{c}p=(s_0s_1...s_n)\\\text{ path
from}\\\text{$s_0=s$ to $s_n=s'$}\end{subarray}}
\!\! \gamma^n \prod_{i=1}^n P(s_i|s_{i-1}).
\end{equation}

\end{enumerate}
\end{prop}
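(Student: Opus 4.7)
The natural strategy is to pick one formulation as a hub and show every other is equivalent to it. I will take (1) as the hub. A preliminary step is to verify that the Neumann series actually converges and equals $(\Id-\gamma P)^{-1}$: since $P$ is row-stochastic, $\norm{P}_\infty=1$ in the operator norm induced by $\ell^\infty$, so $\norm{\gamma P}_\infty=\gamma<1$. This gives absolute convergence of $\sum_{n\geq 0}\gamma^n P^n$, invertibility of $\Id-\gamma P$, and the identity $(\Id-\gamma P)^{-1}=\sum_{n\geq 0}\gamma^n P^n$ by the standard Neumann series argument. This single fact underpins most of the remaining equivalences.

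I would then dispatch (1) $\Leftrightarrow$ (5) $\Leftrightarrow$ (6) as bookkeeping. By the Markov property, $(P^n)_{ss'}=\Prob(s_n=s'\mid s_0=s)$, which itself expands as the sum over paths $s_0 s_1\cdots s_n$ with $s_0=s$, $s_n=s'$ of the product $\prod_i P(s_i\mid s_{i-1})$. Inserting this into the series representation of (1) yields (5) directly, and (6) after regrouping over path length $n$.

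For (1) $\Rightarrow$ (3) $\Rightarrow$ (2), take reward $R=\1_{s'}$: the value at $s$ is $\E[\sum_t \gamma^t \1_{s_t=s'}\mid s_0=s]=\sum_t \gamma^t \Prob(s_t=s'\mid s_0=s)$, which is the $ss'$ entry of $M$ by (5), giving (3). For (2), any reward decomposes as $R=\sum_s R(s)\1_s$, and since the value function is linear in $R$ (by linearity of expectation over trajectories), we get $V=\sum_s R(s) M_{\cdot,s}=MR$. Conversely, specializing (2) to $R=\1_s$ recovers (3), closing the loop.

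Finally, for (4), the forward and backward Bellman fixed-point equations rearrange algebraically to $(\Id-\gamma P)M=\Id$ and $M(\Id-\gamma P)=\Id$ respectively, each of which has the unique solution $M=(\Id-\gamma P)^{-1}$ by the invertibility established in the preliminary step. The main (and really the only) obstacle is this invertibility of $\Id-\gamma P$; once it is handled by the Neumann series, all remaining implications reduce to algebraic rearrangements, combinatorial path-counting, or linearity of expectation in the reward.
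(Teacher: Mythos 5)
Your proof is correct and takes essentially the same route as the paper: the paper gives no standalone proof of this proposition, deriving it instead as the finite-state case of Theorem~\ref{thm:Mdef} and Theorems~\ref{thm:left-bellman}--\ref{thm:right-bellman}, whose proofs rest on exactly your ingredients --- convergence of $\sum_{n\geq 0}\gamma^n P^n$, the resulting invertibility of $\Id-\gamma P$ (which gives uniqueness of the forward and backward Bellman fixed points), linearity in the reward for $V=MR$, and the expansion of $(P^n)_{ss'}$ over paths as in Eq.~\eqref{eq:m_path}. The only caveat is notational: the displayed operators in item~4 carry a sign typo ($\Id-\gamma PM$ should read $\Id+\gamma PM$, and likewise for the backward version), which you implicitly and correctly fixed when rearranging to $(\Id-\gamma P)M=\Id$ and $M(\Id-\gamma P)=\Id$.
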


\subsection{The Successor State Operator in a General State Space}
\label{sec:defM_cont}

$M$ is also well-defined in general state spaces, using the Markov
process formalism of Section~\ref{sec:notation}, as follows. This
extends \citep{dayan1993improving} to arbitrary $\mathcal{S}$.
(All proofs are given in the Appendix.)

\begin{thm}
  \label{thm:Mdef}
  The \emph{successor state operator} $M$ of a Markov reward process is defined as
  \begin{equation}
    \label{eq:M-def}
    M\deq \sum_{n\geq 0} \gamma^n P^n,\qquad M(s_1,\d s_2)=\sum_{n\geq 0}
    \gamma^n P^n(s_1,\d s_2).
  \end{equation}
  where $P^0\deq \Id$. Thus,
  for each $s_{1}$, $M(s_{1}, \d s_2)$ is a measure on
  $s_2$, with total mass $\frac{1}{1-\gamma}$.
  
  Then $M$ is a well-defined
  operator over the set $\BS$ of bounded measurable functions on
  $\mathcal{S}$. Moreover,
  \begin{equation}
    M = (\Id - \gamma P)^{-1}
    \end{equation}
    as operators over $\BS$, and
    \begin{equation}
    \quad \text{and} \quad  V = MR
  \end{equation}
  for any reward function $R$. (Note that $M$ does not depend on $R$.)
\end{thm}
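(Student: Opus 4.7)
The plan is to proceed in three steps. First, I will establish that $M(s_1,\d s_2)$ is a well-defined finite measure for every $s_1$, with the stated total mass $\frac{1}{1-\gamma}$. Second, I will show that $M$ acts as a bounded operator on $\BS$ and is the two-sided inverse of $\Id-\gamma P$. Third, I will deduce $V=MR$ by interchanging expectation and summation.

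For the first step, each $P^n$ is the $n$-fold composition of the Markov kernel $P$, which is itself a Markov kernel in the Polish-space setting of Section~\ref{sec:notation}; in particular, for each fixed $s_1$ the measure $P^n(s_1,\d s_2)$ has total mass $1$. Consequently $\gamma^n P^n(s_1,\cdot)$ is a positive measure of mass $\gamma^n$, and by monotone convergence applied to partial sums evaluated on any Borel set $A$, the series $\sum_{n\geq 0}\gamma^n P^n(s_1,\cdot)$ defines a finite positive measure $M(s_1,\cdot)$ with total mass $\sum_{n\geq 0}\gamma^n=\frac{1}{1-\gamma}$. Measurability of $s_1\mapsto M(s_1,A)$ follows from the measurability of each $s_1\mapsto P^n(s_1,A)$, so $M$ is a genuine (finite) kernel.

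For the second step, for any $f\in \BS$ one has $\|P^n f\|_\infty \leq \|f\|_\infty$ since $P^n(s_1,\cdot)$ is a probability measure, so the partial sums $M_N f \deq \sum_{n=0}^N \gamma^n P^n f$ converge uniformly and absolutely to a measurable bounded function of sup-norm at most $\|f\|_\infty/(1-\gamma)$. Dominated convergence identifies this limit with $\int f(s_2)\,M(s_1,\d s_2)$, so $M$ acts as a bounded operator on $\BS$. The partial sums satisfy the telescoping identity
\begin{equation*}
(\Id-\gamma P)M_N = M_N(\Id-\gamma P) = \Id - \gamma^{N+1}P^{N+1},
\end{equation*}
and since $\|\gamma^{N+1}P^{N+1}f\|_\infty \leq \gamma^{N+1}\|f\|_\infty \to 0$, passing to the limit gives $(\Id-\gamma P)M = M(\Id-\gamma P) = \Id$ on $\BS$, establishing both the inversion and the existence of the inverse.

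For the third step, given any bounded reward function $R$, the Markov property yields $\E[R(s_k)\mid s_0=s] = (P^k R)(s)$. Since $|\gamma^k R(s_k)| \leq \gamma^k \|R\|_\infty$ is uniformly summable along any trajectory, Fubini--Tonelli permits interchanging $\E$ with $\sum_k$, giving $V(s) = \sum_{k\geq 0}\gamma^k (P^k R)(s) = (MR)(s)$. The only technicality to watch throughout is the measure-theoretic bookkeeping (Markov-kernel composition, countable sums of measures, exchange of integration and infinite sums in a Polish space); once this is handled, the content reduces to the same algebra as in the finite-state case, and I do not expect a serious obstacle beyond care with measurability and boundedness.
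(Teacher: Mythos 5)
Your proposal is correct, and for the construction of $M$ as a measure and the operator-inverse identity it is essentially the paper's argument: the paper bounds $\sum_n \gamma^n P^n(s,A)$ by $\frac{1}{1-\gamma}$, gets $\sigma$-additivity from a convergence theorem, and obtains $(\Id-\gamma P)M=M(\Id-\gamma P)=\Id$ by rearranging the absolutely convergent series $\gamma P\sum_n\gamma^nP^n=\sum_{n\geq1}\gamma^nP^n$; your partial-sum telescoping $(\Id-\gamma P)M_N=M_N(\Id-\gamma P)=\Id-\gamma^{N+1}P^{N+1}$ with the remainder killed in sup norm is the same computation packaged with an explicit error term, if anything slightly more careful. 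The genuine difference is the step $V=MR$: you verify it directly from the trajectory definition $V(s)=\E\left[\sum_k\gamma^k r_{s_k}\right]$, using the tower property to pass to $R(s_k)$, the Markov property $\E[R(s_k)\mid s_0=s]=(P^kR)(s)$, and Fubini to swap $\E$ and $\sum_k$, whereas the paper argues that $(\Id-\gamma P)MR=R$ means $MR$ satisfies the Bellman equation $V=R+\gamma PV$ and then identifies $MR$ with the value function — an identification that implicitly leans on uniqueness of the bounded solution of the Bellman equation (standard, by $\gamma$-contraction, but unstated). Your route is self-contained given the paper's definition of $V$ and needs no uniqueness statement; the paper's route is shorter and reuses the inversion identity it has just established. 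The only bookkeeping point to make explicit in your step three is the tower-property reduction from the random rewards $r_{s_k}$ to the expected rewards $R(s_k)$, which is routine.
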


$M$ can be interpreted as
\emph{paths} in the Markov process:
$M(s_1,\d s_2)$ represents the number of paths from $s_1$ to $s_2$,
weighted by their probability and discounted by their length.
This
will be relevant to compare the algorithms below.
Indeed, in the finite-state case and using matrix notation, $P^{n}_{ss'}$ is the
probability to go from $s$ to $s'$ in $n$ steps; therefore
\begin{align}
  \label{eq:m_path}
  M_{ss'} &= \sum_{n\geq 0} \gamma^n (P^n)_{ss'}=\sum_{n \geq 0} \gamma^{n}\sum_{s=s_{0}, s_{1}, ...,
  s_{n-1}, s_{n}=s'} \hspace{-7ex} P_{s_{0}s_{1}}\cdots P_{s_{n-1}s_{n}}
  \\&= \hspace{-2ex}\sum_{p
  \text{ path from } s \text{ to } s'} \hspace{-4ex} \gamma^{|p|}\,\mathbb{P}(p)
\end{align}
where, if $p = (s_{0}, ..., s_{n})$ is a path,  $\mathbb{P}(p) =
P_{s_{0}s_{1}}\cdots P_{s_{n-1}s_{n}}$ is its probability and $|p| = n$
its length. The same holds with integrals instead of sums in continuous
spaces.

Yet another interpretation of $M$ is via expected eligibility traces: indeed,
when visiting a state $s$, the expectation of the eligibility trace vector
$(e_{s'})$ 
is directly related to $M_{s's}$. The details are given in
Appendix~\ref{sec:traces}; see also the discussion of ``predecessor
features'' in \citep{van2020expected}.

\paragraph{Successor states and successor representations.} Given a function
$\phi$ over the state space $\mathcal{S}$, the expectation of the cumulated,
discounted future values of $\phi$ given the starting point $s_0$ of a
trajectory $(s_t)$ is
\begin{equation}
\E \left[
\sum_{t\geq 0} \gamma^t \phi(s_t)\right]=\sum_{t\geq 0} \gamma^t (P^t
\phi)(s_0)=(M\phi)(s_0).
\end{equation}
Thus, the successor representation
(e.g., in the sense of
\citep{kulkarni2016deep}) of a state $s$ is obtained by applying $M$ to
some user-chosen function $\phi$.

\paragraph{Representing and learning the successor state operator.}
\label{sec:M-func-approx}
With continuous states, $M$ cannot be represented as a matrix. Instead,
we will learn a function of a pair of states. Namely,
we will learn a parametric model of $M$ via its density with
respect to the data distribution $\rho$ over states (this choice makes
every algorithm samplable from the data).
We present two
versions of this. The first version represents $M$ as
\begin{equation}
\label{eq:Mmodeltilde}
M(s_1,\d s_2)\approx \tildem_\theta(s_1,s_2) \rho(\d s_2)
\end{equation}
and the second version as
\begin{equation}
\label{eq:Mmodel}
M(s_1,\d s_2)\approx \delta_{s_1}(\d s_2)+m_\theta(s_1,s_2) \rho(\d
s_2)
\end{equation}
where $\delta_{s_1}$ is the Dirac measure at $s_1$, and where
$\tildem_\theta$ and $m_\theta$ are functions over pairs of states,
depending smoothly on some parameter $\theta$.
We will derive well-principled algorithms to learn the functions $\tildem(s_1,s_2)$ and $m(s_1,s_2)$
from observations of the Markov process. The data distribution $\rho$ is unknown, but all
algorithms below only require the ability to sample states from $\rho$,
which we can do by definition since $\rho$ is the distribution of states
in the dataset.
These two models correspond, respectively, to
\begin{equation}
\label{eq:Vmtilde}
V(s_1)=\E_{s_2\sim \rho} [\tildem_\theta(s_1,s_2)R(s_2)]
\end{equation}
and
\begin{equation}
V(s_1)=R(s_1)+\E_{s_2\sim \rho} [m_\theta(s_1,s_2)R(s_2)].
\end{equation}

The first version is simpler. The motivation for the second version is as
follows.
In continous spaces, $M$ has a singular part,
corresponding to the immediate reward in $V$, and to the term $\Id$ in
the series for $M$: for each $s_1$, the measure $M(s_1,\cdot)$ comprises
a Dirac mass at $s_1$. In continuous spaces, this singular part cannot be represented
as $\tildem(s_1,s_2)\rho(\d s_2)$ for a smooth function $\tildem$. 
But since this singular part $\delta_{s_1}$ is known, we can just
parameterize and learn the absolutely continuous part $m(s_1,s_2)$. Thus,
the second version may represent $M$ exactly (at least if $P$ is smooth), while in general the first
version cannot. Still, the first version may provide useful
approximations. \todo{Third possibility: just define the reward starting
at the next step, $\delta_{s_2}(\d s')$, namely, $M=P(\Id-\gamma
P)^{-1}$. Then the equations are those for $m_\theta$ without a $\gamma$,
and there is no $R(s_1)$ in the representation of $V$ from $m_\theta$.
Might be useful for $Q$-learning with rewards that depend on $s'$.}

The function $m_\theta(s_1,s_2)$ can be interpreted as a (directed) similarity measure between
$s_1$ and $s_2$, coming from the structure of the Markov process.

In this text,
we define several algorithms for learning $m_\theta$:
the extension of temporal difference (TD) to successor states
(Section~\ref{sec:forw-bellman}); backward
TD for successor states (Section~\ref{sec:backw-bellman}); and second-order-type methods
(Section~\ref{sec:BN}). The matrix-factorized forward-backward
parameterization
$\tildem_\theta(s_1,s_2)=\transp{F_\theta(s_1)}B_\theta(s_2)$ has many additional properties and is
treated in Section~\ref{sec:FB}.

\bigskip

A learned model of $M$ can be used in several ways:
\begin{itemize}
\item $M$ may be used to improve learning for a given reward. For
instance, with a sparse reward located at a \emph{known} target state
$s_\tar$, then
$V(s)=M(s,\d s_\tar)$. In that case, learning $M$ directly provides the value
function, while ordinary TD would not work because of the sparse reward.
With dense rewards, $M$ can be used in the learning of the value
function (Section~\ref{sec:V}).
\item Objects similar to $M$ may be used to learn goal-dependent policies, such as learning
how to reach any arbitrary state. This does not cover dense rewards, but
extends to reaching states with arbitrary values for some features. This
is covered in Section~\ref{sec:goals}.
\end{itemize}

Section~\ref{sec:V} gives more details about the ways to use $M$ to learn
value functions and policies.

\section{TD Algorithms for Deep Successor State Learning}
\label{sec:td-alg}

\subsection{The (Forward) TD Algorithm for Successor States}
\label{sec:forw-bellman}

\subsubsection{The Forward Bellman Equation}

\begin{thm}[ (Bellman equation for successor states)]
\label{thm:left-bellman}
The successor state operator $M$ is the only operator which satisfies the
Bellman equation $M=\Id+\gamma PM$.
\end{thm}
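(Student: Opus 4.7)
The plan is to derive the forward Bellman identity directly from the series representation established in Theorem~\ref{thm:Mdef}, and then deduce uniqueness from the invertibility of $\Delta = \Id - \gamma P$ that was also proved there.

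For existence, I would start from $M = \sum_{n\geq 0} \gamma^n P^n$ and split off the $n=0$ term:
\begin{equation*}
M = \Id + \sum_{n\geq 1} \gamma^n P^n = \Id + \gamma P \sum_{n\geq 0} \gamma^n P^n = \Id + \gamma P M.
\end{equation*}
The rearrangement is justified because, in the operator norm on $\BS$, one has $\|P\|\leq 1$ (since $P$ is a Markov kernel, $Pf$ is bounded by $\|f\|_\infty$), and hence the series $\sum_n \gamma^n P^n$ converges absolutely when $\gamma<1$, which permits factoring out $\gamma P$ on the left. This establishes the forward Bellman equation.

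For uniqueness, suppose that some operator $M'$ on $\BS$ satisfies $M' = \Id + \gamma P M'$. Rearranging gives $(\Id-\gamma P) M' = \Id$, i.e.\ $\Delta M' = \Id$. By Theorem~\ref{thm:Mdef}, $\Delta$ is invertible on $\BS$ with inverse $M$, so applying $M$ on the left yields $M' = M \Delta M' = M$. This proves uniqueness among operators acting on $\BS$.

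The main subtlety is specifying the class of objects in which uniqueness is claimed. Viewed naively as measure-valued maps $M'(s_1,\d s_2)$, one needs enough regularity (e.g.\ finite total mass uniformly in $s_1$, or boundedness as an operator on $\BS$) for $PM'$ to be defined and for the cancellation $M \Delta M' = M'$ to hold. Within the framework of bounded operators on $\BS$ already set up in Section~\ref{sec:notation} and used in Theorem~\ref{thm:Mdef}, this is automatic, and the proof reduces to the two lines above. If one wanted to allow unbounded candidates, one would need an additional growth condition to rule out solutions of the homogeneous equation $\Delta X = 0$; the natural hypothesis is simply that $M'$ maps $\BS$ to $\BS$, which is how the statement should be interpreted.
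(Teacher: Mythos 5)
Your proof is correct and follows essentially the same route as the paper: rewrite $M'=\Id+\gamma PM'$ as $(\Id-\gamma P)M'=\Id$ and invoke the invertibility of $\Id-\gamma P$ on $\BS$ from Theorem~\ref{thm:Mdef} to conclude $M'=M$. Your remarks on the existence part and on restricting to bounded operators on $\BS$ are consistent with how the paper frames the statement.
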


This Bellman equation makes sense, as operators, on any state space,
discrete or continuous.
In finite spaces, each column of the matrix $M$ contains the value
function for a reward located at a specific target state, and the Bellman
equation for $M$ is just the collection of the standard Bellman equations
for every target state; the $\Id$ term is the reward for
reaching state $s$ when the target is $s$.

This Bellman operator on $M$ has the same contractivity properties as the
usual Bellman operator.

\begin{prop}[ (Contractivity of the Bellman operator on $M$)]
\label{prop:contract}
Equip the space of functions $\BS$ with the sup norm $\norm{f}_\infty\deq
\sup_{s\in S}\abs{f(s)}$.
Equip the space of bounded linear operators from $\BS$ to $\BS$
with the operator norm $\norm{M}_{\mathrm{op}}\deq \sup_{f\in \BS,\,f\neq
0} \norm{Mf}_\infty/\norm{f}_\infty$.

Then the Bellman operator $M\mapsto
\Id+\gamma PM$ is $\gamma$-contracting for this norm.

Consequently, for any learning rate $\eta\leq 1$, iterated application of the
Bellman operator $M\gets (1-\eta)M+\eta(\Id+\gamma PM)$ converges to the
successor state operator.
\end{prop}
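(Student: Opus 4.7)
The plan is to reduce everything to the bound $\|P\|_{\mathrm{op}}\leq 1$ for the Markov kernel $P$, and then apply Banach's fixed point theorem in the Banach space of bounded linear operators on $\BS$.

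First I would check that $P$ is $1$-Lipschitz for the sup-norm. For any $f\in \BS$ and any $s$, $(Pf)(s)=\E_{s'\sim P(s,\cdot)}f(s')$ is an average of values of $f$, hence $|(Pf)(s)|\leq \|f\|_\infty$. Taking sup over $s$ gives $\|Pf\|_\infty\leq \|f\|_\infty$, i.e.\ $\|P\|_{\mathrm{op}}\leq 1$.

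Next, for the (forward) Bellman operator $\Phi(M)\deq \Id+\gamma PM$, I would observe that the $\Id$ term cancels when taking differences: for any two bounded operators $M_1,M_2$,
\begin{equation*}
\Phi(M_1)-\Phi(M_2)=\gamma P(M_1-M_2).
\end{equation*}
Submultiplicativity of the operator norm together with $\|P\|_{\mathrm{op}}\leq 1$ then gives
\begin{equation*}
\|\Phi(M_1)-\Phi(M_2)\|_{\mathrm{op}}\leq \gamma\,\|P\|_{\mathrm{op}}\|M_1-M_2\|_{\mathrm{op}}\leq \gamma\,\|M_1-M_2\|_{\mathrm{op}},
\end{equation*}
which is the claimed $\gamma$-contractivity.

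For the damped iteration $\Psi_\eta(M)\deq (1-\eta)M+\eta\Phi(M)$, the same trick gives $\Psi_\eta(M_1)-\Psi_\eta(M_2)=(1-\eta)(M_1-M_2)+\eta\gamma P(M_1-M_2)$, and the triangle inequality yields
\begin{equation*}
\|\Psi_\eta(M_1)-\Psi_\eta(M_2)\|_{\mathrm{op}}\leq \bigl((1-\eta)+\eta\gamma\bigr)\|M_1-M_2\|_{\mathrm{op}}=\bigl(1-\eta(1-\gamma)\bigr)\|M_1-M_2\|_{\mathrm{op}}.
\end{equation*}
For $0<\eta\leq 1$ and $\gamma<1$, the contraction factor $1-\eta(1-\gamma)$ lies in $[0,1)$, so $\Psi_\eta$ is a strict contraction on the Banach space of bounded linear operators on $\BS$ (with the operator norm). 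Banach's fixed point theorem then guarantees convergence of the iterates to a unique fixed point, which by Theorem~\ref{thm:left-bellman} must be $M$ itself.

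The only mild subtlety I anticipate is verifying that iterating $\Psi_\eta$ stays inside the Banach space of bounded operators starting from a bounded initialization, but this is automatic since $\Psi_\eta$ sends bounded operators to bounded operators (the image has norm at most $1+\gamma\|M\|_{\mathrm{op}}$). No genuine obstacle; the whole argument is essentially the standard Banach contraction proof adapted to operators rather than functions, with the key input being that $P$ preserves the sup-norm because it averages.
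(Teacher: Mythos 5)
Your proof is correct and follows essentially the same route as the paper: show $\norm{Pf}_\infty\leq\norm{f}_\infty$ because $P$ averages, note that the $\Id$ terms cancel so the difference of Bellman images is $\gamma P(M_1-M_2)$, and conclude $\gamma$-contractivity in the operator norm. Your additional Banach-fixed-point argument for the damped iteration with factor $1-\eta(1-\gamma)$ is a valid (and slightly more explicit) treatment of the convergence claim, which the paper's proof leaves implicit after establishing contractivity.
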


\subsubsection{Forward TD for Successor States: Tabular Case}

Given that the Bellman equation on $M$ is a collection of ordinary Bellman
equations for every target state, an obvious algorithm to learn $M$ in finite state spaces is to
perform ordinary TD in parallel for all these single-state rewards, as in
\citep{dayan1993improving}. Let
$\starget$ be some target state and consider the reward $\1_{\starget}$.
Upon observing a transition $s\to s'$,
ordinary TD for this reward updates $V$ by $V_s\gets V_s+\eta \del
V_s$, where $\eta$ is some learning rate and
$
\del V_s=\1_{s=\starget}+\gamma V(s')-V(s)
$.
Performing
TD in parallel for every column of $M$ with target state $\starget$ is equivalent to the
following \citep{dayan1993improving}.

\begin{defi}[ (Tabular temporal difference for successor
states)]
\label{def:tabulartd}
The TD algorithm for $M$, in a finite state space, maintains $M$ as a
matrix. Upon observing a transition $s\to s'$ in the Markov process, $M$
is updated by $M\gets M+\eta \del M$ where $\eta$ is a learning rate
and $\del M$ has entries
\begin{equation}
\label{eq:tabularTD}
\del M_{ss_2}\deq \1_{s=s_2}+\gamma M_{s's_2}-M_{ss_2} \qquad \forall s_2
\end{equation}
\end{defi}

In the tabular case and with deterministic rewards,
learning $M$ via TD, then estimating $V$ via the matrix
product $V=MR$, is equivalent to directly learning $V$ via tabular TD
(Appendix~\ref{sec:tdistd}): tabular TD on $M$ treats all
target states $s_2$ as independent learning problems, and no learning
gain is achieved.

However, this equivalence does not hold with function
approximation, which introduces generalization between states.
Since any target state is
reached with zero probability, applying parametric TD naively in parallel
for every target state would always provide reward $0$ in continuous
environments. 
The parametric TD updates we present below
are not equivalent to this naive TD: they have the same expectation but
avoid the zero-reward problem.



%
%
%

\subsubsection{Forward TD for Successor States: Function
Approximation}
\label{sec:paramtd}

In continuous environments, it is not possible to store $M$ as a matrix.
But we
can maintain a model $m_\theta$ of
the density of $M$, as explained in Section~\ref{sec:M-func-approx}.
As in usual parametric TD, 
we learn $\theta$ by defining an ``ideal'' update given by the Bellman
equation, 
and update $\theta$ so that $M$ gets closer to it.

\begin{thm}[ (TD for successor states with
function approximation)]
\label{thm:paramtd}
Maintain a parametric model of $M$
as in
Eq.~\ref{eq:Mmodel}
via
$M_{\theta_t}(s_1,\d s_2)=\delta_{s_1}(\d s_2)+m_{\theta_t}(s_1,s_2)\rho(\d s_2)$, with $\theta_t$ the value
of the parameter at step $t$, and with $m_\theta$ some smooth family of
functions over pairs of states.

Define a target update of $M$ via the Bellman equation, $M^\tar\deq
\Id+\gamma P M_{\theta_t}$.
Define the loss between $M$ and $M^\tar$ via $J(\theta)\deq \frac12
\norm{M_\theta-M^\tar}_\rho^2$ using the norm \eqref{eq:norm}.
Then the
gradient step on $\theta$ to reduce this loss is
\begin{multline}
\label{eq:paramtd}
-\partial_\theta J(\theta)_{|\theta=\theta_t}=
\E_{ s\sim \rho, \,s'\sim P(s,\d s'), \,s_2\sim\rho }
\left[
\gamma \,\partial_\theta m_{\theta_t}(s,s')
\right.
\\+\left.
\partial_\theta m_{\theta_t}(s,s_2)\,
(\gamma m_{\theta_t}(s',s_2)-m_{\theta_t}(s,s_2))
\right].
\end{multline}

For the model variant in Eq.~\ref{eq:Mmodeltilde}, $M_{\theta_t}(s_1,\d
s_2)=\tildem_{\theta_t}(s_1,s_2)\rho(\d s_2)$, the gradient step on
$\theta$ to reduce the loss $J(\theta)$ is
\begin{multline}
\label{eq:paramtdtilde}
-\partial_\theta J(\theta)_{|\theta=\theta_t}=
\E_{ s\sim \rho, \,s'\sim P(s,\d s'), \,s_2\sim\rho }
\left[\,
\partial_\theta \tildem_{\theta_t}(s,s)
\right.
\\+\left.
\partial_\theta \tildem_{\theta_t}(s,s_2)\,
(\gamma \tildem_{\theta_t}(s',s_2)-\tildem_{\theta_t}(s,s_2))
\right].
\end{multline}
\end{thm}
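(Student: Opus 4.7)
The strategy is to compute $\partial_\theta J(\theta)|_{\theta=\theta_t}$ directly by expanding $M^\tar=\Id+\gamma P M_{\theta_t}$ via the parametric ansatz, subtracting $M_{\theta_t}$, and pairing the resulting signed kernel with $\partial_\theta M_\theta$. The main subtlety is that the loss $J(\theta)$ may itself be formally infinite, since $M^\tar-M_{\theta_t}$ typically has a part singular with respect to $\rho$; but the gradient in $\theta$ stays finite, because $\partial_\theta M_\theta$ is an absolutely continuous kernel (the parameter only enters through the density part), and it pairs against a singular measure as a smooth test function against a finite signed measure.

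For the first variant I would substitute $M_{\theta_t}(s,\d s_2)=\delta_s(\d s_2)+m_{\theta_t}(s,s_2)\rho(\d s_2)$ into $(PM_{\theta_t})(s,\d s_2)=\int P(s,\d s')M_{\theta_t}(s',\d s_2)$ to get
$$M^\tar(s,\d s_2)=\delta_s(\d s_2)+\gamma P(s,\d s_2)+\gamma\rho(\d s_2)\,\E_{s'\sim P(s,\cdot)}[m_{\theta_t}(s',s_2)].$$
Subtracting $M_{\theta_t}$ cancels the Dirac and leaves a $\rho$-singular piece $\gamma P(s,\cdot)$ plus an absolutely continuous piece with density $\gamma\,\E_{s'\sim P(s,\cdot)}[m_{\theta_t}(s',s_2)]-m_{\theta_t}(s,s_2)$. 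I would then interpret the gradient as the pairing
$$-\partial_\theta J|_{\theta_t}=\E_{s\sim\rho}\!\int_{s_2}\partial_\theta m_{\theta_t}(s,s_2)\,(M^\tar-M_{\theta_t})(s,\d s_2),$$
which is well-defined regardless of singularities because $\partial_\theta m_{\theta_t}$ is a smooth test function. Integrating against the $\gamma P(s,\d s_2)$ piece yields $\gamma\,\E_{s\sim\rho,\,s'\sim P(s,\cdot)}[\partial_\theta m_{\theta_t}(s,s')]$; integrating against the absolutely continuous piece yields $\E_{s,s_2\sim\rho,\,s'\sim P(s,\cdot)}[\partial_\theta m_{\theta_t}(s,s_2)(\gamma m_{\theta_t}(s',s_2)-m_{\theta_t}(s,s_2))]$, matching the first formula.

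For the second variant, $M_\theta=\tildem_\theta\rho$ carries no Dirac, so $(PM_\theta)(s,\cdot)$ is absolutely continuous with density $\E_{s'\sim P(s,\cdot)}[\tildem_\theta(s',s_2)]$, and the only singular contribution to $M^\tar-M_\theta$ is now the $\delta_s$ coming from the $\Id$ term. Pairing $\delta_s(\d s_2)$ with the test function $\partial_\theta\tildem_{\theta_t}(s,s_2)$ evaluates it at $s_2=s$, producing the diagonal term $\E_{s\sim\rho}[\partial_\theta\tildem_{\theta_t}(s,s)]$, while the absolutely continuous remainder yields the standard TD-like expectation of the theorem. The main obstacle throughout is conceptual rather than computational: one must accept the gradient as the pairing of a smooth test function against a possibly singular signed measure, at which point the derivation is a direct expansion, and the $\delta_s$ versus $\gamma P(s,\cdot)$ singular contributions naturally account for the two distinct ``reward'' terms in the two formulas.
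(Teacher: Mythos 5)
Your proposal is correct and follows essentially the same route as the paper's proof: expand $M^\tar=\Id+\gamma PM_{\theta_t}$ under the parametric ansatz, observe that the Dirac terms cancel (model $m_\theta$) or survive as the diagonal term (model $\tildem_\theta$), and pair the resulting signed kernel with $\partial_\theta$ of the density to obtain the stated expectations. The only difference is how the ill-defined loss is formalized: where you simply declare the gradient to be the pairing of the smooth test function $\partial_\theta m_{\theta_t}$ (resp.\ $\partial_\theta\tildem_{\theta_t}$) against the possibly singular measure $M^\tar-M_{\theta_t}$, the paper justifies this by a continuity argument for the $m_\theta$ model (the factor $P(s,\d s_2)/\rho(\d s_2)$ only ever appears multiplied back by $\rho(\d s_2)$) and, for the $\tildem_\theta$ model, by replacing $J$ with the finite surrogate $J'(\theta)=\frac12\norm{M_\theta}^2_\rho-\langle M_\theta,M^\tar\rangle_\rho$, which differs from $J$ by a $\theta$-independent (possibly infinite) constant — a cosmetic rather than substantive difference.
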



This gradient step is ``samplable''. Namely, we can define a stochastic update
$\widehat{\del\theta_{\text{TD}}}$ with expectation
\eqref{eq:paramtd}: sample a transition $s\to s'$
from the dataset of transitions, 
and \emph{another} independent ``destination'' state $s_{2}$ from the
dataset, then set
\begin{equation}
    \widehat{\del\theta_{\text{TD}}} =  \gamma\,\partial_{\theta} m_{\theta}(s,s')
    +\partial_{\theta} m_{\theta}(s,s_2) \left(\gamma m_\theta(s',s_2)-
    m_{\theta}(s, s_{2}) \right)
\end{equation}
or likewise for $\tildem$ (only the first term is different).

This algorithm uses a transition $s\to s'$ and one additional random
state $s_2$, independent from $s$ and $s'$. The Bellman--Newton
update (Section~\ref{sec:paramBN}) will use two additional
random states $s_1$ and $s_2$ (but no additional transition).
 The law of
$s_2$ is $\rho$, which means $s_2$ is just another state sampled from the
dataset. For instance, if the dataset consists of a sampled trajectory
trajectory $(s_t)_{t\geq 0}$, when 
observing a transition $s_t\to s_{t+1}$, additional
independent state samples can be
obtained by using states $s_{t'}$ at times $t'$ independent from $t$
(such as a random $t'\leq t$). This requires maintaining a replay buffer
of observed states.

Several variants avoid having to sample $s_2$ independently from
$s\to s'$. In the FB representation of $M$
(Section~\ref{sec:FB}), the expectation over $s_2$ can be estimated
online using just the observed transition $s\to s'$, with no additional
state. Appendix~\ref{sec:sss} also describes the possibility of
using a ``cheap''  source for the additional states $s_2$ instead of
actual states, as long as
the transitions $s\to s'$ come from the true process. Finally,
Theorem~\ref{thm:goalTD} makes it possible to use a joint rather than
independent distribution for $s$ and $s_2$ (such as choosing a target
state $s_2$ and following an $s_2$-dependent policy for some time).

\subsubsection{Infinitely Sparse Rewards and Forward TD vs TD on State-Goal Pairs}
\label{sec:sparse}

Why don't the Dirac rewards show up in the parametric TD algorithm of
Theorem~\ref{thm:paramtd}?
Why don't
the rewards become infinitely sparse with continuous states?

The
tabular TD algorithm \eqref{eq:tabularTD} for $M$ features a sparse
reward $\1_{s=s_2}$. Why don't these sparse rewards vanish completely in
the continuous state limit, where an equality of states never occurs?
This is simply because we know exactly when these terms make a
contribution: namely, we know we can just take $s_2=s$. In the continuous
case, with a model $\tildem(s,s_2)$, the sparse reward is a Dirac
$\delta_s(\d s_2)$, and it shows up in TD as a term $\partial_\theta \tildem (s,s_2)\delta_s(\d s_2)$. When integrated over $s_2$, this term is
just $\partial_\theta \tildem(s,s)$. Thus the contribution from the
infinitely sparse Dirac term is actually finite and nonzero.

Intuitively, we are solving RL problems with an infinity of infinitely
sparse target states $s_2$. But at every time step, when we visit
state $s$, we know that we just visited the target state $s_2=s$: every
step brings a reward. This
knowledge is exploited in the expressions we give for TD, resulting in a
finite contribution $\gamma \,\partial_\theta
m_{\theta_t}(s,s')$ in \eqref{eq:paramtd}.

Algorithmically, it is
quite important to use this. In algorithms that sample a target state $s_2$
fully independently from $s$ (such as picking a random goal $g$ in
\citep{pmlr-v37-schaul15}), the contribution from the reward
$\1_{s=s_2}$ is sometimes nonzero in the tabular case, but gets
infinitely sparse and eventually vanishes in the continuous case. We
provide more details in Section~\ref{sec:sparse} (see also
Section~\ref{sec:goals} for a discussion of state-goal resampling
strategies such as hindsight experience replay \citep{HER}).

On the other hand, 
successor states learned via Theorem~\ref{thm:paramtd} can in principle
learn an
infinite number of infinitely sparse rewards, with every transition being
informative.

\paragraph{The state-goal process.}
In expectation, one can view
forward TD for $M$ as
ordinary TD on the space of pairs $(s,s_2)$, as follows. For the tabular
case this holds without expectations, but for the
parametric case, this equivalence
holds only in expectation: ordinary parametric TD
on pairs $(s,s_2)$ would have infinite variance on continuous spaces due
to the
Dirac reward $\delta_{s}(\d s_2)$, but the
successor state update in Theorem~\ref{thm:paramtd} avoids this infinite
variance, as
discussed above.

In the tabular case, the equivalence is a direct consequence of the
Definition~\ref{def:tabulartd} for tabular forward TD on $M$.

\begin{prop}[ (Tabular forward TD on $M$ as ordinary TD on state-goal pairs)]
\label{prop:stategoalequiv_tabular}

Let $P$ be the transition matrix of the Markov process on state space
$S$. We call \emph{state-goal Markov process} the Markov process on
$S\times S$ whose transition matrix is $P\otimes \Id$, namely $(s,s_2)$
goes to $(s',s_2)$ with $s'\sim P(\d s'|s)$.

Let $S$ be discrete. Then tabular TD for successor states on
$S$ (Definition~\ref{def:tabulartd}) is equivalent to ordinary tabular TD on the
value function of the state-goal process for the reward function
$R(s,s_2)=\1_{s=s_2}$.
\end{prop}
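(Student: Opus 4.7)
The plan is to verify that both algorithms, when written out entry by entry, produce the same sequence of updates to the same array of numbers. The argument is essentially a bookkeeping unpacking of definitions, with one mild point of interpretation.

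First I would identify the value function of the state-goal process. Under the transition kernel $P\otimes\Id$, starting from $(s,s_2)$, the trajectory is $((s_t,s_2))_{t\geq 0}$ where $(s_t)$ is a trajectory of $P$ from $s$; the goal coordinate is frozen. So for the reward $R(s,s_2)=\1_{s=s_2}$, the true value function is
\begin{equation*}
V^\star(s,s_2) \;=\; \E\!\left[\sum_{t\geq 0}\gamma^t \1_{s_t=s_2}\,\Big|\,s_0=s\right] \;=\; M_{s,s_2},
\end{equation*}
where the second equality is the path-occupation formula for $M$ recalled in the finite-state proposition of Section~\ref{sec:defM_discrete}. This both justifies calling $R(s,s_2)=\1_{s=s_2}$ the ``right'' reward and pins down the fixed point.

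Second I would write out the ordinary tabular TD update for the state-goal process. A single transition of this process from $(s,s_2)$ to $(s',s_2)$ (arising from an observed transition $s\to s'$ of the underlying process, and an arbitrary frozen goal $s_2$) yields
\begin{equation*}
V(s,s_2) \;\gets\; V(s,s_2) + \eta\bigl(\1_{s=s_2}+\gamma V(s',s_2)-V(s,s_2)\bigr).
\end{equation*}
Because the kernel $P\otimes\Id$ has the product structure in which the goal coordinate never changes, one observed transition $s\to s'$ of the underlying process simultaneously realizes a valid transition $(s,s_2)\to(s',s_2)$ of the state-goal process for every choice of $s_2$; performing the above update in parallel for all $s_2\in\mathcal{S}$ is what is meant by ``ordinary tabular TD on the state-goal process'' driven by the observations of the original process.

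Third I would compare with Definition~\ref{def:tabulartd}, which upon observing $s\to s'$ sets
\begin{equation*}
M_{s,s_2} \;\gets\; M_{s,s_2} + \eta\bigl(\1_{s=s_2}+\gamma M_{s',s_2}-M_{s,s_2}\bigr)\qquad\forall s_2,
\end{equation*}
term-by-term identical to the state-goal TD update under the identification $V(s,s_2)\leftrightarrow M_{s,s_2}$. No other entries $(s'',s_2)$ with $s''\neq s$ are touched by either update, so the two algorithms agree at every step. There is no genuine obstacle; the only care needed is the interpretation above (one observed transition in $S$ is interpreted as a family of transitions indexed by $s_2$ in $S\times S$), which is the only reasonable way to drive the state-goal TD from the available data and which matches the data model of Section~\ref{sec:notation}.
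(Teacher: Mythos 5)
Your proposal is correct and matches the paper's reasoning: the paper treats this proposition as an immediate consequence of Definition~\ref{def:tabulartd} (tabular TD on $M$ is by construction ordinary TD run in parallel over all goals $s_2$ of the state-goal process with reward $\1_{s=s_2}$), and your entry-by-entry comparison is exactly that observation spelled out. The extra identification of the fixed point $V^\star(s,s_2)=M_{s,s_2}$ is harmless but not needed for the equivalence of the update rules.
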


The parametric case is handled as follows. In discrete or continuous
state spaces, the successor state operator $M(s,\d s_2)$ satisfies the
Bellman equation $M(s,\d s_2)=\delta_s(\d s_2)+\gamma \E_{s'\sim P(\d
s'|s)} M(s',\d s_2)$ as measures over $s_2$. Consider the
parameterization \eqref{eq:Mmodeltilde}, $M(s,\d s_2)=\tildem_\theta(s,s_2)\rho(\d
s_2)$ where $m_\theta$ is some parametric function (the parameterization
\eqref{eq:Mmodel} with $m_\theta$ is similar). The Bellman equation
rewrites as $\tildem_\theta(s,s_2)\rho(\d s_2)=\delta_s(\d s_2)+\gamma
\E_{s'\sim P(\d
s'|s)} \tildem_\theta(s,s_2)\rho(\d s_2)$. If $S$ is discrete, the ratio
of measures $\delta_s(\d
s_2)/\rho(\d s_2)$ is an ordinary function and we can rewrite the
successor state Bellman equation as
\begin{equation}
\label{eq:stategoalTD}
\tildem_\theta(s,s_2)=\frac{\delta_s(\d s_2)}{\rho(\d s_2)}+\gamma
\E_{s'\sim P(\d
s'|s)} \tildem_\theta(s,s_2).
\end{equation}

This is the Bellman equation over state-goal pairs $(s,s_2)$ for the
reward function $R(s,s_2)\deq \delta_s(\d s_2)/\rho(\d s_2)$ and
transition matrix $P\otimes \Id$. It is similar to goal-dependent value
functions (as in, e.g., \citep{pmlr-v37-schaul15}), up to the
$1/\rho(\d s_2)$ factor necessary to turn measures into functions.  Parametric
TD using this equation is just the average of parametric TD for the
individual value functions associated to each goal $s_2$.

Naive TD on this state-goal Bellman equation does not behave well due to
the sparse reward $\delta_s(\d s_2)$: most pairs have reward $0$ and this
induces high variance. In continuous spaces, TD on this equation
degenerates: the reward is $0$ with probability $1$ but its variance is
infinite due to the infinite Dirac function $\delta_s(\d s_2)/\rho(\d
s_2)$. However, the \emph{expected} TD update can be computed
algebraically and results in the finite-variance update for successor
states. Thus we have the following result.

\begin{prop}[ (Parametric TD on $M$ as finite-variance version of
parametric TD on goal-state pairs)]
\label{prop:stategoalequiv_param}

Let the state space $S$ be discrete. Then the Bellman equation
$M=\Id+\gamma PM$ for successor states is equivalent to the ordinary
Bellman equation \eqref{eq:stategoalTD} for the state-goal process on
pairs $(s,s_2)$ with reward function $R(s,s_2)\deq \delta_s(\d
s_2)/\rho(\d s_2)$.

Moreover, in expectation over state-goal samples $(s,s_2)\sim \rho\otimes \rho$, the
ordinary parametric TD update for the Bellman equation
\eqref{eq:stategoalTD} of the state-goal process is equal to the
parametric TD update for successor states from Theorem~\ref{thm:paramtd},
both for the parameterizations $m_\theta$ and $\tildem_\theta$.

Let the state space $S$ be continuous, with $\rho$ covering the whole
space.
Then
ordinary parametric TD for the Bellman equation \eqref{eq:stategoalTD} on
the state-goal process is undefined: the reward term is $0$ with probability
$1$ but has infinite variance. On the other hand, its expectation
is
well-defined, and the
parametric TD update for successor states from Theorem~\ref{thm:paramtd}
has the same expectation but finite variance (for smooth and bounded
$m_\theta$).
\end{prop}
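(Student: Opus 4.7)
The plan is to split the proof into the three claims of the proposition and to reuse as much as possible the Bellman identity $M(s,\d s_2)=\delta_s(\d s_2)+\gamma\E_{s'\sim P(s,\cdot)}M(s',\d s_2)$ already established for $M$. First, for the equivalence of Bellman equations in the discrete case, I would plug the parameterization $M(s,\d s_2)=\tildem_\theta(s,s_2)\rho(\d s_2)$ into that measure-valued equation and divide everywhere by $\rho(\d s_2)$. In the discrete setting every singleton has positive mass, so the Radon--Nikodym derivative $\delta_s(\d s_2)/\rho(\d s_2)$ reduces to the genuine function $\1_{s=s_2}/\rho(s_2)$, and the identity becomes exactly \eqref{eq:stategoalTD} with reward $R(s,s_2)=\1_{s=s_2}/\rho(s_2)$. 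This is pointwise in $s_2$, so it is the ordinary Bellman equation for the state-goal process with transition kernel $P\otimes \Id$ evaluated at each fixed second coordinate.

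Second, for the equivalence in expectation of the two parametric TD updates, I would start from the naive TD loss for the state-goal Bellman equation, namely $J_{\mathrm{sg}}(\theta)=\tfrac12\E_{(s,s_2)\sim\rho\otimes\rho}\bigl[(\tildem_\theta(s,s_2)-R(s,s_2)-\gamma\E_{s'}\tildem_{\theta_t}(s',s_2))^2\bigr]$, treating the bootstrap factor as a fixed target (as in standard TD). Its semi-gradient is
\begin{equation}
-\partial_\theta J_{\mathrm{sg}}(\theta)=\E_{(s,s_2),\,s'}\bigl[\partial_\theta \tildem_\theta(s,s_2)\bigl(R(s,s_2)+\gamma \tildem_{\theta_t}(s',s_2)-\tildem_{\theta_t}(s,s_2)\bigr)\bigr].
\end{equation}
The key computation is the reward term: $\E_{s_2\sim\rho}[\partial_\theta\tildem_\theta(s,s_2)\,\1_{s=s_2}/\rho(s_2)]=\sum_{s_2}\rho(s_2)\cdot\partial_\theta\tildem_\theta(s,s_2)\,\1_{s=s_2}/\rho(s_2)=\partial_\theta\tildem_\theta(s,s)$, so the $\rho(s_2)$ weight cancels the $1/\rho(s_2)$ in the reward. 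This reproduces exactly the first term of \eqref{eq:paramtdtilde}; the remaining terms match by inspection. The analogous calculation for the parameterization $m_\theta$ in \eqref{eq:Mmodel} uses $M^{\tar}-\delta_s=\gamma PM_{\theta_t}$ and yields \eqref{eq:paramtd}.

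Third, for the continuous case, I would observe that when $\rho$ has no atoms the event $\{s_2=s\}$ under $\rho\otimes\rho$ has probability zero, so $R(s,s_2)=\delta_s(\d s_2)/\rho(\d s_2)$ is almost surely $0$ as a random variable, while its formal variance under $\rho\otimes\rho$ reads $\int\!\!\int (\delta_s/\rho)^2\rho(\d s_2)\rho(\d s)=\int\rho(\d s)\int \delta_s^2/\rho=\infty$, so the naive Monte Carlo TD estimator is ill-defined. Nevertheless, as a measure identity the computation of the expectation from the second step is still valid: integrating $\partial_\theta\tildem_\theta(s,s_2)$ against $\delta_s(\d s_2)$ gives $\partial_\theta\tildem_\theta(s,s)$, which is finite and bounded under the assumed smoothness and boundedness of $m_\theta$. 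Thus the successor-state TD update of Theorem~\ref{thm:paramtd} has the same expectation as the formal state-goal TD update but with finite variance, since all three terms in the estimator are bounded functions of $(s,s',s_2)$.

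The only subtle step is the third one: I must phrase the ``formal expectation'' of the state-goal update carefully, because the variance is truly infinite and one cannot speak of an unbiased estimator in the usual sense. I would make this rigorous by defining the expected update directly as an integral of measures (the Dirac term being integrated against $\partial_\theta\tildem_\theta(s,\cdot)$), then checking that this integral coincides with Theorem~\ref{thm:paramtd}. The boundedness of $\partial_\theta m_\theta$ assumed in the statement ensures the variance bound for the successor-state estimator is finite.
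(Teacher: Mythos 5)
Your proposal is correct and follows essentially the same route as the paper: rewrite the measure-valued Bellman equation for $M$ as \eqref{eq:stategoalTD} by dividing by $\rho(\d s_2)$ (legitimate in the discrete case), then compute the expected naive state-goal TD update and observe that the $\rho(s_2)$ weight cancels the $1/\rho(s_2)$ in the Dirac reward, reproducing the updates of Theorem~\ref{thm:paramtd}, with the continuous case handled by noting the almost-surely-zero/infinite-variance degeneracy and defining the expectation directly as an integral of measures. This matches the paper's argument (the discussion preceding the proposition together with the computation in the proof of Theorem~\ref{thm:paramtd}), including your closing remark on how to make the ``formal expectation'' rigorous.
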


\subsubsection{Convergence properties for TD on successor states}
\label{sec:conv}

Forward TD for $M$ converges in the same conditions as ordinary TD for
the value function. This is obtained by viewing forward TD for $M$ as
ordinary TD on the space of pairs $(s,s_2)$, as in
Section~\ref{sec:sparse}.
Thus, interpreting
TD for successor states as TD on the state-goal process immediately
transfers existing convergence results for ordinary TD to successor
states.

We consider three such results: convergence of tabular TD, convergence of
TD on-policy with a linear parameterization, and convergence of TD
on-policy for any parameterization if the random walk is reversible. In
each case, we refer to the original works for additional technical
conditions (learning rates, smoothness...)

\begin{itemize}
\item In the tabular case, forward TD on $M$
(Definition~\ref{def:tabulartd}) converges, with pairs $(s,s_2)$ sampled
at each step from essentially any selection scheme (stochastic or
deterministic) that ensures every pair is selected infinitely often, and
with suitable learning rates \citep{tsitsiklis1994tdconv}.

\item TD with linear parameterization on discrete spaces is known to converge
\emph{on-policy} \citep{tsitsiklis1997approxtd}, namely, with states sampled according to a
steady-state distribution of the Markov process (assumed to be nonzero on
every state). For successor states this translates to the following.
Assume
$S$ is discrete and the successor state operator is parameterized as
\begin{equation}
M(s,\d s_2)\approx \sum_i \theta_i\, \phi_i(s,s_2)\rho(\d s_2)
\end{equation}
or equivalently $\tildem_\theta(s,s_2)=\sum_i \theta_i \, \phi_i(s,s_2)$,
where $\theta=(\theta_1,\ldots,\theta_k)$ is the parameter to be learned,
and $\phi_1,\ldots,\phi_k$ are fixed functions. Assume
$\rho$ is a positive steady-state distribution of the Markov operator $P$, and let
$\rho_2$ be any positive distribution over $S$. Then $\rho\otimes \rho_2$
is a steady-state distribution of the Markov operator $P\otimes \Id$ over
state-goals, and parametric TD for the Bellman equation
\eqref{eq:stategoalTD} with pairs $(s,s_2)$ sampled from $\rho\otimes
\rho_2$, is convergent for suitable learning rates. This also covers the
parametric update in Theorem~\ref{thm:paramtd}, which has the same
expectation by Proposition~\ref{prop:stategoalequiv_param}.

\item For TD with arbitrary parametric families, convergence is known
assuming that the Markov operator $P$ is \emph{reversible}, namely, that
$\rho$ is 
its steady-state distribution and further satisfies the \emph{detailed
balance} condition $\rho(\d s)P(\d s'|s)=\rho(\d s')P(\d s|s')$, in other
words,
steady-state flows from state $s$ to $s'$ and $s'$ to $s$ are equal.
Then,
parametric TD is a stochastic gradient descent of a global loss between the
approximate and true value function \citep{ollivier2018approximate}. This
result extends to MDPs which are ``reversible enough''
\citep{brandfonbrener2019geometric}. Applying the result of
\citep{ollivier2018approximate} to successor
states via the state-goal process yields the following. Assume that the
space $S$ is finite and that the Markov operator $P$ is reversible. Let
$\tildem_\theta$ be any smooth parametric model for successor states. Let
$\tildem^\ast$ be the true value, namely, let the true successor state
operator be $M(s,s_2)=\tildem^\ast(s,s_2)\rho(\d s_2)$. Define the loss
function
\begin{equation}
\label{eq:revloss}
\ell(\theta)\deq
(1-\gamma)\norm{\tildem_\theta-\tildem^\ast}^2_{\rho\otimes \rho}+
\gamma \norm{\tildem_\theta-\tildem^\ast}^2_{\mathrm{Dir}}
\end{equation}
where $\norm{f}^2_{\rho\otimes \rho}\deq \E_{s\sim \rho,\,s_2\sim
\rho}f(s,s_2)^2$ and the \emph{Dirichlet norm} is 
\begin{equation}
\norm{f}^2_{\mathrm{Dir}}\deq \frac12 \E_{s\sim
\rho,\,s_2\sim
\rho,\,s'\sim P(\d s'|s)} (f(s',s_2)-f(s,s_2))^2.
\end{equation}
Then the parametric TD step for $M$ (Theorem~\ref{thm:paramtd}) is equal
to the gradient of this loss, $-\frac12 \partial_\theta \ell(\theta)$.
(This is a global loss between the parametric model and the true
value $\tildem^\ast$, contrary to the loss in Theorem~\ref{thm:paramtd} which uses a
loss with respect to the right-hand-side of the Bellman equation, which
depends on the current estimate.)

Thus, in the reversible case with $\rho$ the stationary distribution, parametric TD for $M$ converges to a local
minimum of the global loss \eqref{eq:revloss}, under the general conditions for convergence
of stochastic gradient descent.
\end{itemize}

\subsubsection{Variants of Forward TD: Target Networks, Multi-Step
Returns, $\gamma=1$, Using Features as Targets...}

The variants of TD used in practice also exist for successor states.

In Appendix~\ref{sec:basictd} we provide the parametric updates for two
variants: using a \emph{target network} (namely, performing several gradient
steps toward $\Id+\gamma P M^\tar$ without updating $M^\tar$), and using
\emph{multi-step returns}.

Appendix~\ref{sec:relative} describes \emph{relative TD} for
successor states: this makes it possible to deal directly with $\gamma=1$
and to reduce variance for $\gamma$ close to $1$.

Appendix~\ref{sec:sss} deals with using different probability
distributions for $s$ and $s_2$ (e.g., using synthetic states for $s_2$
to have more samples), and using a different reference measure for the
parameterization of $M$ (e.g., representing $M$ by its density with
respect to the uniform measure rather than the unknown distribution
$\rho$ in \eqref{eq:Mmodel} and \eqref{eq:Mmodeltilde}).

Appendix~\ref{sec:targetfeatures} mentions situations where the reward is
known to depend only on some features $\phi(s)$ of the state $s$ (such as
a subset of coordinates of $s$). A typical example would be a specific
target value for $\phi(s)$. In that case, it is enough to learn the
successor state operator $M(s,\d g)$ with the second argument in the
space of features, $g=\phi(s)$. Then $M(s,\d g)$ directly provides the
value function of the problem with a reward when $\phi(s)$ is equal to
$g$. It can be
used to express the value function of any reward that depends only on
$g$. The forward TD updates for $M(s,\d g)$ are similar to the case of
$M(s,\d s_2)$.

\subsection{Backward TD for Successor States}
\label{sec:backw-bellman}

\begin{thm}
\label{thm:right-bellman}
  The successor state operator $M$ is the only operator which
  satisfies the backward Bellman equation, $M=\Id+\gamma M P$.
\end{thm}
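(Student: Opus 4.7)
The plan is to mimic the structure that Theorem~\ref{thm:Mdef} uses to characterize $M$ as $(\Id-\gamma P)^{-1}$, but reading the inverse on the right rather than the left. Everything takes place in the algebra of bounded operators on $\BS$, where $P$ has operator norm at most $1$ (it is a Markov kernel), so $\gamma P$ is a strict contraction and $\Id-\gamma P$ has a genuine two-sided bounded inverse given by the Neumann series $\sum_{n\geq 0}\gamma^n P^n$, which is precisely $M$ by Theorem~\ref{thm:Mdef}.

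First I would verify existence: since $M=(\Id-\gamma P)^{-1}$ is in particular a right inverse, $M(\Id-\gamma P)=\Id$, i.e.\ $M-\gamma MP=\Id$, which rearranges to the backward Bellman equation $M=\Id+\gamma MP$. Equivalently, plugging the series definition directly gives
\begin{equation}
\Id+\gamma MP=\Id+\gamma\Bigl(\sum_{n\geq 0}\gamma^n P^n\Bigr)P=\Id+\sum_{n\geq 0}\gamma^{n+1}P^{n+1}=\sum_{n\geq 0}\gamma^n P^n=M.
\end{equation}

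For uniqueness, suppose $M'$ is any bounded operator on $\BS$ satisfying $M'=\Id+\gamma M'P$. Rewriting this as $M'(\Id-\gamma P)=\Id$ shows $M'$ is a right inverse of $\Id-\gamma P$. Since $\Id-\gamma P$ is already invertible with two-sided inverse $M$, right-multiplying the identity $M'(\Id-\gamma P)=\Id$ by $M$ gives $M'=M'(\Id-\gamma P)M=M$. Alternatively, and more self-contained, one can iterate the backward Bellman equation $N$ times to obtain $M'=\sum_{n=0}^{N-1}\gamma^n P^n+\gamma^N M' P^N$ and pass to the limit using $\|P^N\|_{\mathrm{op}}\leq 1$ together with boundedness of $M'$, so that the remainder $\gamma^N M'P^N$ vanishes in operator norm and $M'$ coincides with the Neumann series for $M$.

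The only mild subtlety is that the argument implicitly requires $M'$ to be a bounded operator on $\BS$ — the same standing assumption made for $M$ in Theorem~\ref{thm:Mdef} — so that the remainder term in the iteration argument actually tends to zero and the inverse of $\Id-\gamma P$ is unique. In the finite tabular case this reduces to the trivial matrix fact that a left inverse equals the two-sided inverse; in the general Polish setting it is exactly the statement that the Neumann series is the unique bounded inverse of $\Id-\gamma P$, which is the main thing to flag but not a real obstacle since $\gamma\|P\|_{\mathrm{op}}<1$.
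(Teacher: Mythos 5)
Your proof is correct and follows essentially the same route as the paper: the paper also observes that any solution of $M'=\Id+\gamma M'P$ is a one-sided inverse of $\Id-\gamma P$, which is invertible with inverse $M$ by Theorem~\ref{thm:Mdef}, so $M'=M$ (your extra Neumann-series iteration argument is a fine, slightly more self-contained alternative but not needed). One small nit: when $M'(\Id-\gamma P)=\Id$, standard terminology calls $M'$ a \emph{left} inverse of $\Id-\gamma P$, as the paper does, not a right inverse; the algebra you perform with it is nonetheless correct.
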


This equation has no analogue on $V$. It is similar to an update of
expected eligibility traces (see Appendix \ref{sec:traces}). The
resulting operator has the same contractivity properties as the usual
(forward) Bellman operator.

\begin{prop}[ (Contractivity of the backward Bellman operator on $M$)]
\label{prop:contract2}
Equip the space of functions $\BS$ with the sup norm $\norm{f}_\infty\deq
\sup_{s\in S}\abs{f(s)}$.
Equip the space of bounded linear operators from $\BS$ to $\BS$
with the operator norm $\norm{M}_{\mathrm{op}}\deq \sup_{f\in \BS,\,f\neq
0} \norm{Mf}_\infty/\norm{f}_\infty$.

Then the backward Bellman operator $M\mapsto
\Id+\gamma MP$ is $\gamma$-contracting for this norm.
\end{prop}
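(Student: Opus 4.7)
The plan is to mirror the proof of Proposition~\ref{prop:contract} verbatim, replacing left multiplication by right multiplication and replacing the trivial inequality $\|PM\|_{\mathrm{op}}\leq \|P\|_{\mathrm{op}}\|M\|_{\mathrm{op}}$ by $\|MP\|_{\mathrm{op}}\leq \|M\|_{\mathrm{op}}\|P\|_{\mathrm{op}}$. Both are instances of submultiplicativity of the operator norm, which holds on any normed algebra of bounded operators, so the argument transports with essentially no change.

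Concretely, given two bounded operators $M_1,M_2\from \BS\to\BS$, I would compute the difference of their images under the backward Bellman operator $\mathcal{T}^{\mathrm{back}}\from M\mapsto \Id+\gamma MP$:
\begin{equation}
\mathcal{T}^{\mathrm{back}}(M_1)-\mathcal{T}^{\mathrm{back}}(M_2)=\gamma\,(M_1-M_2)\,P,
\end{equation}
the $\Id$ terms cancelling. Taking the operator norm and using submultiplicativity then gives
\begin{equation}
\norm{\mathcal{T}^{\mathrm{back}}(M_1)-\mathcal{T}^{\mathrm{back}}(M_2)}_{\mathrm{op}}\leq \gamma\,\norm{M_1-M_2}_{\mathrm{op}}\,\norm{P}_{\mathrm{op}}.
\end{equation}

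The only substantive point is to verify $\norm{P}_{\mathrm{op}}\leq 1$; this is the same fact already used (implicitly) in Proposition~\ref{prop:contract}. For any $f\in\BS$ and any $s\in\mathcal{S}$, since $P(s,\d s')$ is a probability measure,
\begin{equation}
\abs{(Pf)(s)}=\Bigl\lvert\E_{s'\sim P(s,\d s')}[f(s')]\Bigr\rvert\leq \E_{s'\sim P(s,\d s')}\abs{f(s')}\leq \norm{f}_\infty,
\end{equation}
so $\norm{Pf}_\infty\leq \norm{f}_\infty$ and hence $\norm{P}_{\mathrm{op}}\leq 1$. Combining gives the desired $\gamma$-contractivity of $\mathcal{T}^{\mathrm{back}}$.

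I do not anticipate a genuine obstacle: the argument is a one-line reduction to Proposition~\ref{prop:contract}. The only very minor subtlety is making sure submultiplicativity is applied in the correct order ($MP$ rather than $PM$), which is why an explicit inequality $\norm{P}_{\mathrm{op}}\leq 1$ is stated separately rather than hidden inside a generic contraction bound.
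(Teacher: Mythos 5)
Your proof is correct and follows essentially the same route as the paper: the paper also bounds $\norm{MPf}_\infty\leq \norm{M}_{\mathrm{op}}\norm{Pf}_\infty\leq \norm{M}_{\mathrm{op}}\norm{f}_\infty$ using that $P$ is $1$-contracting for the sup norm, and concludes $\gamma$-contractivity from linearity of $M\mapsto \Id+\gamma MP$. No gaps; your explicit handling of the absolute value inside the expectation is if anything slightly cleaner than the paper's statement of the bound on $\norm{Pf}_\infty$.
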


The corresponding parametric update to bring $M$ closer to $\Id +\gamma
MP$, similar to
Theorem~\ref{thm:paramtd}, is
\begin{equation}
\label{eq:parambackwardtd}
\widehat{\del\theta_{\text{BTD}}} = \gamma\, \partial_{\theta} m_{\theta}(s,s')
	+m_{\theta}(s_1,s) \left(\gamma\,\partial_\theta
			m_\theta(s_1,s')- \partial_\theta m_{\theta}(s_1, s) \right)
\end{equation}
for the model \eqref{eq:Qmodel} using $m_\theta$, and
\begin{equation}
\widehat{\del\theta_{\text{BTD}}} =  \partial_{\theta} \tildem_{\theta}(s,s)
	+\tildem_{\theta}(s_1,s) \left(\gamma\,\partial_\theta
			\tildem_\theta(s_1,s')- \partial_\theta \tildem_{\theta}(s_1, s) \right)
\end{equation}
for the model \eqref{eq:Qmodeltilde} using $\tildem_\theta$.
Here a transition $s\to s'$ and another, independent state $s_1$ are both sampled
from the dataset. A precise statement is given in
Theorem~\ref{thm:paramtd_right} (Appendix~\ref{sec:backwardTD_param}). \todo{or give full statement?}

%
Backward TD for $M$ is not structurally different from forward TD: it
corresponds to forward TD for the ``time-reversed'' Markov process
(Appendix~\ref{sec:traces}). But since states are typically observed in a 
time-ordered sequence, this might produce a difference. In general,
the backward TD update \eqref{eq:parambackwardtd} does not look like a time-reversal of
the forward TD update \eqref{eq:sampleparamtd}: \eqref{eq:parambackwardtd} involves Bellman
gaps of gradients $\partial m$ while \eqref{eq:sampleparamtd} involves
Bellman gaps of $m$. This difference is superficial and disappears in
expectation under the stationary distribution: if we assume that
$\rho$ is the stationary distribution of the process, then
\eqref{eq:parambackwardtd} is equal in expectation to
\begin{equation}
\label{eq:parambackwardtd_rhoinv}
\gamma\, \partial_{\theta} m_{\theta}(s,s')
	+\partial_\theta m_\theta(s_1,s')\left(
        \gamma m_{\theta}(s_1,s) - m_{\theta}(s_1, s') \right)
\end{equation}
which looks more like a time-reversal of the forward TD update
\eqref{eq:sampleparamtd}, with (time-reversed) Bellman gaps of $m$.
\footnote{The
difference between \eqref{eq:parambackwardtd_rhoinv} and
\eqref{eq:parambackwardtd} just lies in shifting terms around along a
trajectory $s\to s'\to s''\to\ldots\,$: in one case, the term $m_{\theta}(s_1,
s') \partial_\theta
m_\theta(s_1,s')$ is grouped with the previous transition $s\to s'$, in
the other case, with the next transition $s'\to s''$. Thus the difference is
minor if working online along trajectories, but
\eqref{eq:parambackwardtd} is valid even if $\rho$ is not the stationary
distribution.}

Moreover, contrary to forward TD, learning $M$ by backward TD then
setting $V=MR$ is \emph{not} equivalent to learning $V$ via TD in the
tabular case.

Mixing forward and backward TD can change the learning of $M$
in various ways. In the tabular case and in the infinitesimal
learning rate limit, such mixing substantially reduces the dimension of the
subspace of $M$ where convergence is slowest
(Section~\ref{sec:mixingfbtd}). With the
matrix-factorized parameterization of Section~\ref{sec:FB}, using forward, or
backward TD, or a mixture of the two, provides approximations of $M$ using
slightly different criteria (Appendix~\ref{app:FB}).

There is no version of backward TD for the goal-dependent optimal $Q$
function of Section~\ref{sec:optQ}. Performing a random step on the
goal state does not commute with optimizing an action depending on the
goal state. With a fixed policy, backward TD is forward TD on a
time-reversed Markov process, but when choosing actions, time reversal is
not possible: in the expectimax problem \eqref{eq:expectimax}, each
action choice may depend both on previous actions and on the goal state,
and reversing time is not possible. Similarly, there is no backward TD
for the target-feature version of Section~\ref{sec:targetfeatures}, as
the features do not generally contain full information about the next
transition and the future features.

\subsection{Path Combinatorics Interpretation: Incorporating Newly
  Observed Transitions}
\label{sec:paths}

\begin{figure}[t]
  \begin{subfigure}{0.32\textwidth}
    \includegraphics[scale=.112]{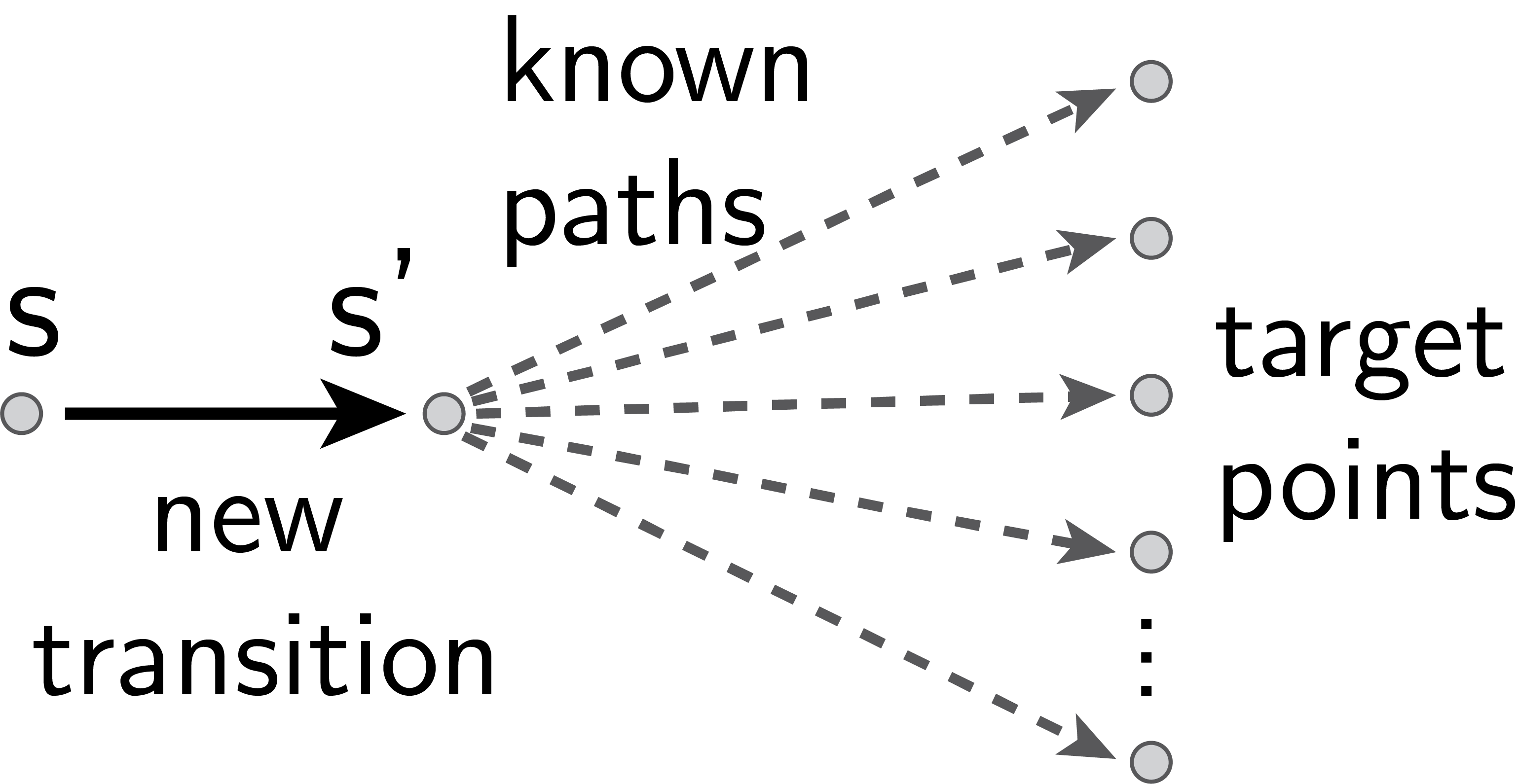}
    \label{fig:ltd_paths}
  \end{subfigure}
  \begin{subfigure}{0.32\textwidth}
    \includegraphics[scale=.112]{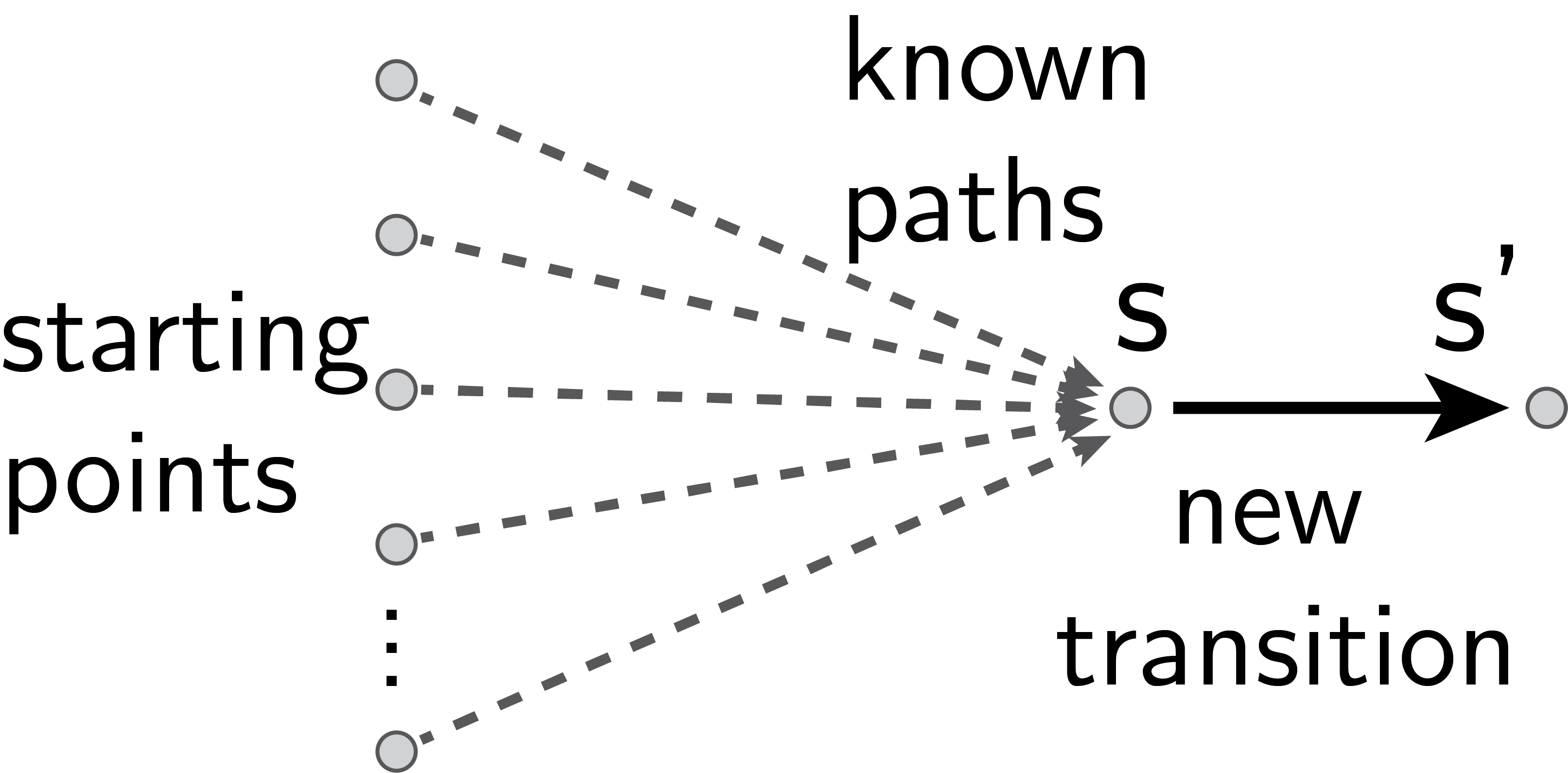}
    \label{fig:rtd_paths}
  \end{subfigure}
  \begin{subfigure}{0.32\textwidth}
    \includegraphics[scale=.112]{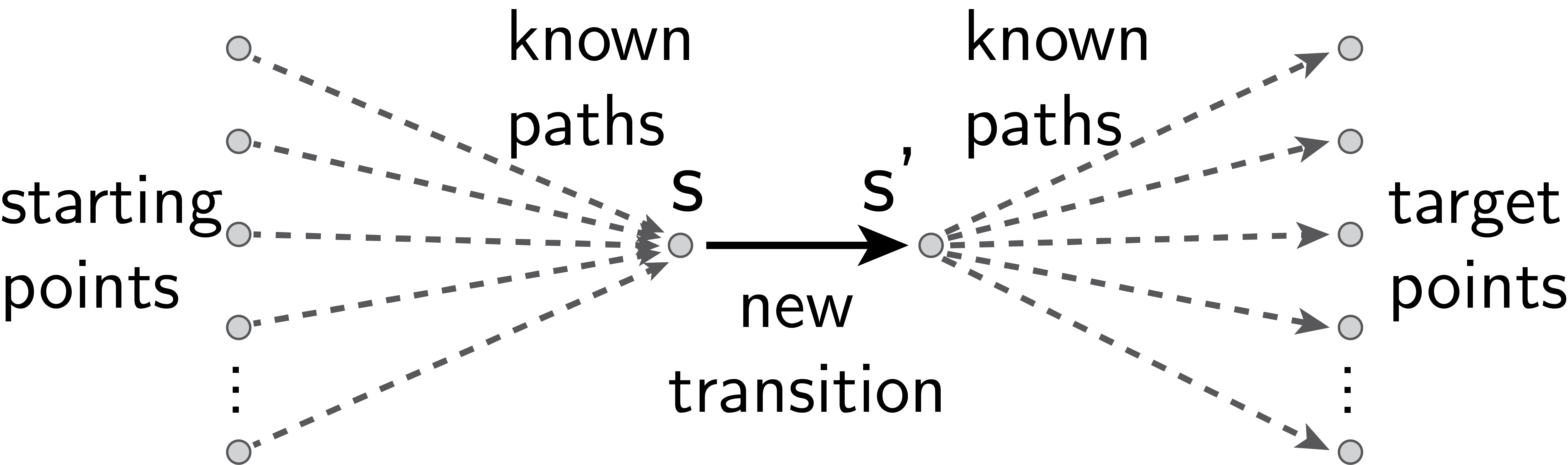}
    \label{fig:bn_paths}
  \end{subfigure}
  \caption{Combining paths: forward TD, backward TD,
  and path composition (Bellman--Newton).}
  \label{fig:combining_paths}
\end{figure}
The difference between forward and backward TD for $M$ is
best understood in the path viewpoint on $M$ (Eq.~\ref{eq:m_path}).
Indeed, the current estimate of $M_{s_1s_2}$ contains a current estimation on the
number of paths from $s_1$ to $s_2$, weighted by their estimated
probabilities in the Markov process. TD replaces $M$ with $PM$, and adds
$\Id$, which represents the trivial paths from $s$ to $s$. Backward TD 
uses $MP$ instead. In both cases, the operator $P$ is sampled via
an observed transition $s\to s'$. Thus, $PM$ builds new known paths by
taking all paths contained
in $M$ and adding the transition $s\to s'$ at the front of each path, while $MP$ adds
the transition $s\to s'$ at the back of each path in $M$
(Fig.~\ref{fig:combining_paths}). Forward TD reasons at fixed \emph{target states}
(rewards) \citep{greydanus2019the},
while backward TD reasons at fixed \emph{starting points}.

Thus, TD and backward TD on $M$ differ in how they learn new paths
from known paths when each new transition is observed. Arguably, both
are reasonable ways to update a mental model of paths in an environment
when discovering new transitions (e.g., if a new street $s\to s'$ opens
in a city).

There is a third way to build new paths when observing a new transition
$s\to s'$: take all known paths to $s$, all known paths from $s'$, and
insert $s\to s'$ in the middle (Fig.~\ref{fig:combining_paths}). This exploits
path concatenation, roughly
doubling the length of known paths.
This operation is involved in the way that $M$ actually changes when the
process is changed by increasing $P(s,\d s')$ (the way possible paths
actually change when a new street opens). This is the basis of the
``second-order''
algorithms we present for $M$ in Section~\ref{sec:BN}.

\section{Multiple Policies: Goal-Dependent $Q$ and $V$ functions}
\label{sec:goals}

The principles above can be used to learning goal-dependent policies and
a goal-dependent value or $Q$ function, just by letting the policy be
goal-dependent in the results above.  This option only covers rewards
located at a given target state, not dense rewards; it can also cover
target \emph{features} of states rather than a fully specified target
state (Section~\ref{sec:goaldependentV}), namely, having target values
for some function $\phi$ of the state.  A first application is to
learn all optimal policies to reach any goal state $s_2$, either via
$Q$-learning (Section~\ref{sec:optQ}) or $V$-learning
(Section~\ref{sec:goaldependentV}).

This approach partially solves the well-known
sparse reward problem in goal-dependent learning.
For instance, let us consider the goal-dependent value function
$V(s,\d s_2)$: for every target state $s_2$, it solves the $s_2$-dependent Bellman equation
\begin{equation}
\label{eq:goaldependent-intro}
V(s,\d s_2)=\delta_{s}(\d s_2)+\gamma \E_{s'\sim P(\d s'|s,s_2)} V(s',\d
s_2)
\end{equation}
with reward when $s=s_2$, and $s_2$-dependent policy $P(\d s'|s,s_2)$.
(In the continuous
case, the goal-dependent value function is a \emph{measure} on $s_2$,
because the probability to exactly reach a state is usually $0$. We will
learn
its density with respect to a reference measure.)

Using TD directly on this equation leads to sparse reward problems: in
continuous state spaces, a reward is never observed (and rarely observed
in large discrete spaces).

However, the contribution of
the reward $\delta_{s}(\d s_2)$ to the TD update can be computed exactly in
expectation.
The resulting update does not involve sparse rewards
anymore: every transition is informative because it shows how to reach
the currently visited state (as discussed in Section~\ref{sec:paramtd}). 
This update is the same as with successor states: the update for
$\tilde m$ in Theorem~\ref{thm:paramtd} can be directly used to train
goal-dependent policies by seeing $\tildem(s,s_2)$ as the value function
at $s$ when the goal state is $s_2$ (see the example after
Theorem~\ref{thm:goalTD}).

Existing workarounds for this sparse reward issue include strategies for resampling
state-goal pairs that more frequently lead to nonzero rewards, such as
\emph{Hindsight Experience Replay} (HER) \citep{HER}, which works with any
$Q$-learning method, assuming knowledge of the reward function associated
to each goal.
It is not clear to us whether HER
actually solves the infinitely-sparse-reward issue or not. \footnote{For
instance, 
with noisy dynamics in a continuous space, the probability to 
reach a state exactly is always $0$, so if the reward is $1$ when
reaching the state, the $Q$-function computed by HER would be $0$. Here
we have used infinite (Dirac) rewards when reaching a goal: this leads
to a well-defined, nonzero $Q$ function, but rescaling the reward by an
infinite factor would result in infinite HER
updates. On the other hand, in some non-noisy continuous MDPs with continuous
actions, it is possible to reach a state exactly, and in that instance
HER would work without modification. This point needs more investigation.}
The results described here are not mutually
exclusive with using HER: HER is a sampling strategy for transitions in
the training set, which can be used with any $Q$-learning method, such as
those presented here; so 
in principle HER could be used as the state-goal distribution $\rho_{SG}$
for $Q$-learning in Theorem~\ref{thm:goalTD}.

\bigskip

We start with learning the optimal $Q$ function for every target state
(Section~\ref{sec:optQ}).
We first describe the precise
meaning of goal-dependent Bellman equations such as
\eqref{eq:goaldependent-intro}, and present the resulting parametric
update.

Next we turn to a more general statement involving either the $V$
or $Q$ function, and target \emph{features} instead of target states
(Section~\ref{sec:goaldependentV}). We discuss three use cases:
$Q$-learning with any goal feature function, $V$-learning conditioned to
goal states, and $V$-learning conditioned to goal features, which
presents some subtleties. The
goal-dependent $V$ function can be used to train goal-dependent policies
by any policy gradient method.

In Section~\ref{sec:Qoptexistence} we provide mathematical details
for the existence and uniqueness of goal-dependent Bellman equations, in
the case of the $Q$ function. Having to work with measures of potentially
infinite mass results in
non-uniqueness of the solution, but there is still a ``natural''
solution, equal both to the smallest solution and to the limit of the
finite-horizon solution.

\subsection{The Optimal $Q$-function for Every Goal State}
\label{sec:optQ}


Several works have attempted to learn optimal $Q$ functions indexed by
an additional ``goal'' which encodes a variable reward. The simplest case
is when the reward is located at a single goal state $g$. Computing the
$Q$ function $Q(s,a,g)$ for every goal state $g$ fully solves the
navigation problem in an environment, although this function does not provide
the optimal policies for ``mixed'' rewards, only for single-state
rewards.

The viewpoint presented here allows for a more principled approach to
this object $Q$; notably, it can avoid the sparse reward problem of
algorithms that sample a state $s$ and a goal state $g$ independently,
with reward
$\1_{s=g}$.
This is
avoided thanks to the direct algebraic treatment of Diracs or sparse rewards discussed
above.

So far, the
successor state operator was defined for a given, fixed policy. The goal-dependent $Q$ function uses a different (optimal) policy
for every goal. It can be defined through the optimal Bellman operator.

\begin{defi}[ (Optimal Bellman operator for successor states)]
\label{def:optQ}
Let $Q(s,a,\d s_2)$ be a measure on $s_2$ depending on a state-action
pair $(s,a)$. Define the \emph{optimal Bellman operator} $T$ via
\begin{equation}
\label{eq:optQ}
(TQ)(s,a,\d s_2)\deq\delta_s(\d s_2)+\gamma \,\E_{s'\sim P(s'|s,a)} \sup_{a'} Q(s',a',\d
s_2).
\end{equation}
\end{defi}

In the discrete case, this is just the usual optimal Bellman operator in
parallel for every goal state $s_2$, namely,
$(TQ)(s,a,s_2)=\1_{s=s_2} +\gamma \,\E_{s'\sim P(s'|s,a)} \max_{a'}
Q(s',a', s_2)$.
In the continuous case, for
each
state-action $(s,a)$, $Q(s,a,\cdot)$ is a measure over the state space,
and the supremum $\sup_{a'} Q(s',a',\d
s_2)$ is a supremum of measures over $s_2$.
\footnote{In general, the supremum of $k$ measures
$\mu_1,\ldots,\mu_k$ is defined as follows: for every measurable set $A$,
$(\sup_i \mu_i)(A)\deq \sup_{(B_i)} \sum\mu_i(B_i)$ where the supremum is taken
over all partitions of $A=B_1\sqcup B_2 \sqcup \cdots \sqcup B_k$ into
disjoint measurable sets $(B_i)$. This is also the smallest measure that is
larger than every $\mu_i$. Each $B_i$ is the set where 
$\mu_i$ is the largest measure in the family. This means
that at each point in state space, we select the measure with the highest
value; thus, the $\sup$ over actions in \eqref{eq:optQ} depends on the
goal states $s_2$.

The definition assumes that the set of actions is
countable; otherwise, additional smoothness assumptions are required for
existence.}

In the discrete case, a fixed point exists by standard
contractivity arguments; however, with continuous states, the situation
is
tricky, see Section~\ref{sec:Qoptexistence}. In particular, with continuous states
the measure $Q$ may have either finite or infinite mass; intuitively, the
total mass of $Q$ indicates how many distinct policies we can follow to
reach different states. The total mass of $Q(s,a)$ is the total
number of distinct points that can be reached from $(s,a)$ by taking
different action sequences, weighted by the probability and discounted
by time. In contrast, the successor state operator of a single fixed policy
(Sections~\ref{sec:defM}--\ref{sec:td-alg}) always has total mass
$\sum\gamma^t=\frac{1}{1-\gamma}$: there is no choice of actions so the total
probability of states is $1$ at each time step.

To see this, consider two extreme examples. In the first, the environment
just ignores every action and sends the agent to a random uniform state at each
time step. Then for any $(s,a)$, $Q(s,a,\d s_2)$ is $\delta_s(\d
s_2)+\frac{\gamma}{1-\gamma} \d s_2$, with total mass
$\frac{1}{1-\gamma}$. In the second example, for every state we have an action
that sends us directly to that state. Then $Q(s,a,\d s_2)$ is a measure
for which \emph{every} single state $s_2\neq s$ has mass
$\frac{\gamma}{1-\gamma}$, and the total mass of $Q(s,a,\cdot)$ is
infinite. This can be arranged even with finite action spaces: generally,
at horizon $t$ the mass may be as large as $\gamma^t (\# A)^t$ if every
action sequence leads to a different part of the state (examples in
Appendix~\ref{sec:exqopt}). In the fixed-policy case, the mass at horizon $t$ was
always $\gamma^t$ and the total mass was always finite.

\paragraph{Parametric goal-dependent $Q$-learning.}
Let us consider parametric models for $Q$.
As before, we consider two models given by
\begin{equation}
\label{eq:Qmodel}
Q_\theta(s,a,\d s_2)\deq \delta_s(\d s_2)+q_\theta(s,a,s_2)\rho(\d s_2)
\end{equation}
and
\begin{equation}
\label{eq:Qmodeltilde}
Q_\theta(s,a,\d s_2)\deq \tildeq_\theta(s,a,s_2)\rho(\d s_2)
\end{equation}
respectively, and we will learn $q_\theta$ and $\tildeq_\theta$. For
instance, up to the factor $\rho$, the models in \citep{pmlr-v37-schaul15} correspond
to $\tildeq_\theta(s,a,s_2)=h(\phi_\theta(s,a),\psi_\theta(s_2))$.
\footnote{The factor $\rho$, or some other measure, is needed to get a well-defined object in
continuous state spaces. In discrete spaces, it results in an
$s_2$-dependent scaling of the $Q$ function, which still has the same
optimal policy for each $s_2$.}

The resulting parametric update is as follows. The update is off-policy:
we assume access to a dataset of transitions $(s,a,s')$ in a Markov decision
process. Let $\rhoSA$ be the distribution of the state-action pair
$(s,a)$ in the dataset; its marginal over $s$ is $\rho$ as before. Given
a measure-valued function of $(s,a)$, such as $Q(s,a,\d s_2)$, we define its norm
similarly to \eqref{eq:norm} as
\begin{equation}
\label{eq:normq}
\norm{Q}^2_{\rhoSA,\rho}\deq \E_{(s,a)\sim\rhoSA,\,s_2\sim \rho}[
q(s,a,s_2)^2]
\end{equation}
where $q(s,a,s_2)\deq Q(s,a,\d s_2)/\rho(\d s_2)$ is the density of $Q$
with respect to $\rho$, if it exists (otherwise the norm is infinite).

\begin{thm}[ (Parametric $Q$-learning for every goal state)]
\label{thm:paramq}
Consider a parametric model of $Q$ given by \eqref{eq:Qmodel} or
\eqref{eq:Qmodeltilde}, where $q_\theta(s,a,s_2)$ or $\tildeq(s,a,s_2)$
are smooth functions depending on the parameter $\theta$.

Let $\theta_0$ be some value of the parameter. Define a target update
$Q^\tar$ of
$Q$ via the optimal Bellman operator \eqref{eq:optQ} applied to
$Q_{\theta_0}$, namely, $Q^\tar(s,a,\d s_2)\deq (TQ_{\theta_0})(s,a,\d
s_2)=\delta_s(\d
s_2)+\gamma\,\E_{s'\sim P(s'|s,a)} \sup_{a'} Q_{\theta_0}(s',a',\d
s_2)$. Define the loss between $Q_\theta$ and $Q^\tar$ via $J(\theta)\deq
\frac12 \norm{Q_{\theta}-Q^\tar}^2_{\rhoSA,\rho}$ using the norm
\eqref{eq:normq}.

Then the gradient step on $\theta$ to reduce this loss
is
\begin{multline}
-\partial_\theta J(\theta)=
\E_{(s,a)\sim \rhoSA,\,s'\sim P(s'|s,a),\,s_2\sim\rho}
\left[
\gamma \, \partial_\theta q_\theta(s,a,s')
\right.
\\
\left.
+\,\partial_\theta q_\theta(s,a,s_2)\left(
\gamma \sup_{a'}q_{\theta_0}(s',a',s_2)-q_{\theta}(s,a,s_2)
\right)
\right]
\end{multline}
for the model \eqref{eq:Qmodel} using $q_\theta$, and
\begin{multline}
-\partial_\theta J(\theta)=
\E_{(s,a)\sim \rhoSA,\,s'\sim P(s'|s,a),\,s_2\sim\rho}
\left[
\partial_\theta \tildeq_\theta(s,a,s)
\right.
\\
\left.
+\,\partial_\theta \tildeq_\theta(s,a,s_2)\left(
\gamma \sup_{a'}\tildeq_{\theta_0}(s',a',s_2)-\tildeq_{\theta}(s,a,s_2)
\right)
\right]
\end{multline}
for the model \eqref{eq:Qmodeltilde} using $\tildeq_\theta$.
\end{thm}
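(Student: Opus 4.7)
The plan is to mimic the proof structure of Theorem~\ref{thm:paramtd}, adapted to the optimal Bellman operator and the goal-dependent $Q$ setting. The only genuinely new ingredient is the $\sup_{a'}$ of measures, which is decoupled as follows: in the $q_\theta$ parameterization, $Q_{\theta_0}(s',a',\d s_2) = \delta_{s'}(\d s_2) + q_{\theta_0}(s',a',s_2)\rho(\d s_2)$, and since the Dirac part does not depend on $a'$ and (generically) lives on a $\rho$-null set, the supremum of measures splits cleanly as
\begin{equation}
\sup_{a'} Q_{\theta_0}(s',a',\d s_2) = \delta_{s'}(\d s_2) + \sup_{a'} q_{\theta_0}(s',a',s_2)\,\rho(\d s_2).
\end{equation}
I would state and verify this split as a small lemma first, using the partition-based definition of $\sup$ of measures recalled in the footnote of Definition~\ref{def:optQ}. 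In the $\tildeq$ parameterization there is no Dirac in $Q_{\theta_0}$, so the sup just becomes $\sup_{a'}\tildeq_{\theta_0}(s',a',s_2)\rho(\d s_2)$.

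Next I would compute $Q^\tar = T Q_{\theta_0}$ in each parameterization. Assuming $P(\d s'|s,a)$ has density $p(s,a,s')$ with respect to $\rho$ (a standard assumption, or formally kept implicit via the identity $\int f(s')P(\d s'|s,a) = \int f(s_2)\, p(s,a,s_2)\rho(\d s_2)$), the forward expectation $\E_{s'\sim P(\cdot|s,a)}[\delta_{s'}(\d s_2)]$ yields $p(s,a,s_2)\rho(\d s_2)$. Thus in the $q_\theta$ model, $Q^\tar - Q_{\theta_0}$ has Dirac parts that cancel, and its density with respect to $\rho$ is
\begin{equation}
\gamma\, p(s,a,s_2) + \gamma\,\E_{s'\sim P(\cdot|s,a)}\!\sup_{a'}q_{\theta_0}(s',a',s_2) - q_\theta(s,a,s_2).
\end{equation}
Differentiating the squared norm with respect to $\theta$ (treating $\theta_0$ inside $Q^\tar$ as frozen, as in TD) and multiplying by $\partial_\theta q_\theta(s,a,s_2)$ under $\E_{(s,a)\sim\rhoSA,\,s_2\sim\rho}$ gives the claimed expression, once the $\gamma p(s,a,s_2)\rho(\d s_2)$ factor is reabsorbed into $P(\d s'|s,a)$ to produce the term $\gamma\,\E_{s'\sim P(\cdot|s,a)}\partial_\theta q_\theta(s,a,s')$.

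For the $\tildeq$ model, the only change is that the Dirac $\delta_s(\d s_2)$ in $Q^\tar$ does not cancel against any piece of $Q_\theta$, so $\|Q_\theta - Q^\tar\|^2_{\rhoSA,\rho}$ is formally infinite. I would handle this exactly as in Theorem~\ref{thm:paramtd}: either define the loss up to a $\theta$-independent divergent constant by subtracting an $M$-independent reference measure, or interpret the gradient as the formal differentiation of $\frac{1}{2}\|Q_\theta - Q^\tar\|^2$ using the bilinear pairing $\E_{s_2\sim\rho}[\partial_\theta \tildeq_\theta(s,a,s_2)\cdot \phi(s_2)]$ extended to distributions $\phi$. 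Integration of $\partial_\theta\tildeq_\theta(s,a,s_2)$ against the Dirac $\delta_s(\d s_2)$ produces the surviving term $\partial_\theta\tildeq_\theta(s,a,s)$, and the remaining terms are treated identically to the $q_\theta$ case.

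The main obstacle is the one I already flagged: justifying the decomposition of $\sup_{a'}$ of measures and justifying the manipulation of the formally infinite loss in the $\tildeq$ case. Both are minor and essentially mirror what was done for Theorem~\ref{thm:paramtd}; the remaining computation is a purely mechanical gradient of a quadratic form, so no new ideas beyond the correct measure-theoretic bookkeeping are needed.
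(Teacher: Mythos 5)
Your proposal is correct and follows exactly the route the paper takes: the paper omits this proof, stating it is ``very similar to that of Theorem~\ref{thm:paramtd}'' (with the alternative of specializing Theorem~\ref{thm:goalTD} to the state-action process), and your adaptation — splitting $\sup_{a'}Q_{\theta_0}(s',a',\d s_2)$ into the fixed Dirac plus the pointwise sup of densities, then repeating the quadratic-loss gradient computation and handling the formally infinite loss for $\tildeq$ via the same $J'$ trick — is precisely that argument. One minor remark: the ``$\rho$-null set'' caveat is unnecessary, since adding a measure that is constant in $a'$ commutes with the partition-based supremum of measures regardless of singularity, so your lemma holds in the discrete case as well.
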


Here we have presented the update using a fixed ``target network'' with parameter
$\theta_0$ (typically a previous value of $\theta$), a common practice
for parametric $Q$-learning.

This update is ``samplable'': sample a transition $(s,a,s')$
from the dataset, another independent transition $(s_2,a_2,s'_2)$ from
the dataset ($a_2$ and $s'_2$ are discarded), and estimate the gradient
by
\begin{equation}
    \label{eq:sampleparamtd}
    \widehat{\del\theta} =  \gamma\,\partial_{\theta} q_{\theta}(s,a,s')
    +\partial_{\theta} q_{\theta}(s,a,s_2) \left(\gamma \sup_{a'}
    q_{\theta_0}(s',a',s_2)-
    q_{\theta}(s,a, s_{2}) \right)
\end{equation}
or likewise for $\tildeq$ (only the first term is different).

This update is perfectly analogous to the successor state updates for
$m_\theta$ and $\tildem_\theta$ in
Theorem~\ref{thm:paramtd}, except that $q_\theta$ and $\tildeq_\theta$
depend on the actions, and that the policy follows a supremum over
actions instead of being fixed.

As before, the infinite, infinitely sparse rewards $\delta_{s}(\d s_2)$
of the every-goal problem
produce the finite contribution $\gamma \,\partial_{\theta}
q_\theta(s,a,s')$ or $\partial_\theta \tildeq(s,a,s)$ in this parametric
update. Sampling two independent states $s$ and $s_2$ is still needed, but for the
Bellman gap term, not for the reward term.

%

%

\subsection{Value and $Q$ Functions with State Features as Goals}
\label{sec:goaldependentV}

We now turn to a general result covering both value and $Q$ functions
($Q$ functions are obtained as the value function of the state-action
Markov process, as explained in Section~\ref{sec:notation}). We also
cover target \emph{features} rather than target states: namely, we are
given a feature function $\phi$ on state space, and the reward is nonzero
on states $s$ such that $\phi(s)$ achieves a particular goal value $g$.
Target states correspond to $\phi=\Id$.

Covering $V$ functions requires the ability to work on-policy. Thus, we
assume that goal-dependent policies are given, yielding goal-dependent
transitions $s\to s'|g$ defined by their transition probabilities $P(\d
s'|s,g)$.

Thus we wish to find solutions to the goal-dependent Bellman equation
\begin{equation}
\label{eq:goalbellman}
V(s,\d g)=\delta_{\phi(s)}(\d g)+\gamma \,\E_{s'\sim P(\d s'|s,g)}
V(s',\d g)
\end{equation}
with reward on states such that the features $\phi(s)$ are equal to $g$.
Full target states amount to $\phi=\Id$: a nonzero reward when $s=g$.
This can be used in turn to train the goal-dependent policies, for
instance by policy gradient.
(The technical meaning of this equation is similar to the case of $Q$
above. For a discussion on existence and uniqueness we refer to
Section~\ref{sec:Qoptexistence}.)

Here the training dataset is made of triplets $(s\to s'|g)$: transitions
indexed by a goal.  For
the $Q$ function this is not restrictive: working on state-action pairs,
given a state-action $(s,a,s')$ it is always possible to sample a goal $g$
a posteriori, and to define the next action $a'$ according to policy $g$
in state $s'$. For the $V$ function this is more restrictive: typically,
the training set would be made of trajectories where a goal is selected
at random and kept for some time. This results in some empirical
distribution over state-goal pairs $(s,g)$ in the training set, with $s$
and $g$ not independent.

A major issue is to avoid using the sparse rewards $\delta_{\phi(s)}(\d
g)$. Indeed, the most obvious approach to the Bellman equation
\eqref{eq:goalbellman} is to view this problem as an ordinary Markov
process on the augmented state space of state-goal pairs $(s,g)$. 
The TD
update for
this problem is
\begin{align}
\label{eq:goalTD_naive}
\del \theta&=\E_{(s,g)\sim \rho_{SG},\,s'\sim P(\d s'|s,g)}
\left[
\partial_\theta v_\theta(s,g)\left(
\frac{\delta_{s}(\d g)}{\tau(\d g)}+\gamma
v_\theta(s',g)-v_\theta(s,g)
\right)
\right]\nonumber
\end{align}
where $\rho_{SG}$ is the distribution of state-goal pairs $(s,g)$ in the
training set, and where the $V$ function has been parameterized as
$V_\theta(s,\d g)=v_\theta(s,g)\tau (\d g)$ for some arbitrary measure
$\tau(\d g)$ on goal space. In a continuous state space, no reward would
ever be observed.

Sparse rewards can be avoided by just using the goal $g=\phi(s)$ for the
sparse term: $\partial_\theta v_\theta(s,g)\,\delta_s(\d g)\leadsto
\partial_\theta v_\theta(s,\phi(s))$. The price to pay is computing the value
function only \emph{up to a goal-dependent scaling}. Namely, there is a
simple sparsity-free TD update for the related problem
\begin{equation}
V(s,\d g)=\alpha(s,g)\,\delta_{\phi(s)}(\d g)+\gamma \,\E_{s'\sim P(\d s'|s,g)}
V(s',\d g).
\end{equation}
Here the reward is nonzero only if $\phi(s)=g$, but with an unknown
factor $\alpha(s,g)$ that depends on the solution reached.

If $\alpha(s,g)$ depends only on $g$, then optimal policies are not
affected: for every goal $g$,
we just compute
the correct value function for this goal
up to a $g$-dependent scaling.
This happens in many use cases, notably for
$Q$-learning or if $\phi=\Id$ (goals are full states), as shown below.

The least favorable use case is $V$-learning with $\phi\neq
\Id$. Then the scaling $\alpha$ may also vary among the states $s$ which
achieve $\phi(s)=g$: this may result in policies which do solve the
problem of finding a state $s$ with $\phi(s)=g$, but not necessarily in
an optimal way. (In that case, another option is to explicitly provide a
full state $s_g$ such that $\phi(s_g)=g$ and use the full state $s_g$ as
the goal instead, thus going back to $\phi=\Id$.)

We now turn to the technical, general statement and discuss some explicit use
cases. The theorem is stated for $V$ functions; the case of $Q$
functions follows by applying it to the state-action Markov process.
\todo{specify loss and norm in the statement?}

\newcommand{\Goal}{\mathcal{G}}

\begin{thm}[ (Goal-dependent TD)]
\label{thm:goalTD}
Let $\phi\from S\to \Goal$ be a function from the state space to some
goal space $\Goal$ (discrete if $S$ is discrete and continuous if $S$ is
continuous).

Assume that the training set consists of transitions $(s\to s'|g)$
indexed by a goal.
Let the joint distribution of state-goal pairs in the training set be
$\rho_{SG}(\d s,\d g)$. Let $\rho_S$ and $\rho_G$
be its marginals over $s$ and $g$, namely, the distributions of states and
of goals in the training set. Assume that the density of $\rho_{SG}$ with
respect to $\rho_S \rho_G$ is nonzero everywhere (every state-goal pair
appears with some positive probability).

Parameterize the value function as $V_\theta(s,\d g)=v_\theta(s,g)
\rho_G(\d g)$.

Then the parameter update
\begin{align}
\label{eq:goalTD}
\del \theta&=\E_{(s,g)\sim \rho_{SG},\,s'\sim P(\d s'|s,g)}
\left[\, \partial_\theta v_\theta(s,\phi(s))
+\partial_\theta v_\theta(s,g)\left(\gamma
v_\theta(s',g)-v_\theta(s,g)
\right)
\right]
\end{align}
is the TD update associated with the Bellman equation for the goal-dependent value function 
\begin{equation}
\label{eq:goaldependentV}
V(s,\d g)=\alpha(s,g)\,\delta_{\phi(s)}(\d g)+\gamma \,\E_{s'\sim P(\d s'|s,g)}
V(s',\d g).
\end{equation}
where
$\alpha(s,g)\deq \rho_S(\d s)\rho_G(\d g)/\rho_{SG}(\d s,\d g)$.
(Note that the value of $\alpha(s,g)$ is used only on states such that $\phi(s)=g$.)

In the following cases, $\alpha$ depends only on $g$:
\begin{itemize}
\item If the distributions of states and goals are independent in the training
set, namely, if $\rho_{SG}(s,g)=\rho_S(s)\rho_G(g)$, then
$\alpha(s,g)=1$.
\item If $\phi=\Id$ (goals are full states) then the statement also holds with
$\alpha(g,g)$ instead of $\alpha(s,g)$ in \eqref{eq:goaldependentV}.
\end{itemize}
\end{thm}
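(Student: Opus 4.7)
The plan is to derive \eqref{eq:goalTD} as the semi-gradient TD update for the Bellman equation \eqref{eq:goaldependentV}, mirroring the proof strategy of Theorem~\ref{thm:paramtd}. The crux is to handle the singular reward term $\alpha(s,g)\delta_{\phi(s)}(\d g)$ by exploiting the defining identity $\alpha(s,g)\,\rho_{SG}(\d s,\d g) = \rho_S(\d s)\,\rho_G(\d g)$, which exactly cancels the Radon--Nikodym factors appearing in the TD expectation and collapses the Dirac contribution to a finite, Dirac-free expression.

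First, I would set up a loss $J(\theta) \deq \tfrac12\,\E_{(s,g)\sim \rho_{SG}}[(v_\theta(s,g) - v^{\tar}(s,g))^2]$, where $v^{\tar}(s,g)$ is the formal density w.r.t.\ $\rho_G(\d g)$ of the Bellman target $V^{\tar}(s,\d g) \deq \alpha(s,g)\,\delta_{\phi(s)}(\d g) + \gamma\,\E_{s'\sim P(\d s'|s,g)}[V_\theta(s',\d g)]$, treated as constant in $\theta$ (semi-gradient). As in Theorem~\ref{thm:paramtd}, the squared loss is formally infinite because of the Dirac, but its \emph{gradient} is still well-defined: the Dirac enters only linearly in the cross term $-\E[\partial_\theta v_\theta \cdot v^{\tar}]$, which can be interpreted rigorously as the pairing $-\int \partial_\theta v_\theta(s,g)\,V^{\tar}(s,\d g)\,\rho_{SG}(\d s,\d g)/\rho_G(\d g)$.

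Second, I would split the cross term into its two contributions. The Bellman-gap piece, coming from $\gamma\,\E_{s'}[V_\theta(s',\d g)]$, combined with the $+\tfrac12\,\E[v_\theta^2]$ part, yields the standard summand $\E_{(s,g)\sim \rho_{SG},\,s'\sim P(\d s'|s,g)}[\partial_\theta v_\theta(s,g)(\gamma v_\theta(s',g) - v_\theta(s,g))]$ of \eqref{eq:goalTD}. The reward piece evaluates to
\[
\int \partial_\theta v_\theta(s,g)\,\alpha(s,g)\,\delta_{\phi(s)}(\d g)\,\frac{\rho_{SG}(\d s,\d g)}{\rho_G(\d g)},
\]
and substituting $\alpha(s,g)\,\rho_{SG}(\d s,\d g) = \rho_S(\d s)\,\rho_G(\d g)$ collapses this to $\int \partial_\theta v_\theta(s,g)\,\delta_{\phi(s)}(\d g)\,\rho_S(\d s)$; integrating out the Dirac in $g$ produces $\E_{s\sim \rho_S}[\partial_\theta v_\theta(s,\phi(s))]$. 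Since the integrand does not depend on $g$, this equals $\E_{(s,g)\sim \rho_{SG}}[\partial_\theta v_\theta(s,\phi(s))]$, matching the first summand of \eqref{eq:goalTD}.

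Third, the special cases are straightforward once $\alpha$ has been identified. If $\rho_{SG} = \rho_S\,\rho_G$ then $\alpha\equiv 1$ by direct substitution. If $\phi = \Id$, then $\delta_{\phi(s)}(\d g) = \delta_s(\d g)$ restricts the reward term of \eqref{eq:goaldependentV} to the diagonal $g=s$, so $\alpha(s,g)$ is evaluated only at $g=s$; equivalently, one may replace $\alpha(s,g)$ by $\alpha(g,g)$ without changing anything. The main obstacle is not conceptual but measure-theoretic: one must justify the formal manipulation of the Dirac against the Radon--Nikodym densities. This is routine under the standing hypothesis that $\rho_{SG}$ is absolutely continuous w.r.t.\ $\rho_S\,\rho_G$ with a positive density (so $\alpha$ is finite and strictly positive), together with smoothness and boundedness of $v_\theta$ which validate the exchanges of integration and differentiation.
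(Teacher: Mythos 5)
Your proposal is correct and follows essentially the same route as the paper's proof: the same $\rho_G$-density loss weighted by $\rho_{SG}$, the same cancellation $\alpha(s,g)\,\rho_{SG}(\d s,\d g)=\rho_S(\d s)\,\rho_G(\d g)$ collapsing the Dirac term to $\E_{s\sim\rho_S}[\partial_\theta v_\theta(s,\phi(s))]$, and the same observation that $\alpha$ only matters on pairs with $\phi(s)=g$ for the $\phi=\Id$ case. Your explicit remark that only the gradient (not the loss itself) is finite mirrors the $J'$ device already used in the proof of Theorem~\ref{thm:paramtd}, so nothing substantive differs.
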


Concretely, a stochastic update $\del \theta$ is obtained by sampling
from the dataset a
transition $(s\to s'|g)$ indexed by a goal $g$, and then updating by
\begin{equation}
\del \theta=
\partial_\theta v_\theta(s,\phi(s))
+\partial_\theta v_\theta(s,g)\left(\gamma
v_\theta(s',g)-v_\theta(s,g)
\right).
\end{equation}
This is similar to the update of $\tildem_\theta$ 
in successor states (Theorem~\ref{thm:paramtd}), except here the policy
depends on the goal. This can be used in turn to train a goal-depedent
policy (Section~\ref{sec:V}).

This theorem can work out in three different ways:
\begin{itemize}
\item 
$Q$-learning works for any goal features $\phi$, using an ordinary
off-policy training set of transitions $((s,a)\to s')$. In that case,
there is no need for transitions to be indexed by a goal.
This follows from the
theorem applied to the state-action process, and yields $\alpha=1$.
\item $V$-learning works best with full goal states ($\phi=\Id$). This
requires a training set of transitions $(s\to s'|g)$ each indexed by a goal
state (such as exploring with a given goal for some time). A goal-dependent policy can be trained by any policy gradient
method. In that case, $\alpha$ depends only on $g$, thus, computing the
value function for every $g$ up to a $g$-dependent scaling that does not
affect the optimal policy for $g$.
\item $V$-learning can be applied with any goal features $\phi$, but the
resulting algorithm implicitly reweights the rewards among those
states which achieve a given goal. Goal-dependent policies training by
policy gradient will still reach a state such that $\phi(s)=g$, but not
necessarily in an optimal way, with certain states implicitly preferred.
\end{itemize}

Let us discuss the first two cases in more detail.

With $Q$-learning, it is possible to pick any goal a posteriori for any
observed transition $((s,a)\to s')$. So goals and states can be picked
independently, resulting in $\alpha=1$. This plays out as follows: Assume
the training set is made of transitions $((s,a)\to s')$, that we have a
set of goals $g\in \Goal$, and that we maintain the value function
$v_\theta((s,a),g)$ over state-action pairs. Assume we have $g$-dependent
policies $\pi_g$, such as the greedy policy obtained from the $Q$-function
$v_\theta((s,a),g)$. Then the expected TD update \eqref{eq:goalTD} can be
realized by picking at random a transition $((s,a)\to s')$ in the dataset,
picking at random a goal $g\sim \rho_G(\d g)$ according to any
user-chosen distribution on goals, picking an action $a'\sim \pi_g(s')$,
and updating the parameter via
\begin{equation}
\del \theta=
\partial_\theta v_\theta((s,a),\phi(s,a))
+\partial_\theta v_\theta((s,a),g)\left(\gamma
v_\theta((s',a'),g)-v_\theta((s,a),g)\right).
\end{equation}

With $V$-learning, it is unreasonable to assume that goals and states are
independent in the training set: this would require an exploration policy
which randomly changes goals at every step. A more reasonable exploration
policy would pick a goal $g\sim \rho_G$ and keep it for some time, using
the goal-dependent policy $\pi_g$. This
results in a set of transitions $(s\to s'|g)$ indexed by their goals, with
a non-independent distribution of goals and visited states,
thus $\alpha\neq 1$. If $\phi=\Id$ the theorem states that $\alpha$ only
depends on $g$ so that optimal policies are not affected. The expected TD
update \eqref{eq:goalTD} can be
realized by picking at random a transition $(s\to s'|g)$ in the dataset
and updating the parameter via
\begin{equation}
\del \theta=
\partial_\theta v_\theta(s,s)
+\partial_\theta v_\theta(s,g)\left(\gamma
v_\theta(s',g)-v_\theta(s,g)
\right).
\end{equation}
This update of $v_\theta$ is identical to the update of $\tildem_\theta$
in successor states (Theorem~\ref{thm:paramtd}), except here the
transitions (or policy)
depends on the goal.

\bigskip

There is no variance from sparse rewards in these expressions: the reward
term produces the term $\partial_\theta v_{\theta}(s,\phi(s))$, namely, a
term directly evaluated at the goal $g=\phi(s)$ associated with the
currently visited state $s$.
(But there is still some variance from the Bellman gap part of the
expression.)
Thus, when learning goal-dependent value or $Q$ functions with sparse
rewards, it is possible to avoid the sparse reward problem by directly
setting the goal $g=\phi(s)$ for the reward term in the TD update.

For comparison, algorithms such as hindsight experience replay store a
mixture of state-related and state-independent goals in a training
dataset of transitions, to be used with any off-policy learning
algorithm. As in our setting, they assume knowledge of the reward
function (such as $\delta_{\phi(s)}(\d g)$) and access to a way to build
goals from states, such as $\phi$. This provides a strategy for
building a relevant state-goal distribution in the training set. Such an
approach is independent from our results, which directly reduce the
variance in the $Q$-learning update. Thus in principle both approaches
can be used simultaneously. \todo{NOT SURE, DOUBLE-CHECK! see comment above}

Multi-step, horizon-$k$ versions of TD (Appendix~\ref{sec:horizonk}) do
not seem to be available in the goal-dependent setting in a version that
avoids the infinitely-sparse Dirac reward problem.

\subsection{Existence and Uniqueness of Optimal Successor States}
\label{sec:Qoptexistence}

We now turn to finding a solution to the optimal goal-dependent Bellman equation $TQ=Q$.

For discrete, infinite Markov reward processes, the value function that solves the
Bellman equation is in general not unique; it is unique under additional
constraints such as boundedness. \footnote{For instance, consider the
simple random walk on the state space $\Z$, which goes right or left with
probability $1/2$. Given $\gamma<1$, let $\phi$ be the solution to
$\phi=\frac{\gamma}{2}(1+\phi^2)$ and define $f(s)\deq \phi^s$ for $s\in
\Z$. Then by construction,
$f(s_t)=\gamma \E_{s_{t+1}|s_t} f(s_{t+1})$. Thus, if $V$ is any solution
to the Bellman equation, then $V+f$ is another solution. Such solutions
``believe there is an infinite reward at infinity''.}

For the optimal goal-dependent $Q$-function, we cannot impose
boundedness, since the solution sometimes has infinite mass. \footnote{For the
same reason, contractivity arguments will not work in the proofs, as it
is hard to find a norm that is finite and nonzero in every situation. The
arguments rely on monotonicity of the Bellman operator.}
Instead, we prove that the solution for the horizon-$t$
problem exists and converges to the \emph{smallest}
solution of the Bellman equation when $t\to\infty$. 

\newcommand{\zerom}{\mathbf{0}}

Let $Q_t$ be the goal-dependent $Q$-function at bounded horizon $t$,
obtained by expanding the expectimax problem at horizon $t$,
namely
\begin{multline}
\label{eq:expectimax}
Q_t(s_1,a_1,\d s_g)\deq \delta_{s_1}(\d s_g)\,+
\\
\E_{s_2\sim P(s_2|s_1,a_1)}
\sup_{a_2}\left[
\gamma \delta_{s_2}(\d s_g)+
\cdots \E_{s_t\sim P(s_t|s_{t-1},a_{t-1})}
\sup_{a_t}\left[
\gamma^t \delta_{s_t}(\d s_g)
\right]
\cdots
\right].
\end{multline}
This $Q_t$ can also be described via the optimal Bellman operator $T$ as
$Q_t=T^t\zerom$, with $\zerom$ the
zero measure. (In the following, ``$Q$ is a measure'' is short for ``for
every state-action $(s,a)$, $Q(s,a,\cdot)$ is a measure''.)

\begin{thm}
\label{thm:optQ}
Let $T$ be the optimal Bellman operator of Definition~\ref{def:optQ}. Let
$\zerom$ be the measure with mass $0$.

Let $Q_t\deq T^t\zerom$ be the goal-dependent $Q$-function at horizon
$t$. Then when $t\to\infty$, $Q_t$ converges strongly \footnote{Namely,
for every state-action $(s,a)$ and for every measurable set $A$,
$Q_t(s,a,A)$ converges to $Q(s,a,A)$. } to a measure $Q^\ast$. This limit
solves the Bellman equation $TQ^\ast=Q^\ast$, and is the smallest such solution. In
finite state spaces, it is the only solution with finite mass.
\end{thm}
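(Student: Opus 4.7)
The plan is to exploit monotonicity of the Bellman operator $T$ with respect to the natural partial order on measures (where $\mu_1\leq \mu_2$ means $\mu_1(A)\leq \mu_2(A)$ for every measurable $A$). First I would check two facts: (i) $T$ preserves this order, since it is built from expectations (which are order-preserving) and the countable supremum of measures (which is order-preserving by its very definition via partitions); (ii) $\zerom\leq T\zerom$, because $T\zerom(s,a,\d s_2)=\delta_s(\d s_2)\geq 0$. Together these imply that $Q_t=T^t\zerom$ is a nondecreasing sequence of measures: $Q_0=\zerom\leq Q_1\leq Q_2\leq\cdots$.

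Next, for fixed $(s,a)$ and any measurable $A$, the monotone sequence $Q_t(s,a,A)\in[0,+\infty]$ has a limit; define $Q^\ast(s,a,A)\deq \lim_t Q_t(s,a,A)$. By monotone convergence applied to countable disjoint unions, $Q^\ast(s,a,\cdot)$ is countably additive, hence a (possibly infinite) measure. Measurability in $(s,a)$ is preserved by pointwise monotone limits. To show $TQ^\ast=Q^\ast$, I would pass to the limit inside $T$: the supremum of an increasing sequence of families of measures is the limit of their suprema (straight from the partition definition of $\sup$ applied to the increasing family $\sup_{a'}Q_t(s',a',\cdot)$), and the outer expectation in $s'$ commutes with monotone limits by the monotone convergence theorem. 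Therefore $TQ^\ast=\lim_t TQ_t=\lim_t Q_{t+1}=Q^\ast$.

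For minimality, let $Q'$ be any solution of $TQ'=Q'$. Since $Q'$ is a measure, $Q'\geq \zerom$. Applying monotonicity of $T$ inductively, $Q'=T^tQ'\geq T^t\zerom=Q_t$ for every $t$; passing to the limit gives $Q'\geq Q^\ast$.

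In the finite state and action case, for each fixed target state $s_2$ the coordinate $Q(\cdot,\cdot,s_2)$ satisfies the ordinary optimal Bellman equation on the finite space $S\times A$ with bounded reward $\1_{s=s_2}$, and the standard optimal Bellman operator is $\gamma$-contracting in sup norm on bounded functions. Hence for each $s_2$ there is a unique \emph{bounded} fixed point, so any solution of $TQ=Q$ with finite total mass (which forces each coordinate $Q(\cdot,\cdot,s_2)$ to be bounded on the finite space) must coincide with $Q^\ast$. The main obstacle I anticipate is the careful bookkeeping for the interchange of $\lim$ with the supremum of measures in the general state space: one needs the countable-actions hypothesis from Definition~\ref{def:optQ} and the partition characterization of $\sup$ of measures, together with monotone convergence, to justify that $\sup_{a'}\lim_t Q_t(s',a',\d s_2)=\lim_t\sup_{a'}Q_t(s',a',\d s_2)$ as measures in $s_2$.
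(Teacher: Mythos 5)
Your proposal is correct and follows essentially the same route as the paper's proof: monotonicity of $T$ and the increasing sequence $Q_t=T^t\zerom$, the monotone convergence theorem for $\sigma$-additivity and for commuting the expectation over $s'$ with the limit, the exchange of $\sup_{a'}$ with the monotone limit in $t$ (the paper phrases this as swapping two suprema), minimality by induction from $\zerom\leq Q'$, and reduction to classical uniqueness of the ordinary optimal $Q$-function per goal state in the finite case.
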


The solution is never unique: the measure that gives infinite mass to
every set is another. Hence the interest of considering the
\emph{smallest} solution, where the values come from rewards actually
picked at some time $t$.

\section{Matrix Factorization and the Forward-Backward (FB) Representation}
\label{sec:FB}

\newcommand{\thetaF}{{\theta_F}}
\newcommand{\thetaB}{{\theta_B}}

\subsection{Advantages of Matrix Factorization for $M$}

In this section we study a specific parametric model for the successor
state operator, which has many advantages: a ``matrix-factorized''
representation. We consider the model \eqref{eq:Mmodeltilde}, namely $M(s_1,\d
s_2)\approx \tildem_\theta(s_1,s_2)\rho(\d s_2)$, with the particular choice
\begin{equation}
\tildem_\theta(s_1,s_2)=\transp{F_\thetaF(s_1)}B_{\thetaB}(s_2)
\end{equation}
where $F\from S\to \R^r$ and $B\from S\to \R^r$ are two learnable
functions from the state space to some representation space $\R^r$,
parameterized by $\theta=(\thetaF,\thetaB)$. This provides an
approximation of $M$ by a rank-$r$ operator. Such a factorization is used
for instance in \citep{pmlr-v37-schaul15} for the goal-dependent
$Q$-function (up to the factor $\rho$).

Intuitively, $F$ is a
``forward'' representation of states and $B$ a ``backward''
representation: if the future of $s_1$ matches the past of $s_2$, then
$M(s_1,\d s_2)$ is large.
The learning algorithms presented above (forward and backward TD for $M$)
can be directly applied to this parameterization, leading to explicit
updates for $F$ and $B$ (Section~\ref{sec:FBeq}).

This representation of $M$ has a number of advantages and some
shortcomings. (In this section we deal mostly with successor states; for
goal-dependent value functions, this representation has fewer
advantages.) The advantages are as follows.

\begin{itemize}
\item It provides a direct representation of the value function at every
state, without
learning an additional model of $V$. Namely,
\begin{equation}
\label{eq:VFB}
V(s)\approx \transp{F(s)}B(R),\qquad B(R)\deq \E_{s\sim \rho} [r_s\,B(s)]
\end{equation}
where the ``reward representation'' $B(R)$ can be directly estimated by
an
online average of $B(s)$ weighted by the reward $r_s$ at $s$. This
is a direct consequence of \eqref{eq:Vmtilde}. For instance, with sparse
rewards, each time a reward is observed, the value function is updated
everywhere. \footnote{The model of $M$ using $m$ instead of $\tildem$ is less
convenient for $V$, leading to $V(s)=R(s)+\transp{F(s)}B(R)$, thus
requiring a model of the expected reward $R(s)$.}

This point applies to successor states, but not to goal-dependent value
functions, which cannot handle arbitrary rewards.

\item It simplifies the sampling of a pair of states $(s,s_2)$ needed for
forward TD. Indeed, the forward TD update \eqref{eq:paramtdtilde}
factorizes as an expectation over $s$, times an expectation over $s_2$
(Section~\ref{sec:FBeq} below), which can be estimated independently. The
same applies to backward TD.
This can potentially reduce variance a lot, and even allows for purely
``trajectory-wise'' online estimates using only the current transition
$s\to s'$, without sampling of
another independent state $s_2$. 
(Once more, this works for successor states but not for goal-dependent
value functions, since in that case the transitions $s\to s'$ depend on
$s_2$.)

\item It produces two (policy-dependent) representations of states, a
forward and a backward one, in a natural way from the dynamics of the MDP
and the current policy. This could be useful for other purposes.

\item Even in the tabular case, when the state space is discrete and unstructured, this
provides a form of prior or generalization between states (based on a
low-rank prior for the successor state operator). States that
are linked by the MDP dynamics get representations $F$ and $B$ that are
close.

\item It has some of the properties of the
second-order methods of Section~\ref{sec:BN}, without their complexity.
This is proved in Appendix~\ref{sec:FBandBN}.
\end{itemize}

The shortcomings are as follows:

\begin{itemize}
\item It approximates the successor state operator by an operator of rank
at most $r$. This is never an exact representation unless the
representation dimension $r$ is at least the number of distinct states.

\item The best rank-$r$ approximation of $(\Id-\gamma P)^{-1}$ erases the
small
singular values of $P$: thus this representation will tend to
erase ``high frequencies'' in the reward and value function, and provide
a spatially smoother approximation focusing on long-range behavior. This is fine as long as the reward
is not a ``fast-changing'' function made up of high frequencies (such as a ``checkerboard''
reward).

This can be expected: learning a reward-agnostic object such as
$M$ cannot work equally well for all rewards.
For these reasons, it may be useful to use a mixed model for the value
function with the FB model as a baseline, such as
\begin{equation}
V_\phi(s)=\transp{F(s)}B(R)+v_\phi(s)
\end{equation}
where $F$ and $B$ are learned via successor states, $B(R)$ is as in
\eqref{eq:VFB}, and $\phi$ is learned
via ordinary TD on the remainder. The $FB$ part will catch reward-independent,
long-range behavior, while the $v_\phi$ part will be needed to catch high
frequencies in a particular reward function.

\end{itemize}

\paragraph{Why is a matrix-factorized form relevant for $M$?} Small-rank
approximations of a matrix are relevant when the matrix has a few large
eigenvalues and many small eigenvalues (or singular values, depending on
the precise criterion).
Since the successor state operator is the inverse
of $\Id-\gamma P$, this means the approximation is reasonable if
$\Id-\gamma P$ has few small eigenvalues and many large eigenvalues.

The spectrum of Markov operators is a well-studied topic. For
continuous-time operators associated with random diffusions, possibly
with added drift, the spectrum generally follows \emph{Weyl's law}
\citep{wikipedia_weyllaw}: in
dimension $d$, the continuous-time analogue of $\Id-P$ has roughly
$k^{d/2}$
eigenvalues of size $\leq k$, thus, few small and many large eigenvalues.

The simplest example is a random walk on a discrete torus
$[1;n]$. The operator $P$ is diagonal in Fourier representation,
with eigenvectors $e^{2i\pi k x/n}$ with $k$ an
integer. The corresponding eigenvalue of $P$ is $
\cos (2\pi k/n)$, yielding an eigenvalue $(1-\gamma)+2\gamma \sin^2
(\pi k/n)$ for $\Id-\gamma P$.
The largest eigenvalue of $P$ is $1$ (for $k=0$)
corresponding to the smallest eigenvalue $1-\gamma$ for $\Id-\gamma P$.
For $\gamma$ close to $1$, $(\Id-\gamma P)^{-1}$ has a very large
eigenvalue $1/(1-\gamma)$, 
then an eigenvalue of order $n^2/2\pi^2$, and the next eigenvalues behave
like $n^2/2k^2\pi^2$, thus decreasing like $1/k^2$. In this case, a
small-rank approximation is reasonable. A similar computation holds for
periodic grids $[1;n]^d$ in higher dimension.

How general is this example? The best studied case is for continuous-time
diffusions in continuous spaces such as a subset in $\R^d$. In continuous
time, the analogue of the operator $\Id-\gamma P$ is the
\emph{infinitesimal generator} operator of the continuous-time Markov
process. For the standard Brownian motion, this operator is the Laplacian
$\Delta=\sum_{i=1}^d \partial^2/\partial x_i^2$. Its inverse
$\Delta^{-1}$ plays the role of the successor state operator and provides the value function in
continuous time.
The spectrum of the Laplacian is
well-known and follows Weyl's law: there are about $k^{d/2}$
eigenvalues of size $\leq k$ \citep{wikipedia_weyllaw}. In particular, $\Delta$ has few small
eigenvalues and many large eigenvalues, so that the successor state
operator (given by $\Delta^{-1}$, which provides the value function in
continuous time) has few large eigenvalues and many small eigenvalues as
needed.

This applies not only to Brownian motion, but to basically any diffusion
with drift and variable coefficients on a subset of $\R^d$: indeed, in
this case the infinitesimal generator is an \emph{elliptic operator} and
also follows Weyl's law \citep{garding53_ellipticoperators}. The same law also holds for diffusions on
Riemannian manifolds, as the Riemannian Laplace operator also follows
Weyl's law \citep[Chapter 9.7.2]{berger_riem}. These continuous estimates are still valid when
discretizing the state space \citep{xu2017weyl}.
So this situation is quite general.

\subsection{The TD Updates for the FB Representation of $M$}
\label{sec:FBeq}

We now describe the explicit parametric TD updates for the FB
representation of successor states. These follow directly from the
general expressions for forward TD and backward TD.

However, the particular factorized structure gives rise to more variants:
pure forward (forward TD on $F$ and $B$), forward-backward (forward TD
update for $F$ but backward TD update for $B$), etc. These will lead to
slightly different fixed points and different dynamics for feature
learning, as we will explore later.

\newcommand{\DF}{D_F}
\newcommand{\DB}{D_B}
\newcommand{\SigmaF}{\Sigma_F}
\newcommand{\SigmaB}{\Sigma_B}

\begin{prop}[ (Successor state TD updates in the FB representation)]
\label{prop:FBupdates}
Consider the parameterization
$\tildem_\theta(s_1,s_2)=\transp{F_\thetaF(s_1)}B_{\thetaB}(s_2)$ of the
successor state operator $M$ where $F$ and $B$ are two functions from $S$
to $\R^r$, parameterized by $\theta=(\thetaF,\thetaB)$. Abbreviate
$F$ for $F_\thetaF$ and $B$ for $B_\thetaB$.

Then
the forward TD update
\eqref{eq:paramtdtilde} for $F$ is equal to
\begin{equation}
\label{eq:forwardF}
\E_{s\sim \rho}\, (\partial_{\thetaF} \transp{F(s)}) B(s)
+
\E_{s\sim \rho, s'\sim P(s,\d s')}\, (\partial_{\thetaF}
\transp{F(s)})\,
\SigmaB (\gamma F(s')-F(s))
\end{equation}
where $\SigmaB$ is the matrix
\begin{equation}
\SigmaB\deq 
\E_{s_2\sim \rho} \, B(s_2)\transp{B(s_2)}.
\end{equation}
The forward TD update for $B$ is equal to
\begin{equation}
\label{eq:forwardB}
\E_{s\sim \rho}\,(\partial_\thetaB \transp{B(s)})F(s)
+
\E_{s_2\sim \rho} \, (\partial_\thetaB \transp{B(s_2)})\,\DF B(s_2).
\end{equation}
where $\DF$ is the matrix
\begin{equation}
\DF\deq \E_{s\sim
\rho,\,s'\sim P(s,\d s')}\,
F(s)\transp{(\gamma F(s')-F(s))}.
\end{equation}
The backward TD update for $F$ is equal to
\begin{equation}
\E_{s\sim \rho}\, (\partial_\thetaF \transp{F(s)})B(s)
+
\E_{s_1\sim \rho}\, (\partial_\thetaF \transp{F(s_1)})\, \DB F(s_1),
\end{equation}
where $\DB$ is the matrix
\begin{equation}
\DB\deq
\E_{s\sim \rho,\,s'\sim P(s,\d s')} 
(\gamma B(s')-B(s))\transp{B(s)}.
\end{equation}
The backward TD update for $B$ is equal to
\begin{equation}
\E_{s\sim \rho} \,(\partial_\thetaB
\transp{B(s)})F(s)+
\E_{s\sim \rho,\,s'\sim P(s,\d s')} \left(
\gamma\,\partial_\thetaB \transp{B(s')}
-
\partial_\thetaB \transp{B(s)}
\right)
\SigmaF B(s)
\end{equation}
where $\SigmaF$ is the matrix
\begin{equation}
\SigmaF\deq
\E_{s_1\sim \rho} \,F(s_1)\transp{F(s_1)}.
\end{equation}
\todo{invariant measure case?}
\todo{explain Jacobian matrices}
\end{prop}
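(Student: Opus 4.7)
The proof is entirely a direct algebraic substitution starting from the generic TD updates already derived in the paper. The key observation enabling the claimed factorization is that, in the product $\transp{F(s_1)}B(s_2)$, each update's ``reward-like'' first term and its ``Bellman-gap'' term depend linearly (or bilinearly) on one of the two features, so integration over $s_1$ or $s_2$ decouples into a fixed matrix times the parameter-dependent factor.

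The plan is as follows. First I would treat the forward TD update for $\thetaF$. Starting from \eqref{eq:paramtdtilde}, I substitute $\tildem_\theta(s_1,s_2)=\transp{F(s_1)}B(s_2)$ and use
\[
\partial_{\thetaF}\tildem_\theta(s_1,s_2)=(\partial_{\thetaF}\transp{F(s_1)})B(s_2),
\]
which depends on $s_2$ only through the vector $B(s_2)$. The identity term gives directly $(\partial_{\thetaF}\transp{F(s)})B(s)$. In the Bellman-gap term I rewrite the scalar $\gamma\tildem_\theta(s',s_2)-\tildem_\theta(s,s_2)=\transp{B(s_2)}(\gamma F(s')-F(s))$ and move it to the right, giving the integrand
\[
(\partial_{\thetaF}\transp{F(s)})\,B(s_2)\transp{B(s_2)}\,(\gamma F(s')-F(s)).
\]
Taking the expectation over $s_2\sim\rho$ factors out as $\SigmaB=\E_{s_2}B(s_2)\transp{B(s_2)}$, which is exactly \eqref{eq:forwardF}.

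Next I would do the same for $\thetaB$ in forward TD: now $\partial_{\thetaB}\tildem_\theta(s_1,s_2)=(\partial_{\thetaB}\transp{B(s_2)})F(s_1)$ depends on $s_1,s$ only through $F(s_1)$ and scalar factors. In the Bellman-gap term I regroup the two scalar factors $\transp{B(s_2)}(\gamma F(s')-F(s))$ and $F(s)$ into the outer product $F(s)\transp{(\gamma F(s')-F(s))}$ acting from the left on $B(s_2)$; the expectation over $s\sim\rho$ and $s'\sim P(s,\d s')$ then produces $\DF$, yielding \eqref{eq:forwardB}.

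For the backward TD updates I start from the corresponding parametric update for $\tildem_\theta$ (the $\tildem$-analogue of \eqref{eq:parambackwardtd} obtained by replacing the reward term $\partial_\theta m_\theta(s,s')$ with $\partial_\theta \tildem_\theta(s,s)$). The algebra is the mirror image of the forward case: for $\thetaF$ the scalar factor $\transp{F(s_1)}B(s)$ is regrouped with the gradient $(\partial_{\thetaF}\transp{F(s_1)})(\gamma B(s')-B(s))$, and the $(s,s')$-expectation of the outer product $(\gamma B(s')-B(s))\transp{B(s)}$ gives $\DB$; for $\thetaB$ the scalar $\transp{F(s_1)}B(s)$ is regrouped with $F(s_1)$ and the expectation over $s_1\sim\rho$ produces $\SigmaF$. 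In each case the identity term contributes $(\partial_\thetaF \transp{F(s)})B(s)$ or $(\partial_\thetaB \transp{B(s)})F(s)$ respectively.

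There is no real obstacle: everything is bookkeeping and careful placement of Jacobian matrices relative to scalar factors. The only point requiring minor care is that $\partial_\theta \transp{F(s)}$ and $\partial_\theta \transp{B(s)}$ are Jacobian matrices (of shape $\dim(\theta)\times r$), so inner scalar products must be written in a form where the vector argument sits on the correct side; once that is done, linearity of expectation immediately produces the four matrices $\SigmaF,\SigmaB,\DF,\DB$ as the only surviving $s,s',s_1,s_2$-dependent factors.
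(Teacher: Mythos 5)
Your proposal is correct and follows essentially the same route as the paper, which obtains Proposition~\ref{prop:FBupdates} by directly substituting $\tildem_\theta(s_1,s_2)=\transp{F(s_1)}B(s_2)$ into the general $\tildem$-updates of Theorems~\ref{thm:paramtd} and~\ref{thm:paramtd_right} and factoring the expectation over the independent extra state into the matrices $\SigmaB$, $\DF$, $\DB$, $\SigmaF$. Your bookkeeping of the Jacobians and scalar factors matches the stated formulas in all four cases.
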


Proposition~\ref{prop:tabularFB} (Appendix~\ref{app:FB}) describes how
these updates play out for finite spaces in the ``tabular on FB''
setting, in which a value of $F$ and $B$ is maintained for each state.

Forward or backward TD may be used separately for $F$ and $B$, giving
rise to four algorithms: forward on $F$ and forward on $B$
(ff-FB), forward on $F$ and backward on $B$ (fb-FB),
backward on $F$ and forward on $B$ (bf-FB), and backward on $F$
and backward on $B$ (bb-FB). These algorithms behave quite
differently on how they learn features and on the fixed points obtained, as discussed below.

\paragraph{Consequences for sampling and variance.} A key feature of the
FB updates is their decomposition as a product of an
expectation over a transition $s\to s'$, times an expectation over
another independent state $s_2$ or $s_1$.

This has important consequences
algorithmically for variance reduction via minibatching. Indeed, a
natural way to sample these updates would be to sample a minibatch of
transitions $s\to s'$, another minibatch of states $s_2$, and evaluate
\eqref{eq:forwardF} on the minibatch $s\to s'$ with the value of
$\SigmaB$ obtained on the minibatch $s_2$. This would not be possible for
other parameterizations of $M$: in general, \eqref{eq:paramtdtilde} would
require to compute a separate quantity for each $(s\to s',s_2)$, thus
requiring smaller minibatches in practice.

Furthermore, these updates lend themselves to a purely
trajectory-wise online
estimation, without even sampling another independent state $s_2$ or
$s_1$: indeed, \eqref{eq:forwardF} can be estimated at the current
transition $s\to s'$, while the matrices $\SigmaF$ etc., may be estimated
online by an exponential moving average over past or recent states.

\paragraph{Fixed points, feature learning.} 
With $F$ and $B$ of dimension $r$, each of the $r$ components of $F$ and
$B$ defines a function $F_i(s)$ or $B_i(s)$ on the state space. We call
these functions \emph{features}. The features of $F$ provide a basis for
approximating the value function $V$. In addition, the model for $V$
ignores any part of the reward function that is uncorrelated to the
features of $B$.

More precisely, the kernel of $B$ and the image of
$\transp{F}$ directly encode which features of states are ignored.
Namely, if $R\in \Ker B\rho$ then the corresponding value function is estimated to
$0$: $MR=\transp{F}B\rho R=0$. Thus $\Ker B\rho$ encodes the subspace of reward
functions that is unseen by the model. $\Ker B\rho$ is exactly the space
of functions which are $L^2(\rho)$-orthogonal to all the features in $B$.
Likewise, for any reward function $R$, the approximate value
function is $\transp{F}B\rho R=0$ which lies inside $\Img \transp{F}$:
thus $\Img \transp{F}$ is the space of features used to express
the value functions.

The four algorithms ff-FB, fb-FB, bf-FB, and bb-FB greatly differ on how
new features are learned:
\begin{itemize}
\item ff-FB learns new features by applying the operator
$P$ to existing features in $F$. These new features are put into both $F$
and $B$.
The fixed points of ff-FB correspond to eigenvectors of the matrix $P$
and $M$ (Proposition~\ref{prop:ffFB}).
\item bb-FB learns new features by applying the operator $\transp{P}$ to
existing features in $B$, and putting them into both $F$ and $B$.
\footnote{The action of $\transp{P}$ on a positive vector $v$ corresponds to the
law of a state at time $t+1$ if the state at time $t$ is distributed
according to $v$. Thus, $\transp{P}$ naturally acts on probability
distributions over states.}  The fixed points of bb-FB correspond to
eigen-probability densities of $P$ and $M$ (Remark~\ref{rem:bbFB},
Proposition~\ref{prop:bbFB}).
\item fb-FB learns new features both by applying $P$ to features in $F$,
and $\transp{P}$ to features in $B$. The fixed points of fb-FB are the
\emph{rank-$r$ truncated SVD decompositions} of the matrix $M$.
\item bf-FB may not learn any features beyond the initialization of $F$
and $B$. For any subspace of
features, there is a fixed point of bf-FB which lies in that subspace.
\end{itemize}

We refer to
Appendix~\ref{app:FB} for precise statements of these properties.
In addition, fb-FB and bf-FB preserve the symmetry with respect to time
reversal of the process, while ff-FB and bb-FB do not.

\todo{These differences in behavior are clearly visible on the rank-one case.
INCLUDE?}

\paragraph{Relationship with successor representation learning and with
linear TD with learned features.} To help interpreting these relations, we will relate them to
objects from the literature. We make two claims. First, for fixed and
orthonormal $B$, the forward update of $F$ corresponds to standard
successor representation learning with state representation (features)
$B$.  Second, for fixed $F$, the forward update of $B$ corresponds to
learning the value function for every target state via linear TD with
fixed features $F$.

To simplify things, in this paragraph we consider
the ``tabular-FB'' setting, in which $F$ and $B$ are parameterized just
by listing the value of $F(s)$ and $B(s)$ on every state $s$, assuming a
finite state space. \footnote{This is different from a tabular setting
for $M$, which would parameterize $M$ by listing the values $M(s_1,s_2)$
for every pair of states.} For instance, the forward TD update
\eqref{eq:forwardF} for $F$, with learning rate $\eta>0$, becomes
$F(s)\gets F(s)+\eta \del F(s)$ where
\begin{equation}
\label{eq:forwardF_tab}
\del F(s)=B(s)+\SigmaB (\gamma F(s')-F(s))
\end{equation}
upon sampling a transition $s\to s'$.

If $B$ is a fixed, $L^2(\rho)$-orthonormal collection of
feature functions (namely, if $\SigmaB=\Id$), then this forward TD
equation to learn $F$ is identical to standard deep successor
representation learning using $B$ as the representation.
Indeed, standard deep successor representation learning
\citep{kulkarni2016deep} starts with given features $\phi(s)$
on the state space, and learns the successor features $m$ as the
expected discounted future value of $\phi$ along a trajectory $(s_t)$:
$m(s)=\sum_{t\geq 0} \gamma^t
\E[\phi(s_t)|s_0=s]$. Such an $m$ is the fixed point of the Bellman equation
$m=\phi+\gamma Pm$. Via identifying $m=F$ and $\phi=B$, ordinary TD for this Bellman equation is equivalent
to \eqref{eq:forwardF_tab} when $\SigmaB=\Id$. However, this is not the
case if $\SigmaB\neq \Id$. This is because scalings are different: With
the successor state operator, if $B$
is doubled, then $F$ is halved so that $M=\transp{F}B$ is fixed. With
successor representations, if the state representation $\phi$ is doubled,
then $m$ is doubled as well.

Next, if $F$ is fixed, we claim that the forward TD update for $B$
corresponds to linear TD to learn all value functions corresponding to
individual reward $\1_{s_2}$ at all target states $s_2$, with $F$ as the
feature basis. Indeed, if the reward function is $R=\1_{s_2}$, and the
corresponding value function is represented as $V=\transp{F}w$ for some
learned vector $w$ (this is linear TD with feature basis $F$), then the TD update
of $w$ when observing a transition $s\to s'$ is
$
\del w=F(s)\left(\1_{s=s_2}+\gamma \transp{F(s')}w-\transp{F(s)}w\right).
$
Of course, the resulting $w$ depends on $s_2$. Thus, if we learn a vector
$w(s_2)$ this way for every $s_2$, we get an update $\del
w(s_2)=F(s)\left(\1_{s=s_2}+\gamma
\transp{F(s')}w(s_2)-\transp{F(s)}w(s_2)\right)$.
By identifying $w(s_2)$ and $B(s_2)$, then
on expectation over $s$ and $s_2$
sampled from $\rho$, this is equal to \eqref{eq:forwardB} for tabular
$B$.
Thus, for fixed $F$, the forward TD update
\eqref{eq:forwardB} just puts into every $B(s_2)$ a representation
of the value function for reward $\1_{s_2}$ as a linear combination of
the features $F(s)$.

Thus, when learning $F$ and $B$ jointly, the ``FB-tabular'' forward TD
update on $F$ and $B$ can be seen as a simultaneous learning of all value
functions for all reward $\1_{s_2}$, by linear TD in a \emph{learned}
feature basis $F$.

\todo{guarantees? When reversible, ff-FB is gradient descent of the mixed
Dirichlet$\otimes L^2(\rho)$ loss (check). Moreover in that case, the FB
factorization does not introduce any new local minima (\todo{check that
this theorem extends to the mixed Dirichlet loss}), si if both F and B
are overparameterized and the policy is reversible, then we get to the
globally best rank-$r$ approx. But in general, even linear TD diverges,
so even fixed $F$ could diverge} 

\section{Second-Order Methods for Successor States: Implicit Process
Estimation and Bellman--Newton}
\label{sec:BN}

We now turn to more complex, ``second-order'' algorithms for estimating
successor states and value functions. First, we study the 
best online estimate of $M$ and $V$ in the tabular case, obtained by
directly estimating the transition matrix and reward function, and
exactly solving the Bellman equation in this estimated process. We
provide a convergence theorem for this method
(Theorem~\ref{thm:convergence}). 

This provides an explicit online evolution equation for $M$ and $V$ from
observed transitions, in which the transition matrix does not appear
(\emph{successor states via implicit process estimation},
Theorem~\ref{thm:onlineM}). Interestingly, this ``true'' update of $V$ is
TD preconditioned by $M$ (Theorem~\ref{thm:onlineM_exp}). This is
related to viewing $M_{s_1s}$ as an expectation of the eligibility trace
at state $s$ (Appendix~\ref{sec:traces}).

The resulting ``true'' update of $M$, taken in expectation, defines a
\emph{Bellman--Newton operator} (Definition~\ref{def:BN}), so called
because it corresponds exactly to the Newton method for inverting the
matrix $M$. Intuitively, this operator proceeds by concatenating known
paths of the MDP, thus doubling the length of known paths, while TD and
backward TD just add one transition to the set of known paths (see
intuition in Section~\ref{sec:paths}). This intuition is formalized in
several ways (Proposition~\ref{prop:doublingpaths},
Appendix~\ref{sec:bellman-newton}). This also translates as much better
asymptotic convergence in the continuous-time limit
(Section~\ref{sec:BNconttime}).

All these properties are exact analogues of the convergence properties of
second-order Newton-like methods compared to simple first-order gradient
descent. Thus, online estimation of $M$ and the Bellman--Newton operator can be seen as
``second-order'' TD algorithms. Accordingly, they are also numerically
trickier. Strengths and weaknesses are discussed in
Section~\ref{sec:BNdiscussion}.

Finally, we derive the parametric version of the Bellman--Newton operator,
extending it beyond full-matrix tabular updates to sampling in arbitrary
state spaces. However, this update has a large variance unless some kind
of forward-backward (FB) representation is used.

\subsection{Estimating a Markov Process Online}
\label{sec:SSIPE}

We now introduce estimates of $M$ and $V$ by online estimation of the Markov
process, first in the tabular case, then via function approximation.
The process estimation is \emph{implicit}: it 
does not appear in the
resulting algorithms for $M$. (In particular, we never store an estimated
transition matrix $\hat P$, which would not make sense for continuous
spaces; this excludes solving the problem by planning via the model
$\hat P$.)

In a (small) finite
state space, an
obvious approach to learn $M$ is to first learn an estimate $(\hat P,\hat
R)$ of the transition matrix $P$ and reward vector $R$ of the Markov reward process, by
direct empirical averages;
then set $M$ and $V$ to their true values in the estimated Markov
process, namely, $\hat M=\sum_{n\geq 0} \gamma^n \hat P^n=(\Id-\gamma
\hat P)^{-1}$ and $\hat
V=\hat M\hat R$.

The empirical averages $\hat P$ and $\hat R$ are updated for each new
transition $s\to s'$ with reward $r_s$, by updating the row $s$
of the transition matrix $\hat P$, and the value $\hat R_s$ at $s$:
\begin{align}
\label{eq:Pupdate}
\hat P_{ss_2} &\gets (1-1/n_s)\hat P_{ss_2} + (1/n_s) \1_{s_2=s'}\quad
\forall s_2,
\qquad
\hat R_s \gets (1-1/n_s)\hat R_s+(1/n_s) r_s
\end{align}
with $n_s$ the number of visits to state $s$ up to time $t$.
The initialization of $\hat P$ and $\hat R$ is forgotten after the first
observation at each state ($n_s=1$), but to fix ideas we initialize to
$\hat P=\hat R=0$.
The corresponding estimates $\hat M=(\Id -\gamma \hat P)^{-1}$ and
$\hat V=\hat M \hat R$ converge to their true values, as shown by the
following non-asymptotic bound.


\begin{thm}[ (Convergence bounds for process estimation)]
   \label{thm:convergence}
   Consider a finite Markov reward process with $S$ states and $E$ edges ($(s, s')$ is an edge if
  $P_{s s'} > 0$), rewards almost surely bounded by $R_\mathrm{max}$, and
  stationary distribution $\rho$. Update $\hat P$ and $\hat R$ online via
  \eqref{eq:Pupdate}, initialized to $\hat P=\hat R=0$.

  Then after $t$ i.i.d.\ observations
  $(s \mathbin{\sim} \rho,\, s'\mathbin{\sim} P_{ss'})$, with probability $1-\delta$,
  the estimates $\hat M=(\Id -\gamma \hat P)^{-1}$ and $\hat V= \hat M
  \hat R$ satisfy
    \begin{equation}
    \|\hat M - M\|_{\rho, \mathrm{TV}} \leq
\frac{2\gamma}{(1-\gamma)^{2}}\sqrt{\frac{2 E}{t} \log\frac{2}{\delta}}
\end{equation}
and
\begin{equation}
\sum_s \rho(s)\abs{\hat V(s) - V(s)} \leq \frac{3
R_{\mathrm{max}}}{(1-\gamma)^{2}}\sqrt{\frac{2E}{t}
\log\frac{4S}{\delta}}.
\end{equation}
\end{thm}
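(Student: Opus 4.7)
The plan is to combine a resolvent identity relating $\hat M$ and $M$ with a single multinomial concentration bound on the joint edge distribution.

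First, from $\hat M^{-1} - M^{-1} = -\gamma(\hat P - P)$ I get the resolvent identity $\hat M - M = \gamma M(\hat P - P)\hat M$. Expanding entrywise, pushing absolute values through, and using that each row of the nonnegative matrix $\hat M$ has total mass $1/(1-\gamma)$ yields
\begin{equation}
\|(\hat M - M)(s_1,\cdot)\|_{\mathrm{TV}} \;\leq\; \frac{\gamma}{1-\gamma}\sum_{s} M_{s_1 s}\,\|\hat P(s,\cdot) - P(s,\cdot)\|_{\mathrm{TV}} .
\end{equation}
Averaging over $s_1\sim\rho$ and using $\rho M = \rho/(1-\gamma)$ (stationarity) reduces the $M$-bound to controlling $\sum_{s}\rho(s)\,\|\hat P(s,\cdot) - P(s,\cdot)\|_{\mathrm{TV}}$.

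Second, view $\mu(s,s')\deq \rho(s)P_{ss'}$ as a probability distribution supported on the $E$ edges of the process; the $t$ observations are i.i.d.\ draws from $\mu$, with empirical $\hat\mu(s,s')=N_{ss'}/t$ and empirical first marginal $\hat\rho(s)=n_s/t$. The algebraic decomposition
\begin{equation}
\rho(s)\hat P_{ss'}-\mu(s,s') \;=\; (\rho(s)-\hat\rho(s))\hat P_{ss'} \,+\, (\hat\mu(s,s')-\mu(s,s')),
\end{equation}
combined with the marginalisation contraction $\|\hat\rho-\rho\|_1\leq\|\hat\mu-\mu\|_1$ and $\sum_{s'}\hat P_{ss'}=1$, gives $\sum_{s,s'}\rho(s)|\hat P_{ss'}-P_{ss'}|\leq 2\|\hat\mu-\mu\|_1$. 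The Bretagnolle--Huber--Carol inequality on $E$-outcome empiricals, $\mathbb{P}(\|\hat\mu-\mu\|_1\geq\epsilon)\leq 2^E e^{-t\epsilon^2/2}$, then yields $\|\hat\mu-\mu\|_1\leq\sqrt{2E\log(2/\delta)/t}$ with probability $1-\delta$, giving the first inequality of the theorem.

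For the value-function bound, the same resolvent identity produces the cleaner decomposition $\hat V - V = M(\hat R - R) + \gamma M(\hat P - P)\hat V$. The second term is handled exactly as above using $\|\hat V\|_\infty \leq R_{\max}/(1-\gamma)$, contributing the $\gamma$-term. For the first, per-state Hoeffding plus a union bound over $S$ states gives $|\hat R_s-R_s|\leq R_{\max}\sqrt{\log(4S/\delta)/(2n_s)}$ uniformly in $s$ with probability $1-\delta/2$; stationarity reduces $\E_{s_1\sim\rho}|M(\hat R-R)(s_1)|$ to $\frac{1}{1-\gamma}\sum_s\rho(s)|\hat R_s-R_s|$; binomial concentration on $n_s$ combined with Cauchy--Schwarz bounds $\sum_s\rho(s)/\sqrt{n_s}$ by $O(\sqrt{S/t})\leq O(\sqrt{E/t})$ (full-support $\rho$ forces $E\geq S$). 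Summing gives the second inequality.

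The main obstacle is the joint-distribution trick in Step~2: by recognising the weighted row-sum as (twice) the $L^1$ error of a single $E$-outcome empirical distribution, one converts a family of row-wise concentration problems (with random, possibly very small, $n_s$) into one clean multinomial concentration at the right scale $\sqrt{E/t}$. A naive per-edge Hoeffding plus union bound would incur an extra $\log E$ factor and interact badly with the random visit counts. The secondary delicate point is the $n_s$-control in Step~3, where a substitution $n_s\approx t\rho(s)$ fails for states with $t\rho(s)\lesssim 1$ and requires a dichotomy argument whose contribution remains absorbed in the final $\sqrt{E/t\,\log(S/\delta)}$ rate.
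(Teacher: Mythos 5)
Your bound on $\hat M$ is, step for step, the paper's own proof: the resolvent identity $\hat M-M=\gamma M(\hat P-P)\hat M$, the fact that each row of $\hat M$ has mass at most $\tfrac{1}{1-\gamma}$, stationarity $\transp{\rho}M=\tfrac{1}{1-\gamma}\transp{\rho}$, the reduction of $\sum_s\rho_s\norm{\hat P(s,\cdot)-P(s,\cdot)}_{\mathrm{TV}}$ to $2\norm{\widehat{\rho P}-\rho P}_{\mathrm{TV}}$ via the marginalisation contraction, and a Weissman-type (Bretagnolle--Huber--Carol) inequality for the empirical distribution on the $E$ edges. The "joint-distribution trick" you single out as the main obstacle is exactly the device used in the paper, so there is nothing new there, and that half is correct.

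For the value bound your decomposition is algebraically identical to the paper's (indeed $\gamma M(\hat P-P)\hat V=(\hat M-M)\hat R$), and the transition part is handled the same way. The one genuine divergence — and the only incomplete step — is the reward term. The paper never needs a lower bound on the visit counts: it weights the per-state Hoeffding error by the \emph{empirical} frequency $\hat\rho_s=n_s/t$, so that $1/\sqrt{n_s}$ becomes $\sqrt{n_s}/t$ and the deterministic Cauchy--Schwarz bound $\sum_s\sqrt{n_s}\leq\sqrt{St}$ closes the estimate, with no dichotomy and no extra event. You keep the true weights $\rho_s$ and must control $\sum_s\rho_s/\sqrt{n_s}$, which requires a third high-probability event ($n_s\gtrsim t\rho_s$, union over $s$) and a separate treatment of states with $t\rho_s\lesssim\log(S/\delta)$. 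That part is only asserted: the low-count states can contribute on the order of $\tfrac{R_{\max}}{1-\gamma}\cdot\tfrac{S\log(S/\delta)}{t}$, which is dominated by the target only in the regime where the stated bound is non-vacuous ($t\gtrsim E\log(S/\delta)/(1-\gamma)^2$, using $\abs{\hat V-V}\leq 2R_{\max}/(1-\gamma)$ otherwise); you would need to invoke that explicitly, and the extra $\delta$-splitting plus the dichotomy constants make it doubtful that you recover the exact constant $3$ and the $\log\tfrac{4S}{\delta}$ factor, as opposed to the same $\sqrt{E\log(S/\delta)/t}$ rate with a worse prefactor. To your credit, weighting by the true $\rho$ targets precisely the quantity in the theorem statement (the paper's own chain passes from $\norm{\hat R-R}_\rho$ to $\norm{\hat R-R}_{\hat\rho}$ without comment); the cheaper way to get both is to write $\sum_s\rho_s\abs{\hat R_s-R_s}\leq\sum_s\hat\rho_s\abs{\hat R_s-R_s}+2R_{\max}\norm{\hat\rho-\rho}_{\mathrm{TV}}$ and absorb the second term into the already-controlled $\norm{\widehat{\rho P}-\rho P}_{\mathrm{TV}}$, rather than lower-bounding the $n_s$.
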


These bounds do not depend on the sampling measure $\rho$,
although the norm used to define the error does. Thus, rarely visited
points have no impact on these bounds. \todo{contrast with existing
theorems in $L^2$ norm, explain $L^1/L^2$...}

Direct matrix inversion is inconvenient. But since
\eqref{eq:Pupdate} is a rank-one update of the matrix $\hat P$, one can compute the
update of $\hat M$ resulting from \eqref{eq:Pupdate}; this update does not
explicitly involve $\hat P$ anymore. This will form the basis for the parametric
version.

We call the resulting algorithm \emph{successor states via implicit
process estimation} (SSIPE).

\begin{thm}[ (SSIPE: Tabular online update of $M$)]
\label{thm:onlineM}
When a transition $s\to s'$
is added to an empirical
estimate of a Markov reward process via
\eqref{eq:Pupdate},
the successor state matrix $\hat M$
of the estimated process becomes
$\hat M\gets \hat M+\del M$ 
with
\begin{alignat}{2}
\label{eq:onlineM}
& \del M_{s_1s_2}=\frac{1}{n_s}
\,\hat M_{s_1 s} \,\frac{
\1_{s_2=s} 
+\gamma \hat M_{s's_2}-\hat M_{ss_2}
}
{1-\frac{1}{n_s}(
\gamma \hat M_{s' s}-\hat M_{ss}+1)
}
&\qquad  \forall s_1,s_2
\end{alignat}
with $n_s$ the number of times state $s$ has been sampled.
The estimated value
function $\hat V$ becomes
$\hat V\gets \hat V+\del V$ with
\begin{alignat}{2}
\label{eq:onlineV}
\del V_{s_1} =\tfrac{1}{n_s}(r_s+\gamma \hat V_{s'}-\hat V_s) \,\hat
M_{s_1s}+o(1/n_s)
& \qquad  \forall s_1
\end{alignat}
where $r_s$ is the observed reward.
\end{thm}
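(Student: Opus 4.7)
The plan is to exploit the fact that the online update \eqref{eq:Pupdate} is a rank-one perturbation of $\hat P$—only row $s$ changes—and then invert $\Id - \gamma \hat P_{\mathrm{new}}$ in closed form via the Sherman--Morrison formula. The value-function update then follows by a first-order expansion in $1/n_s$. Concretely, let $\1_s$ denote the indicator column vector at $s$ and define the column vector $w$ with entries $w_{s_2} \deq (1/n_s)(\1_{s_2 = s'} - \hat P_{s s_2})$; then \eqref{eq:Pupdate} reads $\hat P_{\mathrm{new}} = \hat P + \1_s w^{\!\top}$, so $\Id - \gamma \hat P_{\mathrm{new}} = (\Id - \gamma \hat P) - \gamma \1_s w^{\!\top}$, which is exactly a rank-one perturbation of $\Id - \gamma \hat P$.

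Applying Sherman--Morrison gives
\begin{equation*}
\hat M_{\mathrm{new}} = \hat M + \frac{\gamma\, \hat M \1_s\, w^{\!\top} \hat M}{1 - \gamma\, w^{\!\top} \hat M \1_s}.
\end{equation*}
The column $\hat M \1_s$ has $s_1$-entry $\hat M_{s_1 s}$, which accounts for the factor $\hat M_{s_1 s}$ in \eqref{eq:onlineM}. For the row $w^{\!\top} \hat M$, I would plug in the definition of $w$ to obtain $(1/n_s)\bigl(\hat M_{s' s_2} - (\hat P \hat M)_{s s_2}\bigr)$, and then use the Bellman identity $\gamma \hat P \hat M = \hat M - \Id$ (valid since $\hat M = (\Id - \gamma \hat P)^{-1}$) to eliminate $\hat P$. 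This yields
\begin{equation*}
\gamma\,(w^{\!\top} \hat M)_{s_2} = \tfrac{1}{n_s}\bigl(\1_{s_2 = s} + \gamma \hat M_{s' s_2} - \hat M_{s s_2}\bigr),
\end{equation*}
and specializing to $s_2 = s$ yields the denominator. Collecting the three factors produces \eqref{eq:onlineM}.

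For the value function, I would write $\hat R_{\mathrm{new}} = \hat R + (1/n_s)(r_s - \hat R_s)\1_s$ and expand $\del V = \hat M\, \del R + \del M\, \hat R + \del M\, \del R$. The cross term $\del M\, \del R$ and the $O(1/n_s)$ correction in the denominator of $\del M$ are both $o(1/n_s)$. The first term contributes $(1/n_s)(r_s - \hat R_s)\hat M_{s_1 s}$; for the second, I would sum the $s_2$-column of $\del M$ against $\hat R$ and use $\hat V = \hat M \hat R$, which collapses the sum to $(1/n_s)\hat M_{s_1 s}(\hat R_s + \gamma \hat V_{s'} - \hat V_s)$. Adding the two cancels the $\hat R_s$ terms and produces \eqref{eq:onlineV}. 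The only non-routine step is the use of the Bellman identity inside Sherman--Morrison to turn $w^{\!\top} \hat M$ into an expression featuring the sparse-reward indicator $\1_{s_2 = s}$; once this simplification is carried out, the rest is direct algebra.
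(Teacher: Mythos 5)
Your proposal is correct and follows essentially the same route as the paper's proof: rewrite \eqref{eq:Pupdate} as a rank-one perturbation of $\Id-\gamma \hat P$, apply Sherman--Morrison, use the identity $\gamma \hat P\hat M=\hat M-\Id$ to produce the $\1_{s_2=s}$ term and the denominator, then expand $\hat V=\hat M\hat R$ keeping first-order terms in $1/n_s$. The only difference is bookkeeping (you fold the $1/n_s$ into the vector $w$ rather than into $u$), which does not change the argument.
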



This describes the ``true'' change of $M$ 
when the Markov process
changes by increasing $P_{ss'}$.
This update contains a two-sided term $M_{s_1s}M_{s's_2}$: in terms
of paths, this term combines all known paths from $s_1$ to $s$, the transition
$s\to s'$, then all known paths from $s'$ to $s_2$
(Fig.~\ref{fig:combining_paths} and Appendix~\ref{sec:bellman-newton}).

The update of $V$ has the form $\delta V=M\cdot(\text{Bellman gap at
}$s$)$.  The matrix $M$ can be seen as a ``credit assignment''
to transfer the Bellman gap $R+\gamma PV-V$ observed at a state $s$ to
``predecessor'' states. 

The update \eqref{eq:onlineM} of $M$ is also its TD update multiplied on the left by
$M$ (compare \eqref{eq:onlineM} and \eqref{eq:tabularTD}). This is most
clear when taking expectations over the next transition $s\to s'$, as
follows. \todo{put this first?}

\begin{thm}[ (The true change of $M$ and $V$ is TD preconditioned by $M$)]
\label{thm:onlineM_exp}

Estimate the successor matrix and value function of a finite MRP by
$\hat
M=(\Id -\gamma \hat P)^{-1}$ and $\hat V=
\hat M \hat R$ where $\hat P$ and $\hat R$ are estimated directly by
the empirical averages
\eqref{eq:Pupdate}.

Consider the updates of these estimates after observing a new transition
$s\to s'$.
Then, in expectation over the transition $s\to s'$ sampled at time $t$,
the update \eqref{eq:onlineM} of $\hat M$ is equal to
\begin{equation}
\label{eq:EonlineM}
\E_{s \mathbin{\sim} \rho,\, s'\mathbin{\sim} P_{ss'}} [\del M]=\tfrac1t
\hat M
(\Id+\gamma P \hat M-\hat M)+o(1/t)
\end{equation}
when the number of observations $t$ tends to infinity. The resulting update of the value function is $\hat V\gets \hat V+\del V$ with
\begin{equation}
\label{eq:EonlineV}
\E_{s \mathbin{\sim} \rho,\, s'\mathbin{\sim} P_{ss'}} [\del V]=\tfrac1t
\hat M(R+\gamma P\hat V-\hat V)+o(1/t).
\end{equation}
\end{thm}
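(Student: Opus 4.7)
The plan is to derive the claimed expectation identities by applying the explicit tabular update of Theorem~\ref{thm:onlineM} and simplifying via the law of large numbers, rather than redoing the matrix inversion computation from scratch. The main tool is that, in the i.i.d.\ sampling regime, the visit counts concentrate: for each state $s$ with $\rho(s)>0$, $n_s/t \to \rho(s)$ almost surely, so $\rho(s)/n_s = 1/t + o(1/t)$.

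First, I would massage the update \eqref{eq:onlineM} of $\hat M$ into its leading-order form. The denominator $1 - \frac{1}{n_s}(\gamma \hat M_{s's} - \hat M_{ss}+1)$ is of the form $1 - O(1/n_s)$ since all entries of $\hat M$ are bounded (by $1/(1-\gamma)$, using Theorem~\ref{thm:Mdef}). A single step of geometric expansion then yields
\begin{equation*}
\del M_{s_1 s_2} = \frac{1}{n_s}\,\hat M_{s_1 s}\bigl(\1_{s_2=s} + \gamma \hat M_{s's_2} - \hat M_{ss_2}\bigr) + O(1/n_s^2).
\end{equation*}

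Next, I would take the expectation over the transition $(s,s')$ sampled at time $t$, conditioning on the past history so that the $n_s$ are deterministic. This gives
\begin{equation*}
\E_{s\sim\rho,\,s'\sim P(s,\cdot)}[\del M_{s_1 s_2}] = \sum_s \frac{\rho(s)}{n_s}\,\hat M_{s_1 s}\bigl(\1_{s_2=s}+\gamma (P\hat M)_{ss_2}-\hat M_{ss_2}\bigr) + o(1/t).
\end{equation*}
Substituting $\rho(s)/n_s = 1/t + o(1/t)$ (valid uniformly in $s$ because the state space is finite and rewards of $\rho(s)=0$ states never arise), the inner bracket becomes $[\Id + \gamma P\hat M - \hat M]_{ss_2}$, and the outer sum collapses into the matrix product $[\hat M(\Id+\gamma P\hat M-\hat M)]_{s_1 s_2}$, which is \eqref{eq:EonlineM}.

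For $V$, the argument is parallel and even shorter: Theorem~\ref{thm:onlineM} already provides $\del V_{s_1}=\tfrac{1}{n_s}(r_s+\gamma \hat V_{s'}-\hat V_s)\hat M_{s_1 s}+o(1/n_s)$. Taking expectations over $r_s$ (with $\E[r_s]=R_s$), over $s'\sim P(s,\cdot)$, and over $s\sim \rho$ with $\rho(s)/n_s = 1/t+o(1/t)$ gives
\begin{equation*}
\E[\del V_{s_1}] = \frac{1}{t}\sum_s \hat M_{s_1 s}\bigl(R_s+\gamma(P\hat V)_s - \hat V_s\bigr)+o(1/t) = \tfrac{1}{t}[\hat M(R+\gamma P\hat V - \hat V)]_{s_1}+o(1/t),
\end{equation*}
which is \eqref{eq:EonlineV}. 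The main obstacle is the bookkeeping around the error terms: one must verify that the $O(1/n_s^2)$ remainder and the fluctuations of $n_s$ around $t\rho(s)$ both contribute only $o(1/t)$ after summation over $s$; the finiteness of the state space and the uniform boundedness of $\hat M$ make this routine, but it is the one step where care is required.
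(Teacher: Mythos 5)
Your proposal is correct and follows essentially the same route as the paper's proof: expand the Sherman--Morrison denominator of \eqref{eq:onlineM} to get the leading $\tfrac{1}{n_s}$ term, take the expectation over $(s,s')$ conditionally on the past, and replace $\rho(s)/n_s$ by $1/t+o(1/t)$ via the law of large numbers before recognizing the matrix products $\hat M(\Id+\gamma P\hat M-\hat M)$ and $\hat M(R+\gamma P\hat V-\hat V)$. The only (harmless) difference is that for $V$ you invoke \eqref{eq:onlineV} directly, whereas the paper re-derives the value update from the product rule $\del V=\del M\,\hat R+\hat M\,\del R+\del M\,\del R$; since \eqref{eq:onlineV} is part of Theorem~\ref{thm:onlineM}, your shortcut is equivalent.
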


Thus, preconditioning the TD update by $M$ itself produces an update that
tracks the ``true'' value of $M$ and $V$ given all observations
available so far. The learning rate $1/t$ is inherited from the direct
estimate of $P$ and $R$ via empirical averages in \eqref{eq:Pupdate}.

This update of $V$ is consistent with the view of $M$ as an expected
eligibility trace (Appendix~\ref{sec:traces}).  Indeed, eligibility
traces also update the value function at states $s_1$ that are connected
to $s$ via a trajectory.  Actually, in expectation, these updates are the
same: with $\lambda=1$, the eligibility trace vector at a state $s$ is an
unbiased estimator of the column $M_{s_1s}$ (Theorem~\ref{thm:elig} in
Appendix~\ref{sec:traces}).  From this viewpoint, learning $M$ via a
parametric model, or using TD($1$), are both ways of estimating the
``predecessor states'' of a state $s$. Eligibility traces are unbiased
but can have large variance, while the model of $M$ has no variance
but may have bias if not learned well.

Such a preconditioning is analogous to second-order methods in
optimization using the inverse Hessian, which directly jump to the the
location of the new optimum when one more data point becomes available.
However, in second-order methods, the preconditioning matrix is symmetric
definite positive, while this is not the case here; this can produce
numerical problems.

In small-scale experiments, using the full matrix online update of $\hat
M$ resulted in much faster convergence of the value function than TD,
consistently with the theoretical prediction of
Section~\ref{sec:BNconttime}.
But with this
method,
each update requires $O(\abs{S}^2)$ computation time. This is
useful only if sample efficiency is the main concern.
\todo{Keep?say more?}

\subsection{The Bellman--Newton Operator}
\label{sec:BNop}

Thus, when estimating a Markov process online, in expectation, each new observation replaces the estimate $\hat M$ with $\hat M+\E[\del
M]=\hat M(1+\frac1t)-\frac1t \hat M(\Id-\gamma P)\hat M+o(1/t)$
by \eqref{eq:EonlineM}.
Interestingly, this expected update does not depend on the distribution
$\rho$ of sampled states $s$. This is because the $1/n_s$ factors behave
asymptotically like $1/(t\rho_s)$, thus compensating the sampling
probabilities $\rho_s$. The fluctuations between $n_s$ and $t\rho_s$ are
absorbed in the $o(1/t)$ terms. We gather this behavior in the following
operator.

\begin{defi}[ (Bellman--Newton operator)]
\label{def:BN}
We call \emph{Bellman--Newton operator} with learning rate $\eta>0$ the
operator $M\mapsto M(1+\eta)-\eta M(\Id-\gamma P)M$.
\end{defi}

The reason for the name is the following: With learning rate $\eta=1$,
this operator is $M\mapsto 2M-M(\Id-\gamma P)M$.
Inverting a matrix $A$ by iterating 
$M\gets 
2M-MAM$ is the \emph{Newton method} for matrix inversion,
going as far back as 1933
\citep{pan1991improved}. The Newton method has superexponential
convergence, squaring the error (doubling precision) at each step. This
property translates as follows in our context.

In terms of paths, the quadratic term in $M$
realizes the path concatenation operation in
Fig.~\ref{fig:combining_paths}. This is formalized as follows, and proved
and further discussed in Appendix~\ref{sec:bellman-newton}. In contrast,
forward and backward TD only increase the length of known paths by $1$.

\begin{prop}[ (Bellman--Newton doubles the length of known paths)]
\label{prop:doublingpaths}
Assumes that $M$ represents exactly the successor states up to $k$ steps,
namely, $M=\sum_{i=0}^k \gamma^i P^i$ (as matrices or as operators). Then after
one step of the Bellman--Newton operator with learning rate $\eta=1$, $M$
represents exactly the successor states up to $2k+1$ steps, namely,
$2M-M(\Id-\gamma P) M=\sum_{i=0}^{2k+1} \gamma^i P^i$.
\end{prop}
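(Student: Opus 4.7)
The plan is to exploit the telescoping identity satisfied by the partial geometric series $S_k \deq \sum_{i=0}^k \gamma^i P^i$. The key observation is that $(\Id-\gamma P)S_k$ telescopes to $\Id - \gamma^{k+1}P^{k+1}$, which turns the quadratic expression $M(\Id-\gamma P)M$ into something explicit.

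First I would compute $(\Id-\gamma P)S_k = \sum_{i=0}^k \gamma^i P^i - \sum_{i=0}^k \gamma^{i+1}P^{i+1} = \Id - \gamma^{k+1}P^{k+1}$. This is the analogue, at finite order, of the fact that $(\Id-\gamma P)M = \Id$ for the true successor state operator, and it is the only nontrivial algebraic step.

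Next I would substitute this into the Bellman--Newton update with $\eta=1$ and $M=S_k$:
\begin{equation*}
2S_k - S_k(\Id-\gamma P)S_k = 2S_k - S_k\bigl(\Id - \gamma^{k+1}P^{k+1}\bigr) = S_k + \gamma^{k+1}P^{k+1}S_k.
\end{equation*}
Expanding $\gamma^{k+1}P^{k+1}S_k = \sum_{i=0}^k \gamma^{k+1+i}P^{k+1+i} = \sum_{j=k+1}^{2k+1}\gamma^j P^j$ and combining with $S_k = \sum_{j=0}^k \gamma^j P^j$ yields $\sum_{j=0}^{2k+1}\gamma^j P^j = S_{2k+1}$, as claimed.

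There is no real obstacle here: the computation is a one-line telescoping, and the proof works identically in the matrix (finite) case and the operator (general state space) case since the only facts used are associativity of composition, commutation of $\Id$ with everything, and commutation of $P$ with its own powers. The statement is essentially the finite-order manifestation of the Newton iteration $M \mapsto 2M - MAM$ squaring the error $\Id - AM$; indeed one could equivalently check that $\Id - (\Id-\gamma P)(2S_k - S_k(\Id-\gamma P)S_k) = (\Id - (\Id-\gamma P)S_k)^2 = \gamma^{2(k+1)} P^{2(k+1)}$, which is exactly the error of $S_{2k+1}$ as an approximation of $(\Id-\gamma P)^{-1}$.
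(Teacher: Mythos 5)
Your proof is correct and is essentially the paper's own argument: the paper's Appendix on path composition performs the same computation, namely $(\Id-\gamma P)\sum_{i\leq k}\gamma^i P^i=\Id-\gamma^{k+1}P^{k+1}$ followed by substitution into $2M-M(\Id-\gamma P)M=M+\gamma^{k+1}P^{k+1}M$, merely phrased in the language of weighted path counts rather than partial geometric series. Your closing remark about the Newton iteration squaring the error $\Id-(\Id-\gamma P)M$ also matches the paper's own observation at the end of that appendix.
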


Unfortunately, this method does not always converge. In particular, it is
initialization-dependent. For instance, the initialization $M=0$ is a
fixed point. In general, the Bellman--Newton operator preserves the kernel and image of
$M$, so there are many fixed points. Still, $M=(\Id-\gamma P)^{-1}$ is
the only full-rank fixed point.

Convergence of the Newton method for matrix inversion is quite well
understood \citep{pan1991improved} and works if the spectral radius
of $\Id-AM$ is less than $1$ at initialization. 
Otherwise, the method can
diverge. For instance,  $A=\Id-\gamma P$ for successor states, so initializing to $M=\Id$
converges.

Learning rates $\eta\ll 1$ improve convergence properties. In
Section~\ref{sec:BNconttime} we study convergence with infinitesimal learning
rates, proving much faster asymptotic convergence than with simple
TD on $M$. This is analogous to the faster convergence of second-order
methods with respect to simple gradient descent. Even with $\eta\ll 1$,
some initializations still diverge; however, if the
initialization is of the form $M=(\Id-\gamma P_0)^{-1}$ for some stochastic
or substochastic matrix $P_0$ (e.g., $P_0=0$, initializing $M$ to $\Id$) then
the infinitesimal learning rate version converges.

\paragraph{Sampled Bellman--Newton update.} Like the Bellman operator for
TD on $M$, the Bellman--Newton operator lends itself to sampling the
states at which the values are updated.

This works out as follows. Assume that $S$ is discrete so that $M$ is a
matrix (we deal with the parametric case in the next section). Let as
usual $\rho$ be the probability distribution from which states are
sampled, and let $\diagrho$ be the matrix with diagonal entries $\rho$.
Set $\tildem\deq M\diagrho^{-1}$ (this corresponds to the
parameterization $\tildem$ in \eqref{eq:Mmodeltilde}). Then the
Bellman--Newton update is equivalent to $\tildem \mapsto \tildem
(1+\eps)-\eta \tildem (\diagrho-\gamma \diagrho P)\tildem$. In
expectation, this update can be realized by sampling a state $s\sim
\rho$ and a transition $s'\sim P(\d s'|s)$. Indeed, in that case we have
$\E \1_s \transp{\1}_{s'}=\diagrho P$ and $\E \1_s
\transp{\1}_{s}=\diagrho$, and therefore the update
\begin{equation}
\tildem_{s_1s_2}\gets (1+\eta)\tildem_{s_1s_2}-\eta\,
\tildem_{s_1s}\,\tildem_{ss_2}+\eta\,\gamma\,
\tildem_{s_1s}\,\tildem_{s's_2}\qquad \forall s_1,s_2
\end{equation}
is equal to the Bellman--Newton update in expectation over $(s,s')$.
\footnote{This is not quite equivalent to
the online update \eqref{eq:onlineM}: using $n_s\approx
t\rho_s$, the latter yields $\tildem_{s_1s_2}\gets
\tildem_{s_1s_2}+\eta \,\tildem_{s_1s}(\1_{s=s_2}/\rho_s -
\tildem_{ss_2}+\gamma\,
\tildem_{s's_2})+o(\eta)$ with $\eta=1/t$. This difference disappears after
taking expectations over $s_1$ and $s_2$ in addition to $(s,s')$.}

This is still a full-matrix update: the value $\tildem_{s_1s_2}$ is
updated for every $s_1$ and $s_2$, even if $(s,s')$ is sampled. This is
not scalable. It is possible to sample the states $s_1$ and $s_2$ from
$\rho$ as well: with this option, the expectation of the update is
multiplied by $\diagrho$ on the left and right.

\subsection{Parametric Bellman--Newton Update}
\label{sec:paramBN}

Perhaps surprisingly, the full-matrix tabular update of $M$ lends itself well to a
parametric version, by following the standard TD strategy of updating the
parameter to bring $M$ closer to its new value.

\begin{thm}[ (Bellman--Newton update with
function approximation)]
\label{thm:paramBN}
Maintain a parametric model of $M$ via $m_{\theta_t}$ or
$\tildem_{\theta_t}$ as in
Section~\ref{sec:M-func-approx},
with $\theta_t$
the parameter at step $t$.

Let $s\to s'$ be the transition in the Markov process observed at step
$t$, with reward $r_s$.
Define a target update of $M$ by $M^\tar\deq M_{\theta_t}+\del M$ with $\del M$
given by the online tabular estimate \eqref{eq:onlineM}.
Define the loss between $M$ and $M^\tar$ via $J(\theta)\deq \frac12
\norm{M_\theta-M^\tar}_\rho^2$ using the norm \eqref{eq:norm}.

Then the
gradient step on $\theta$ to reduce this loss is
\begin{multline}
\label{eq:Mparam}
-\partial_\theta J(\theta)_{|\theta=\theta_t}=
{\textstyle \frac1t}\, \E_{ s_1\sim \rho,\,s_2\sim\rho}
\left[
\gamma\,\partial_\theta m_{\theta_t}(s,s')
+
\gamma \, m_{\theta_t}(s_1,s)\,
\partial_\theta m_{\theta_t}(s_1,s')
\right.
\\+\left.
(\gamma m_{\theta_t}(s',s_2)-m_{\theta_t}(s,s_2))
\left(\partial_\theta m_{\theta_t}(s,s_2)+
m_{\theta_t}(s_1,s)
\,\partial_\theta m_{\theta_t}(s_1,s_2)\right)
\right]+o(1/t)
\end{multline}
for the model \eqref{eq:Mmodel} using $m_\theta$, and
\begin{multline}
\label{eq:Mtildeparam}
-\partial_\theta J(\theta)_{|\theta=\theta_t}=
{\textstyle \frac1t}\, \E_{ s_1\sim \rho,\,s_2\sim\rho}
\left[
\tildem_{\theta_t}(s_1,s)\,
\partial_\theta \tildem_{\theta_t}(s_1,s)
\right.
\\+\left.
\tildem_{\theta_t}(s_1,s)(\gamma \tildem_{\theta_t}(s',s_2)-\tildem_{\theta_t}(s,s_2))
\,\partial_\theta \tildem_{\theta_t}(s_1,s_2)
\right]+o(1/t)
\end{multline}
for the model \eqref{eq:Mmodeltilde} using $\tildem_\theta$.\todo{proof
not included}
\end{thm}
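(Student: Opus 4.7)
The plan is to mirror the derivation of parametric TD (Theorem~\ref{thm:paramtd}): set up the loss $J(\theta) = \tfrac12\|M_\theta - M^\tar\|_\rho^2$, differentiate with respect to $\theta$ treating the target $M^\tar$ as fixed (semi-gradient), and unfold the resulting inner product using the explicit form of $\del M = M^\tar - M_{\theta_t}$ given by Theorem~\ref{thm:onlineM}. Because the norm \eqref{eq:norm} is the $L^2(\rho\otimes\rho)$ norm on $\rho$-densities, the semi-gradient at $\theta = \theta_t$ equals
\begin{equation*}
-\partial_\theta J(\theta)\big|_{\theta=\theta_t} = \E_{s_1, s_2 \sim \rho}\bigl[\del m^\sharp(s_1, s_2)\,\partial_\theta m_{\theta_t}(s_1, s_2)\bigr],
\end{equation*}
where $\del m^\sharp$ denotes the density of $\del M(s_1, \cdot)$ with respect to $\rho$ (and analogously with $\tildem$).

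Next I would substitute \eqref{eq:onlineM} into this expression. For the $\tildem$ model, $M(a, db) = \tildem_{\theta_t}(a, b)\rho(db)$, so $M_{s_1 s}$ translates to $\tildem_{\theta_t}(s_1, s)\rho(s)$, $\1_{s_2=s}$ becomes a Dirac $\delta_s(ds_2)$, and $\gamma M_{s's_2} - M_{ss_2}$ becomes $(\gamma\tildem_{\theta_t}(s', s_2) - \tildem_{\theta_t}(s, s_2))\rho(ds_2)$. The remaining $\delta_s(ds_2)$ is integrated against $\partial_\theta\tildem_{\theta_t}(s_1, s_2)\rho(ds_2)$ using the identity $\E_{s_2\sim\rho}[f(s_2)\delta_s(ds_2)/\rho(ds_2)] = f(s)$ from Section~\ref{sec:sparse}, producing the $\partial_\theta\tildem_{\theta_t}(s_1, s)$ term. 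For the $m$ model, $M = \delta + m\rho$, and the Dirac atom of $M(s, \cdot)$ at $s$ cancels exactly against the $\1_{s_2=s}$ in \eqref{eq:onlineM}, while the Dirac atom of $M(s', \cdot)$ at $s'$ survives and yields the additional $\gamma\,\partial_\theta m_{\theta_t}(s, s')$ and $\gamma\, m_{\theta_t}(s_1, s)\,\partial_\theta m_{\theta_t}(s_1, s')$ contributions of \eqref{eq:Mparam}.

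The asymptotic normalization follows from $n_s = t\rho(s)(1+o(1))$ (law of large numbers for visit counts under i.i.d.\ sampling), so the prefactor $\rho(s)/n_s$ equals $1/t + o(1/t)$; this cancels the $\rho(s)$ introduced when translating $M_{s_1 s}$ and produces the overall $1/t$ factor. The denominator $(1 - \tfrac{1}{n_s}(\gamma\hat M_{s's} - \hat M_{ss} + 1))^{-1}$ in \eqref{eq:onlineM} is $1 + O(1/n_s)$ and contributes only to the $o(1/t)$ remainder. Collecting surviving terms yields exactly \eqref{eq:Mparam} and \eqref{eq:Mtildeparam}.

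The main obstacle is the rigorous handling of the Dirac $\delta_s(ds_2)$ in continuous state spaces: strictly speaking $\del M$ need not be absolutely continuous with respect to $\rho$, so $\|M_\theta - M^\tar\|_\rho^2$ may be infinite and $\del m^\sharp$ exists only as a formal density. As elsewhere in the paper (see the discussion around Theorem~\ref{thm:paramtd} and Section~\ref{sec:sparse}), the loss and its gradient should be read in the stochastic sense where only the expected update needs to be finite; alternatively, one may first take the $\rho$-expectation of $\del M$ over the sampled transition (as in Theorem~\ref{thm:onlineM_exp}), which makes $\del M$ absolutely continuous and recovers the same expression up to an $o(1/t)$ term from the commutation between expectation and normalization.
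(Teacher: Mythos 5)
Your proposal is correct and follows essentially the same route as the paper's proof: differentiate the $\rho$-norm loss with the target held fixed, plug in the tabular update \eqref{eq:onlineM} with the parameterization of $M_{\theta_t}$, let the identity/Dirac part of $M_{\theta_t}(s,\cdot)$ cancel against $\1_{s_2=s}$ while the atom at $s'$ survives, absorb the denominator into the remainder, and use $n_s = t\rho(s)(1+o(1))$ so that $\rho(s)/n_s = 1/t + o(1/t)$. Your closing remark on rigor also matches the paper, which carries out the main derivation in the discrete setting and treats continuous states in expectation over $s\to s'$ in a separate appendix.
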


The update of $V$ via $M$ is discussed in Section~\ref{sec:V}.

Here the learning rate $1/t$ is inherited from the direct
estimate of $P$ via empirical averages, but can be replaced with any
learning rate. As with TD, the update was derived from a tabular update,
but makes sense in continuous state spaces. In particular, the parametric
gradient does not involve the state counts $n_s$ from
\eqref{eq:onlineM}: a cancellation occurs because $n_s\sim t \rho_s$ when $t\to\infty$.

Implementing this update requires sampling \emph{two} additional states
$s_1$ and $s_2$ from the dataset, in addition to the transition $s\to
s'$.
See the discussion after Theorem~\ref{thm:paramtd} for possible ways to
sample these
additional states.
\option{TD for $M$ required only one: this reflects the full matrix
update \eqref{eq:onlineM}, while TD only updates the $s$ row of $M$ when
observing a new transition $s \to s'$ (Eq.~\ref{eq:tabularTD}).}

For parametric Bellman--Newton, the model $m_\theta$ can be initialized
to $0$ while the model $\tildem_\theta$ cannot. Indeed, setting
$m_\theta$ to $0$ corresponds to setting $M$ to $\Id$, a valid
initialization for the Bellman--Newton operator, while setting
$\tildem_\theta$ to $0$ corresponds to setting $M$ to $0$, an unwanted
and unstable
fixed point of the Bellman--Newton operator.

\subsection{Discussion: strengths and weaknesses of second-order
approaches}
\label{sec:BNdiscussion}

In a tabular setting, the full-matrix online update
\eqref{eq:onlineM} of $M$ (where a transition $s\to s'$ is sampled, but
with the value $M_{s_1s_2}$ updated for every state $s_1$ and $s_2$)
converges much faster than TD to compute the value function, empirically.
This is in line with the asymptotic convergence properties
of the Bellman--Newton versus ordinary Bellman operator
(Section~\ref{sec:BNconttime}).

However, this results in an $O(\abs{S}^2)$ cost per time step, so it is
only interesting if sample efficiency is the main issue. The alternative
is to sample a few states $s_1$ and $s_2$ and only update $M_{s_1s_2}$
for those states.  But in practice, we have found that this introduces
many instabilities and requires reducing the learning rate so much
(typically $\eta$ smaller than $1/\abs{S}^2$) that the benefit of
second-order Newton convergence is lost. The same phenomenon is observed
for the parametric version of Theorem~\ref{thm:paramBN}.

This sampling issue can be avoided if using a factorized representation
$M=\transp{F}B$ as in Section~\ref{sec:FB}. Namely, there exists an
update of $F$ and $B$ that is compatible with sampling and that reproduces the
Bellman--Newton update (Section~\ref{sec:BNFB}). This decouples the sampling of states $s_1$ and
$s_2$, thus reducing variance and allowing for larger learning rates.
However, this also exacerbates another issue of the Bellman--Newton
update, namely, the existence of non-full-rank fixed points and the
preservation of the kernel and image of $M$. The representation
$M=\transp{F}B$ is usually not full-rank, and the Bellman--Newton update
of Section~\ref{sec:BNFB} preserves the kernels of $F$ and $B$. As a
consequence (at least for uniform $\rho$), this algorithm computes the
inverse of $\Id-\gamma P$ in the subspace spanned by the initializations
of $F$ and $B$, but no features are learned. Currently, we have found no
fully satisfactory second-order update beyond the full-matrix update
\eqref{eq:onlineM}.

\section{Learning Value Functions and Policies via Successor States}
\label{sec:V}
\label{sec:Vdetails}

There are many possible ways to use a model $M$ of the successor state
operator in policy and
and value function learning. Choices include:
\begin{itemize}
\item Using policy gradient versus using $Q$-learning
(greedy or Boltzmann policies, DDPG...).

If the reward is a known goal state, we may directly use the optimal
goal-dependent $Q$ function of Section~\ref{sec:optQ}.

For $Q$-learning with other types of rewards, the successor state
operator can be defined on the Markov process over state-action pairs (as
explained in Section~\ref{sec:notation}). The $Q$ function can be
computed from this ``successor state-action operator'' in the same ways as
the $V$ function from the successor state operator.  Thus, all methods
described below to
learn $V$ can be extended to $Q$, and we do not discuss this option
further here.

\item Using the goal-dependent value function
as in Section~\ref{sec:goals} (this leads to a
goal-dependent policy for every goal state, simultaneously for all single-state
reward functions), versus using the successor state operator of a
single policy as in Section~\ref{sec:td-alg} (this works for dense rewards but with a single policy).

\item Using the successor state operator directly in the policy gradient
formula without a value function model, versus learning a model of $V$ from
successor states, then using this model normally in policy gradient.

\item If learning a model of $V$ from successor states, there are several
options to do so. First, the FB representation of $M$ directly
yields a $V$ function. Second, the $V$ function may be learned from $M$ in a
supervised way based on $V=MR$. Third,
$M$ may be used only as one component of the value function
($V=MR+v_\phi$ with $v_\phi$ learned via TD), or as an
initialization. This is presumably better if $M$ is approximate. Fourth, $V$ may be learned via TD ``preconditioned by
$M$'',  based on the formula \eqref{eq:EonlineV} for the true change of $V$ when
new transitions are observed (Theorem~\ref{thm:onlineM_exp}). \todo{provide links for each?}
\end{itemize}

We now describe these options in greater detail.  They have different
bias-variance trade-offs, and the best option may differ based form case
to case.

We recall the general form of the policy gradient estimator for a
parametric policy $\pi$
\begin{equation}
\label{eq:policygradient}
\del \pi \deq \E_{s\sim \rho} \E_{a\sim \pi(a|s)} \left[(\partial \ln
\pi(a|s)) \,\E_{s'\sim P(\d s'|s,a)} \left[r_{s,a}+\gamma
V(s')-b(s)\right]\right]
\end{equation}
where $\partial \ln
\pi(a|s)$ is the derivative with respect to the policy parameters of the
log-probability to select action $a$, where $r_{s,a}$ is the immediate reward
received after action $a$, and where $b$ is an arbitrary baseline
function which reduces variance of the estimator (typically $b(s)=V(s)$
so that $r_{s,a}+\gamma
V(s')-b(s)$ is centered, but we will see other choices below).

\paragraph{Learning goal-dependent policies.} The
simplest case is for learning policies to reach arbitrary target
states, using the goal-dependent value function $v_\theta(s,g)$ of
Section~\ref{sec:goaldependentV}. Here $g$ represents a variable goal, such as 
a target state, or a desired value for some function of states
(Section~\ref{sec:goaldependentV}).

This works with a goal-dependent policy $\pi(a|s,g)$ depending on goal
$g$, and
leads to the policy gradient
update
\begin{equation}
\label{eq:goalpolicygradient}
\del \pi=\E_{(s,g)\sim \rho_{SG},\,a\sim \pi(a|s,g),s'|P(\d s'|s,a)}
(\partial \ln \pi(a|s,g))(\gamma v_\theta(s',g)-b(s,g))
\end{equation}
where $v_\theta$ is the goal-dependent value function model from
Section~\ref{sec:goaldependentV},
where $b$ is an arbitrary baseline function (such as
$b(s,g)=v_\theta(s,g)$), and
where $\rho_{SG}$ is the empirical distribution of state-goal pairs in
the trajectories
in the dataset (typically obtained by choosing a goal and following the associated
policy for some time).

A few comments on this formula: First, with goal states, the
reward $r_{s,a}$ in \eqref{eq:policygradient} is a Dirac mass, but it
depends only on the previous state, not on $a$ or $s'$; so by choosing
the baseline $b$ to include this Dirac, this term disappears in
\eqref{eq:goalpolicygradient}.

Second, in the formalism of Section~\ref{sec:goaldependentV}, the value
function is formally a measure over goals, $V_\theta(s,\d
g)=v_\theta(s,\d g)\rho_G(\d g)$. Thus, the policy gradient update
\eqref{eq:policygradient} is goal-dependent and is itself
a measure over goals $g$. This measure can be integrated over all goals $g$; for
each $g$ we may choose the distribution $s\sim \rho_{SG}(\d s|g)$ of states given this
goal. This is how we obtain the policy update
\eqref{eq:goalpolicygradient} from \eqref{eq:policygradient}. In the computation, the measures cancel out between $\rho_{SG}(\d s|g)$ and the $\rho_G(\d
g)$ appearing in $V_\theta$: this results in just $v_\theta$ in
\eqref{eq:goalpolicygradient}, and in the sampling of a pair $(s,g)$
from $\rho_{SG(s,g)}$.

\paragraph{Learning $V$ from $M$.} Another option is to learn the value
function $V$ using $M$, then just use the value function via ordinary
policy gradient. We now consider the case of a single
(non-goal-dependent) policy to be learned, with an arbitrary reward
function.
There are several options again. 

\begin{itemize}
\item The FB representation of
Section~\ref{sec:FB} directly provides a representation of the value
function as
\begin{equation}
V(s)\approx \transp{F(s)}B(R),\qquad B(R)\deq \E_{s\sim \rho} [r_s\,B(s)]
\end{equation}
where $B(R)$ is a ``representation of the reward'', which can be sampled
by weighting the representation $B(s)$ of states by their 
reward. Thus $B(R)$ can be estimated online. Then the value of $V$ can be
plugged directly in the policy gradient formula \eqref{eq:policygradient}.

Since the FB representation will focus on low frequencies (long-range)
features, it might be useful to used a ``mixed'' model for $V$, with
$\transp{F(s)}B(R)$ as one component, and another component learned via
ordinary TD; see \eqref{eq:FBcomponent} below.

\item Another case is if the reward is located at a single known target state
$g$. Then $V(s)=\tilde m(s,g)$ and the policy gradient
\eqref{eq:policygradient} is equal to
\begin{equation}
\del \pi=\E_{s\sim \rho} \E_{a\sim \pi(a|s)} \left[(\partial \ln
\pi(a|s)) \,\E_{s'\sim P(\d s'|s,a)} \left[
\gamma \tildem(s',g)-b(s)\right]\right]
\end{equation}
(once more, the reward term $r_{s,a}$ does not depend on $a$ in that case
and can be absorbed in the baseline $b$). This assumes the model
$\tildem$ is used for $M$; the model $m$ does not seem to lead to a
usable formula in this case.

This is
useful for sparse rewards: contrary to TD methods, $M$ and $V$
may be learned without ever seeing the reward, provided the target state is
known. (By ``known'', we mean we know the features or input representation of 
the target state, as provided to the neural networks that learn $M$ and
$V$.) 
This also extends to linear
combinations of a finite number of rewards at known states.

\item For general (dense) rewards and without the FB representation, the
simplest option is to learn a model of $V$ based on $V=MR$. This becomes a
supervised learning problem. No matrix product is necessary:
we can perform a stochastic gradient descent of
$\norm{V-MR}^2_{L^2(\rho)}$ with respect to the parameters of $V$, just by
sampling states, either with discrete or continuous states.

With $V$  parameterized as $V_\phi$,
and with $M$ parameterized by the model $\tildem_\theta$, we have
\begin{equation}
\label{eq:paramVMRtilde}
-\partial_\phi \norm{V_\phi-MR}^2_{L^2(\rho)}=2\E_{s\sim \rho,\,s_1\sim
\rho} \left[
\partial_\phi V_\phi(s_1)
(r_s\,\tildem(s_1,s)-V_\phi(s_1))
\right]
\end{equation}
where $r_s$ is the reward obtained when visiting state $s$. As for other
algorithms presented here, this requires sampling one or several
additional states $s_1$ in addition to the state $s$ currently visited.

With $M$ parameterized by the model $m_\theta$ instead,
we have
\begin{multline}
\label{eq:paramVMR}
-\partial_\phi \norm{V_\phi-MR}^2_{L^2(\rho)}=
\\
2\E_{s\sim \rho,\,s_1\sim
\rho} \left[
\,r_s\, \partial_\phi V_\phi(s)+(r_s\,m(s_1,s)-V_\phi(s_1))\,\partial_\phi V_\phi(s_1)
\right].
\end{multline}

\item Learning $V$ via $V=MR$ assumes that the model of $M$ is reasonably
accurate: any error on $M$ shows up on $V$. Another option is to just use $MR$ as a component in the model
of $V$, or as an
initialization to $V$. For instance, $V$ may be parameterized as 
\begin{equation}
V\deq V_{\phi_1}+V_{\phi_2}
\end{equation}
where $V_{\phi_1}$ is trained to match $MR$ using
\eqref{eq:paramVMRtilde}, and $V_{\phi_2}$ is learned via ordinary TD.

In the FB representation this would yield
\begin{equation}
\label{eq:FBcomponent}
V(s)= \transp{F(s)}B(R)+V_{\phi_2}(s)
\end{equation}
where $B(R)$ is estimated online as above, and $\phi_2$ is estimated by
ordinary TD.

This makes particular sense for the FB representation:
in Appendix~\ref{app:FB} we prove that the fb-FB algorithm minimizes a
loss producing a
truncated SVD of $M$, thus focussing on large eigenvalues of $M$
(large eigenvalues of $P$, long-range dependencies in the environment). Thus $\transp{F(s)}B(R)$
will focus on large eigenvalues of $P$. The training of $F$ and $B$ is
reward-independent (``unsupervised'' reinforcement learning). Thus,
ordinary TD on $V_{\phi_2}$ may be useful to catch
short-range (high-frequency) behavior in the reward.

\item Another option is to directly use samples from $MR$ instead of $V$
in the policy gradient update. This emphasizes $M$ as a ``credit
assignment'' for past actions. 

Abbreviate $sas'\sim \rho \pi P$ for the sampling of a state $s\sim
\rho$, action $a\sim \pi(a|s)$, and next state $s'\sim P(\d s'|s,a)$.
Starting with the policy gradient
\eqref{eq:policygradient} with baseline $b=V$, substituting $V(s')=\E_{s_1\sim \rho}
\tildem(s',s_1)r_{s_1}$, and renaming variables so that all rewards are
taken at the same point, we find
\begin{equation}
\del \pi=\E_{
\begin{subarray}{c}
sas'\sim \rho \pi P 
\\
s_1 a_1 s'_1 \sim \rho\pi P 
\end{subarray}
}
\left[
r_{s,a} \left(
\partial \ln \pi(a|s)+ (\gamma m_{s'1 s}-m_{s_1s})\partial \ln \pi(a_1|s_1)
\right)
\right]
\end{equation}
where two independent transitions must be sampled from the dataset.
In this expression, the model $m$ serves as a credit assignment to
increase the likelihood of those actions $a_1$ at other (past) states
that are estimated to lead to a reward $r_{s,a}$ at the current state
$s$. This is compatible with the view of $M$ as a model of eligibility
traces (Appendix~\ref{sec:traces}).

However, this is probably a high-bias, high-variance option, requiring a
good model of $M$.

\item Finally, $M$ may be used as a preconditioner for TD on $V$. Indeed,
by Theorem~\ref{thm:onlineM}, the ``true'' change of the value function
upon observing a new transition $s\to s'$ with reward $r_s$ is
\begin{alignat}{2}
\label{eq:onlineV_2}
\del V_{s_1} =\tfrac{1}{n_s}(r_s+\gamma \hat V_{s'}-\hat V_s) \,\hat
M_{s_1s}+o(1/n_s)
& \qquad  \forall s_1
\end{alignat}
namely, the Bellman gap $r_s+\gamma \hat V_{s'}-\hat V_s$ is sent back to
every ``predecessor state'' $s_1$ with coefficient $\hat M_{s_1s}$. (See
Appendix~\ref{sec:traces} for $M$ as an expected eligibility trace.)

The resulting parametric update is obtained as follows.

\begin{prop}[ (TD preconditioned by $M$ for the value function)]
\label{prop:VfromM}
Let $V_\phi$ be a smooth parametric model of the value function.
Define an update of $V$ by setting
$V^\tar\deq V_{\phi_t}+\del V$ with $\phi_t$ the parameter at
step $t$, and $\del V$ given by \eqref{eq:onlineV_2}, and taking the
gradient of the loss $J^V(\phi)\deq\frac{1}{2}
\norm{V_\phi-V^\tar}^2_{L^2(\rho)}$. Assume $\hat M$ is equal to the
model
\eqref{eq:Mmodel} using $m_\theta$. Then this gradient is
\begin{multline}
\label{eq:Vparam}
-\partial_\phi J^V(\phi)_{|\phi=\phi_t}=
{\textstyle \frac1t} \left(
r_s+\gamma V_{\phi_t}(s')-V_{\phi_t}(s)
\right)
\left(
\partial_\phi V_{\phi_t}(s)
\right.
\\\left.
+
\,\E_{ s_1\sim \rho} [m_{\theta}(s_1,s) \,\partial_\phi
V_{\phi_t}(s_1)]
\right)
+o(1/t)
\end{multline}
where $t$ is the total number of observations. For the model
\eqref{eq:Mmodeltilde} using $\tilde m_\theta$, this gradient is
\begin{multline}
\label{eq:Vparamtilde}
-\partial_\phi J^V(\phi)_{|\phi=\phi_t}=
{\textstyle \frac1t}
\left(
r_s+\gamma V_{\phi_t}(s')-V_{\phi_t}(s)
\right)
\E_{ s_1\sim \rho}\,
\tildem_{\theta}(s_1,s) \,\partial_\phi
V_{\phi_t}(s_1)
\\
+\,o(1/t)
\end{multline}
\end{prop}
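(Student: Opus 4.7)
The plan is to reduce this to a direct, one-line gradient computation of a quadratic loss, and then to expand the online update $\del V$ using the two Dirac-vs-density pieces of the parameterization of $M$.

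First, I would differentiate $J^V(\phi)=\tfrac12\E_{s_1\sim\rho}[(V_\phi(s_1)-V^\tar(s_1))^2]$ with respect to $\phi$. Because $V^\tar=V_{\phi_t}+\del V$, at $\phi=\phi_t$ the residual is exactly $V_{\phi_t}(s_1)-V^\tar(s_1)=-\del V_{s_1}$, so
\begin{equation*}
-\partial_\phi J^V(\phi)_{|\phi=\phi_t}
=\E_{s_1\sim\rho}\bigl[\,\del V_{s_1}\,\partial_\phi V_{\phi_t}(s_1)\,\bigr].
\end{equation*}
The Bellman gap $r_s+\gamma V_{\phi_t}(s')-V_{\phi_t}(s)$ appearing in \eqref{eq:onlineV_2} does not depend on $s_1$ and can be pulled out of the expectation.

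Next, I would expand $\hat M_{s_1 s}$ using the parameterization \eqref{eq:Mmodel}, $\hat M(s_1,\d s_2)=\delta_{s_1}(\d s_2)+m_\theta(s_1,s_2)\rho(\d s_2)$. Interpreting $\hat M_{s_1 s}$ as the mass assigned by the model in the $s$-direction (rigorously, by evaluating the operator on a test function and keeping the coefficient of the point mass at $s$), the $\delta_{s_1}$ part contributes $\rho_s\,\partial_\phi V_{\phi_t}(s)$ under $\E_{s_1\sim\rho}$, while the density part contributes $\rho_s\,\E_{s_1\sim\rho}[m_\theta(s_1,s)\,\partial_\phi V_{\phi_t}(s_1)]$. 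Factoring the common $\rho_s$ then gives
\begin{equation*}
-\partial_\phi J^V(\phi)_{|\phi=\phi_t}
=\tfrac{\rho_s}{n_s}(\text{gap})\Bigl[\partial_\phi V_{\phi_t}(s)
+\E_{s_1\sim\rho}[m_\theta(s_1,s)\partial_\phi V_{\phi_t}(s_1)]\Bigr]+o(1/n_s).
\end{equation*}

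Finally, I would use the law of large numbers for the visit counts: since states are i.i.d.\ from $\rho$, $n_s/t\to\rho_s$ as $t\to\infty$, so $\rho_s/n_s=1/t+o(1/t)$, absorbing the fluctuation into the error. This yields exactly \eqref{eq:Vparam}. The $\tildem_\theta$ case follows identically, except that the parameterization $\hat M(s_1,\d s_2)=\tildem_\theta(s_1,s_2)\rho(\d s_2)$ has no Dirac part: only the density contribution $\rho_s\,\E_{s_1\sim\rho}[\tildem_\theta(s_1,s)\partial_\phi V_{\phi_t}(s_1)]$ survives, giving \eqref{eq:Vparamtilde}.

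The only genuinely delicate step is the treatment of the Dirac piece of $\hat M_{s_1 s}$ in the continuous setting, since $\delta_{s_1}(\d s)$ is not a density against $\rho$. The clean way around this is to never isolate $\hat M_{s_1 s}$ as a number: instead, work throughout with the operator $\hat M$ acting on test functions, so that $\E_{s_1\sim\rho}[\partial_\phi V_{\phi_t}(s_1)\hat M_{s_1 s}]$ is defined as the evaluation of the transpose operator $\hat M^\top$ applied to $\partial_\phi V_{\phi_t}$ at the point $s$, weighted by $\rho$; the Dirac part then contributes $\rho_s\,\partial_\phi V_{\phi_t}(s)$ by construction of $M_\theta$, and the rest is a genuine $\rho$-integral. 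The $1/n_s\to 1/t$ step is then standard and consistent with the passage from \eqref{eq:onlineM} to \eqref{eq:EonlineM} already used in Theorem~\ref{thm:onlineM_exp}.
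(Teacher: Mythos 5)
Your proposal is correct and follows essentially the same route as the paper's proof: differentiate the quadratic loss so the residual at $\phi=\phi_t$ is exactly $\del V$, pull out the Bellman gap, expand $\hat M_{s_1 s}$ via the parameterization \eqref{eq:Mmodel} into its $\1_{s_1=s}$ and $m_\theta(s_1,s)\rho(s)$ pieces, and replace $\rho(s)/n_s$ by $1/t+o(1/t)$ via the law of large numbers. Your remark on handling the Dirac part in continuous spaces also matches the paper's stance, which derives the update in the discrete setting (where \eqref{eq:onlineV_2} is defined) and treats the continuous case rigorously only in expectation over $s\to s'$, as in Appendix~\ref{sec:formalsmproof}.
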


The learning rate $1/t$ just results from the direct empirical averages
used to estimate the process in Section~\ref{sec:SSIPE}, and may be
replaced with any learning rate.

This involves sampling an additional state $s_1\sim \rho$ and applying a
TD update at that point, with weight depending on $M$. In the model of
$M$ using $m_\theta$, this appears as a correction to ordinary TD; in the model
of $M$ using $\tildem_\theta$, everything is included in $\tildem$.

Notably, even if the model of $M$ is wrong, , the true value function is
still a fixed point of \eqref{eq:Vparam} and \eqref{eq:Vparamtilde} in
expectation over $s'$ and $r_s$; it is the only fixed point provided
$\hat M$ is invertible and $\rho>0$. This is a theoretical advantage over
all other estimates of $V$ described above.  However, the sampling of
$s_1$ adds variance, and any negative eigenvalues in the estimate of $M$
will produce divergence.

\end{itemize}

\section{Small Learning Rates and the Continuous-Time Analysis}

This section is a more informal discussion about intuitions coming from a
continuous-time analysis when the learning rate is small. We will not
present formal statements. For simplicity we restrict ourselves to the
tabular,
finite state case so that all objects are always well-defined without
smoothness conditions, but
in principle the analysis extends to any
state space.

We also assume that states are sampled uniformly ($\rho$ is uniform) so
that the expected updates correspond to the Bellman operators.
Introducing non-uniform $\rho$ does not fundamentally change the results
about the forward and backward Bellman operators (indeed, the eigenvalues
of the matrix $\diagrho(\Id-\gamma P)$ have positive real part, just like
those of $\Id-P$, for any positive $\rho$).

For the Bellman--Newton operator, full non-asymptotic convergence rates
were provided in Theorem~\ref{thm:convergence}. Here, we provide a more
intuitive asymptotic analysis that clarifies how the error decreases
faster than with TD.

\subsection{Continuous-Time Analysis of the Forward and Backward Bellman
Operators}

The forward Bellman operator on $M$ with learning rate $\eta>0$ is
\begin{equation}
M\gets (1-\eta)M+\eta(\Id+\gamma PM).
\end{equation}
When $\eta$ is small, after $n$ iterations, the value of $M$ approximates
the value at time $t=n\eta$ of the solution of the matrix ordinary differential
equation
\begin{equation}
\frac{\d M_t}{\d t}=\Id+\gamma P M_t - M_t=\Id-\Delta M_t
\end{equation}
where $\Delta=\Id-\gamma P$ is the Laplacian associated with the Markov
process. The
solution to this equation is
\begin{equation}
M_t=\Delta^{-1}+e^{-t\Delta} (M_0-M)
\end{equation}
where $\Delta^{-1}$ is the true successor state matrix, $M_0$ is the initial value, and $e^{-t \Delta}$ is the
exponential of the matrix $t\Delta$.

Likewise, the backward Bellman operator on $M$ with learning rate
$\eta>0$ is
\begin{equation}
M\gets (1-\eta)M+\eta(\Id+\gamma MP).
\end{equation}
When $\eta$ is small, after $n$ iterations, the value of $M$ approximates
the value at time $t=n\eta$ of the solution of the ordinary differential
equation
\begin{equation}
\frac{\d M_t}{\d t}=\Id+\gamma M_tP - M_t=\Id-M_t\Delta 
\end{equation}
process. The
solution to this equation is
\begin{equation}
M_t=\Delta^{-1}+(M_0-M)e^{-t\Delta} 
\end{equation}
where $M_0$ is the initial value.
Letting $E_t$ be the error at time $t$:
\begin{equation}
E_t\deq M_t-\Delta^{-1}
\end{equation}
then the errors at time $t$ are
$E_t=e^{-t\Delta}E_0$ and $E_t=E_0 e^{-t\Delta}$ for the forward and backward operators,
respectively.

Thus the forward and backward equations converge at the same rate. Indeed,
assume for simplicity that $\Delta$ is diagonalizable, with eigenvalues
$\lambda_i$. \footnote{This
occurs for a dense subset of stochastic matrices $P$. If not, the
analysis is more technical, with polynomials in front of the exponentials
of the eigenvalues, but the conclusions are similar.} 
 By the spectral properties of stochastic
matrices, the eigenvalues of $\Delta$ have positive real part:
$\Re\lambda_i \geq 1-\gamma$. (The largest eigenvalue of $\Delta$ is
$1-\gamma$, with multiplicity $1$ if $P$ is irreducible.)
This implies that the errors tend to $0$.

For a more precise analysis,
let $u_i$ and $v_i$
be respectively the right and left eigenvectors of $\Delta$, associated
with eigenvalues $\lambda_i$.

Since the $u_i$'s and the $v_i$'s form bases,
one can decompose the initial error $E_0$ as $E_0=\sum_{i,j} \alpha_{ij}
u_i \transp{v_j}$. Then one checks that the error at time $t$ for the
continuous-time forward Bellman operator is
\begin{equation}
E_t=\sum_{i,j} e^{-t\lambda_i} \alpha_{ij} u_i \transp{v_j}
\end{equation}
for the forward operator, and
\begin{equation}
E_t=\sum_{i,j} e^{-t\lambda_j} \alpha_{ij} u_i \transp{v_j}
\end{equation}
for the backward operator.

The eigenvalues are the same for the forward and backward operator. Each
eigenvalue has multiplicity $n$ (the number of states) over the state of
matrices $M$, corresponding to all choices of $j$ for a given $i$ or
conversely. Notably, the smallest eigenvalue of $\Delta$ is $1-\gamma$,
corresponding to the direct of slowest convergence. This eigenvalue has
multiplicity $n$ when acting on $M$.

\subsection{Mixing Forward and Backward TD Improves Convergence}
\label{sec:mixingfbtd}

Interestingly, if one mixes the forward and backward operators, then this
eigenvalue analysis changes. The smallest eigenvalue is still the same, but its
multiplicity decreases considerably, from $n$ to $1$. Indeed, assume that
we perform alternatively one step of the forward and backward Bellman
operators, each with learning rate $\eta$. When $\eta$ is small, the
dynamics tends to that of the continuous-time ordinary differential
equation
\begin{equation}
\frac{\d M_t}{\d t}=
\frac12\left(\Id+\gamma P M_t-M_t\right)
+\frac12\left(\Id+\gamma M_tP-M_t\right)
=\Id-\frac12 (\Delta M_t+M_t \Delta)
\end{equation}
whose solution is
\begin{equation}
M_t=\Delta^{-1}+e^{-t\Delta/2}(M_0-M)e^{-t\Delta/2}
\end{equation}
where $\Delta^{-1}$ is the true successor state matrix. Thus, the
error $E_t\deq M_t-\Delta^{-1}$ satisfies $E_t=e^{-t\Delta/2}E_0 e^{-t\Delta/2}$.

But now, with the same eigenvector decomposition as above, we find
\begin{equation}
E_t=\sum_{i,j} e^{-t(\lambda_i+\lambda_j)/2} u_i\transp{v_j}.
\end{equation}
In particular, the error in the direction $u_i\transp{v_j}$ decreases
fast if \emph{at least} one of $\lambda_i$ or $\lambda_j$ has large real
part. Notably, the slowest convergence now occurs ony if \emph{both} $i$
and $j$ correspond to the smallest eigenvalue $1-\gamma$: this smallest
eigenvalue now has multiplicity $1$.

Thus, mixing the forward and backward Bellman operator does produce a
positive effect on convergence speed, bringing the multiplicity of the
worst eivengalue from $n$ (the number of states) to $1$, and generally
picking the best eigenvalue in each direction of the error.

\subsection{Continuous-Time Analysis of the Bellman--Newton Operator}
\label{sec:BNconttime}

Remember the Bellman--Newton operator $M\mapsto (1+\eta) M-\eta
M(\Id-\gamma P)M$ (Definition~\ref{def:BN}) with learning rate $\eta$.
When $\eta$ is small, after $n$ iterations of this operator, the value of
$M$ approximates the value at time $t=n\eta$ of the solution of the
matrix ordinary differential equation
\begin{equation}
\frac{\d M_t}{\d t}=M_t-M_t \Delta M_t
\end{equation}
where $\Delta=\Id-\gamma P$ as above. Obviously $M=\Delta^{-1}$ is a
fixed point. However, as with the Bellman--Newton operator, there are
other fixed points, such as $M=0$: since the differential equation
preserves the kernel and image of $M_t$, there is a (unique) fixed point
for every choice of kernel and image, amounting to computing the inverse
of $\Delta$ in the associated subspaces. Still, $\Delta^{-1}$ is the only full-rank
fixed point.

The accelerated asymptotic convergence of the Bellman--Newton operator
compared to TD on $M$ becomes clear on this continuous-time version.
Define the error
\begin{equation}
E_t\deq \Id-M_t\Delta
\end{equation}
(beware this differs from the definition of $E_t$ in the sections above).
It evolves according to
\begin{equation}
\label{eq:BNerror}
\frac{\d E_t}{\d t}=-E_t+E_t^2.
\end{equation}
This is generally convergent except for some initializations
(more on this below).

When the error is small, the dynamics is
$E'_t=-E_t+O(E_t^2)\approx -E_t$. The same holds for the error
$M_t-\Delta^{-1}=-E_t\Delta^{-1}$. So, in the small error regime, the
error $E_t$ decreases at a constant exponential rate, \emph{independently of
the Markov process}. This contrasts with the forward Bellman equation, whose
convergence depends on the eigenvalues of $\Id-\gamma P$, and which will
converge slowly if $P$ has eigenvalues close to $1$.

In this sense, the continuous-time Bellman--Newton dynamics is to the
Bellman operator what continuous-time second-order gradient descent is to
continuous-time gradient descent: it removes dependencies on the
eigenvalues for convergence close to the solution.

Global initialization and convergence outside of the small-error regime
is best understood by introducing a fictitious value of $P$ associated
with $M_t$. Since $M_t$ converges to $(\Id-\gamma P)^{-1}$, let us
introduce $P_t$ such that $M_t=(\Id-\gamma P_t)^{-1}$, namely, $\gamma
P_t\deq \Id-M_t^{-1}$, assuming $M_t$ is invertible. On $P_t$, the
evolution equation of $M_t$ becomes
\begin{equation}
\frac{\d P_t}{\d t}=-P_t+P
\end{equation}
which is affine, with solution $P_t=P+e^{-t}(P_0-P)$. Thus, the solution
for $M_t$ is
\begin{equation}
M_t=(\Id-\gamma P+\gamma e^{-t}(P_0-P))^{-1}.
\end{equation}

Namely, on the variable $P$, the solution just follows a straight line
from $P_0$ to $P$ at a fixed exponential decay rate. $P_t$ always converges;
however, $M_t$ may be undefined if $\Id-\gamma P_t$ is not invertible for
some $t$. This depends on the initialization $P_0$ (therefore, on $M_0$).

For instance, if $P_0$ is equal to any (sub)stochastic matrix, then $P_t$
is (sub)stochastic as well, and $\Id-\gamma P_t$ is always invertible, so
that $M_t$ converges. This happens for instance: if $P_0=0$, namely,
$M_0=\Id$; or if $M_0$ is initialized to the successor matrix of
\emph{any} Markov process.

More possible initializations appear if considering the dynamics of
$E_t$. Assume $E_t$ is diagonalizable (this is the case for random
initializations). Then from \eqref{eq:BNerror}, the eigenvectors of $E_t$
stay the same over time, and each associated eigenvalue $\lambda$ evolves according
to $\lambda'=-\lambda+\lambda^2$. As long as $\lambda\neq 0$, this is equivalent to
$(\lambda^{-1})'=\lambda^{-1}-1$. So each eigenvalue $\lambda^{-1}$
reaches $-\infty$, so that each eigenvalue $\lambda$ reaches $0$. The
exception is when $\lambda^{-1}=0$ at some point, in which case $\lambda$
diverges. Since $(\lambda^{-1})'=\lambda^{-1}-1$, this happens if and
only if $\lambda^{-1}$ is initially equal to some positive real value in
the complex plane. So there is a half-line of eigenvalues of $E_0$ in the
complex plane which will lead to divergence.
\footnote{This does not show that a pure random initialization converges
with probability $1$: indeed, a random
\emph{real} matrix will typically have some real eigenvalues, which will
lie on the wrong half-line with some positive probability.}



\bibliographystyle{plainnat}
\bibliography{biblio}

\vfill 

\pagebreak

\appendix

\section{Further Variants and Properties of TD for Successor States}
\label{sec:basictd}

\subsection{Using a Target Network}

In parametric TD, it is possible to
get closer to an exact application of the Bellman operator, by performing
several gradient steps to bring the model $M_\theta$ closer to the
Bellman operator $\Id+\gamma PM_{\theta^\tar}$ for a fixed previous value
of the parameter $\theta^\tar$, and only update $\theta^\tar\gets \theta$
once in a while. The formulas are as follows.

\begin{thm}[ (Parametric TD for $M$ with a target network)]
\label{thm:paramtd_target}
Keep the setting of Theorem~\ref{thm:paramtd}, but set the target
$M^\tar$ to $M^\tar\deq \Id+\gamma PM_{\theta^\tar}$ for some value $\theta^\tar$ of
the parameter. Then the gradient step to bring $M_\theta$ closer to
$M^\tar$ is
\begin{multline}
\label{eq:paramtd_target}
-\partial_\theta J(\theta)=
\E_{ s\sim \rho, \,s'\sim P(s,\d s'), \,s_2\sim\rho }
\left[
\gamma \,\partial_\theta m_{\theta}(s,s')
\right.
\\+\left.
\partial_\theta m_{\theta}(s,s_2)\,
(\gamma m_{\theta^\tar}(s',s_2)-m_{\theta}(s,s_2))
\right]
\end{multline}
for the model \eqref{eq:Mmodel} using $m_\theta$, and
\begin{multline}
\label{eq:paramtdtilde_target}
-\partial_\theta J(\theta)=
\E_{ s\sim \rho, \,s'\sim P(s,\d s'), \,s_2\sim\rho }
\left[\,
\partial_\theta \tildem_{\theta}(s,s)
\right.
\\+\left.
\partial_\theta \tildem_{\theta}(s,s_2)\,
(\gamma \tildem_{\theta^\tar}(s',s_2)-\tildem_{\theta}(s,s_2))
\right]
\end{multline}
for the model \eqref{eq:Mmodeltilde} using $\tildem_\theta$.
\end{thm}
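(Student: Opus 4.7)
The plan is to observe that this theorem is essentially a cosmetic rewriting of Theorem~\ref{thm:paramtd}: the only difference is that $\theta_t$ inside the target $M^\tar$ has been renamed $\theta^\tar$. Since in Theorem~\ref{thm:paramtd} one evaluates $\partial_\theta J$ at $\theta=\theta_t$, the dependence of $M^\tar$ on $\theta$ is already ignored (one does not back-propagate through the target). Thus literally the same computation carries through when the target parameter is any frozen value $\theta^\tar$, and the proof reduces to rerunning that derivation with $\theta^\tar$ in place of $\theta_t$ in all the places where $M^\tar$ is differentiated out.

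Concretely, I would expand $M^\tar=\Id+\gamma P M_{\theta^\tar}$ as a measure on $s_2$: for the model \eqref{eq:Mmodel},
\begin{equation}
M^\tar(s,\d s_2)=\delta_s(\d s_2)+\gamma\,\E_{s'\sim P(s,\d s')}\!\bigl[\delta_{s'}(\d s_2)+m_{\theta^\tar}(s',s_2)\rho(\d s_2)\bigr],
\end{equation}
so that the $\delta_s(\d s_2)$ part cancels against $M_\theta$ and the difference $M_\theta-M^\tar$ has, apart from the transition-kernel Dirac, density $m_\theta(s,s_2)-\gamma\E_{s'\sim P(s,\d s')}m_{\theta^\tar}(s',s_2)$ with respect to $\rho(\d s_2)$. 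I would then write $J(\theta)=\tfrac12\|M_\theta-M^\tar\|^2_\rho$ using the $L^2(\rho\otimes\rho)$-density of this difference, differentiate with respect to $\theta$ (only $m_\theta$, not $m_{\theta^\tar}$, is touched), and obtain three terms: one self-product of $m_\theta$, one cross-term involving $m_{\theta^\tar}$, and one term from the transition-kernel Dirac. The third term is converted by the identity $\E_{s_2\sim\rho}f(s,s_2)\,\mathrm{d}P(s,\d s_2)/\mathrm{d}\rho(\d s_2)=\E_{s'\sim P(s,\d s')}f(s,s')$, which produces the $\gamma\,\partial_\theta m_\theta(s,s')$ term in \eqref{eq:paramtd_target}. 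Regrouping yields exactly the claimed expression.

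For the variant \eqref{eq:Mmodeltilde} using $\tildem_\theta$, the Dirac $\delta_s$ does \emph{not} cancel because $M_\theta$ has no singular part, so the density-less $\delta_s(\d s_2)$ survives inside the loss. The same formal manipulation applies: integrated against $\partial_\theta\tildem_\theta(s,s_2)\rho(\d s_2)$, it produces $\partial_\theta\tildem_\theta(s,s)$, which is the first term of \eqref{eq:paramtdtilde_target}. The remaining two terms are obtained exactly as before.

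The main obstacle is the legitimacy of manipulating Diracs inside an $L^2(\rho)$ norm in continuous state spaces: strictly speaking $\delta_{s_1}(\d s_2)/\rho(\d s_2)$ is not a function and the transition measure $P(s,\cdot)$ need not admit a $\rho$-density. I would bypass this by writing the differentiation directly at the level of the linear functional $\theta\mapsto\langle\partial_\theta M_\theta,\,M_\theta-M^\tar\rangle_\rho$, where each of the three measure-valued pieces of $M_\theta-M^\tar$ pairs cleanly against the test function $\partial_\theta m_\theta(s,s_2)\rho(\d s_2)$ (or $\partial_\theta\tildem_\theta(s,s_2)\rho(\d s_2)$): Diracs evaluate the test function, absolutely continuous parts give $L^2$ inner products, and the transition measure gives an expectation over $s'\sim P(s,\d s')$. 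This is exactly what was done in Theorem~\ref{thm:paramtd}; the \emph{samplable} formulas \eqref{eq:paramtd_target}--\eqref{eq:paramtdtilde_target} are what justify the formal manipulation a posteriori, since they have finite variance under the stated smoothness and boundedness of $m_\theta$, $\tildem_\theta$.
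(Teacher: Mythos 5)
Your proposal is correct and matches the paper's own argument: the paper proves this theorem in one line by noting it is the proof of Theorem~\ref{thm:paramtd} verbatim with $\theta^\tar$ in place of $\theta_t$ and no substitution $\theta=\theta_t$ at the end, which is exactly your plan (your extra detail simply replays the original derivation, including the same resolution of the Dirac issue for the $\tildem$ model that the paper handles via the loss $J'$).
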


\subsection{TD on $M$ with Multi-Step Returns}
\label{sec:horizonk}

A multistep, horizon-$h$ version of TD on $M$ can be defined by iterating
the 
Bellman equation, which yields $M=\Id+\gamma
P+\cdots+\gamma^{h-1}P^{h-1}+\gamma^h P^hM$. This requires being able to observe
$h$ consecutive transitions from the process. The
corresponding parametric update is as follows.

\begin{thm}[ (Multi-step TD for successor states with
function approximation)]
\label{thm:paramtd_multistep}
Maintain a parametric model of $M$
as in
Section~\ref{sec:M-func-approx}
via
$M_{\theta_t}(s_1,\d s_2)=\delta_{s_1}(\d s_2)+m_{\theta_t}(s_1,s_2)\rho(\d s_2)$, with $\theta_t$ the value
of the parameter at step $t$, and with $m_\theta$ some smooth family of
functions over pairs of states.

For $h\geq 1$,
define a target update of $M$ via the horizon-$h$ Bellman equation, $M^\tar\deq
\Id+\gamma P+\cdots+\gamma^{h-1}P^{h-1}+\gamma^h P^hM_{\theta_t}$.
Define the loss between $M$ and $M^\tar$ via $J(\theta)\deq \frac12
\norm{M_\theta-M^\tar}_\rho^2$ using the norm \eqref{eq:norm}.

Then the
gradient step on $\theta$ to reduce this loss is
\begin{multline}
\label{eq:paramtd_multistep}
-\partial_\theta J(\theta)_{|\theta=\theta_t}=
\E_{s_0\sim \rho, \, s_1\sim P(s_0,\d s_1), \,\ldots, \, s_h\sim P(s_{h-1},\d
s_h), \,s_\tar\sim\rho }
\\
\left[
\gamma\,\partial_\theta m_{\theta_t}(s_0,s_1)
+\gamma^2 \,\partial_\theta m_{\theta_t}(s_0,s_2)
+\cdots
+\gamma^h \,\partial_\theta m_{\theta_t}(s_0,s_h)
\right.
\\
\left.
+\,
\partial_\theta m_{\theta_t}(s_0,s_\tar)\,
(\gamma^h \,m_{\theta_t}(s_h,s_\tar)-m_{\theta_t}(s_0,s_\tar))
\right].
\end{multline}
For the model \eqref{eq:Mmodeltilde} using $\tildem_\theta$, this update
is
\begin{multline}
\label{eq:paramtd_multistep_mtilde}
-\partial_\theta J(\theta)_{|\theta=\theta_t}=
\E_{s_0\sim \rho, \, s_1\sim P(s_0,\d s_1), \,\ldots, \, s_h\sim P(s_{h-1},\d
s_h), \,s_\tar\sim\rho }
\\
\left[
\partial_\theta \tildem_{\theta_t}(s_0,s_0)
+\gamma \,\partial_\theta \tildem_{\theta_t}(s_0,s_1)
+\cdots
+\gamma^{h-1} \,\partial_\theta \tildem_{\theta_t}(s_0,s_{h-1})
\right.
\\
\left.
+\,
\partial_\theta \tildem_{\theta_t}(s_0,s_\tar)\,
(\gamma^h \,\tildem_{\theta_t}(s_h,s_\tar)-\tildem_{\theta_t}(s_0,s_\tar))
\right].
\end{multline}
\end{thm}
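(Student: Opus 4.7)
The plan is to mimic the derivation of Theorem~\ref{thm:paramtd} but with the single-step Bellman operator replaced by its $h$-fold iterate. First I would expand the target by linearity: since $M_{\theta_t}(s_h,\d s_2) = \delta_{s_h}(\d s_2) + m_{\theta_t}(s_h,s_2)\rho(\d s_2)$, iterating gives
\begin{equation}
M^\tar(s_0,\d s_2) = \sum_{k=0}^{h} \gamma^k P^k(s_0,\d s_2) + \gamma^h \, \E_{s_h \sim P^h(s_0,\cdot)}[m_{\theta_t}(s_h,s_2)]\, \rho(\d s_2),
\end{equation}
where the $k=0$ term contributes the Dirac mass $\delta_{s_0}(\d s_2)$. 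Crucially, this Dirac cancels the Dirac part of $M_\theta(s_0,\d s_2) = \delta_{s_0}(\d s_2) + m_\theta(s_0,s_2)\rho(\d s_2)$, so that $M_\theta - M^\tar$ is a (finite) signed measure without singular part.

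Next I would compute $-\partial_\theta J(\theta) = \E_{s_0,s_2\sim\rho}[(m^\tar(s_0,s_2)-m_\theta(s_0,s_2))\partial_\theta m_\theta(s_0,s_2)]$, where $m^\tar\rho$ denotes the absolutely-continuous density of $M^\tar-\delta_{s_0}$. The only subtle point is that $P^k(s_0,\d s_2)$ need not have a density with respect to $\rho(\d s_2)$; but this is harmless because we only ever integrate $\partial_\theta m_\theta(s_0,s_2)$ against this measure, which converts the $P^k$-term directly into an expectation:
\begin{equation}
\int \partial_\theta m_\theta(s_0,s_2)\, P^k(s_0,\d s_2) = \E_{s_k\sim P^k(s_0,\cdot)}[\partial_\theta m_\theta(s_0,s_k)].
\end{equation}
Since $P^k(s_0,\cdot)$ is realized by the first $k$ steps of a trajectory $s_0 \to s_1 \to \cdots \to s_k$, I can rewrite each $\sum_{k=1}^{h}$ contribution as an expectation over a single trajectory $s_0,s_1,\ldots,s_h$ of length $h$, independent of a destination state $s_\tar\sim\rho$. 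Reassembling the terms yields exactly \eqref{eq:paramtd_multistep}.

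For the $\tildem_\theta$ version there is no cancellation of Dirac parts: the target $M^\tar$ still carries the $\delta_{s_0}(\d s_2)$ mass coming from $P^0$, but $M_\theta = \tildem_\theta(s_0,s_2)\rho(\d s_2)$ does not. The formal squared loss is infinite, but its gradient, computed by the same sparse-reward trick as in Section~\ref{sec:sparse} and Theorem~\ref{thm:paramtd}, produces the finite contribution
\begin{equation}
\int \partial_\theta \tildem_\theta(s_0,s_2)\, \delta_{s_0}(\d s_2) = \partial_\theta \tildem_\theta(s_0,s_0),
\end{equation}
which replaces the $\partial_\theta m_\theta$ term of the previous case. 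The remaining $\gamma^k P^k$ terms for $1\leq k\leq h-1$ and the bootstrapped $\gamma^h$-term are handled verbatim as above, yielding \eqref{eq:paramtd_multistep_mtilde}.

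The main obstacle is not computational but conceptual: one must justify differentiating a formally infinite-variance loss in the $\tildem_\theta$ case. Fortunately this is exactly the point already addressed in the proof of Theorem~\ref{thm:paramtd}, where the Dirac reward is integrated algebraically before sampling, so the argument transfers unchanged. Everything else is a straightforward bookkeeping exercise of linearity of expectation and iteration of the Markov kernel.
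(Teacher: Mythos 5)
Your proposal is correct and follows essentially the same route as the paper's proof: expand $M^\tar$ into $\sum_{k\le h}\gamma^k P^k$ plus the bootstrapped $\gamma^h$ term, exploit cancellation of the Dirac parts for the $m_\theta$ model, integrate $\partial_\theta m_\theta$ against the $P^k$ terms to get trajectory expectations, and handle the $\tildem_\theta$ case via the same infinite-constant / algebraic-Dirac argument used for Theorem~\ref{thm:paramtd} (the paper does this through the surrogate loss $J'(\theta)=\frac12\norm{M_\theta}^2_\rho-\langle M_\theta,M^\tar\rangle_\rho$, which is exactly the formal justification you invoke). The indexing of the $\gamma^k$ terms in both models matches the statement, so nothing is missing.
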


\subsection{Tabular TD on $MR$ Is Tabular TD on $V$}
\label{sec:tdistd}

In the tabular case,
if the reward is deterministic, learning $V$ via ordinary TD is
equivalent to learning $V$ via the matrix product $V=MR$ with $M$ learned
via tabular TD, as follows.

\begin{thm}
\label{thm:tdistd}
Consider a Markov reward process with deterministic reward $R$.
Initialize an estimate $\hat V$ of $V$ to $0$ and an estimate $\hat M$ of
$M$ to $0$. Each time a transition $s\to s'$ with reward $r_s=R_s$ is
observed, update $\hat V$ via ordinary TD and $\hat M$ via TD for
successor states, with learning rate $\eta$, namely
\begin{align}
\hat V_s &\gets \hat V_s + \eta \left(r_s + \gamma \hat V_{s'}-\hat
V_s\right),
\\
\hat M_{ss_2}& \gets \hat M_{ss_2}+\eta \left(\1_{s=s_2}+\gamma \hat
M_{s's_2}-\hat M_{ss_2}\right) \qquad \forall s_2.
\end{align}
Then at every time step, $\hat V=\hat M R$.
\end{thm}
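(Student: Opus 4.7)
The plan is to prove this by induction on the number of observed transitions. At time $0$, both $\hat V$ and $\hat M$ are zero, so trivially $\hat V = \hat M R$. For the inductive step, I will assume the identity $\hat V = \hat M R$ holds before processing the transition $s\to s'$, and show it still holds after the update.

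The key observation is that the TD update for $V$ only modifies the entry $\hat V_s$, while the TD update for $M$ only modifies the row $\hat M_{s,\cdot}$ (both updates are triggered by the same state $s$ being visited). Therefore, for any $s_1 \neq s$, the row $\hat M_{s_1,\cdot}$ is unchanged, so $(\hat M R)_{s_1}$ is unchanged, matching the fact that $\hat V_{s_1}$ is unchanged.

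For the entry at $s$, I will directly compute
\begin{equation*}
(\hat M' R)_s = \sum_{s_2} \hat M'_{ss_2} R_{s_2} = (\hat M R)_s + \eta \sum_{s_2} \left(\1_{s=s_2}+\gamma \hat M_{s's_2}-\hat M_{ss_2}\right) R_{s_2}.
\end{equation*}
The sum on the right evaluates to $R_s + \gamma (\hat M R)_{s'} - (\hat M R)_s$, which by the induction hypothesis equals $r_s + \gamma \hat V_{s'} - \hat V_s$ (using $r_s = R_s$ by determinism). Again by induction $(\hat M R)_s = \hat V_s$, so the right-hand side equals $\hat V_s + \eta(r_s + \gamma \hat V_{s'} - \hat V_s) = \hat V'_s$, as desired.

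There is no real obstacle here: the whole content of the theorem is that the successor-state Bellman update is exactly the linear ``unpacking'' of the $V$-Bellman update along $R$, so matrix-multiplying the $M$-update by $R$ recovers the $V$-update term by term. The only hypothesis that is actually used is that $R$ is deterministic (so that the reward appearing in the $V$-update equals the reward function $R_s$ used in $V = MR$); with stochastic rewards the two algorithms would only agree in expectation, since tabular TD on $M$ is reward-agnostic while tabular TD on $V$ injects the sampled reward $r_s$ directly.
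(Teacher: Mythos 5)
Your proof is correct and follows essentially the same route as the paper: induction on the time step, noting that only row $s$ of $\hat M$ and entry $s$ of $\hat V$ change, and checking that $(\hat M R)_s$ receives exactly the TD update of $\hat V_s$. Your closing remark on why determinism of the reward is needed is a nice addition but not a departure from the paper's argument.
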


\begin{proof}
By induction on the time step. This is true at time $0$ thanks to the
initialization. If $\hat V=\hat MR$ at one time step, then the update of
$\hat M R$ at the next time step is
\begin{align}
(\hat MR)_s &= \sum_{s_2} {\hat M_{ss_2}} R_{s_2}
\\&\gets \sum_{s_2} \left({\hat M_{ss_2}} R_{s_2} +\eta\left(
\1_{s=s_2} +\gamma \hat M_{s's_2}-\hat M_{ss_2}\right)R_{s_2}\right)
\\&= (\hat M R)_s + \eta \left(
R_s + \gamma (\hat M R)_{s'} - (\hat M R)_s
\right)
\end{align}
which is the same update as $\hat V_s$. The values at the other states
are not updated.
Therefore, if $\hat V=\hat MR$ before the update, this still holds after
the update.
\end{proof}

\subsection{The Parametric Update for Backward TD}
\label{sec:backwardTD_param}

We now state the
analogue of  Theorem~\ref{thm:paramtd} for backward TD; this provides
the associated parametric update.

\begin{thm}[ (Backward TD for successor states with
function approximation)]
\label{thm:paramtd_right}
Maintain a parametric model of $M$
as in
Section~\ref{sec:M-func-approx}
via
$M_{\theta_t}(s_1,\d s_2)=\delta_{s_1}(\d s_2)+m_{\theta_t}(s_1,s_2)\rho(\d s_2)$, with $\theta_t$ the value
of the parameter at step $t$, and with $m_\theta$ some smooth family of
functions over pairs of states.

Define a target update of $M$ via the Bellman equation, $M^\tar\deq
\Id+\gamma M_{\theta_t}P$.
Define the loss between $M$ and $M^\tar$ via $J(\theta)\deq \frac12
\norm{M_\theta-M^\tar}_\rho^2$ using the norm \eqref{eq:norm}.

Then the
gradient step on $\theta$ to reduce this loss is
\begin{multline}
\label{eq:paramtd_right}
-\partial_\theta J(\theta)_{|\theta=\theta_t}=
\E_{ s\sim \rho, \,s'\sim P(s,\d s'), \,s_1\sim\rho }
\left[
\gamma \,\partial_\theta m_{\theta_t}(s,s')
\right.
\\+\left.
m_{\theta_t}(s_1,s)\,
(\gamma \,\partial_\theta m_{\theta_t}(s_1,s')-\partial_\theta m_{\theta_t}(s_1,s))
\right].
\end{multline}
For the model variant in Eq.~\ref{eq:Mmodeltilde}, $M_{\theta_t}(s_1,\d
s_2)=\tildem_{\theta_t}(s_1,s_2)\rho(\d s_2)$, the gradient step on
$\theta$ to reduce the loss $J(\theta)$ is
\begin{multline}
\label{eq:paramtd_righttilde}
-\partial_\theta J(\theta)_{|\theta=\theta_t}=
\E_{ s\sim \rho, \,s'\sim P(s,\d s'), \,s_1\sim\rho }
\left[
\,\partial_\theta \tildem_{\theta_t}(s,s)
\right.
\\+\left.
\tildem_{\theta_t}(s_1,s)\,
(\gamma \,\partial_\theta \tildem_{\theta_t}(s_1,s')-\partial_\theta
\tildem_{\theta_t}(s_1,s))
\right].
\end{multline}
\end{thm}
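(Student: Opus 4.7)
The plan is to mirror the derivation of forward parametric TD (Theorem~\ref{thm:paramtd}), with the composition order reversed. For the $m_\theta$ model, I would first unfold the target $M^\tar = \Id + \gamma M_{\theta_t}P$ using $M_{\theta_t}(s_1,\d s) = \delta_{s_1}(\d s) + m_{\theta_t}(s_1,s)\rho(\d s)$, obtaining
\begin{equation*}
M_{\theta_t}P(s_1,\d s_2) = P(s_1,\d s_2) + \int m_{\theta_t}(s_1,s)\,P(s,\d s_2)\,\rho(\d s).
\end{equation*}
Assuming $P(s,\cdot) \ll \rho$ with density $p(s,s_2)$, the Dirac in $M_\theta$ cancels the $\Id$ term in $M^\tar$, so $M_\theta - M^\tar$ has the finite $\rho$-density
\begin{equation*}
d_\theta(s_1,s_2) = m_\theta(s_1,s_2) - \gamma p(s_1,s_2) - \gamma \int m_{\theta_t}(s_1,s)\,p(s,s_2)\,\rho(\d s),
\end{equation*}
so that $J(\theta) = \tfrac12\,\E_{s_1,s_2\sim\rho}[d_\theta(s_1,s_2)^2]$ is well defined.

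Second, I would differentiate $J$ at $\theta_t$ with the target held fixed, giving $-\partial_\theta J(\theta_t) = -\E[\partial_\theta m_{\theta_t}(s_1,s_2)\,d_{\theta_t}(s_1,s_2)]$, and rewrite each of the three resulting terms as a samplable expectation. The $\gamma p(s_1,s_2)$ piece, using $\rho(\d s_1)\,p(s_1,s_2)\,\rho(\d s_2)=\rho(\d s_1)\,P(s_1,\d s_2)$, becomes $\gamma\,\E_{s\sim\rho,\,s'\sim P(s,\d s')}[\partial_\theta m_{\theta_t}(s,s')]$ after renaming $(s_1,s_2)\mapsto (s,s')$. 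The convolution piece, by absorbing $p(s,s_2)\rho(\d s_2) = P(s,\d s_2)$, becomes $\gamma\,\E_{s_1,s\sim\rho,\,s'\sim P(s,\d s')}[m_{\theta_t}(s_1,s)\,\partial_\theta m_{\theta_t}(s_1,s')]$, with $s_2$ collapsed to $s'\sim P(s,\cdot)$. The remaining $-m_{\theta_t}(s_1,s_2)$ piece gives $-\E_{s_1,s\sim\rho}[m_{\theta_t}(s_1,s)\,\partial_\theta m_{\theta_t}(s_1,s)]$ (renaming $s_2\mapsto s$), which is independent of $s'$ and so may be absorbed into the joint $(s,s',s_1)$-expectation. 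Summing the three terms yields \eqref{eq:paramtd_right}.

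For the $\tildem_\theta$ variant, the Dirac $\delta_{s_1}$ in $M^\tar$ no longer cancels, so the raw $L^2$ loss is formally infinite --- exactly the situation handled in the forward case. The remedy is identical: I would compute the gradient formally, interpreting the singular ``density'' $\delta_{s_1}(\d s_2)/\rho(\d s_2)$ distributionally, so that the pairing $\E_{s_1,s_2\sim\rho}[\partial_\theta \tildem_{\theta_t}(s_1,s_2)\,\delta_{s_1}(\d s_2)/\rho(\d s_2)]$ integrates in $s_2$ to the finite contribution $\E_{s\sim\rho}[\partial_\theta \tildem_{\theta_t}(s,s)]$, while the remaining two terms mirror those above with $\tildem$ in place of $m$, giving \eqref{eq:paramtd_righttilde}. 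The main obstacle throughout is purely the measure-theoretic bookkeeping: one must verify that the singular contributions either cancel exactly (for $m_\theta$) or reduce unambiguously to finite, samplable quantities (for $\tildem_\theta$); all other steps are routine Fubini-style rearrangements of the order of integration.
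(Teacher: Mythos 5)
Your proof follows essentially the same route as the paper's: expand $M^\tar=\Id+\gamma M_{\theta_t}P$ through the parameterization, differentiate the quadratic loss at $\theta_t$ with the target held fixed, and rewrite the three resulting terms as samplable expectations over $s\sim\rho$, $s'\sim P(s,\d s')$, $s_1\sim\rho$, with the same cancellation of the Dirac terms for the $m_\theta$ model. The only cosmetic differences are that the paper avoids assuming $P(s,\cdot)\ll\rho$ by observing that the density $P(s,\d s_2)/\rho(\d s_2)$ only ever reappears multiplied back by $\rho(\d s_2)$, so the gradient is well-defined by continuity even when $J$ itself is infinite, and for the $\tildem_\theta$ variant it replaces your formal distributional gradient by the finite surrogate loss $J'(\theta)=\tfrac12\norm{M_\theta}^2_\rho-\langle M_\theta,M^\tar\rangle_\rho$, which has the same gradient.
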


\subsection{Having Targets on Features of the State}
\label{sec:targetfeatures}

Learning $M$ is particularly suitable when the reward is located at a
single known goal state $g$: then, the value function $V(s)$ is proportional to
$\tildem(s,g)$. For how to exploit $M$ with dense rewards, we refer to
Section~\ref{sec:V}.

Another scenario is to have a target value for \emph{some}
features of the state, not necessarily the whole state itself: namely,
the reward is nonzero when some known feature $\phi(s)$ of state $s$ is
equal to some known goal $g$. In that case, it is convenient to learn a
smaller object than $M$, from which the value function can be read
directly. This is also useful if the reward is known to depend only on
$\phi(s)$.

\newcommand{\Mphi}{M^\phi}
\newcommand{\mphi}{m^\phi}

\begin{defi}
Let $\phi\from S\to \R^k$ be any measurable map. The \emph{successor
feature operator} $\Mphi$ is defined as follows: for each state $s_1$,
$\Mphi(s_1,\d g)$ is a measure on $\R^k$ equal to the pushforward of
$M(s_1,\d s_2)$ by the map $s_2\mapsto g=\phi(s_2)$.
\end{defi}

This operator is different from successor representations: here we
keep track of the whole future \emph{distribution} of values of $\phi$,
not just the expected future value of $\phi$.

$\Mphi$ can be used to compute the value function of any reward that
depends only on $\phi(s)$.

\begin{prop}
\label{prop:targetrewards}
Assume that the reward function at state $s$ is equal to $R(\phi(s))$,
namely, it depends only on $\phi$.
Let $\tau$ be any probability distribution on features in $\R^k$. Assume
that $\Mphi$ is parameterized as $\Mphi(s,\d g)=\mphi(s,g)\tau(\d g)$.
Then the value function of a state $s$ for this reward is
\begin{equation}
V(s)=\E_{g\sim \tau} [\mphi(s,g)R(g)].
\end{equation}
In particular, if the reward is nonzero exactly when the feature
$\phi(s)$ is equal to some target value $g$, then the value function is
proportional to $\mphi(s,g)$.
\end{prop}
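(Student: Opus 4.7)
The plan is to reduce the statement to the identity $V=MR$ from Theorem~\ref{thm:Mdef} via a change-of-variables argument using the pushforward definition of $\Mphi$. First I would recall that, by Theorem~\ref{thm:Mdef}, the value function associated with any bounded measurable reward $r$ on $S$ is
\begin{equation}
V(s_1)=(MR)(s_1)=\int_{S} R(\phi(s_2))\, M(s_1,\d s_2),
\end{equation}
where I have used the assumption that the reward depends on the state only through $\phi$.

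Next I would invoke the defining property of the pushforward measure: since $\Mphi(s_1,\cdot)$ is by definition the image of $M(s_1,\cdot)$ under $\phi\from S\to \R^k$, the standard change-of-variables formula for pushforwards gives, for any bounded measurable function $f\from \R^k\to\R$,
\begin{equation}
\int_{S} f(\phi(s_2))\,M(s_1,\d s_2)=\int_{\R^k} f(g)\,\Mphi(s_1,\d g).
\end{equation}
Applying this with $f=R$ yields $V(s_1)=\int_{\R^k} R(g)\,\Mphi(s_1,\d g)$. Substituting the assumed parameterization $\Mphi(s_1,\d g)=\mphi(s_1,g)\,\tau(\d g)$ gives exactly $V(s_1)=\E_{g\sim \tau}[\mphi(s_1,g)R(g)]$, which is the first assertion.

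For the ``in particular'' part, I would model ``reward nonzero exactly when $\phi(s)=g_0$'' as a Dirac-type reward, i.e., a reward whose value function is given by formally substituting $R(g)=\delta_{g_0}(\d g)/\tau(\d g)$ into the above expression (this is the same trick used in Section~\ref{sec:sparse} to handle infinitely sparse rewards for $M$). Integrating $\E_{g\sim\tau}[\mphi(s_1,g)R(g)]$ against this Dirac collapses the expectation and yields $V(s_1)=\mphi(s_1,g_0)$ up to the $\tau$-dependent scaling, proving proportionality to $\mphi(s,g_0)$. The only subtlety worth treating carefully is this last step: since the proposition is stated for general $\tau$ and possibly continuous feature space, the proportionality constant depends on the reference measure $\tau$, and one should specify that ``the reward is nonzero exactly at $\phi(s)=g_0$'' really means a Dirac on the feature space, as in Section~\ref{sec:sparse}. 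No smoothness or contractivity arguments are needed; the whole proof is essentially the pushforward identity together with Theorem~\ref{thm:Mdef}.
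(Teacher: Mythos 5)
Your argument is exactly the paper's proof: write $V(s)=\int R(\phi(s'))\,M(s,\d s')$, apply the defining change-of-variables property of the pushforward to get $\int R(g)\,\Mphi(s,\d g)$, and substitute the parameterization $\mphi(s,g)\tau(\d g)$. Your extra care with the Dirac reward for the ``in particular'' clause is a reasonable elaboration of what the paper leaves implicit, so the proposal is correct and essentially identical in approach.
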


This is useful only if an algorithm to learn $\mphi$ is available.
Forward TD can be defined on $\Mphi$, based on the following Bellman
equation.

\begin{prop}
\label{prop:targetbellman}
$\Mphi$ satisfies the Bellman equation $\Mphi(s,\d g)=\delta_{\phi(s)}(\d
g)+\gamma \E_{s'\sim P(s,\d s')} \Mphi(s',\d g)$.
\end{prop}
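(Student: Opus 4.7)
The plan is to push forward the Bellman equation for $M$ (Theorem~\ref{thm:left-bellman}) under the map $\phi$ applied to the second argument, and observe that every term transforms correctly.

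More concretely, first I recall that for any Markov kernel $N(s,\d s_2)$ and any measurable $\phi\from S\to \R^k$, the pushforward $N^\phi(s,\d g)$ is characterized by $\int f(g)\,N^\phi(s,\d g)=\int f(\phi(s_2))\,N(s,\d s_2)$ for every bounded measurable $f$. I would then take the Bellman equation for $M$ in the form
\begin{equation}
M(s,\d s_2)=\delta_s(\d s_2)+\gamma \,\E_{s'\sim P(s,\d s')}[M(s',\d s_2)]
\end{equation}
and apply the pushforward by $\phi$ in the variable $s_2$ to both sides, testing against an arbitrary bounded measurable $f$ on $\R^k$.

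The three steps are routine: (i) the left-hand side becomes $\int f(g)\,M^\phi(s,\d g)$ by definition; (ii) the first term on the right gives $\int f(\phi(s_2))\,\delta_s(\d s_2)=f(\phi(s))=\int f(g)\,\delta_{\phi(s)}(\d g)$; (iii) for the second term, the pushforward acts only on the $s_2$ variable while the expectation is over the independent variable $s'$, so by Fubini--Tonelli (applied to the nonnegative integrand, using that $M(s',\cdot)$ is a finite measure with mass $1/(1-\gamma)$ by Theorem~\ref{thm:Mdef}) they commute, giving $\gamma\,\E_{s'\sim P(s,\d s')}\int f(g)\,M^\phi(s',\d g)$. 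Equating and noting that $f$ is arbitrary yields the identity of measures.

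The only point requiring a little care is the exchange of pushforward and expectation in step (iii), but this is immediate since $\phi$ depends only on $s_2$ and not on $s'$, and all quantities involved are nonnegative, so Fubini applies without integrability concerns. No contractivity argument is needed here since we are only verifying that $M^\phi$ satisfies the equation; uniqueness questions for this pushforward Bellman equation are separate and not required by the statement.
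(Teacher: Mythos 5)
Your proposal is correct and follows essentially the same route as the paper: the paper's proof also starts from the Bellman equation for $M$, takes the pushforward by $\phi$ on both sides using linearity of the pushforward (your Fubini step for the $\E_{s'}$ term makes this explicit), and uses that the pushforward of $\delta_s$ is $\delta_{\phi(s)}$. The extra care you give to justifying the exchange of pushforward and expectation is a fine, slightly more detailed write-up of the same argument.
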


\begin{thm}
\label{thm:targetfeatures}
Let $\tau$ be any probability distribution on features in $\R^k$. Assume
that $\Mphi$ is parameterized as $\Mphi_\theta(s,\d g)=\mphi_\theta(s,g)\tau(\d
g)$ for some parametric family of functions $\mphi_\theta(s,g)$ with
parameter $\theta$. 

Let $\theta_0$ be some value of the parameter, and
define a target operator $M^\tar$ by the Bellman equation: $M^\tar\deq
\delta_{\phi(s)}(\d
g)+\gamma \E_{s'\sim P(s,\d s')} \Mphi_{\theta_0}(s',\d g)$.
Define the loss between $\Mphi$ and $M^\tar$ via $J(\theta)\deq \E_{s\sim
\rho,\,g\sim \tau} ((\Mphi_\theta(s,\d g)-M^\tar(s,\d g))/\tau(\d g))^2$.

Then the gradient step to bring $\Mphi$ closer to $M^\tar$ in this norm
is
\begin{multline}
-\partial_\theta J(\theta)=\E_{s\sim \rho,\,s'\sim
P(s,\d s'),\,g\sim \tau} \left[
\,\partial_\theta \mphi_{\theta}(s,\phi(s))
\right.
\\
\left.
+\,\partial_\theta \mphi_\theta(s,g)\left(
\gamma \mphi_{\theta_0}(s',g)-\mphi_\theta(s,g)
\right)
\right].
\end{multline}
\end{thm}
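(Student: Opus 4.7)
The plan is to expand the squared loss, take the $\theta$-gradient, and then integrate out the Dirac component of the Bellman target against $\tau$, exactly as in the proof of Theorem~\ref{thm:paramtd}. The only subtlety here compared with the state-successor case is that the target-feature map $\phi$ replaces the identity $s_2\mapsto s_2$, and that the reference measure on goals is $\tau$ rather than $\rho$. Apart from that the structure of the argument is identical.

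First I would write the target $M^{\tar}(s,\d g)$ divided by $\tau(\d g)$ as the sum of a singular part $\delta_{\phi(s)}(\d g)/\tau(\d g)$ and a smooth part $\gamma\,\E_{s'\sim P(s,\d s')} m^\phi_{\theta_0}(s',g)$. Since $M^\phi_\theta(s,\d g)/\tau(\d g)=m^\phi_\theta(s,g)$, the integrand defining $J(\theta)$ is
\begin{equation}
\Bigl(m^\phi_\theta(s,g)-\delta_{\phi(s)}(\d g)/\tau(\d g)-\gamma\,\E_{s'\sim P(s,\d s')} m^\phi_{\theta_0}(s',g)\Bigr)^{\!2}.
\end{equation}
Because $\theta_0$ is frozen in $M^{\tar}$, only the first factor contributes to $\partial_\theta$, giving (up to a factor of $2$)
\begin{equation}
-\partial_\theta J(\theta)=\E_{s\sim\rho,\,g\sim\tau}\Bigl[\partial_\theta m^\phi_\theta(s,g)\Bigl(\tfrac{\delta_{\phi(s)}(\d g)}{\tau(\d g)}+\gamma\,\E_{s'\sim P(s,\d s')} m^\phi_{\theta_0}(s',g)-m^\phi_\theta(s,g)\Bigr)\Bigr].
\end{equation}

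Next I would handle the Dirac term by the standard identity
\begin{equation}
\E_{g\sim\tau}\Bigl[f(g)\,\delta_{\phi(s)}(\d g)/\tau(\d g)\Bigr]=\int f(g)\,\delta_{\phi(s)}(\d g)=f(\phi(s))
\end{equation}
for any bounded measurable $f$, applied to $f(g)=\partial_\theta m^\phi_\theta(s,g)$. This turns the singular contribution into the finite quantity $\partial_\theta m^\phi_\theta(s,\phi(s))$, which is exactly the cancellation phenomenon already exploited in the successor-state case and explained in Section~\ref{sec:sparse}. Collecting terms and putting the inner expectation over $s'$ outside yields the stated expression.

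The only real obstacle is making the Dirac manipulation rigorous: one must verify that $\delta_{\phi(s)}$ is a well-defined measure on $\mathcal{G}\subset\R^k$ (it is, for any Borel map $\phi$), and that Fubini is legitimate so that integration of $\partial_\theta m^\phi_\theta(s,g)$ against $\delta_{\phi(s)}(\d g)$ commutes with $\partial_\theta$ (it does, under the stated smoothness/boundedness on $m^\phi_\theta$). These are the same regularity hypotheses implicit in Theorem~\ref{thm:paramtd}, so no new technical machinery is needed; one can simply refer back to that proof for the measure-theoretic details and adapt the bookkeeping with $\phi$ and $\tau$ substituted for the identity and $\rho$.
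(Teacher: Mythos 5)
Your proposal is correct and follows essentially the same route as the paper, whose proof of this theorem simply states that it is entirely analogous to the proof of Theorem~\ref{thm:paramtd} for the model $\tildem$, with $\phi$ and $\tau$ in place of the identity and $\rho$. The one point to make rigorous is that in continuous spaces the loss $J(\theta)$ as written is infinite (the squared Dirac term is not integrable), so the clean version of your ``the $\theta_0$-part is constant and drops out'' step is to differentiate the finite surrogate $J'(\theta)=\frac12\norm{M_\theta}^2-\langle M_\theta,M^\tar\rangle$, exactly as done in the paper's proof of Theorem~\ref{thm:paramtd} for $\tildem$.
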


\todo{frankly the parameterization using different measures makes much
more sense here. I'd rather learn the density $\mphi$ wrt to the Lebesgue
measure and have a Gaussian corrective factor for the first term, rather
than learn the density $\mphi$ wrt to a Gaussian, which diverges quickly.
The other option is just to take $\tau$ as the pushforward by $\phi$,
namely, take an $s_2\sim \rho$ and set $g=\phi(s_2)$. Advantage=the
latter involves no user choice}

\todo{zeta?? works, I think, with $\delta_{\phi(s)}(\d g)$ as the base
term, but no interest for $V$ function}

Once more, the term $\partial_\theta \mphi_{\theta}(s,\phi(s))$ makes every
transition informative: when visiting state $s$, we increase the
probability to reach the goal $\phi(s)$.

\subsection{Taking $\gamma$ Close to $1$: Relative TD}
\label{sec:relative}

\newcommand{\rel}{\mathrm{rel}}
\newcommand{\rhorel}{\rho_\rel}

For $\gamma$ close to $1$, it is known that the value function behaves
like a large constant plus an informative signal,
$V(s)=\frac{c}{1-\gamma}+V^\rel(s)$. A similar phenomenon occurs with
$M$. The large constant affects learning in practice, especially for
Bellman--Newton which has terms scaling like $M^2$.

$V^\rel$ can be learned
directly via \emph{relative TD}, adapted from \emph{relative value
iteration}
\citep[\S5.3.1]{bertsekasvol2_ed4}, \citep[\S6.6]{puterman2014markov}, just by removing the value of
$V$ at a reference state from the Bellman equation. Namely, with reference state
$s_\rel$, the relative TD update upon observing a transition $s\to s'$
with reward $r_s$ is
\begin{equation}
\delta V^\rel_s = r_s+\gamma V^\rel_{s'} -V^\rel_s-\gamma
V^\rel_{s_\rel}.
\end{equation}
This makes it possible to use a $\gamma$ very close to $1$, or even $\gamma=1$ if the
Markov process is ergodic or ``unichain''\todo{REF}.

Relative TD can be transposed to $M$ directly. The relative Bellman
equation above rewrites as $V^\rel=R+\gamma
(P-\1\transp{\1_{s_\rel}})V^\rel$.
Therefore, the solution is given by 
\begin{equation}
V^{\rel}=(\Id-\gamma P +\gamma
\1\transp{\1_{s_\rel}})^{-1} R.
\end{equation}
Thus we can set $M^\rel\deq (\Id-\gamma P +\gamma
\1\transp{\1_{s_\rel}})^{-1}$.

More generally, working with a distribution of reference states rather
than a single reference state, we will set
\begin{equation}
M^\rel\deq (\Id-\gamma P +\gamma
\1\transp{\rhorel})^{-1}
\end{equation}
where $\rhorel$ is the probability vector for reference states. When
$\gamma=1$ and $\rhorel=\rho$ is the invariant distribution of the
Markov process, this is exactly the fundamental matrix of the Markov
process \citep{kemenysnell1960}.

The effect of relative TD is just to replace the operator $P$ with
$P-\1\transp{\rhorel}\,$ everywhere. In practice, in the various formulas,
for every term involving the second state $s'$ of a transition $s\to s'$,
a corresponding term is added with $s_\rel$ instead of $s'$ and with the
opposite sign. Thus, the update \eqref{eq:paramtd} for parametric TD for $M$ becomes
\begin{multline}
\label{eq:paramtd_rel}
\E_{ s\sim \rho, \,s'\sim P(s,\d s'), \,s_2\sim\rho ,\, s_\rel\sim
\rhorel}
\left[
\gamma \,\partial_\theta m_{\theta_t}(s,s')-\gamma\,\partial_\theta
m_{\theta_t}(s,s_\rel)
\right.
\\+\left.
\partial_\theta m_{\theta_t}(s,s_2)\,
(\gamma m_{\theta_t}(s',s_2)-\gamma m_{\theta_t}(s_\rel,s_2)-m_{\theta_t}(s,s_2))
\right].
\end{multline}
The update for parametric backward TD becomes
\begin{multline}
\label{eq:parambackwardtd_rel}
\E_{ s\sim \rho, \,s'\sim P(s,\d s'), \,s_1\sim\rho ,\, s_\rel\sim \rhorel}
\left[
 \gamma\, \partial_{\theta} m_{\theta}(s,s')-\gamma \,\partial_{\theta}
 m_{\theta}(s,s_\rel)
\right.
\\+\left.
	m_{\theta}(s_1,s) \left(\gamma\,\partial_\theta m_\theta(s_1,s')
	-\gamma\,\partial_\theta m_\theta(s_1,s_\rel)
			- \partial_\theta m_{\theta}(s_1, s) \right)\right].
\end{multline}
The parametric update \eqref{eq:Vparam} of $V$ via $M$ becomes
\begin{multline}
\label{eq:Vparam_rel}
\E_{ s\sim \rho, \,s'\sim P(s,\d s'), \,s_1\sim\rho ,\, s_\rel\sim \rhorel}
\left[
\left(
r_s+\gamma V_{\phi_t}(s')-\gamma V_{\phi_t}(s_\rel)-V_{\phi_t}(s)
\right)
\right.\\\times \left.
\left(
\partial_\phi V_{\phi_t}(s)+
m_{\theta_t}(s_1,s) \,\partial_\phi
V_{\phi_t}(s_1)
\right)\right].
\end{multline}
Finally, the parametric Bellman--Newton update \eqref{eq:Mparam} for $M$ becomes
\begin{multline}
\label{eq:Mparam_rel}
\E_{ s\sim \rho, \,s'\sim P(s,\d s'),\,s_1\sim
\rho,\,s_2\sim\rho,\,s_\rel\sim\rhorel}
\left[
\gamma\,\partial_\theta m_{\theta_t}(s,s')
-\gamma\,\partial_\theta m_{\theta_t}(s,s_\rel)
\right.\\+\left.
\gamma \, m_{\theta_t}(s_1,s)\, \partial_\theta m_{\theta_t}(s_1,s')
-
\gamma \, m_{\theta_t}(s_1,s)\, \partial_\theta m_{\theta_t}(s_1,s_\rel)
\right.
\\+\left.
(\gamma m_{\theta_t}(s',s_2)-\gamma m_{\theta_t}(s_\rel,s_2)-m_{\theta_t}(s,s_2))
\left(\partial_\theta m_{\theta_t}(s,s_2)+
m_{\theta_t}(s_1,s)
\,\partial_\theta m_{\theta_t}(s_1,s_2)\right)
\right].
\end{multline}

\section{Proofs for Sections~\ref{sec:defM}, \ref{sec:td-alg},
\ref{sec:goals}, \ref{sec:BN}, \ref{sec:V}, and Appendix~\ref{sec:basictd}}
\label{sec:proofs}

In this text we consider two parametric models of $M$, 
\eqref{eq:Mmodeltilde} and
\eqref{eq:Mmodel},
given by $\tildem_\theta$ and $m_\theta$ 
respectively. In most proofs, we only cover the more complex model $m_\theta$;
the proofs with $\tildem_\theta$ are similar but simpler.

\subsection{Proofs for Sections~\ref{sec:defM} and \ref{sec:td-alg}: TD
for $M$}


\paragraph{Proof of Theorem~\ref{thm:Mdef}.} By the definition of $M$ in
\eqref{eq:M-def}, for any measurable set $A\subset \mathcal{S}$, for any
$s\in \mathcal{S}$, $M(s,A)$ is defined as
\begin{equation}
M(s,A)=\sum_{n\geq 0} \gamma^n P^n(s,A).
\end{equation}
Since each $P^n(s,\cdot)$ is a probability distribution, $P^n(s,A)\leq 1$ so that
this sum of non-negative terms is bounded by $\frac{1}{1-\gamma}$, and
therefore the sum converges. 
$M(s,\cdot)$ is a positive measure as a convergent sum of positive measures
($\sigma$-additivity for $M(s,\cdot)$ follows from the dominated
convergence theorem). Its
total mass is $M(s,\mathcal{S})=\sum_{n\geq 0}\gamma^n
P(s,\mathcal{S})=\sum_{n\geq 0}\gamma^n=\frac{1}{1-\gamma}$.

As a
positive measure with finite mass, 
$M(s,\cdot)$ acts on bounded measurable functions, just like $P$, via $(Mf)(s)=
\int f(s')M(s,\d s')$.
Since $M$ has mass $\frac{1}{1-\gamma}$ for any $s$, this integral is bounded by
$\frac{1}{1-\gamma} \sup f$, so that
$\sup Mf\leq \frac{1}{1-\gamma}\sup f$ for any function $f\in \BS$.
Thus, $M$ is well-defined as an operator from $\BS$ to $\BS$.

As an operator, one has $\gamma P M=\gamma P \sum_{n\geq 0} \gamma^n
P^n=\sum_{n\geq 1} \gamma^n P^n$. Therefore, $(\Id-\gamma P)M=M-\gamma
PM=\sum_{n\geq 0} \gamma^n
P^n-\sum_{n\geq 1} \gamma^n P^n= \gamma^0 P^0=\Id$ (the sums converge
absolutely by
the same boundedness argument as before, thus justifying the
infinite sum manipulations). This proves that $M$ is a right inverse of
$\Id-\gamma P$ as operators. The computation is identical for the left
inverse; therefore, $M$ and $\Id-\gamma P$ are inverses as operators on
$\BS$.

Finally, let $R$ be any (bounded, measurable) reward function. Since
$(\Id-\gamma P)M=\Id$, one has $(\Id-\gamma P)MR=R$ namely $MR=R+\gamma
PMR$. This proves that $V=MR$ satisfies the Bellman equation
$V=R+\gamma PV$, and so $MR$ is the value function of the Markov reward
process.

\paragraph{Proof of Theorems~\ref{thm:left-bellman}
and~\ref{thm:right-bellman}.} An operator $M'$ satisfies the left Bellman
equation $M'=\Id+\gamma PM'$ if and only if $M'-\gamma PM'=\Id$, or
$(\Id-\gamma P)M'=\Id$, namely, $M'$ is a right inverse of $\Id-\gamma
P$. By Theorem~\ref{thm:Mdef}, $\Id-\gamma P$ is invertible and its
inverse is $M$. Therefore, the only right inverse of $\Id-\gamma P$ is
$M$.

The proof is identical for the backward Bellman equation, with left inverses
instead of right inverses.

%
%
\paragraph{Proof of Propositions~\ref{prop:contract} and
\ref{prop:contract2}.}
By definition of the operator $P$, for any function $f$ we have
$\norm{Pf}_\infty=\sup_s \int f(s') P(s,\d s')\leq \sup_{s'}
f(s')=\norm{f}_\infty$, so that $P$ is $1$-contracting. Therefore, for any bounded
operator $M$ and function $f$, one has $\norm{PMf}_\infty\leq \norm{Mf}_\infty \leq
\norm{M}_{\mathrm{op}}\norm{f}_\infty$, so that
$\norm{PM}_\mathrm{op}\leq \norm{M}_\mathrm{op}$ for any $M$. Therefore, given two
operators $M$ and $M'$, one has $\norm{(\Id+\gamma PM)-(\Id+\gamma
PM')}_\mathrm{op}=\gamma \norm{P(M-M')}_\mathrm{op}\leq \gamma
\norm{M-M'}_\mathrm{op}$.

For the backward Bellman operator, $M\mapsto \Id+\gamma M P$, the proof
is similar, using that for any bounded
operator $M$ and function $f$, one has $\norm{MPf}_\infty\leq
\norm{M}_{\mathrm{op}}\norm{Pf}_\infty
\leq
\norm{M}_{\mathrm{op}}\norm{f}_\infty$, so that
$\norm{MP}_\mathrm{op}\leq \norm{M}_\mathrm{op}$ for any $M$.

\paragraph{Proof of Theorem~\ref{thm:paramtd}.}
%
In this proof, we freely go back and forth between $M$ or $M^\tar$ as measure-valued
functions, and $M$ or $M^\tar$ as operators on
bounded functions. Notably, the operator $\Id$ corresponds to the measure
$\delta_{s_1}(\d s_2)$.

We start with the statement for the first model, $M_{\theta_t}(s_1,\d
s_2)=\delta_{s_1}(\d s_2)+m_{\theta_t}(s_1,s_2)\rho(\d s_2)$.

By definition of $M^\tar=\Id+\gamma PM_{\theta_t}$, and by definition of the action of the operator $P$, we have 
\begin{align}
M^\tar(s,\d s_2)&=\delta_{s}(\d s_2)+\gamma \int_{s'}P(s,\d
s')M_{\theta_t}(s',\d s_2)
\\&=\delta_{s}(\d s_2)+\gamma \int_{s'}P(s,\d
s')\delta_{s'}(\d s_2)+\gamma \int_{s'}P(s,\d
s') m_{\theta_t}(s',s_2)\rho(\d s_2)
\\&=
\delta_{s}(\d s_2)+\gamma P(s,\d
s_2)+\gamma \,\E_{s'\sim P(s,\d
s')}[ m_{\theta_t}(s',s_2)\rho(\d s_2)]
\end{align}
by the definition of the Dirac measure $\delta_{s'}$. Therefore,
\begin{multline}
M^\tar(s,\d s_2)-M_\theta(s,\d s_2)=M^\tar(s,\d
s_2)-\delta_{s}(\d s_2)-m_\theta(s,s_2)\rho(\d s_2)
\\
\label{eq:Mgap}
=\gamma P(s,\d
s_2)+\gamma\, \E_{s'\sim P(s,\d
s')}[ m_{\theta_t}(s',s_2)\rho(\d s_2)]
-m_{\theta}(s,s_2)\rho(\d s_2)
\end{multline}

By definition of $J(\theta)$ and of the norm
$\norm{\cdot}_{\rho}$, we have
\begin{equation}
\label{eq:j}
J(\theta)=\frac12 \iint j_\theta(s,s_2)^2\,\rho(\d s)\rho(\d s_2)
\end{equation}
where $j_\theta(s,s_2)\deq (M^\tar(s,\d s_2)-M_\theta(s,\d
s_2))/\rho(\d s_2)$ (assuming this density exists). \footnote{This proof
involves $P(s,\d s_2)/\rho(\d s_2)$, but this quantity only appears as
$(P(s,\d s_2)/\rho(\d s_2))\rho(\d s_2)$ in the final result
\eqref{eq:dJ_final}. Therefore,
the argument extends by continuity to the case when $P(s,\cdot)$ is not
absolutely continuous with respect to $\rho$: in that case the norm
$J(\theta)$
is infinite but its gradient $\partial_\theta J(\theta)$ is still
well-defined by continuity.}
Consequently,
\begin{equation}
\label{eq:dj}
\partial_\theta J(\theta)=\iint j_\theta(s,s_2)\,\partial_\theta
j_\theta(s,s_2) \rho(\d s)\rho(\d s_2)
\end{equation}
assuming $j_\theta$ is smooth enough so that the derivative makes sense and
commutes with the integral. From the definition of $j_\theta$ and from
\eqref{eq:Mgap} we have
\begin{equation}
j_\theta(s,s_2)=\gamma \frac{P(s,\d s_2)}{\rho(\d s_2)}
+\gamma \,\E_{s'\sim P(s,\d
s')}[ m_{\theta_t}(s',s_2)]
-m_\theta(s,s_2)
\end{equation}
and
\begin{equation}
\partial_\theta j_\theta(s,s_2)=-\partial_\theta m_\theta(s,s_2)
\end{equation}
(and consequently, $j_\theta$ is smooth if $m_\theta$ is smooth).
Therefore,
\begin{equation}
\label{eq:dJ_final}
-\partial_\theta J(\theta)=\iint \partial_\theta
m_\theta(s,s_2)\left(
\gamma \frac{P(s,\d s_2)}{\rho(\d s_2)}
+\gamma \,\E_{s'\sim P(s,\d
s')}[ m_{\theta_t}(s',s_2)]
-m_\theta(s,s_2)
\right)\rho(\d s) \rho (\d s_2)
\end{equation}
The first term $\iint \partial_\theta
m_\theta(s,s_2)\gamma \frac{P(s,\d s_2)}{\rho(\d s_2)}\rho(\d s) \rho (\d
s_2)$ rewrites as $\gamma \iint \partial_\theta m_\theta(s,s_2) P(s,\d s_2)\rho(\d s)$
namely $\gamma \E_{s\sim \rho} \E_{s_2\sim P(s,\d s_2)} \partial_\theta
m_\theta(s,s_2)$.
Renaming $s_2$ to $s'$ in this term ends the proof.

Let us now turn to the model $M_{\theta_t}(s_1,\d s_2)=
\tildem_{\theta_t}(s_1,s_2)\rho(\d s_2)$.
Here, there is a hidden mathematical subtlety with continuous states.
Indeed, in that case, $M_{\theta_t}$ is absolutely continuous with
respect to $\rho$, while $M^\tar$ is not, due to the $\Id$ term, as
discussed in Section~\ref{sec:M-func-approx}. (With the other model, the
$\Id$ terms cancel between $M_{\theta_t}$ and $M^\tar$.) This makes the
norm $J(\theta)=\frac12\norm{M_{\theta}-M^\tar}^2_\rho$ infinite (see its
definition in \eqref{eq:norm}). However, the \emph{gradient} of this norm
is actually still well-defined. There are at least two ways to handle this
rigorously, which lead to the same result: either do the computation in the finite case and observe
that the resulting gradient still makes sense in the continuous case
(which can be obtained by a limiting argument), or
observe that the loss
$J(\theta)$ is equal to $\frac12\norm{M_{\theta}}^2_\rho
-\langle M_{\theta},M^\tar\rangle_\rho
+\frac12\norm{M^\tar}^2_\rho
$
and has the same minima and the same gradients as the loss
$J'(\theta)=\frac12\norm{M_{\theta}}^2_\rho-\langle
M_{\theta},M^\tar\rangle_\rho$ for a given $M^\tar$.
Namely, $J$ and $J'$ differ by a constant in the finite case, and by an
``infinite constant'' in the continuous case. We will 
work with the loss $J'$, which is finite even in the continuous case.

Here $\langle M_1,M_2\rangle_\rho=\int_{s,s_2}
\frac{M_1(s,\d s_2)}{\rho(\d s_2)} \frac{M_2(s,\d s_2)}{\rho(\d
s_2)}\,\rho(\d s)\rho(\d s_2)$ is the dot product associated with the norm
\eqref{eq:norm}. Since the integrand can be rewritten as $\frac{M_1(s,\d
s_2)}{\rho(\d s_2)} \rho(\d s) M_2(s,\d s_2)$, it is well-defined as soon as
at least one of $M_1$ or $M_2$ is absolutely continuous with respect to
$\rho$. Namely,
\begin{equation}
\label{eq:dotprod}
\langle M_1,M_2\rangle_\rho=\int_{s,s_2} \frac{M_1(s,\d
s_2)}{\rho(\d s_2)} \rho(\d s) \,M_2(s,\d s_2).
\end{equation}

Let us compute
$J'(\theta)=\frac12\norm{M_{\theta}}^2_\rho-\langle
M_{\theta},M^\tar\rangle_\rho$.
By definition of $M^\tar=\Id+\gamma PM_{\theta_t}$, and by definition of the action of the operator $P$, we have 
\begin{align}
M^\tar(s,\d s_2)&=\delta_{s}(\d s_2)+\gamma \int_{s'}P(s,\d
s')M_{\theta_t}(s',\d s_2)
\\&=
\delta_{s}(\d s_2)+
\gamma \,\E_{s'\sim P(s,\d
s')}[ \tildem_{\theta_t}(s',s_2)\rho(\d s_2)]
\label{eq:Mtartilde}
\end{align}
by definition of the model $M_{\theta_t}(s_1,\d s_2)=\tildem_{\theta_t}(s_1,s_2)\rho(\d s_2)$. Therefore, by~\eqref{eq:dotprod},
\begin{multline}
\langle M_\theta,M^\tar\rangle_\rho = 
\int_{s,s_2} \tildem_{\theta}(s,s_2)\, \rho(\d s)\, M^\tar(s,\d s_2)
\\=\int_s \tildem_{\theta}(s,s)\, \rho(\d s)+\gamma \int_{s,\,s',\,s_2} 
\tildem_{\theta}(s,s_2)\, \tildem_{\theta_t}(s',s_2) \,\rho(\d s)\,P(s,\d
s')\,\rho(\d s_2)
\end{multline}
thanks to \eqref{eq:Mtartilde}. Next, since $M_\theta(s,\d s_2)=\tildem_\theta(s,s_2)\rho(\d s_2)$, the definition of the norm \eqref{eq:norm}
yields
\begin{equation}
\frac12 \norm{M_{\theta}}^2_\rho=\frac12 \int_{s,s_2} \tildem_\theta(s,s_2)^2\,\rho(\d s)\,\rho(\d s_2).
\end{equation}
Collecting, and rewriting the integrals as expectations, we find
\begin{multline}
J'(\theta)=\E_{s\sim \rho,\,s_2\sim \rho} 
\left[\frac12 \tildem_\theta(s,s_2)^2-\tildem_\theta(s,s)\right]
\\
-\,\gamma \E_{s\sim
\rho,\,s'\sim P(s,\d s'),\,s_2\sim \rho} [\tildem_\theta(s,s_2)\,\tildem_{\theta_t}(s',s_2)]
\end{multline}
hence
\begin{multline}
\partial_\theta J'(\theta)=\E_{s\sim \rho,\,s_2\sim \rho} 
\left[\partial \tildem_\theta(s,s_2) \,\tildem_\theta(s,s_2)-\partial \tildem_\theta(s,s)\right]
\\
-\,\gamma \E_{s\sim
\rho,\,s'\sim P(s,\d s'),\,s_2\sim \rho} [\partial \tildem_\theta(s,s_2)\,\tildem_{\theta_t}(s',s_2)]
\end{multline}
which is the expression given in Theorem~\ref{thm:paramtd} for
$\theta=\theta_t$. This ends the proof.


\subsection{Proofs for Appendix~\ref{sec:basictd}: Further properties of
TD for $M$}

\paragraph{Proof of Theorem~\ref{thm:paramtd_target}.} The proof is
identical to that of Theorem~\ref{thm:paramtd}, but with $\theta^\tar$
instead of $\theta^t$ and no substitution $\theta=\theta_t$ in the last
step.

\paragraph{Proof of Theorem~\ref{thm:paramtd_multistep}.}
Exactly as in Theorem~\ref{thm:paramtd}, setting
$j_\theta(s,s')\deq (M^\tar(s,\d s')-M_\theta(s,\d
s'))/\rho(\d s')$, we have
\begin{align}
\partial_\theta J(\theta)&=\iint j_\theta(s,s')\,\partial_\theta
j_\theta(s,s') \rho(\d s)\rho(\d s')
\\&=\iint \partial_\theta
j_\theta(s,s') \rho(\d s) (j_\theta(s,s')\rho(\d s'))
\\&=\iint \partial_\theta
j_\theta(s,s') \rho(\d s) (M^\tar(s,\d s')-M_\theta(s,\d
s'))
\label{eq:dJ_multistep}
\end{align}
and since $M^\tar$ depends on ${\theta_t}$ but not on $\theta$,
\begin{equation}
\partial_\theta j_\theta(s,s')=-\partial_\theta
\left(\frac{M_{\theta}(s,\d s')}{\rho(\d s')}\right)=-\partial_\theta
m_\theta(s,s').
\label{eq:dj_multistep}
\end{equation}

From the definition of $M^\tar$ we have
\begin{align}
M^\tar(s,\d s')&=\delta_{s}(\d s')+\sum_{i=1}^{h-1} \gamma^i P^i(s,\d s')+
\gamma^h (P^h M_{\theta_t})(s,\d s')
\end{align}
and since $M_{\theta_t}(s,\d s')=\delta_{s}(\d
s')+m_{\theta_t}(s,s')\rho(\d s')$ 
we have $(P^hM_{\theta_t})(s,\d s')=P^h(s,\d s')+\int P^h(s,\d
s'')m_{\theta_t}(s'',s')\rho(\d s')$ so 
the above rewrites as
\begin{equation}
M^\tar(s,\d s')=\delta_{s}(\d s')+\sum_{i=1}^{h-1} \gamma^i P^i(s,\d
s')+
\gamma^h P^h(s,\d s') +\gamma^h \int
P^h(s,\d s'')m_{\theta_t}(s'',s')\rho(\d s')
\end{equation}
and so
\begin{multline}
M^\tar(s,\d s')-M_{\theta_t}(s,\d s')=
\\
-m_{\theta_t}(s,s')\rho(\d s')+\sum_{i=1}^{h} \gamma^i P^i(s,\d
s')+\gamma^h \int
P^h(s,\d s'')m_{\theta_t}(s'',s')\rho(\d s').
\end{multline}

Let us plug this into \eqref{eq:dJ_multistep} for $\theta=\theta_t$, and study each contribution
in turn. The term $-m_{\theta_t}(s,s')\rho(\d s')$ produces a
contribution
\begin{equation}
-\iint\partial_\theta j_{\theta_t}(s,s')\rho(\d s)
m_{\theta_t}(s,s')\rho(\d s')
=\E_{s\sim \rho,\,s'\sim \rho}\,m_{\theta_t}(s,s')\,\partial_\theta
m_{\theta_t}(s,s')
\end{equation}
by \eqref{eq:dj_multistep}. Each term $\gamma^i P^i$ produces a
contribution
\begin{equation}
\gamma^i \iint \partial_\theta j_\theta(s,s')\rho(\d s) P^i(s,\d s')
\end{equation}
which by definition of $P^i$, can be rewritten as
\begin{equation}
\gamma^i\, \E_{s_0\sim \rho,\,s_1\sim P(s_0,\d s_1),\,\ldots,\, s_i\sim
P(s_{i-1},\d s_i)} 
\,\partial_\theta j_\theta(s_0,s_i).
\end{equation}
For the same reason, the term $\gamma^h P^h m_{\theta_t}$ produces a
contribution
\begin{equation}
\gamma^h\, \E_{s_0\sim \rho,\,s_1\sim P(s_0,\d s_1),\, \ldots,\, s_h\sim
P(s_{i-h},\d s_h),\, s'\sim \rho} [
m_{\theta_t}(s_h,s') \,\partial_\theta j_\theta(s_0,s')].
\end{equation}
Collecting all terms and using \eqref{eq:dj_multistep} to replace
$\partial_\theta j$ with $-\partial_\theta m$ leads to the expression in
the theorem.

For the case of the model \eqref{eq:Mmodeltilde} using $\tildem_\theta$,
proceed as for Theorem~\ref{thm:paramtd} and use the loss
$J'(\theta)=\frac12\norm{M_{\theta}}^2_\rho-\langle
M_{\theta},M^\tar\rangle_\rho$, which has the same minima as the loss $J$
but makes sense in a more general setting. In this case we have
\begin{equation}
M^\tar(s,\d s')=\delta_{s}(\d s')+\sum_{i=1}^{h-1} \gamma^i P^i(s,\d
s')
+\gamma^h \int
P^h(s,\d s'')\tildem_{\theta_t}(s'',s')\rho(\d s')
\end{equation}
The dot product $\langle
M_{\theta},M^\tar\rangle_\rho$ is given by
\eqref{eq:dotprod}. Expand the value of $M^\tar$ into \eqref{eq:dotprod},
and proceed as above.

\paragraph{Proof of Theorem~\ref{thm:paramtd_right}.}
As in the proof of Theorems~\ref{thm:paramtd}
and~\ref{thm:paramtd_multistep}, set
$j_\theta(s,s')\deq (M^\tar(s,\d s')-M_\theta(s,\d
s'))/\rho(\d s')$. Then
\begin{align}
\partial_\theta J(\theta)&=\iint j_\theta(s,s')\,\partial_\theta
j_\theta(s,s') \rho(\d s)\rho(\d s')
\\&=\iint \partial_\theta
j_\theta(s,s') \rho(\d s) (j_\theta(s,s')\rho(\d s'))
\\&=\iint \partial_\theta
j_\theta(s,s') \rho(\d s) (M^\tar(s,\d s')-M_\theta(s,\d
s'))
\label{eq:dJ_right}
\end{align}
and since $M^\tar$ depends on ${\theta_t}$ but not on $\theta$,
\begin{equation}
\partial_\theta j_\theta(s,s')=-\partial_\theta
\left(\frac{M_{\theta}(s,\d s')}{\rho(\d s')}\right)=-\partial_\theta
m_\theta(s,s').
\label{eq:dj_right}
\end{equation}

From the definition of $M^\tar$ and the composition of operators, we have
\begin{align}
M^\tar(s,\d s')&=\delta_{s}(\d s')
+ \gamma \int M_{\theta_t}(s,\d s'')P(s'',\d s')
\\&=\delta_s(\d s')+\gamma P(s,\d s')+\gamma \int
m_{\theta_t}(s,s'')\rho(\d s'') P(s'',\d s')
\end{align}
thanks to the parameterization $M_{\theta_t}(s,\d s'')=\delta_s(\d
s'')+m_{\theta_t}(s,s'')\rho(\d s'')$. Thus
\begin{equation}
M^\tar(s,\d s')-M_{\theta_t}(s,\d s')=\gamma P(s,\d s')+\gamma
\E_{s''\sim \rho} [m_{\theta_t} (s,s'') P(s'',\d
s')]-m_{\theta_t}(s,s')\rho(\d s').
\end{equation}
and plugging this into \eqref{eq:dJ_right} at $\theta=\theta_t$,
substituting $-\partial_\theta m_\theta$ for $\partial_\theta j$ as per
\eqref{eq:dj_right}, and rewriting the integrals as expectations under
$\rho$ and $P$, we find
\begin{multline}
-\partial_\theta J(\theta)_{|\theta=\theta_t}=
\gamma \E_{s\sim \rho,\,s'\sim P(s,\d s')}\, \partial_\theta
m_{\theta_t}(s,s')
\\\mathbin{+}
\gamma \E_{s\sim \rho,\,s''\sim \rho, \,s'\sim P(s'',\d s')}
[m_{\theta_t}(s,s'')\,\partial_\theta m_{\theta_t}(s,s')
]
-\E_{s\sim \rho,\,s'\sim \rho} [m_{\theta_t}(s,s')\,\partial_\theta
m_{\theta_t}(s,s')]
\end{multline}
which yields the expression in the theorem after renaming variables. The
proof for $\tildem$ is similar, using the loss $J'$ instead of $J$ as in
the proof of Theorem~\ref{thm:paramtd}.

\paragraph{Proof of Propositions~\ref{prop:targetrewards} and
\ref{prop:targetbellman}.} The pushforward by $\phi$ of a
measure $\mu$ is the unique measure $\mu^\phi$ such that, for any
function $f$, one has $\int f(g) \mu^\phi(\d g)=\int f(\phi(s))\mu(\d
s)$.

For Proposition~\ref{prop:targetrewards}, assume that the reward function
at a state $s$ is equal to $R(\phi(s))$. By definition of the successor
state operator $M$, the corresponding value function satisfies
$V(s)=\int_{s'} R(\phi(s'))M(s,\d s')$. By definition of the pushforward
measure, the latter is equal to $\int_g R(g)M^\phi(s,\d g)$. If
$M^\phi(s,\d g)$ is equal to $m^\phi(s,g)\tau(\d g)$ for some probability
distribution $\tau$, this rewrites as $\E_{g\sim \tau} m^\phi(s,g)R(g)$.
This proves Proposition~\ref{prop:targetrewards}.

For Proposition~\ref{prop:targetbellman}, just start with the Bellman
equation for $M$: $M(s,\d s_2)=\delta_s(\d s_2)+\gamma \E_{s'\sim P(\d
s'|s)}M(s',\d s_2)$. Then take the pushforward by $\phi$ on both sides,
using that the pushforward of measures is linear. Finally, use that the
pushforward of the Dirac mass at $s$ is the Dirac mass at $\phi(s)$. This
provides the Bellman equation for $M^\phi$.

\paragraph{Proof of Theorem~\ref{thm:targetfeatures}.} The proof is
entirely analogous to the proof of Theorem~\ref{thm:paramtd} for the
model $\tildem$.

\subsection{Proofs for Section~\ref{sec:goals}: Goal-Dependent Methods}

\paragraph{Proof of Theorem~\ref{thm:paramq}.} The proof is very similar
to that of Theorem~\ref{thm:paramtd} and is omitted.
Theorem~\ref{thm:paramq} can also be
obtained as a particular case of Theorem~\ref{thm:goalTD} applied to the
state-action process.

\paragraph{Proof of Theorem~\ref{thm:goalTD}.} 
Proceed similarly to Theorem~\ref{thm:paramtd}.
Define a norm on $V(s,\d
g)$
similarly to
\eqref{eq:norm}, 
as the $L^2$ norm of its density with respect to $\rho_G$:
\begin{equation}
\norm{V(s,\d g)}^2_{\rho_{SG},\rho}\deq \E_{(s,g)\sim \rho_{SG}}
\left [ \frac{V(s,\d g)}{\rho_G(\d g)}^2 \right].
\end{equation}

Let $v_\theta(s,g)$ be any smooth parametric model,
and set $V_\theta(s,\d g)\deq v_\theta(s,g)\rho_G(\d g)$.

Let $\theta_0$ be some
value of the parameter $\theta$, and define a target update $V^\tar$ via
the Bellman equation \eqref{eq:goaldependentV}:
\begin{equation}
V^{\tar}(s,\d g)\deq \alpha(s,g)\,\delta_{\phi(s)}(\d g)+\gamma \E_{s'\sim P(\d
s'|s,g)} V_{\theta_0}(s',\d g).
\end{equation}
For any parameter $\theta$, define the loss
\begin{equation}
J(\theta)\deq \frac12 \norm{V_\theta-V^{\tar}}^2_{\rho_{SG},\rho}.
\end{equation}

Then, as in Theorem~\ref{thm:paramtd} one finds
\begin{multline}
-\partial_\theta J(\theta)
=\iint_{s,g} \rho_{SG}(\d s,\d g) \,
\frac{\partial_\theta V_\theta(s,\d g)}{\rho_G(\d g)}\,\frac{V^\tar(s,\d
g)-V_\theta(s,\d g)}{\rho_G(\d g)}
\\
=\iint_{s,g} \rho_{SG}(\d s,\d g)\,\partial_\theta v_\theta(s,g)
\left(
\frac{\alpha(s,g)\,\delta_{\phi(s)}(\d g)}{\rho_G(\d g)} \right.
\\+ \left. \gamma \E_{s'\sim P(\d s'|s,g)} v_{\theta_0}(s',\d g)-v_\theta(s,\d g)
\right)
\end{multline}

The second part of this equation matches the Bellman gap part of the TD
update in the statement of the theorem, with $\theta_0=\theta$.  (This
also provides the TD update with an arbitrary target network defined by
$\theta_0$.)

For the first part  with the Dirac
term, remember that $\alpha(s,g)=\rho_S(\d g)\rho_G(\d g)/\rho_{SG}(\d
s,\d g)$. Thus,
\begin{multline}
\iint_{s,g} \rho_{SG}(\d s,\d g) \,\partial_\theta
v_\theta(s,g)\,\frac{\alpha(s,g)\,\delta_{\phi(s)}(\d g)}{\rho_G(\d g)}
\\=\int_s \rho_S(\d s) \int_g \partial_\theta
v_\theta(s,g)\,\delta_{\phi(s)}(\d g)
=\int_s \rho_S(\d s) \, \partial_\theta v_\theta(s,\phi(s))
= \E_{s\sim \rho_S}\,\partial_\theta v_\theta(s,\phi(s))
\end{multline}
as needed. This proves that the TD update is as announced in the
statement.

Obviously, $\alpha=1$ when $s$ and $g$ are independent.

For the statement about $\phi=\Id$, note that the Bellman equation only
depends on the value of $\alpha$ on pairs $(s,g)$ such that $\phi(s)=g$.
Therefore, if the statement holds for some function $\alpha$, then it
also holds for any other function $\alpha'$ such that
$\alpha'(s,g)=\alpha(s,g)$ when $\phi(s)=g$, because this will define the
same $V^\tar$. With $\phi=\Id$, this means
that the statement holds for any other function $\alpha'$ with
$\alpha'(g,g)=\alpha(g,g)$. Define $\alpha'(s,g)\deq
\alpha(g,g)$.
Then $\alpha'(g,g)=\alpha(g,g)$, and $\alpha'$ only depends on $g$.
This completes the proof.

\newcommand{\Qspace}{\mathcal{Q}}

\paragraph{Proof of Theorem~\ref{thm:optQ}.} Assume the action space $A$
is countable. Let $\Qspace$ be the set of
measurable functions from $S\times A$ to the set of measures on $S$. 

For $Q_1$ and $Q_2$ in $\Qspace$, we write $Q_1\leq Q_2$ if
$Q_1(s,a,X)\leq Q_2(s,a,X)$ for any state-action $(s,a)$ and measurable
set $X\subset S$.
The
Bellman operator of Definition~\ref{def:optQ} acts on $\Qspace$ and is
obviously monotonous: if $Q_1\leq Q_2$ then $TQ_1\leq TQ_2$. 

Since the zero measure $\zerom\in \Qspace$ is the smallest measure, we have
$T\zerom\geq \zerom$. Since $T$ is monotonous, by induction we have
$T^{t+1}\zerom\geq T^t\zerom$ for any $t\geq 0$. Thus, the
$(T^t\zerom)_{t\geq 0}$ form an increasing sequence of measures.
Therefore, for every state-action $(s,a)$ and measurable set $X$, the
sequence $(T^t\zerom)(s,a,X)$ is increasing, and thus converges to a
limit. We denote this limit by $Q^\ast(s,a,X)$. We have to prove that
$Q^\ast\in \mathcal{Q}$, namely, that for each $(s,a)$, $Q^\ast(s,a,\cdot)$ is
a measure. The only non-trivial point is $\sigma$-additivity.

Denote
$Q_t\deq T^t\zerom$.
If $(X_i)$
is a countable collection of disjoint measurable sets, we have
\begin{multline}
Q^\ast(s,a,\cup_i X_i)=\lim_{t\to\infty} Q_t(s,a,\cup_i X_i)=\lim_{t\to\infty} \sum_i
Q_t(s,a,X_i)\\=\sum_i \lim_{t\to\infty} Q_t(s,a,X_i)=\sum_i Q^\ast(s,a,X_i) 
\end{multline}
where the limit commutes with the sum thanks to the monotone convergence
theorem, using that $Q_t$ is non-decreasing. Therefore, $Q^\ast$ is a
measure.

Let us prove that $TQ^\ast=Q^\ast$. We have
\begin{equation}
TQ^\ast(s,a,\cdot)=\delta_s+\gamma \E_{s'\sim P(s'|s,a)}\sup_{a'}
Q^\ast(s',a',\cdot)
\end{equation}
by definition. For any $s'$, denote $\tilde
Q_t(s',\cdot)\deq\sup_{a'} Q_t(s',a',\cdot)$ where the supremum is as
measures over $S$. Since $Q_t$ is non-decreasing, so is $\tilde Q_t$.

For any state $s'$, we have
\begin{equation}
\sup_{a'}
Q^\ast(s',a',\cdot)=\sup_{a'} \sup_t Q_t(s',a',\cdot)=
\sup_t \sup_{a'} Q_t(s',a',\cdot)=\sup_t \tilde Q_t(s',\cdot)
\end{equation}
since supremums commute. Now, since $\tilde Q_t$ is non-decreasing,
thanks to the monotone convergence theorem, the supremum commutes with
integration over $s'\sim P(s'|s,a)$ (which does not depend on $t$), namely,
\begin{multline}
\E_{s'\sim P(s'|s,a)}\sup_{a'}
Q^\ast(s',a',\cdot)=
\E_{s'\sim P(s'|s,a)} \sup_t \tilde Q_t(s',\cdot)
\\=
\sup_t \E_{s'\sim P(s'|s,a)}\tilde Q_t(s',\cdot)
=
\sup_t \E_{s'\sim P(s'|s,a)} \sup_{a'} Q_t(s',a',\cdot)
\end{multline}
and so $TQ^\ast=\sup_t TQ_t$. Now, since $Q^t=T^t\zerom$, we have
$TQ^t=T^{t+1}\zerom$, so that $\sup_{t\geq 0} TQ^t=\sup_{t\geq
1}T^t\zerom=Q^\ast$. So $Q^\ast$ is a fixed point of $T$.

Let us prove that $Q^\ast$ is the smallest such fixed point. Let $Q'$
such that $TQ'=Q'$. Since $\zerom\leq Q'$ and $T$ is monotonous, we have
$T\zerom \leq TQ'=Q'$. By induction, $T^t\zerom\leq Q'$ for any $t\geq
0$. Therefore, $\sup_t T^t\zerom \leq Q'$, i.e., $Q^\ast\leq Q'$.

The statement for finite state spaces reduces to the 
classical uniqueness property of
the usual $Q$ function, separately for each goal state.

\subsection{Examples of MDPs with Infinite Mass for $Q^\ast$}
\label{sec:exqopt}

Here are two simple examples of MDPs with finite action space, for which
the mass of the goal-dependent $Q$-function $Q^\ast(s,a,s_2)$ is infinite.
The first has discrete states, the second, continuous ones.

Take for $S$ an infinite rooted dyadic tree, namely, $S=\{\emptyset,
0,1,00,01,\ldots\}$ the set of binary strings of finite length $k\geq 0$.
Consider the two actions ``add a $0$ at the end'' and ``add a $1$ at the
end''. Then, for every state $s$, $Q^\ast(s,a,\cdot)$ is a measure that
gives mass $\gamma^k$ to all states $s_2$ that are extensions of $s$ by a
length-$k$ string that starts with $a$. Thus, its mass is $1+\sum_{k\geq
1} \gamma^k 2^{k-1}$. This is infinite as soon as $\gamma\geq 1/2$. This
extends to any number of actions by considering higher-degree trees.

A similar example with continuous states is obtained as follows. Let
$S=[0;1)\times [0;1)$.
Let
$C=\{\emptyset,
0,1,00,01,\ldots\}$ the dyadic tree above. For each string $w\in X$,
consider the set $B_w\subset S$ defined as follows: $B_w$ is made of
those points $(x,y)\in S$ such that the binary expansion of $x$ starts
with $w$, and $y\in [1-1/2^k;1-1/2^{k+1})$ where $k$ is the length of
$w$. Graphically, this creates a tree-like partition of the square $S$,
where the empty string corresponds to the bottom half, the strings $w=0$
and $w=1$ correspond to two sets on the left and right above the bottom
hald, etc. Define the following MDP with two actions $0$ and $1$: with
action $0$, every
state $s\in B_w$ goes to a uniform random state in $B_{w0}$, and with
action $1$, every state $s\in B_w$ goes to a uniform random state in
$B_{w1}$. The goal-dependent $Q$-function $Q^\ast$ is similar to the
dyadic tree above, but is continuous. Its mass is infinite for the same
reasons.

\subsection{Proofs for Sections~\ref{sec:BN} and~\ref{sec:V}:
Second-Order Methods}

\paragraph{Proof of Theorem~\ref{thm:onlineM}.} 
Define $\hat M\deq (\Id-\gamma \hat P)^{-1}$ where $\hat P$ is updated by
\eqref{eq:Pupdate}.
The update
\eqref{eq:Pupdate} can be rewritten as $\hat P \gets \hat
P+(1/n_s) \1_s (\transp{\1_{s'}}-\transp{\1_s}\hat P)$. This is a
rank-one update of $\hat P$. The update of $\Id-\gamma \hat P$ is $-\gamma$
times the update of $\hat P$, and is still rank-one: it is equal to
$u\transp{v}$ with $u\deq -(\gamma/n_s)\1_s$ and $\transp{v}\deq
(\transp{\1_{s'}}-\transp{\1_s}\hat P)$. The
Sherman--Morrison formula gives the update
of the inverse of a matrix after a rank-one update. By this formula, the update of
$\hat M=(\Id-\gamma \hat P)^{-1}$ is
\begin{align}
\hat M\gets & \hat M-\frac{\hat Mu\transp{v}\hat M}{1+\transp{v}\hat Mu}
=
\hat M +\frac{1}{n_s} \frac{\hat M\1_s
(\gamma \transp{\1_{s'}}-\gamma \transp{\1_s}\hat P)\hat M}
{1-\frac{1}{n_s}(\gamma \transp{\1_{s'}}-\gamma \transp{\1_s}\hat P)\hat M \1_s
}
\end{align}
Now, since $\hat M=(\Id-\gamma \hat P)^{-1}$, we have $\gamma \hat P\hat
M=\hat M-\Id$. Therefore, the terms $(\gamma \transp{\1_{s'}}-\gamma
\transp{\1_s}\hat P)\hat M$ are equal to $\gamma \transp{\1_{s'}}\,\hat
M-\transp{\1_s}\hat M+\transp{\1_s}$, and the update is
\begin{align}
\hat M\gets & 
\hat M +\frac{1}{n_s} \frac{\hat M\1_s(
\gamma \transp{\1_{s'}}\,\hat M-\transp{\1_s}\hat
M+\transp{\1_s})
}
{1-\frac{1}{n_s}(
\gamma \transp{\1_{s'}}\,\hat
M-\transp{\1_s}\hat M+\transp{\1_s}
)\1_s
}
\\&=
\label{eq:trueonlineM}
\hat M +\frac{1}{n_s} \frac{\hat M\1_s(
\gamma \transp{\1_{s'}}\,\hat M-\transp{\1_s}\hat
M+\transp{\1_s})
}
{1-\frac{1}{n_s}(
\gamma \hat M_{s' s}-\hat M_{ss}+1)
}
\end{align}
which is the exact update of $\hat M$.
This provides the update \eqref{eq:onlineM}.

The value function $\hat V$ of the estimated process is $(Id-\gamma \hat
P)^{-1}\hat R=\hat M\hat R$.
When $\hat M\gets \hat M+\del M$ and $\hat R\gets \hat R+\del R$ one
has $\hat V\gets \hat V+\hat M \del R +\del M \hat R+ \del M \del R$.
From \eqref{eq:Pupdate} we have
$\del R=\frac{1}{n_s}(r_s-\hat R_s)\1_s=\frac{1}{n_s}(r_s\1_s-\1_s
\transp{\1_s}\hat R)$.
Plugging in the value of $\del M$ from \eqref{eq:trueonlineM}, keeping
only first-order terms in $1/n_s$,
and using
$\transp{\1_s}\hat M\hat R=\transp{\1_s}\,\hat V=\hat V_s$, provides the update of $\hat V$ in
\eqref{eq:onlineV}.

\paragraph{Proof of Theorem~\ref{thm:onlineM_exp}.} First, note that the
expectation in the statement is averaged over the next step, but
conditional to all quantities $\hat M$, $\hat V$, etc., computed in the previous
steps. In this proof, we will just write $\E$ for short.

Since the
the denominator in \eqref{eq:trueonlineM} is $1+o(1/n_s)$, the update
\eqref{eq:trueonlineM} of $\hat M$ is
$\hat M\gets \hat M+\del M$ with
\begin{equation}
\del M=\frac{1}{n_s} \hat M\1_s(
\gamma \transp{\1_{s'}}\,\hat M-\transp{\1_s}\hat
M+\transp{\1_s})
+o(1/n_s).
\end{equation}
We want to compute the expectation of this update when $s$ is sampled
from $\rho$ and $s'$ from $P_{ss'}$. 
This yields
\begin{align}
\E [\del M]&=\sum_{s,s'} \rho_s P_{ss'} \frac{1}{n_s}\hat M\1_s(
\gamma \transp{\1_{s'}}\,\hat M-\transp{\1_s}\hat
M+\transp{\1_s})
+o(1/n_s)
\\&=\frac1t \sum_{s,s'} P_{ss'} \hat M\1_s(
\gamma \transp{\1_{s'}}\,\hat M-\transp{\1_s}\hat
M+\transp{\1_s})
+o(1/t)
\end{align}
where the last equality holds because $n_s=t \rho_s +o(t)$ by the law of
large numbers (since $s$ is sampled from $\rho$). Now, we have
$
\sum_{s,s'} P_{ss'}\1_s\transp{\1_{s'}}=P$ and
$\sum_{s,s'} P_{ss'} \1_s\transp{\1_{s}}=\sum_s \1_s\transp{\1_{s}}=\Id$.
Thus,
\begin{align}
\E [\del M]&=
\frac1t \hat M(
\gamma P\hat M-\hat
M+\Id)
+o(1/t)
\end{align}
as needed.

To compute the update of $\hat V=\hat M \hat R$, let us first compute the
update of $\hat R$. By \eqref{eq:Pupdate}, the latter is $\hat R\gets
\hat R+\del R$ with
\begin{equation}
\del R=\frac1{n_s}(r_s-\hat R_s)\1_s=\frac{1}{t\rho_s} (r_s-\hat
R_s)\1_s+o(1/t).
\end{equation}
Now, the update of $\hat V=\hat M \hat R$ is $\del V=\del M \hat R+ \hat
M \del R +\del M \del R$. The last term $\del M \del R$ is $O(1/t^2)$, so
we can drop it. We find
\begin{align}
\E[\del V]&=\E[\del M \hat R]+ \E[\hat
M \del R]+o(1/t)
\\&=\E[\del M]\hat R+\hat M \E[\del R]+o(1/t)
\end{align}
since the expectations are averaged over the next step but conditional on the previous steps, which
comprises the previous values of $\hat R$ and $\hat M$. Next,
\begin{align}
\E[\del M]\hat R
&=
\frac1t \hat M(
\gamma P\hat M-\hat
M+\Id)\hat R
+o(1/t)
\\&=\frac1t \hat M(\gamma P \hat V-\hat V+\hat R)+o(1/t)
\end{align}
since $\hat V=\hat M \hat R$. Next,
\begin{align}
\hat M \E[\del R]
&=\hat M \sum_s \rho_s \frac{1}{t\rho_s} (\E[r_s]-\hat R_s)\1_s+o(1/t)
\\&=\frac1t \hat M (R-\hat R)+o(1/t)
\end{align}
since $\sum_s \E[r_s]\1_s=R$ and $\sum_s \hat R_s \1_s=\hat R$. Summing,
we find $\E[\del V]=\frac1t \hat M(\gamma P\hat V-\hat V+\hat R+R-\hat R)+o(1/t)$ as
needed.

\paragraph{Proof of Theorem~\ref{thm:paramBN}.} First, note that we
expressed this theorem for a single transition $s\to s'$, while we
expressed the similar theorem for TD using the Bellman operator
$M^\tar=\Id+\gamma PM$, which is
the sum of the single-transition update for all values of $s$.

This is because the single-transition update is more informative in this
case, especially given the exact update of $M$ in
Theorem~\ref{thm:onlineM}. (At first, we worked at the operator level,
and found a parametric expression which was the same in expectation
over transitions $s\to s'$, but did not correspond to a single-transition
update, had a larger variance, and performed much worse in practice.)

However, the single-transition updates \eqref{eq:Pupdate} and \eqref{eq:onlineM} only make sense
in a discrete-state setting. Thus, to best preserve the information from
observing a single transition, 
we state and derive
Theorem~\ref{thm:paramBN} in a discrete-state setting. The
resulting parametric update makes sense for continuous
states.

(In Appendix~\ref{sec:formalsmproof} we rigorously derive this same update for
continuous states, in expectation over $s \to s'$, as we did for TD. The
analogue of the Bellman operator $\del M=\Id +\gamma PM-M$ for implicit process
updating is the Newton--Bellman operator $\del
M=M-M(\Id-\gamma P)M$
of Definition~\ref{def:BN}.)

Thus, let us work in a discrete setting, using matrix notation.
Let us consider
 $\del M$ and $\del V$ given by \eqref{eq:onlineM}--\eqref{eq:onlineV}.
For simplicity we omit the $o(1/n_s)$ terms in
\eqref{eq:onlineM}--\eqref{eq:onlineV}: they
are absorbed in the $o(1/t)$ of the final statement of the theorem, because
$n_s\sim \rho(s) t$ by the law of large numbers. (Indeed, $\rho(s)$ is
\emph{defined} as
the probability to sample a transition from $s$ in our data model.)

With $M^\tar\deq M_{\theta_t} +\del M$ and from the
definition \eqref{eq:norm} of $\norm{\cdot}_\rho$, we obtain
\begin{align}
\norm{M_\theta-M^\tar}_\rho^2&=
\E_{s_1\sim \rho,\,s_2\sim\rho}\,
\frac{(M_\theta-M^\tar)_{s_1s_2}^2}{\rho(s_2)^2}
\\&=\sum_{s_1,s_2} \frac{\rho(s_1)}{\rho(s_2)}
(M_\theta-M^\tar)_{s_1s_2}^2
\end{align}
so that the gradient step on the loss is
\begin{align}
-\partial_\theta J(\theta)&=
-\partial_\theta \left({\textstyle\frac12}
\norm{M_\theta-M^\tar}_\rho^2\right)
\\&=
\sum_{s_1,s_2} \frac{\rho(s_1)}{\rho(s_2)}
\,(M^\tar-M_\theta)_{s_1s_2}
\,\partial_\theta (M_{\theta})_{s_1s_2}
\end{align}
and we compute the gradient step at $\theta=\theta_t$; since
$M^\tar-M_{\theta_t}=\del M$, we get
\begin{equation}
-\partial_\theta J(\theta)_{\theta=\theta_t}=
\sum_{s_1,s_2} \frac{\rho(s_1)}{\rho(s_2)} \,(\del M)_{s_1s_2}\,
\partial_\theta (M_{\theta})_{s_1s_2}.
\end{equation}
Now, remember that the parameterization is
$(M_\theta)_{s_1s_2}=\1_{s_1=s_2}+m_\theta(s_1,s_2)\rho(s_2)$.
We obtain
\begin{equation}
\partial_\theta (M_{\theta})_{s_1s_2}=\partial_\theta
m_\theta(s_1,s_2)\rho(s_2)
\end{equation}
and from the expression \eqref{eq:onlineM} for $\del M$, up to
$O(1/n_s^2)$ terms,
we have
\begin{equation}
-\partial_\theta J(\theta)_{\theta=\theta_t}=
\sum_{s_1,s_2}\frac{\rho(s_1)}{\rho(s_2)}
\frac{1}{n_s}(M_{\theta_t})_{s_1s}\left(\1_{s_2=s}+\gamma
(M_{\theta_t})_{s's_2}-(M_{\theta_t})_{ss_2}\right)\,\partial_\theta
m_{\theta_t}(s_1,s_2)\rho(s_2)
\end{equation}
so that $\rho(s_2)$ cancels out. Now
let us expand
$(M_{\theta_t})_{s_1s_2}=\1_{s_1=s_2}+m_{\theta_t}(s_1,s_2)\rho(s_2)$
into this
expression. We have
\begin{equation}
\1_{s_2=s}+\gamma
(M_{\theta_t})_{s's_2}-(M_{\theta_t})_{ss_2}=\1_{s'=s_2}+\gamma
m_{\theta_t}(s',s_2)\rho(s_2)-m_{\theta_t}(s,s_2)\rho(s_2)
\end{equation}
and after tediously collecting all terms, we arrive at
\begin{multline}
\label{eq:tedious}
-\partial_\theta J(\theta)_{\theta=\theta_t}=
\sum_{s_1,s_2} \frac{\rho(s_1)}{n_s} \1_{s_1=s}\,\1_{s'=s_2}
\, \partial_\theta m_{\theta_t}(s_1,s_2)
\\\mathbin{+}
\sum_{s_1,s_2} \frac{\rho(s_1)}{n_s} m_{\theta_t}(s_1,s)
\rho(s)\,\1_{s'=s_2}
\, \partial_\theta m_{\theta_t}(s_1,s_2)
\\\mathbin{+}
\sum_{s_1,s_2} \frac{\rho(s_1)}{n_s} \1_{s_1=s}
\left(\gamma m_{\theta_t}(s',s_2)-m_{\theta_t}(s,s_2)\right)\rho(s_2)
\, \partial_\theta m_{\theta_t}(s_1,s_2)
\\\mathbin{+}
\sum_{s_1,s_2} \frac{\rho(s_1)}{n_s} m_{\theta_t}(s_1,s)\,\rho(s)
\left(\gamma m_{\theta_t}(s',s_2)-m_{\theta_t}(s,s_2)\right)\rho(s_2)
\, \partial_\theta m_{\theta_t}(s_1,s_2).
\end{multline}
The first term rewrites
\begin{equation}
\sum_{s_1,s_2} \frac{\rho(s_1)}{n_s} \1_{s_1=s}\,\1_{s'=s_2}
\, \partial_\theta m_{\theta_t}(s_1,s_2)
=\frac{\rho(s)}{n_s} \partial_\theta m_{\theta_t}(s,s').
\end{equation}
The second one rewrites
\begin{equation}
\sum_{s_1,s_2} \frac{\rho(s_1)}{n_s} m_{\theta_t}(s_1,s)
\rho(s)\,\1_{s'=s_2}
\, \partial_\theta m_{\theta_t}(s_1,s_2)
=\frac{\rho(s)}{n_s}\, \E_{s_1\sim \rho} [
m_{\theta_t}(s_1,s)\,\partial_\theta m_{\theta_t}(s_1,s')].
\end{equation}
Similarly, the third term in \eqref{eq:tedious} rewrites as
\begin{equation}
\frac{\rho(s)}{n_s} \,\E_{s_2\sim \rho} \left(\gamma
m_{\theta_t}(s',s_2)-m_{\theta_t}(s,s_2)\right)\partial_\theta
m_{\theta_t}(s,s_2)
\end{equation}
and the fourth as
\begin{equation}
\frac{\rho(s)}{n_s} \,\E_{s_1\sim\rho,\,s_2\sim \rho} \left(\gamma
m_{\theta_t}(s',s_2)-m_{\theta_t}(s,s_2)\right)\partial_\theta
m_{\theta_t}(s_1,s_2).
\end{equation}
Now, by definition of $\rho$ in our data model, $\rho(s)$ is
the frequency with which a
transition starting at $s$ is sampled. Therefore, by the law of large
numbers, $n_s\sim t\rho(s)$ when $t\to \infty$. Therefore,
\begin{equation}
\frac{\rho(s)}{n_s}=1/t+o(1/t)
\end{equation}
when $t\to\infty$. (This is the advantage of defining all norms with
respect to $\rho$; anyway, in a general setting, $\rho$ is the only
available measure on $\mathcal{S}$ to define norms with.)

Thus, when $t\to \infty$,
\begin{multline}
-\partial_\theta J(\theta)_{\theta=\theta_t}=
\frac1t
\left(
\partial_\theta m_{\theta_t}(s,s')
+
\E_{s_1\sim \rho} [
m_{\theta_t}(s_1,s)\,\partial_\theta m_{\theta_t}(s_1,s')]
\right.\\\left.
+
\E_{s_2\sim \rho} \left(\gamma
m_{\theta_t}(s',s_2)-m_{\theta_t}(s,s_2)\right)\partial_\theta
m_{\theta_t}(s,s_2)
\right.\\\left.
+\E_{s_1\sim\rho,\,s_2\sim \rho} \left(\gamma
m_{\theta_t}(s',s_2)-m_{\theta_t}(s,s_2)\right)\partial_\theta
m_{\theta_t}(s_1,s_2)
\right)
+o(1/t)
\end{multline}
which is the expression \eqref{eq:Mparam} given in Theorem~\ref{thm:paramBN}.

This ends the proof of Theorem~\ref{thm:paramBN} for discrete states,
which is the only setting in which the single-transition update $\del M$
makes sense. Yet the expressions obtained also make sense for continuous
states.
Appendix~\ref{sec:formalsmproof}
contains a rigorous derivation for continuous states, in
expectation over $s\to s'$, as we did for parametric TD.

%
\paragraph{Proof of Proposition~\ref{prop:VfromM}.}
Given a parametric model $V_\phi$
with parameter $\phi$, at each step $t$,
define a target $V^\tar\deq
V_{\phi_t}+\del V$ with $\del V$ given by \eqref{eq:onlineV}. As for
Theorem~\ref{thm:paramBN}, the update \eqref{eq:onlineV} is defined via
a single transition $s\to s'$ and only makes sense in a discrete space,
as does $V^\tar$.
So we work with a parametric model on a discrete space and observe that
the resulting update is well-defined in continuous spaces. (As with
Theorem~\ref{thm:paramBN}, continuous spaces can be treated rigorously by
considering the expectation over $s\to s'$, see
Appendix~\ref{sec:formalsmproof}.)

The
loss function on $\phi$ is $J^V(\phi)\deq \frac12
\norm{V_\phi-V^\tar}^2_{L^2(\rho)}$. Then
\begin{align}
-\partial_\phi J^V(\phi)&=\sum_{s_1} \rho(s_1)\,
(V_{\phi_t}(s_1)+\del V_{s_1}-V_\phi(s_1))\,\partial_\phi V_\phi(s_1)
\end{align}
and so
\begin{equation}
-\partial_\phi J^V(\phi)_{|\phi=\phi_t}=\sum_{s_1} \rho(s_1) \del V_{s_1}
\,\partial_\phi V_{\phi_t}(s_1).
\end{equation}
Plugging in the expression \eqref{eq:onlineV} for $\del V$ (with $\hat
V_s=V_{\phi_t}(s)$ and $\hat M_{s_1s_2}=M_{\theta_t}(s_1,s_2)$) yields, again omitting the $o(1/n_s)$ terms,
\begin{equation}
-\partial_\phi J^V(\phi)_{|\phi=\phi_t}=
(r_s+\gamma V_{\phi_t}(s')-V_{\phi_t}(s))
\sum_{s_1} \frac{\rho(s_1)}{n_s}
M_{\theta_t}(s_1,s)\,\partial_\phi V_{\phi_t}(s_1)
\end{equation}
and plugging in the parametric model
$M_{\theta_t}(s_1,s)=\1_{s_1=s}+m_{\theta_t}(s_1,s)\rho(s)$ yields
\begin{align}
-\partial_\phi J^V(\phi)_{|\phi=\phi_t}&=(r_s+\gamma
V_{\phi_t}(s')-V_{\phi_t}(s))
\sum_{s_1}
\frac{\rho(s_1)}{n_s}(\1_{s_1=s}+m_{\theta_t}(s_1,s)\rho(s))\,\partial_\phi
V_{\phi_t}(s_1)
\\&=
(r_s+\gamma
V_{\phi_t}(s')-V_{\phi_t}(s))
\left(
\frac{\rho(s)}{n_s} \,\partial_\phi
V_{\phi_t}(s)+\frac{\rho(s)}{n_s} \E_{s_1\sim \rho} [
m_{\theta_t}(s_1,s)\partial_\phi
V_{\phi_t}(s_1)
]
\right)
\end{align}
and as above, $\rho(s)/n_s=1/t+o(1/t)$ when $t\to \infty$, so this yields
the expression \eqref{eq:Vparam} in the theorem.

\paragraph{Proof of Theorem~\ref{thm:convergence}.}

This convergence analysis is partially inspired by \citep{Pananjady2019ValueFE}. The main differences are the data model and the metrics computed.

We assume that $\rho$ is an invariant probability measure of $P$, and
that the reward is bounded by $R_{\max}$ with probability $1$. We define
the empirical distribution of states $\hat \rho_{t}$ as:
$(\hat\rho_{t})_{s} = \frac{n_{s}}{t}$, with $n_{s}$ the number of visits
to state $s$ up to time $t$. We also consider $\hat P_{t}$ and $\hat
R_{t}$ as defined in \eqref{eq:Pupdate}. 

The initialization of $\hat P$ and $\hat R$ does not matter, as it is erased the first time a
state is visited. To fix ideas, we initialize $\hat P$ and $\hat R$ to
$0$; this helps if $\rho=0$ for some states. (In particular, $\hat P$ may
be substochastic: $0\leq \sum_j \hat P_{ij}\leq 1$ for all $i$.)

We define $\widehat{\rho P}_{t}$ as the empirical distribution of
transitions: $(\widehat{\rho P}_{t})_{s_{1}s_{2}} \deq
\frac{n_{s_{1}s_{2}}}{t}$ where $n_{s_{1}s_{2}}$ is the number of
observations of a transition $(s_{1}, s_{2})$ up to time $t$. We have $(\hat
P_{t})_{s_{1}s_{2}} = \frac{n_{s_{1}s_{2}}}{n_{s1}}$ if $n_{s1} > 0$, or $0$
if $n_{s1} = 0$. Hence $\left(\widehat{\rho P}_{t}\right)_{s_{1}s_{2}} =
\hat\rho_{s_{1}}(\hat P_{t})_{s_{1}s_2}$.

The proof strategy is to bound the errors $\|\hat M -
M\|_{\rho,\mathrm{TV}}$ and $\|\hat V - V\|_{\rho}$ 
by errors on $\widehat{\rho P}$ and $\hat R$. The error on
$\widehat{\rho P}$ can then be controlled by concentration inequalities
on empirical distributions, and the error on $\hat R$ can be bounded via
the Hoeffding inequality.

The successor state operator estimate $\hat M$ is $(\Id - \gamma \hat
P)^{-1}$. By the Bellman equation for $M$ and $\hat M$,
\begin{align}
  \hat M - M &=   \gamma \hat P \hat M - \gamma P M
  \\ &= \gamma P (\hat M-M)  + \gamma (\hat P - P) \hat M
\end{align}
and therefore
\begin{equation}
    (\Id - \gamma P)(\hat M - M) =   \gamma (\hat P - P) \hat M
\end{equation}
and thus
\begin{equation}
    \hat M - M =   \gamma M (\hat P - P) \hat M
  \end{equation}
  by definition of $M$.
  
  Therefore, 
  \begin{align}
    \|\hat M - M\|_{\rho,\mathrm{TV}} &= \gamma \|M (\hat P - P) \hat M\|_{\rho,\mathrm{TV}}
    \\ &= \frac{\gamma}{2} \sum_{i,j}\rho_{i} \abs{\sum_{k,l} M_{ik}(\hat
    P-P)_{kl}\hat M_{lj}}
         \\ &\leq \frac{\gamma}{2} \sum_{i,j,k,l}\rho_{i}
	 M_{ik}|\hat P-P|_{kl}\hat M_{lj}.
  \end{align}



We know that $(1-\gamma)M$ is a stochastic matrix, and $\rho$ is an
invariant probability measure. Therefore, $\sum_{i}\rho_{i}M_{ik} =
\frac{1}{1-\gamma}\rho_{k}$. Moreover, if $\hat P$ is sub-stochastic,
$\sum_{j}\hat M_{lj} \leq \frac{1}{1-\gamma}$ (with equality if $P$ is
stochastic). Therefore,
\begin{equation}
  \|\hat M - M\|_{\rho,\mathrm{TV}} \leq
  \frac{\gamma}{(1-\gamma)^{2}}\|\hat P-P\|_{\rho,\mathrm{TV}}.
\end{equation}
We define $(\rho P)$ as the matrix $\mathrm{Diag}(\rho)P$. We now bound
the error $\|\hat P-P\|_{\rho,\mathrm{TV}}$ by the error $\|\widehat{\rho
P} - (\rho P)\|_{\mathrm{TV}}$, in order to use standard concentration
inequalities on empirical distributions:
\begin{align}
  \|\hat P-P\|_{\rho,\mathrm{TV}} &= \frac12\|\mathrm{Diag}(\rho) \hat P-(\rho P)\|_{1}
  \\ &\leq \|\widehat{\rho P}-(\rho P)\|_{\mathrm{TV}}  + \frac12\|\mathrm{Diag}(\hat\rho - \rho) \hat P\|_{1}
  \\ &\leq \|\widehat{\rho P}-\rho P\|_{\mathrm{TV}}  + \|\hat\rho - \rho\|_{\mathrm{TV}}
  \\ &\leq \|\widehat{\rho P}-\rho P\|_{\mathrm{TV}}  + \frac12\sum_{i}|\sum_{j}\hat\rho_{i}\hat P_{ij} - \rho_{i} P_{ij}|
  \\ &\leq \|\widehat{\rho P}-\rho P\|_{\mathrm{TV}}  + \frac12\sum_{i,j}|\hat\rho_{i}\hat P_{ij} - \rho_{i} P_{ij}|
       \\ &\leq 2\|\widehat{\rho P}-\rho P\|_{\mathrm{TV}}  
\end{align}
Therefore,
\begin{equation}
  \label{eq:m-bounded-p}
  \|\hat M - M\|_{\rho,\mathrm{TV}} \leq
  \frac{2\gamma}{(1-\gamma)^{2}}\|\widehat{\rho P}-\rho
  P\|_{\mathrm{TV}}.
\end{equation}
We now consider the error on $\hat V$.
We have:
\begin{align}
  \|\hat V - V\|_{\rho} &= \|\hat M\hat R - MR\|_{\rho}
  \\ &\leq \|(\hat M - M)\hat R\|_{\rho} + \| M(\hat R - R)\|_{\rho}
  \\ &\leq  2R_{\max}\|\hat M - M\|_{\rho, \mathrm{TV}} + \frac{1}{1-\gamma}\|\hat R - R\|_{\rho}
  \\ &\leq \frac{4 R_{\max}}{(1-\gamma)^{2}}\|\widehat{\rho P}-\rho P\|_{\mathrm{TV}}  + \frac{1}{1-\gamma}\|\widehat{R} - R\|_{\hat \rho}
       \label{eq:v-bounded-p-r}
\end{align}

We now bound $\|\widehat{\rho P}-\rho P\|_{\mathrm{TV}}$ and $\|\widehat{R} - R\|_{\hat \rho}$.

We can bound the error $\|\widehat{R} - R\|_{\hat \rho}$ with the
Hoeffding inequality. $\widehat{R}_{s}$ is the average of $n_{s}$
independent samples of expectation $R_{s}$. Since the reward is bounded
by $R_{max}$ with probability $1$, we can use Hoeffding's inequality. For any $s$ with $n_{s} > 0$, we have:
\begin{align}
  \Prob(|\widehat{R} - R|_{s} > u) \leq 2\exp\left(-\frac{n_{s}u^{2}}{2R^{2}_{\max}}\right)
\end{align}
Hence, for any $s$ with $n_{s} > 0$, we have with probability $1-\frac{\delta}{S}$:
\begin{equation}
    |\widehat{R_{t}} - R|_{s} \leq R_{\max}\sqrt{\frac{2}{n_{s}}\log{\frac{2S}{\delta}}}
\end{equation}
and since $\hat\rho_{s}=n_s/t$, this rewrites as
\begin{equation}
    \hat\rho_{s}|\widehat{R_{t}} - R|_{s} \leq
    \frac{R_{\max}}{t}\sqrt{2n_{s}\log{\frac{2S}{\delta}}}.
  \end{equation}
  For states with $n_{s}=0$, $\hat\rho_{s}=0$ and the inequality still holds. If for all $s$, $\Prob(|\widehat{R_{t}} - R|_{s} \geq \epsilon_{s}) \leq \frac{\delta}{S}$, then
  \begin{align}
    \Prob(\|\hat R - R\|_{\hat\rho} \geq \sum_{s} \epsilon_{s}) &\leq \sum_{s}\Prob(\hat\rho_{s}|\widehat{R_{t}} - R|_{s} \geq \epsilon_{s})
    \\ &\leq \sum_{s}\frac{\delta}{S} = \delta
  \end{align}
  Thus, with probability $1-\delta$,
  \begin{align}
    \|\widehat{R} - R\|_{\hat \rho} &\leq \frac{R_{\max}}{t}\sqrt{2\log{\frac{2 S}{\delta}}} \sum_{s} \sqrt{n_{s}}
    \\ &\leq \frac{R_{\max}}{t}\sqrt{2\log{\frac{2 S}{\delta}}} \sqrt{\sum_{s} n_{s}}\sqrt{S}
    \\ &\leq \frac{R_{\max}}{\sqrt{t}}\sqrt{2S\log{\frac{2 S}{\delta}}}
         \label{eq:r-hoeffding}
  \end{align}
since $\sum_s n_s=t$.  

We now bound $\|\widehat{\rho P}-\rho P\|_{\mathrm{TV}}$. $\widehat{\rho
P}$ is the empirical distribution over all possible transitions. The set
of all possible transitions is of size $S^{2}$. However, if ${(\rho
P)}_{s_{1}s_{2}} = 0$, then with probability $1$,  $\widehat{\rho
P}_{s_{1}s_{2}} = 0$. Therefore, if $E$ is the number of edges of the MDP
($(s,s')$ is an edge if $P_{ss'}>0$), $\|\widehat{\rho P}-\rho
P\|_{\mathrm{TV}}$ can be bounded by an inequality on the total variation
error of the empirical measure on a set of size $E$. We use Theorem 2.2
from \citep{weissman2003inequalities}\footnote{We use the trivial bound $\phi(\pi) \geq 2$ with the notation of the original paper.}, and have with with probability $1-\delta$:
  \begin{equation}
    \label{eq:tv-error-emp-measure}
    \|\widehat{\rho P}_{t} - \rho P\|_{\mathrm{TV}} \leq \frac{1}{2\sqrt{t}}\sqrt{2 E \log\frac{2}{\delta}}
  \end{equation}

  By plugging equation~(\ref{eq:tv-error-emp-measure}) into
  (\ref{eq:m-bounded-p}), with probability $1-\delta$,
  \begin{align}
    \|\hat M - M\|_{\rho, \mathrm{TV}} &\leq \frac{\gamma}{(1-\gamma)^{2}\sqrt{t}}\sqrt{2 E \log\frac{2}{\delta}}
  \end{align}

  Finally, by plugging  (\ref{eq:r-hoeffding}) and
  (\ref{eq:tv-error-emp-measure}) into (\ref{eq:v-bounded-p-r}), with
  probability $1-\delta$, we obtain
\begin{align}
  \|\hat V - V\|_{\rho} &\leq \frac{2R_{\max}}{(1-\gamma)^{2}}\frac{1}{\sqrt{t}}\sqrt{2 E \log\frac{4}{\delta}}  + \frac{1}{1-\gamma}\frac{R_{\max}}{\sqrt{t}}\sqrt{2S\log{\frac{4 S}{\delta}}}
 \\ &\leq \frac{3 R_{\max}}{(1-\gamma)^{2}}\sqrt{\frac{2E}{t} \log\frac{4S}{\delta}} 
  \end{align}
which ends the proof.

\section{The Bellman--Newton Operator and Path Composition}
\label{sec:bellman-newton}

In Section~\ref{sec:paths}, we explained the link between learning
successor states and counting paths in a Markov process. Here, we
formalize that link, by studying how updating $M$ via the
Bellman equation (or the backward Bellman equation) updates the paths
represented in $M$. 
We will prove that after $t$ steps, the
estimate of $M$ via Bellman--Newton
exactly contains all paths up to length $2^t-1$ with their correct
probabilities in the Markov process, while forward and backward
TD exactly contain all paths up to length $t$.

Thus
for each algorithm (forward TD, backward TD, and Bellman--Newton), we consider the
exact (deterministic, non-sampled) update:
we set $M_{0} = \Id$ and then define
at step $t+1$ the update $M_{t+1}$ as the target update given by the
corresponding fixed point equation.
For forward TD, the operator update is defined as:
\begin{equation}
    M^{\text{TD}}_{t+1} = \Id + \gamma P M^{\text{TD}}_{t}.
\end{equation}
For backward TD, the operator update is defined as:
\begin{equation}
   M^{\text{BTD}}_{t+1} = \Id + \gamma M^{\text{BTD}}_{t} P.
 \end{equation}
The Bellman--Newton update (Definition~\ref{def:BN}) with learning rate
$1$ is
\begin{equation}
   \label{eq:BNupdate}
   M^{\text{BN}}_{t+1} = 2 M^{\text{BN}}_{t} -
   M^{\text{BN}}_{t} (\Id - \gamma P) M^{\text{BN}}_{t}.
\end{equation}
In expectation over the transition $s\to s'$, the
expected exact online update is $\del M=\frac1t (M-M^2+\gamma MPM)$
(Eq.~\ref{eq:EonlineM}). Here for the corresponding deterministic
operator we use a learning rate $1$ instead of $1/t$, which yields $\del
M=M-M(\Id-\gamma P)M$, hence the update \eqref{eq:BNupdate}. The successor state operator $M=(\Id-\gamma
P)^{-1}$ is a fixed point of this update. \footnote{It is not the only fixed point; for
instance, $M=0$ is another. But it is the only full-rank fixed point.}
This corresponds to the \emph{Newton method} $M\gets 
2M-MAM$ for inverting a matrix $A$
\citep{pan1991improved}.

 

\medskip
We now relate the forward TD, backward TD, and Bellman--Newton updates
to path composition. For each algorithm, we prove by induction that at
step $t$, there exists an integer $n_t$ such that $(M_{t})_{ss'}$ is
equal to the number of paths from $s$ to $s'$ with length  at most
$n_{t}$, weighted by their probability and discounted by their length,
namely,
\begin{equation}
  (M_{t})_{ss'} = \sum_{p \text{ path from } s \text{ to } s', |p| \leq
  n_t}\gamma^{|p|}\Prob(p) = \sum_{k=0}^{n_{t}}\gamma^{k}\sum_{s=s_0, ...,
  s_{k-1}, s_{k}=s'}P_{s_{0}s_{1}} \cdots P_{s_{n-1}s_{n_t}}
\end{equation}
where as in Section~\ref{sec:defin-prop}, $|p|$ denotes the length of a
path $p$ and $\Prob(p)=P_{s_{0}s_{1}}\cdots P_{s_{n-1}s_{n}}$ its probability
in the Markov process. Equivalently,
\begin{equation}
M_t=\sum_{0\leq k\leq n_t} \gamma^k P^k.
\end{equation}
The three algorithms will differ by the value of $n_t$.

For $t=0$, $M_{0} = \Id$, and the induction hypothesis is satisfied.

If the end point of a path $p_{1}$ corresponds to the starting point of a
path $p_{2}$, we denote $p_{1}\cdot p_{2}$ the concatenation of the two
paths.

For forward TD, we have $M^{\text{TD}}_{t+1}=\Id+\gamma
PM^{\text{TD}}_{t}$. By induction, if $M^{\text{TD}}_{t}=\sum_{0\leq
k\leq n_t^\text{TD}} \gamma^k P^k$, then we find
$M^{\text{TD}}_{t+1}=\Id+\gamma P \sum_{0\leq
k\leq n_t^\text{TD}}\gamma ^kP^k=\sum_{0\leq
k\leq n_t^\text{TD}+1}\gamma^k P^k$. Equivalently, looking at paths we
have
\begin{align}
  (M^{\text{TD}}_{t+1})_{ss'} &= \delta_{s=s'} + \gamma(PM^{\text{TD}}_{t})_{ss'}
  \\ &= \delta_{s=s'} + \gamma\sum_{s''}P_{ss''}\sum_{p \text{ path from
  } s'' \text{ to } s', \, |p| \leq n_{t}^{\text{TD}}}\gamma^{|p|}\Prob(p)
  \\ &= \delta_{s=s'} + \sum_{s''}\sum_{p \text{ path from } s'' \text{
  to } s',\, |p| \leq n_{t}^{\text{TD}}}\gamma^{|p|+1}\Prob((s,s'')\cdot p)
  \\ &= \delta_{s=s'} + \sum_{p \text{ path from } s \text{ to } s',\, 1 \leq |p| \leq n_{t}^{\text{TD}}}\gamma^{|p|}\Prob(p)
  \\ &= \sum_{p \text{ path from } s \text{ to } s',\, |p| \leq n_{t}^{\text{TD}}+1}\gamma^{|p|}\Prob(p)
\end{align}
Thus the induction hypothesis is satisfied with $n_{t+1}^{\text{TD}} =
n_{t}^\text{TD} + 1$. By induction, $n_{t}^\text{TD}=t$: at step $t$, $M^{\text{TD}}_{t}$  is the weighted sum of
paths of length at most $t$. $M^{\text{TD}}_{t+1}$ is obtained from
$M^{\text{TD}}_{t}$ by adding a transition to the left to every known
path (and re-adding the length-$0$ paths via the $\Id$ term).

Likewise, with backward TD we have
\begin{align}
  (M^{\text{BTD}}_{t+1})_{ss'} &= \delta_{s=s'} + \gamma(M^{\text{BTD}}_{t}P)_{ss'}
  \\ &= \delta_{s=s'} + \sum_{s''}\sum_{p \text{ path from } s'' \text{
  to } s',\, |p| \leq n_{t}^{\text{BTD}}}\gamma^{|p|+1}\Prob(p \cdot (s'',s'))
  \\ &= \sum_{p \text{ path from } s \text{ to } s',\, |p| \leq n_{t}^{\text{BTD}}+1}\gamma^{|p|}\Prob(p)
\end{align}
Contrary to forward TD, $M^{\text{BTD}}_{t+1}$ is obtained from
$M^{\text{BTD}}_{t}$ by adding a transition to the right to every known
path. This still leads to $n_t^\text{BTD}=t$. 

We now consider the Bellman--Newton operator update. We have
\begin{align}
   M^{\text{BN}}_{t+1} = 2 M^{\text{BN}}_{t} -
   M^{\text{BN}}_{t} (\Id - \gamma P) M^{\text{BN}}_{t}.
 \end{align}
Let us first compute $(\Id - \gamma P) M^{\text{BN}}_{t}$. By the
induction hypothesis and by the same reasoning as for forward TD, we have
 \begin{align}
   ((\Id - \gamma P) M^{\text{BN}}_{t})_{ss'}  &= M^{\text{BN}}_{t}  - \gamma P M^{\text{BN}}_{t}
   \\ &= \sum_{p \text{ path from } s \text{ to } s',\, |p| \leq
   n_{t}^{\text{BN}}}\gamma^{|p|}\Prob(p) - \sum_{p \text{ path from }
   s \text{ to } s', \, 1 \leq |p| \leq n_{t}^{\text{BN}} + 1}\gamma^{|p|}\Prob(p)
   \\ &= \delta_{s=s'} - \gamma ^{n_{t}^{\text{BN}} +
   1}\left(P^{n_{t}^{\text{BN}} + 1} 
   \right)_{ss'}.
\end{align}
%
Therefore,
\begin{align}
  M^{\text{BN}}_{t+1} &= 2 M^{\text{BN}}_{t} -  M^{\text{BN}}_{t} (\Id - \gamma P) M^{\text{BN}}_{t}
  \\ &= 2 M^{\text{BN}}_{t} -  M^{\text{BN}}_{t} (\Id - \gamma
  ^{n_{t}^{\text{BN}} + 1}P^{n_{t}^{\text{BN}} + 1} ) 
  \\ &= M^{\text{BN}}_{t} + \gamma ^{n_{t}^{\text{BN}} + 1}M^{\text{BN}}_{t}P^{n_{t}^{\text{BN}} + 1}
  \\ &= \sum_{p \text{ path from } s \text{ to } s',\, |p| \leq
  n_{t}^{\text{BN}}}\gamma^{|p|}\Prob(p) + \sum_{p \text{ path from }
  s \text{ to } s',\, n_{t}^{\text{BN}} + 1  \leq |p| \leq 2n_{t}^{\text{BN}} + 1}\gamma^{|p|}\Prob(p)
  \\ &= \sum_{p \text{ path from } s \text{ to } s',\, |p| \leq 2n_{t}^{\text{BN}}+1}\gamma^{|p|}\Prob(p) 
 \end{align}
 Therefore, $n_{t+1}^{\text{BN}} = 2n_{t}^{\text{BN}} + 1$.  At every
 step the Bellman--Newton operator update is doubling the maximal length of all known paths.
 
The efficiency of the operator Bellman--Newton update can also be explained from
properties of the Newton method. Indeed, the Bellman--Newton update in
 \eqref{eq:BNupdate} corresponds to the Newton update $M \leftarrow 2M -
 MAM$ for inverting the matrix $A$, applied to $A = \Id - \gamma P$
 \citep{pan1991improved}. With this method, the error $\Id-AM$ gets
 squared at each iteration: $\Id-AM\gets (\Id-AM)^2$
 \citep{pan1991improved}. Here at each step,
 if $M_t$ exactly contains all paths up to length $n_t$, then
 the error $\Id-AM_t$ contains all paths of length 
 $n_t+1$, namely, if $M_t=\sum_{k\leq n_t} \gamma^k P^k$ then
 $\Id-AM_t=\gamma^{n_t+1} P^{n_t+1}$. Thus squaring the error corresponds to doubling
 $n_t+1$.

\section{Successor States, Eligibility Traces, and the Backward Process}
\label{sec:traces}

\label{sec:backw-temp-diff}

\newcommand{\Pback}{P_{\mathrm{back}}}
\newcommand{\Mback}{M^{\mathrm{back}}}
\newcommand{\mback}{m^{\mathrm{back}}}

In this section, we relate the update equation \eqref{eq:Vparam} for the
value function using $M$, to the algorithm TD($\lambda$) and eligibility traces.
We also prove the
statement that backward TD is forward TD on the time-reversed process
(Theorem~\ref{thm:backwardTD}).

More precisely, we prove (Theorem~\ref{thm:elig}) that the expectation of
the TD$(1)$ update (expectation over the eligibility traces given the
current state) is the update \eqref{eq:Vparam} of the value
function using the successor state
operator.
Thus, updating $V$ via \eqref{eq:Vparam} using a learned model
$m_\theta$ of $M$ is equivalent to estimating the true $M$ via a model, while
eligibility traces are an unbiased Monte Carlo estimator of the
true $M$.
This suggests the possibility of using mixed estimates, such as
eligibility traces over a few past steps, and a model $m_\theta$ for the
older past.

Eligibility traces require access to an arbitrarily long trajectory
$(s_t)_{t\in \Z}$ (which, for convenience, we index with both positive
and negative integers, with $s_0$ the state at the current time). Thus,
contrary to the rest of this text, we assume that the Markov process is
ergodic and that the data are coming
from a stationary random trajectory of the process. In this
case, the sampling measure $\rho$ is the stationary distribution, and the
law of any sequence of consecutive observations $(s_t,\ldots,s_{t+n})$
from the trajectory is
$\rho(\d s_t) P(s_t,\d s_{t+1})\cdots P(s_{t+n-1},\d s_{t+n})$.

We also assume that for every $s$, $P(s,\d s')$ is absolutely continuous
with respect to $\rho(\d s')$. This is not necessary but leads to nicer
expressions. In that case, $M(s,\d s')=\delta_s(\d s')+m(s,s')\rho(\d
s')$ for some function $m$.

In the tabular setting, TD($\lambda$) maintains a vector $e_t$ over
states; $e_t$ is updated by
\begin{alignat}{2}
  e_{t}(\tilde s) &= \1_{s_t} + \gamma\lambda e_{t-1}(\tilde s) \qquad
  & \forall \tilde s
  \\ \del V(\tilde s) &= e_{t}(\tilde s)(r_{t} + \gamma V(s_{t+1}) - V(s_{t}))
  \qquad & \forall \tilde s.
\end{alignat}
This can be generalized to continuous environments and to a parametric
model $V_\phi$ of $V$, by formally defining $e$ as the discounted empirical measure of the past:
\begin{align}
  e_{t}(\d \tilde s) &\deq
  \sum_{n\geq 0}(\gamma\lambda)^{n}\delta_{s_{t-n}}(\d \tilde s)
  =
  \delta_{s_t}(\d \tilde s)+\gamma\lambda e_{t-1}(\d \tilde s)
\end{align}
corresponding to the parametric update of $V_\phi$ by
\begin{align}
  \del\phi &= (r_{t} + \gamma V_\phi(s_{t+1}) -
  V_\phi(s_{t}))\int_{\tilde s}\partial_{\phi} V_{\phi}(\tilde s)\,e_{t}(\d \tilde s)
  \\ &= (r_{t} + \gamma V_\phi(s_{t+1}) - V_\phi(s_{t}))\sum_{n\geq 0}
  (\gamma\lambda)^n \partial_{\phi} V_{\phi}(s_{t-n}).
  \label{eq:elig_traces_mc}
\end{align}

We have the following statement:
\begin{thm}
  \label{thm:elig}
  Let $\rho$ be the invariant measure of the Markov process, and
  $M_{\gamma\lambda} \deq (\Id - \gamma \lambda P)^{-1}$ the successor
  state operator with discount factor $\gamma\lambda$. Let
  $m_{\gamma\lambda}$ be the density of $(M_{\gamma\lambda}-\Id)$ with
  respect to $\rho$: $M_{\gamma\lambda}(\tilde s, \d s_{2}) =
  \delta_{\tilde s}(\d s_2)+m_{\gamma\lambda}(\tilde s, s_{2})\rho(\d
  s_{2})$.

  Then, the expected eligibility trace $e_{t}(\d \tilde s)$ knowing $s_{t}=s$ is:
  \begin{align}
    \E\left[e_{t}(\d \tilde s) |s_{t}=s\right] = \delta_{s}(\d \tilde s) + m_{\gamma\lambda}(\tilde s, s)\rho(\d \tilde s)=
    \frac{M_{\gamma\lambda}(\tilde s,\d s)\rho(\d \tilde s)}{\rho(\d s)}
  \end{align}
  Moreover, the expectation of the parametric TD($\lambda$) update
  \eqref{eq:elig_traces_mc} when a transition $(s, s')$ is observed is
  equal to the update \eqref{eq:Vparam} of $V$ using $m_{\gamma\lambda}$:
  \begin{align}
    \E[\del\phi|(s_{t}, s_{t+1})=(s,s')] = (r_{t} + \gamma
    V_{\phi}(s') - V_{\phi}(s))\left(\partial_{\phi}
    V_{\phi}(s) + \E_{\tilde s\sim \rho}
    \left[
    m_{\gamma\lambda}(\tilde s, s)
    \,\partial_{\phi} V_{\phi}(\tilde s)
    \right]\right)
  \end{align}
with $\rho$-probability $1$ over $s_t$.
\end{thm}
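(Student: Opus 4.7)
The plan is to compute the expected eligibility trace directly from its definition by linearity, then to deduce the update formula for $V_\phi$ by invoking the Markov property to separate the past trajectory (on which $e_t$ depends) from the future transition $s_t \to s_{t+1}$.

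First, I would expand
\[
\E[e_t(\d \tilde s) \mid s_t = s] = \sum_{n \geq 0} (\gamma\lambda)^n \,\E[\delta_{s_{t-n}}(\d \tilde s) \mid s_t = s],
\]
so that the key ingredient is the conditional law of $s_{t-n}$ given $s_t = s$. By stationarity, the joint law of $(s_{t-n}, s_t)$ is $\rho(\d \tilde s)\,P^n(\tilde s, \d s)$; dividing by the marginal $\rho(\d s)$ (using the assumed absolute continuity of $P^n(\tilde s, \cdot)$ with respect to $\rho$, inherited from that of $P$ by induction) will yield
\[
\Pr(s_{t-n} \in \d \tilde s \mid s_t = s) = \frac{\rho(\d \tilde s)\,P^n(\tilde s, \d s)}{\rho(\d s)}.
\]
Summing over $n$ assembles the factor $M_{\gamma\lambda}(\tilde s, \d s) = \sum_n (\gamma\lambda)^n P^n(\tilde s, \d s)$, giving the announced identity $\E[e_t(\d \tilde s)\mid s_t = s] = M_{\gamma\lambda}(\tilde s, \d s)\rho(\d \tilde s)/\rho(\d s)$. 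To rewrite this as $\delta_s(\d \tilde s) + m_{\gamma\lambda}(\tilde s, s)\rho(\d \tilde s)$, I would decompose $M_{\gamma\lambda}$ into its singular part $\Id$ plus its density $m_{\gamma\lambda}$ with respect to $\rho$, and use the symmetry of the diagonal Dirac, $\delta_{\tilde s}(\d s)\rho(\d \tilde s) = \delta_s(\d \tilde s)\rho(\d s)$.

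For the second statement, I would use that $e_t$ is a measurable function of the past $(s_{t-n})_{n \geq 0}$, which by the Markov property is independent of $s_{t+1}$ and of the reward $r_t$ conditionally on $s_t$. The conditional expectation of $\del \phi$ from \eqref{eq:elig_traces_mc} then factors as the Bellman gap $r_t + \gamma V_\phi(s') - V_\phi(s)$ times $\int \partial_\phi V_\phi(\tilde s) \,\E[e_t(\d \tilde s)\mid s_t = s]$. Plugging in the formula from the first part and integrating $\partial_\phi V_\phi$ separately against the two components $\delta_s(\d \tilde s)$ and $m_{\gamma\lambda}(\tilde s, s)\rho(\d \tilde s)$ will produce exactly $\partial_\phi V_\phi(s) + \E_{\tilde s \sim \rho}[m_{\gamma\lambda}(\tilde s, s)\,\partial_\phi V_\phi(\tilde s)]$, matching the right-hand side of \eqref{eq:Vparam}.

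The main obstacle will be the measure-theoretic bookkeeping, specifically the identity $\delta_{\tilde s}(\d s)\rho(\d \tilde s) = \delta_s(\d \tilde s)\rho(\d s)$ on the diagonal and the Radon--Nikodym quotient $P^n(\tilde s, \d s)/\rho(\d s)$ appearing as a density in $\tilde s$. These are well-defined under the absolute-continuity assumption, and can be justified rigorously by testing both sides of each claimed identity against an arbitrary bounded measurable function $f(\tilde s)$, reducing all manipulations to Fubini--Tonelli and the stationarity relation $\int \rho(\d \tilde s)\,P^n(\tilde s, A) = \rho(A)$. Once this testing-against-$f$ viewpoint is in place, the ``almost sure over $s_t$'' qualifier in the conclusion comes for free from the Radon--Nikodym construction.
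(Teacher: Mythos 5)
Your proposal is correct and takes essentially the same route as the paper: the paper simply packages your conditional-law computation $\Pr(s_{t-n}\in\d\tilde s\mid s_t=s)=\rho(\d\tilde s)\,P^n(\tilde s,\d s)/\rho(\d s)$ into an explicitly defined backward kernel $P_{\mathrm{back}}$ (so the expected trace becomes the backward successor operator at $s$), plus a short lemma stating $\rho(\d s')\,M^{\mathrm{back}}(s',\d s)=\rho(\d s)\,M(s,\d s')$, and then sums the geometric series exactly as you do. Your treatment of the second claim --- factoring out the Bellman gap via the Markov property and integrating $\partial_\phi V_\phi$ against the two components of the expected trace --- is also the paper's argument.
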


The proof of this theorem involves the time-reversal of the Markov
process; indeed, eligibility traces are a Monte Carlo estimate of the
discounted measure of \emph{predecessor} states.

Define the \emph{backward process} $\Pback(s',\d s)$ by reversing time: it is
the law of $s$ given $s'$ in a transition $s\to s'$. More precisely,
let $(s,s')$ be a random pair of states distributed according to $\rho(\d
s)P(s,\d s')$, and define $\Pback(s',\d s)$ to be the conditional
distribution of $s$ given $s'$ under this distribution. (This exists by
the general theory of conditional distributions
\citep[Thm.~8.1]{parthasarathy2005probability}, and is well-defined up
to a set of $\rho$-measure $0$.) Since $\rho$ is the invariant
measure of the process, the law of both $s$ and $s'$ is $\rho$, and one
has
\begin{equation}
\rho(\d s)P(s,\d s')=\rho(\d s')\Pback(s',\d s)
\end{equation}
by definition of conditional probabilities.

Then, given $s_t$, the distribution of $s_{t-n}$ follows the backward
process from $s_t$. Namely,
the law of any sequence of observations $(s_{t-n},\ldots,s_t)$
from the stationary distribution of the process satisfies
\begin{equation}
\rho(\d s_{t-n})P(s_{t-n},\d s_{t-n+1})\cdots P(s_{t-1},\d s_t)
=\rho(\d s_t) \Pback(s_t,\d s_{t-1})\cdots \Pback(s_{t-n+1},\d s_{t-n}).
\end{equation}

\begin{lem}
\label{lem:mback}
Let $m$ be the density of $M$, namely, $M(s,\d s')=\delta_s(\d
s')+m(s,s')\rho(\d s')$. (This exists under the assumption above that $P$ is
absolutely continuous with respect to $\rho$.)

Let 
$\Mback\deq (\Id-\gamma \Pback)^{-1}$ be the successor state
operator of the backward process, and let $\mback$ be the associated
density, $\Mback(s,\d s')=\delta_s(\d s')+\mback(s,s')\rho(\d s')$.
Then $\rho(\d s')\Mback(s',\d s)=\rho(\d s)M(s,\d s')$ and
\begin{equation}
\mback(s',s)=m(s,s')
\end{equation}
for $\rho$-almost every $(s,s')$.
\end{lem}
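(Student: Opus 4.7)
The plan is to reduce the claim to a time-reversal identity at the level of iterated Markov kernels $P^n$ and $\Pback^n$, and then sum over $n$ with weights $\gamma^n$ to get the identity for $M$ and $\Mback$. The key observation is that both sides of the identity $\rho(\d s)M(s,\d s') = \rho(\d s')\Mback(s',\d s)$ are measures on the product space $S\times S$, so we can check the equality on a term-by-term basis using the series expansions $M = \sum_{n\geq 0}\gamma^n P^n$ and $\Mback = \sum_{n\geq 0}\gamma^n \Pback^n$.

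First I would establish, for each $n\geq 0$, the identity
\begin{equation}
\rho(\d s)\,P^n(s,\d s') = \rho(\d s')\,\Pback^n(s',\d s)
\end{equation}
as measures on $S\times S$. The cleanest way is to start from the joint law of a stationary trajectory $(s_0,s_1,\ldots,s_n)$, which by definition of the backward process factors in two ways:
\begin{equation}
\rho(\d s_0)P(s_0,\d s_1)\cdots P(s_{n-1},\d s_n) = \rho(\d s_n)\Pback(s_n,\d s_{n-1})\cdots \Pback(s_1,\d s_0).
\end{equation}
Integrating out the intermediate states $s_1,\ldots,s_{n-1}$ on both sides, using the definition of $P^n$ and $\Pback^n$ as the $n$-step compositions, yields the desired identity with $s=s_0$ and $s'=s_n$. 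The base case $n=0$ is trivial since both sides equal the measure supported on the diagonal. Summing over $n$ against $\gamma^n$ (which is legitimate since $P^n$ and $\Pback^n$ have mass one and $\gamma<1$, so the series converge in total variation) yields $\rho(\d s)M(s,\d s') = \rho(\d s')\Mback(s',\d s)$.

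From this joint identity, the claim on densities follows by decomposing $M$ and $\Mback$ into their singular and absolutely continuous parts. Write $M(s,\d s') = \delta_s(\d s') + m(s,s')\rho(\d s')$ and $\Mback(s',\d s) = \delta_{s'}(\d s) + \mback(s',s)\rho(\d s)$; then both sides of the identity split as a diagonal measure plus a measure absolutely continuous with respect to $\rho\otimes \rho$. The diagonal (Dirac) parts match automatically since $\rho(\d s)\delta_s(\d s') = \rho(\d s')\delta_{s'}(\d s)$ as the same measure supported on $\{s=s'\}$. Equating the absolutely continuous parts gives $m(s,s')\rho(\d s)\rho(\d s') = \mback(s',s)\rho(\d s)\rho(\d s')$, hence $\mback(s',s) = m(s,s')$ for $\rho\otimes\rho$-almost every $(s,s')$.

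The main obstacle is measure-theoretic care: the Dirac pieces of $M$ and $\Mback$ are not absolutely continuous with respect to $\rho\otimes\rho$, so one must argue that the diagonal and off-diagonal components can be separated, and that $\Pback^n$ is well-defined almost everywhere (it is defined only up to $\rho$-null sets). The assumption that $P(s,\cdot)\ll\rho$ and that $\rho$ is invariant make $\Pback$ well-defined as a Markov kernel and guarantee that $\Mback-\Id$ admits a density $\mback$ with respect to $\rho$; with these, the identification on densities becomes straightforward.
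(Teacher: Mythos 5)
Your proposal is correct and follows essentially the same route as the paper's proof: establish $\rho(\d s)P^n(s,\d s')=\rho(\d s')\Pback^n(s',\d s)$ for each $n$ (the paper does this by induction on the defining relation $\rho(\d s)P(s,\d s')=\rho(\d s')\Pback(s',\d s)$, which is equivalent to your integrating-out of the stationary trajectory factorization), sum the series $\sum_n \gamma^n$ to get $\rho(\d s)M(s,\d s')=\rho(\d s')\Mback(s',\d s)$, and then cancel the common diagonal term $\rho(\d s)\delta_s(\d s')$ to identify the densities. Your extra care about convergence and about separating the Dirac part is a fine elaboration of the same argument, not a different one.
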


\begin{proof}
%
By induction from the definition of the backward process, we have
$\rho(\d s')\Pback^{n}(s', \d s) = \rho(\d s)P^{n}(s, \d s')$.
Then by definition of $\Mback$,
\begin{align}
  \rho(\d s')M^{\text{backward}}(s', \d s) &= \rho(\d s')\sum_{n\geq 0}\gamma^{n}P^{n}_{\text{backward}}(s', \d s) = \sum_{n\geq 0}\gamma^{n} \rho(\d s)P^{n}(s, \d s')
       \\ &= \rho(\d s) M(s, \d s')
\end{align}
Since $M(s, \d s') = \delta_{s}(\d s') + m(s,
s')\rho(\d s')$, and likewise for $\Mback$, this implies
\begin{align}
  \rho(\d s')\mback(s',s)\rho(\d s)=\rho(\d s) m(s,s')\rho(\d s')
\end{align}
as needed.
\end{proof}

\begin{proof}[Proof of Theorem~\ref{thm:elig}]
By definition of eligibility traces, one has $e_t(\d \tilde
s)=\sum_{n\geq 0} (\gamma\lambda)^n \delta_{s_{t-n}}(\d \tilde s)$.
Therefore,
the expectation of $e_{t}$ over the past of $s_{t}$ knowing $s_{t}$ is:
\begin{align}
  \E[e_{t}(\d \tilde s)|s_{t}=s] &= \E\left[ \sum_{n\geq 0}(\gamma\lambda)^{n}\delta_{s_{t-n}}(\d \tilde s) | s_{t} = s\right]
  \\  &= \sum_{n\geq 0} (\gamma\lambda)^{n}\Pback^{n}(s, \d \tilde s)
  \\  &= \Mback_{\gamma\lambda}(s, \d \tilde s)
\end{align}
where $\Mback_{\gamma\lambda}\deq (\Id-\gamma\lambda \Pback)^{-1}$ is the
successor state operator of the backward process. By
Lemma~\ref{lem:mback}, this is
\begin{align}
  \E[e_{t}(\d \tilde s)|s_{t}=s]  &= \delta_{s}(\d \tilde s) + m(\tilde s, s)\rho(\d \tilde s)
\end{align}
as needed.

Therefore, the expectation of the update~\eqref{eq:elig_traces_mc} of $V$ with TD($\lambda$) is:
  \begin{align}
    \E\left[\del\phi | s_{t}=s, s_{t+1}=s'\right] &= (r_{t} + \gamma
    V_\phi(s_{t+1}) -
V_\phi(s_{t}))\int_{\tilde s}\partial_{\phi} V_{\phi}(\tilde s)\,\E[e_{t}(\d \tilde s)| s_{t}=s, s_{t+1}=s']
    \\ &= (r_{t} + \gamma V_\phi(s_{t+1}) -
         V_\phi(s_{t}))\int_{\tilde s}\partial_{\phi} V_{\phi}(\tilde s)( \delta_{s}(\d \tilde s) + m(\tilde s, s)\rho(\d \tilde s))
         \\ &= (r_{t} + \gamma V_\phi(s') -
	 V_\phi(s))\left(
	 \partial_{\phi} V_{\phi}(s) + \E_{\tilde s\sim
	 \rho}\left[\partial_{\phi} V_{\phi}(\tilde
	 s)m_{\gamma\lambda}(\tilde s, s)\right]\right)
  \end{align}
  \end{proof}

Finally, the backward process provides a simple proof that backward TD is
forward TD on the backward process. Remember that the forward and
backward successor state operators are linked by $\rho(\d s_1)M(s_1,\d
s_2)=\rho(\d s_2) \Mback(s_2,\d s_1)$.

\begin{thm}[ (Backward TD is forward TD on the backward process)]
\label{thm:backwardTD}
Let $M$ and $\Mback$ be measure-valued functions such that $\Mback$ is the time-reverse of $M$,
namely $\rho(\d s_1)M(s_1,\d s_2)=\rho(\d s_2) \Mback(s_2,\d s_1)$.
Then the backward TD update
\begin{equation}
M\gets \Id+\gamma MP
\end{equation}
is equivalent ($\rho$-almost everywhere) to
\begin{equation}
\Mback\gets \Id+\gamma \Pback \Mback.
\end{equation}
\end{thm}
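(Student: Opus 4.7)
The plan is to translate the backward TD update on $M$ directly into a forward TD update on $\Mback$ using the two time-reversal identities: the one given in the hypothesis,
\begin{equation}
\rho(\d s_1)M(s_1,\d s_2)=\rho(\d s_2)\Mback(s_2,\d s_1),
\end{equation}
and the analogous one for the transition kernels,
\begin{equation}
\rho(\d s)P(s,\d s')=\rho(\d s')\Pback(s',\d s),
\end{equation}
which comes from the very definition of $\Pback$ (recalled just before the theorem). Both identities hold for $\rho$-almost every $s_1,s_2$ (resp. $s,s'$), which is why the conclusion is ``$\rho$-almost everywhere''.

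First I would handle the $\Id$ term: the measure $\delta_{s_1}(\d s_2)$ trivially satisfies $\rho(\d s_1)\delta_{s_1}(\d s_2)=\rho(\d s_2)\delta_{s_2}(\d s_1)$ (both sides are concentrated on the diagonal $\{s_1=s_2\}$ and coincide with the pushforward of $\rho$ there). So the identity operator is its own time-reverse.

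The main step is to verify that $MP$ and $\Pback\Mback$ are time-reverses of each other, i.e.
\begin{equation}
\rho(\d s_1)(MP)(s_1,\d s_2)=\rho(\d s_2)(\Pback\Mback)(s_2,\d s_1).
\end{equation}
By definition of operator composition and by Fubini (which applies because all measures involved are positive and $\sigma$-finite), I compute
\begin{align}
\rho(\d s_1)(MP)(s_1,\d s_2)&=\int_{s''}\rho(\d s_1)M(s_1,\d s'')\,P(s'',\d s_2)\\
&=\int_{s''}\rho(\d s'')\Mback(s'',\d s_1)\,P(s'',\d s_2)\\
&=\int_{s''}\rho(\d s_2)\Pback(s_2,\d s'')\,\Mback(s'',\d s_1)\\
&=\rho(\d s_2)(\Pback\Mback)(s_2,\d s_1),
\end{align}
where the second equality applies the $M/\Mback$ reversal to the pair $(s_1,s'')$ and the third applies the $P/\Pback$ reversal to the pair $(s'',s_2)$.

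Combining by linearity, the right-hand side of $M\gets \Id+\gamma MP$ is time-reversed (under $\rho$) to $\Id+\gamma \Pback\Mback$, so the two updates are equivalent through the identification of $M$ and $\Mback$. The only (minor) obstacle is bookkeeping with the different measures; no contractivity or convergence argument is needed — the result is a purely algebraic consequence of time-reversal.
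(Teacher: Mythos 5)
Your proof is correct and follows essentially the same route as the paper: multiply by $\rho(\d s_1)$, apply the $M/\Mback$ reversal identity and then the $P/\Pback$ identity inside the composition integral, and handle the $\Id$ term via the symmetric diagonal measure. The only difference is organizational (you reverse the $\Id$ and $MP$ terms separately and combine by linearity, whereas the paper does it in one chain of equalities), which changes nothing of substance.
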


\begin{proof}
Let $D_\rho(\d s_1,\d s_2)$ be the diagonal measure with marginal $\rho$,
namely, $D_\rho(\d s_1,\d s_2)=\rho(\d s_1)\delta_{s_1}(\d s_2)=\rho(\d
s_2)\delta_{s_2}(\d s_1)$. Remember that the operator $\Id$ corresponds to the
process $\delta_{s_1}(\d s_2)$. By multiplying the backward TD update by
$\rho(\d s_1)$ one gets
\begin{align}
\rho(\d s_1)M(s_1,\d s_2)&\gets
D_\rho(\d s_1,\d s_2)+\gamma \rho(\d s_1) (MP)(s_1,\d s_2)
\\&=
D_\rho(\d s_1,\d s_2)+\gamma \int_{s'} \rho(\d s_1)M(s_1,\d s')P(s',\d
s_2)
\\&=D_\rho(\d s_1,\d s_2)+\gamma \int_{s'} \Mback(s',\d s_1)\rho(\d s')
P(s',\d s_2)
\\&=D_\rho(\d s_1,\d s_2)+\gamma \int_{s'} \Mback(s',\d s_1)\rho(\d
s_2)\Pback(s_2,\d s')
\\&=D_\rho(\d s_1,\d s_2)+\gamma \rho(\d s_2) (\Pback\Mback)(s_2,\d s_1)
\end{align}
and since $\rho(\d s_1)M(s_1,\d s_2)=\rho(\d s_2) \Mback(s_2,\d s_1)$,
this rewrites as
\begin{equation}
\rho(\d s_2) \Mback(s_2,\d s_1)\gets \rho(\d
s_2)\delta_{s_2}(\d s_1)+\gamma \rho(\d s_2) (\Pback\Mback)(s_2,\d s_1)
\end{equation}
namely ($\rho$-almost everywhere),
\begin{equation}
\Mback(s_2,\d s_1)\gets\delta_{s_2}(\d s_1)+\gamma (\Pback\Mback)(s_2,\d
s_1)
\end{equation}
which is forward TD on $\Mback$ for the time-reversed process.
\end{proof}

\section{Fixed Points for the FB Representation of $M$}
\label{app:FB}

\todo{move to appendix B?}

Here we state precisely, and prove, the fixed points properties for the
four variants of successor state learning in the FB representation
(Section~\ref{sec:FB}), in the tabular and in the overparameterized case.
The ``tabular'' case for $F$ and $B$ means that 
the state space
is finite and the values of $F(s)$ and $B(s)$ are stored explicitly for
every state $s$.

We fully describe the fixed points of
the four algorithms ff-FB, bb-FB, fb-FB, and bf-FB, which have quite
different properties.

We state these properties for the tabular case; by a simple argument
the fixed points are the same for \emph{overparameterized} $F$ and $B$.
\footnote{Namely,
parameterizations $F_{\thetaF}$ and $B_{\thetaB}$ such that any function
$F$ can be realized for some $\thetaF$, and moreover the map
$\partial_\thetaF F_{\thetaF}$ is surjective for any $\thetaF$, and
likewise for $B$. In short, any $F$ and $B$ can be realized, and any small \emph{change}
of $F$ or $B$ can be realized by a small change in $\thetaF$ and
$\thetaB$. \todo{give formal statement and proof?}}

In this section, we abuse notation by considering $F$ and $B$ both as
functions from the state space to $\R^r$ (as in Section~\ref{sec:FB}),
and as $r\times
\#S$-matrices. The model $M(s_1,\d s_2)=\transp{F(s_1)}B(s_2)\rho(\d
s_2)$ rewrites as $M=\transp{F}B\diag(\rho)$ or $\tildem=\transp{F}B$,
viewing everything as matrices with entries indexed by the state space.

We also assume that $\rho_s>0$ for every state $s$: every
state is sampled with nonzero probability.

By direct identification in
Proposition~\ref{prop:FBupdates}, in the tabular case we find the
following expressions for the updates of $F$ and $B$.

\begin{prop}[ (Tabular FB updates)]
\label{prop:tabularFB}
Assume the state space is finite and let
$F$ and $B$ be two $r\times
\#S$-matrices. Let the parameter $\theta_F$ of $F$ be the matrix $F$ itself
and likewise for $B$.

Abbreviate $\diagrho$ for the diagonal matrix
with entries $\rho_s$ for each state $s$.

Then the updates $\del \theta_F$ and $\del \theta_B$ of
Proposition~\ref{prop:FBupdates} for the FB representation of $M$ are equal to
\begin{equation}
\label{eq:tabularFB_f}
\del F=B\diagrho - \SigmaB F\transp{\Delta}\diagrho ,\qquad
\del B=F\diagrho-F \diagrho \Delta \transp{F}B\diagrho
\end{equation}
for forward TD on $F$ and $B$ respectively, and to
\begin{equation}
\label{eq:tabularFB_b}
\del F=B\diagrho-B\transp{(\diagrho\Delta)}\transp{B}F\diagrho,\qquad
\del B=F\diagrho-\SigmaF B\diagrho\Delta
\end{equation}
for backward TD on $F$ and $B$ respectively.
Here $\Delta$ is the matrix $\Id-\gamma P$, $\SigmaB= B\diagrho \transp{B}$, and $\SigmaF=
F\diagrho\transp{F}$.
\end{prop}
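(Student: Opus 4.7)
The plan is to directly specialize the four expectation formulas in Proposition~\ref{prop:FBupdates} to the tabular parameterization $\theta_F=F$, $\theta_B=B$, and repackage the resulting state-indexed expressions as matrix products. The key observation is that in this parameterization, $\partial_{\theta_F}\transp{F(s)}$ acts as a ``point-mass'' operator: viewing $F$ as an $r\times\#S$ matrix with columns $F(s)=Fe_s$, the chain rule gives $\partial_{F_{:,s_0}}F(s)=\delta_{s,s_0}\Id_r$, so an expectation $\E_{s\sim\rho}[(\partial_{\theta_F}\transp{F(s)})\,X(s)]$ becomes, column by column, $\rho_{s_0} X(s_0)$, i.e.\ the column $s_0$ of the matrix $X\diagrho$ where $X$ is viewed as the $r\times\#S$ matrix with columns $X(s)$.

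Concretely, for the forward TD update \eqref{eq:forwardF} on $F$, the first term $\E_{s\sim\rho}(\partial_{\theta_F}\transp{F(s)})B(s)$ becomes $B\diagrho$. For the second term, I would write $\E_{s'\sim P(s,\d s')}F(s')=\sum_{s'} P_{ss'}F(s')=(F\transp{P})_{:,s}$, so that the inner expression $\SigmaB(\gamma F(s')-F(s))$ averaged over $s'|s$ yields the column-$s$ vector $\SigmaB(\gamma F\transp{P}-F)_{:,s}$; the outer expectation over $s\sim\rho$ then multiplies on the right by $\diagrho$, producing $\SigmaB(\gamma F\transp{P}-F)\diagrho=-\SigmaB F\transp{\Delta}\diagrho$ using $\transp{\Delta}=\Id-\gamma\transp{P}$. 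Summing gives \eqref{eq:tabularFB_f} for $F$.

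The three other updates follow the same template. For forward TD on $B$ \eqref{eq:forwardB}, the first term gives $F\diagrho$, and the matrix $\DF=\E_{s\sim\rho,s'\sim P}F(s)\transp{(\gamma F(s')-F(s))}$ simplifies directly to $-F\diagrho\Delta\transp{F}$ by the same expansion (noting $\sum_s\rho_sF(s)\transp{F(s)}=F\diagrho\transp{F}$ and $\sum_{s,s'}\rho_s P_{ss'}F(s)\transp{F(s')}=F\diagrho P\transp{F}$); multiplying $\DF B(s_2)$ by the point-mass derivative and taking $\E_{s_2\sim\rho}$ gives $\DF B\diagrho$, yielding the second formula of \eqref{eq:tabularFB_f}. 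For backward TD on $F$ and on $B$, the dual computation produces $\DB=-\diagrho\Delta\transp{B}\cdot\!$ (up to a transpose) and the analogous expressions in \eqref{eq:tabularFB_b}; in particular, for backward TD on $B$ the two terms appearing with $\partial_{\theta_B}\transp{B(s')}$ and $\partial_{\theta_B}\transp{B(s)}$ combine, after identifying indices, into the factor $\SigmaF B\diagrho\Delta$.

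The computation is essentially routine bookkeeping, so the main thing to get right is the placement of transposes and of $\diagrho$: the outer expectation over the differentiation variable always produces a right multiplication by $\diagrho$, while the transition operator $P$ appears as $\transp{P}$ whenever $s'$ is the \emph{second} argument of the density we differentiate (forward direction) and as $P$ whenever $s$ is (backward direction). This asymmetry is the only subtle point and is what makes the forward and backward updates non-symmetric in the matrix expressions \eqref{eq:tabularFB_f}--\eqref{eq:tabularFB_b}.
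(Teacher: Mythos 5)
Your proposal is correct and follows essentially the same route as the paper, which obtains Proposition~\ref{prop:tabularFB} by direct identification of the tabular Jacobians in Proposition~\ref{prop:FBupdates}; your worked forward-TD cases (including $\gamma F\transp{P}-F=-F\transp{\Delta}$ and $\DF=-F\diagrho\Delta\transp{F}$) match the paper's formulas exactly. One small notational slip: in the backward case the correct matrix is $\DB=-B\transp{(\diagrho\Delta)}\transp{B}$ rather than the expression you wrote ``up to a transpose'', but this does not affect the resulting updates \eqref{eq:tabularFB_b}.
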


\begin{prop}[ (The fixed points of fb-FB approximate $M$ in $L^2(\rho)$ norm)]
\label{prop:fbFB}
The fixed points of the tabular fb-FB algorithm are the local extrema of
the error
\begin{equation}
\ell(F,B)\deq \E_{s_1\sim \rho,\,s_2\sim \rho} 
\left(\transp{F}(s_1)B(s_2)-\tildem(s_1,s_2)\right)^2
\end{equation}
where $\tildem(s_1,s_2)\deq M(s_1,\d s_2)/\rho(\d s_2)$ is the value
of $\tildem$ for the true successor state operator $M$. \footnote{This is the
Hilbert-Schmidt norm of the difference between $M$ and its approximation
$\transp{F}B\rho$, as operators on the space of functions over $S$
equipped with the $L^2(\rho)$ norm (Appendix~\ref{sec:svd}).
}

In that case, $\transp{F}B\diagrho$ is a truncated singular value decomposition
of the operator $M$ acting on the space of functions over $S$ equipped
with the $L^2(\rho)$ norm. 
\end{prop}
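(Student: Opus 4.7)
The plan is to show that the fb-FB fixed-point equations coincide exactly with the first-order optimality conditions $\partial_F\ell=0$ and $\partial_B\ell=0$, and then invoke the classical characterization of critical points of low-rank matrix factorization under (weighted) Frobenius loss.

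First, I would compute the gradients of $\ell$ directly. Viewing $F$ and $B$ as $r\times\#S$ matrices and writing $\tildem^\ast=M\diagrho^{-1}=\Delta^{-1}\diagrho^{-1}$ for the true density, the loss is the weighted Frobenius norm
\begin{equation}
\ell(F,B)=\tr\!\left(\diagrho(\transp{F}B-\tildem^\ast)\diagrho\,\transp{(\transp{F}B-\tildem^\ast)}\right),
\end{equation}
and direct differentiation gives $\partial_F\ell = 2\,B\diagrho\,\transp{(\transp{F}B-\tildem^\ast)}\diagrho$ and $\partial_B\ell = 2\,F\diagrho(\transp{F}B-\tildem^\ast)\diagrho$.

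Second, I would compare these gradients with the fb-FB updates from Proposition~\ref{prop:tabularFB}. Using $\diagrho^{-1}\transp{M}=(\transp{\Delta})^{-1}$ and cancelling $\diagrho$ on the right (invertible since $\rho>0$), the equation $\partial_F\ell=0$ reduces to $\SigmaB F\transp{\Delta}=B$, which is exactly the forward-TD fixed-point condition $B\diagrho-\SigmaB F\transp{\Delta}\diagrho=0$. Likewise, $\partial_B\ell=0$ rearranges via $\tildem^\ast\diagrho=\Delta^{-1}$ into $F\diagrho=\SigmaF B\diagrho\Delta$, the backward-TD fixed-point condition for $B$. Hence the fixed points of fb-FB are precisely the critical points of $\ell$.

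Third, to identify these critical points with truncated SVDs of $M$, I would rescale $\tilde F=F\diagrho^{1/2}$, $\tilde B=B\diagrho^{1/2}$, turning $\ell$ into the standard unweighted Frobenius loss $\|\transp{\tilde F}\tilde B-T\|_F^2$ for $T=\diagrho^{1/2}\tildem^\ast\diagrho^{1/2}=\diagrho^{1/2}M\diagrho^{-1/2}$. By the classical critical-point analysis for rank-$\leq r$ matrix factorization (a refinement of Eckart--Young), the critical points are exactly the matrices $\transp{\tilde F}\tilde B=\sum_{i\in I}\sigma_i U_i\transp{V_i}$ obtained by picking any subset $I$ of at most $r$ singular triples of $T$. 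Setting $u_i=\diagrho^{-1/2}U_i$ and $v_i=\diagrho^{-1/2}V_i$ yields $L^2(\rho)$-orthonormal families (since $\transp{u_i}\diagrho u_j=\delta_{ij}$, and similarly for $v$), and translating back through $\transp{F}B\diagrho=\diagrho^{-1/2}\transp{\tilde F}\tilde B\diagrho^{1/2}$ exhibits $\transp{F}B\diagrho$ as a truncated SVD of $M$ viewed as an operator on $L^2(\rho)$.

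The main obstacle will be careful bookkeeping with the $\diagrho$ factors in both steps, and clarifying the wording: ``local extrema'' here must be read as ``critical points'', because non-top truncated SVDs are saddle points of $\ell$ yet still fixed points of fb-FB, and the equivalence of the two sets of fixed points holds verbatim without any further rank or stability side-conditions.
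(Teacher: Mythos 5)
Your proposal is correct, and its first half coincides with the paper's proof: you compute $\partial_F\ell$ and $\partial_B\ell$, cancel the invertible $\diagrho$ factors, use $\tildem^\ast=\Delta^{-1}\diagrho^{-1}$, and match the two stationarity conditions with the forward fixed-point equation for $F$ and the backward one for $B$ — exactly the paper's computation. Where you genuinely diverge is the SVD half. The paper stays in the weighted geometry and verifies directly, from the two fixed-point equations and $\rho$-adjoints, the two conditions of its Lemma~\ref{lem:truncsvd} (that $M$ and $\transp{F}B\diagrho$ agree on $(\Ker \transp{F}B\diagrho)^\bot$, and that $M(\Ker \transp{F}B\diagrho)\bot \Img \transp{F}B\diagrho$). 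You instead conjugate by $\diagrho^{1/2}$ to reduce to an unweighted Frobenius problem and invoke the classical characterization of critical points of the factorized rank-$\leq r$ approximation (Baldi--Hornik-type). This is sound, and it buys brevity; note, though, that the paper's stated reason for giving a self-contained proof is that the rank-constrained Eckart--Young statement does not a priori transfer to the $\transp{F}B$ parameterization — the result you cite is about the factorized problem itself, so your shortcut addresses that concern, but you should either cite it precisely or observe that it follows in a few lines from the rescaled fixed-point equations $(X-T)\transp{\tilde B}=0$ and $\tilde F(X-T)=0$ together with Lemma~\ref{lem:truncsvd}, which is essentially the paper's argument transported to the rescaled coordinates (also mind degenerate singular values, which the paper's Definition~\ref{def:truncsvd} accommodates by allowing any SVD of the target). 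Your bookkeeping of the $\diagrho$ factors, the identification $T=\diagrho^{1/2}M\diagrho^{-1/2}$, and the $L^2(\rho)$-orthonormality of $\diagrho^{-1/2}U_i$, $\diagrho^{-1/2}V_i$ are all correct, and your reading of ``local extrema'' as critical points (saddles included) matches what the paper's proof actually establishes.
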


\todo{Clarify meaning of converse statements. In every case, what we prove is, if $X$ is
any operator satisfying the conditions, then there exists a stable point
$F$ and $B$ with $\transp{F}B\diagrho=X$. Not "any $F$ and $B$ with
$\transp{F}B\diagrho=X$ is a fixed point".}

\begin{prop}[ (Fixed points of ff-FB correspond to eigenspaces of
$M$)]
\label{prop:ffFB}
The set of approximations $\transp{F}B\diagrho$ of $M$ that appear
as a fixed point of the tabular ff-FB algorithm
is exactly the set of operators
such that there exists an $L^2(\rho)$-orthogonal decomposition
$\R^{\#S}=E\oplus E'$ of functions over the state space such that $E$ is
stable by $M$ (namely, $ME\subset E$), $E$ has dimension at most $r$, and $\transp{F}B\diagrho$ is equal to $M$ on $E$
and to $0$ on $E'$.
\end{prop}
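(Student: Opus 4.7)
The plan is to recast the tabular ff-FB fixed-point conditions as operator equations in $L^2(\rho)$. Viewing the rows of $F$ and $B$ as feature vectors $F_i, B_i \in \R^{\#S}$ equipped with the inner product $\langle u, v\rangle_\rho = \transp{u}\diagrho v$, the approximation is $X = \transp{F}B\diagrho$, acting on any function $f$ by $Xf = \sum_i F_i\,\langle B_i, f\rangle_\rho$. Write $U_F = \mathrm{span}(F_i)$, $U_B = \mathrm{span}(B_i)$, and let $P_E$ denote the $L^2(\rho)$-orthogonal projection onto a subspace $E$.

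For sufficiency, given an $M$-stable subspace $E \subseteq \R^{\#S}$ with $\dim E = k \leq r$, I choose an $L^2(\rho)$-orthonormal basis $(e_i)_{i\leq k}$ of $E$ and set $B_i = e_i$, $F_i = Me_i$ for $i\leq k$, padded by zeros. Then $Xf = \sum_i (Me_i)\langle e_i, f\rangle_\rho = MP_E f$, so $X = MP_E$. Verifying the fixed-point equations from Proposition~\ref{prop:tabularFB} is a short computation: $\SigmaB$ is the identity on its nontrivial $k\times k$ block, $\Delta F_i = \Delta M e_i = e_i = B_i$, so $B = \SigmaB F\transp{\Delta}$ and $\del F = 0$; and $\Delta X = P_E$ together with $F_i = Me_i \in E$ (by $M$-stability) yields $F\diagrho\,\Delta X = F\diagrho$, hence $\del B = 0$.

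For necessity, start from an arbitrary fixed point $(F, B)$. The ff-FB fixed-point equations are invariant under the basis change $(F, B) \mapsto (LF, \transp{L}^{-1}B)$ on $\R^r$, which preserves $X$. Using this freedom, reduce to a normal form in which $B_1, \ldots, B_k$ are linearly independent and $B_{k+1} = \cdots = B_r = 0$, with $k = \mathrm{rank}(B)$. Then $\SigmaB$ has invertible $k\times k$ principal block (and zeros elsewhere). The equation $\del F = 0$ becomes $B_i = \sum_{j\leq k}(\SigmaB)_{ij}\Delta F_j$ for $i\leq k$; pairing with $B_l$ in $L^2(\rho)$ and inverting the $k\times k$ Gram block yields the biorthogonality $\langle \Delta F_j, B_i\rangle_\rho = \delta_{ij}$ for $i,j\leq k$. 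Thus $\{B_i\}_{i\leq k}$ and $\{\Delta F_j\}_{j\leq k}$ are biorthogonal bases of $U_B = \Delta\,\mathrm{span}(F_1, \ldots, F_k)$, and a direct expansion splitting any $f$ along $U_B \oplus U_B^{\perp_\rho}$ shows $X = MP_{U_B}$, hence $\Delta X = P_{U_B}$.

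The main obstacle is extracting $M$-stability of $U_B$ from $\del B = 0$. Rewriting the identity $F\diagrho = F\diagrho\,\Delta X$ row by row gives $\langle F_j, v\rangle_\rho = \langle F_j, P_{U_B}v\rangle_\rho$ for every $j$ and every $v$, equivalently $\langle F_j, w\rangle_\rho = 0$ for every $j$ and every $w \in U_B^{\perp_\rho}$. This forces $F_j \in U_B$ for all $j$, so $U_F \subseteq U_B$. Combined with $MU_B = M\Delta\,\mathrm{span}(F_1, \ldots, F_k) = \mathrm{span}(F_1, \ldots, F_k) \subseteq U_F \subseteq U_B$, this yields $MU_B \subseteq U_B$. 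Setting $E := U_B$ and $E' := U_B^{\perp_\rho}$ provides the required $L^2(\rho)$-orthogonal decomposition, with $X = MP_E$, $\dim E = k \leq r$, and $ME \subseteq E$.
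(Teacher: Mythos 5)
Your proof is correct, and both directions go through: in the necessity part, the gauge invariance $(F,B)\mapsto(LF,\itransp{L}B)$ does preserve the tabular fixed-point equations and $\transp{F}B\diagrho$, the biorthogonality $\langle \Delta F_j,B_i\rangle_\rho=\delta_{ij}$ follows from inverting the nondegenerate $k\times k$ block of $\SigmaB$, and the chain $MU_B=\mathrm{span}(F_1,\dots,F_k)\subseteq U_F\subseteq U_B$ correctly delivers stability (the degenerate case $B=0$, which forces $F=0$, is also consistent with your argument). The route is genuinely different in its technical execution from the paper's, even though the skeleton is the same: both proofs rest on the two facts that $\del F=0$ makes $\Delta\transp{F}B\diagrho$ the $L^2(\rho)$-orthogonal projector onto the span of the $B$-features, and $\del B=0$ forces the $F$-features into that span. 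The paper gets there by whitening, setting $f=F\diagrho^{1/2}$, $b=B\diagrho^{1/2}$, $D=\diagrho^{1/2}\Delta\diagrho^{-1/2}$ so that $\rho$-orthogonality becomes ordinary orthogonality, and then extracting the projector via the Moore--Penrose pseudoinverse of $\transp{b}$; its converse likewise builds $f=b'\transp{(D^{-1})}$ from a pseudoinverse. You avoid both the change of variables and the pseudoinverse by normalizing $B$ to a rank-normal form through the gauge freedom and reading off $X=M\Pi_{U_B}$ from a biorthogonal expansion; and your converse construction ($B_i=e_i$ an $L^2(\rho)$-orthonormal basis of $E$, $F_i=Me_i$) is more explicit than the paper's and makes the verification of $\del F=0$, $\del B=0$ nearly immediate. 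What the paper's whitening buys is uniformity across the four FB variants (the same reduced equations are reused for bb-FB, fb-FB, bf-FB); what your argument buys is a coordinate-free reading directly in $L^2(\rho)$ and an explicit parameterization of the fixed points over a given stable subspace $E$, consistent with the paper's intended meaning of the converse (existence of some fixed $(F,B)$ realizing the operator, not that every factorization of it is fixed).
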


\begin{prop}[ (Fixed points of bb-FB)]
\label{prop:bbFB}
The set of approximations $\transp{F}B\diagrho$ of $M$ that appear
as a fixed point of the tabular bb-FB algorithm
is exactly the set of operators such that
there exists an $L^2(\rho)$-orthogonal decomposition
$\R^{\#S}=E\oplus E'$ of functions over the state space such that
$E'$ is
stable by $M$ (namely, $ME'\subset E'$), $E$ has dimension at most $r$,
and $\transp{F}B\diagrho$ is the projection of $M$ onto $E$, namely,
$\transp{F}B\diagrho=\Pi_E M$ with $\Pi_E$ the $L^2(\rho)$-orthogonal
projector onto $E$.
\end{prop}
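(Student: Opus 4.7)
The plan is to analyze the bb-FB fixed-point equations directly from the tabular update formulas of Proposition~\ref{prop:tabularFB}, exploiting the $L^2(\rho)$-adjoint structure. Write $M^*$ for the $L^2(\rho)$-adjoint $\diagrho^{-1}\transp{M}\diagrho = (\Id-\gamma P^*)^{-1}$ of $M$, and $\Delta^* = \Id - \gamma P^*$. The key algebraic identity that will make the computation collapse is $\Delta^* M^* = (\Delta M)^* = \Id$.

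First I would gauge-fix. The object $N \deq \transp{F}B\diagrho$ is invariant under the $\mathrm{GL}_r$-action $(F, B) \mapsto (AF, A^{-T}B)$, and a quick verification shows the bb-FB update fields transform as $\del F' = A^{-T}\del F$ and $\del B' = A \del B$ under this action, so the set of fixed points is preserved. I would therefore assume without loss of generality that the rows $f_1,\dots,f_r$ of $F$ are $L^2(\rho)$-orthonormal, padded with zero rows if $\mathrm{rank}(F) = k < r$; then $\SigmaF = F\diagrho\transp{F}$ has the block form $\diag(\Id_k, 0)$. In this gauge, the equation $\del B = 0$, namely $F\diagrho = \SigmaF B\diagrho\Delta$, reduces on the nonzero rows to $f_i = \Delta^* b_i$, equivalently $b_i = M^* f_i$; the zero rows give no constraint and contribute nothing to $N$. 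Plugging into $\del F = 0$, which reads $B = B\transp{\Delta}\diagrho\transp{B}F$, row $i$ unpacks to $b_i = \sum_j \langle \Delta b_i, b_j\rangle_\rho f_j$; substituting $b_j = M^* f_j$ and invoking $\Delta^* M^* = \Id$ gives $\langle \Delta b_i, b_j\rangle_\rho = \langle b_i, \Delta^* M^* f_j\rangle_\rho = \langle b_i, f_j\rangle_\rho$, so the equation collapses to $b_i = \Pi_E b_i$ where $E \deq \mathrm{span}(f_1,\dots,f_k)$. Combined with $b_i = M^* f_i$, this is exactly the invariance condition $M^* E \subset E$.

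The conclusion then follows from a short calculation: using $\rho$-orthonormality of $\{f_i\}_{i\leq k}$,
\[
Nf = \sum_{i\leq k} \langle b_i, f\rangle_\rho f_i = \sum_{i\leq k} \langle M^* f_i, f\rangle_\rho f_i = \sum_{i\leq k} \langle f_i, Mf\rangle_\rho f_i = \Pi_E M f,
\]
so $N = \Pi_E M$. For the converse direction, given any $E$ with $M^*E \subset E$ and $\dim E \leq r$, I would take a $\rho$-orthonormal basis $f_1,\dots,f_k$ of $E$, pad to length $r$ with zero rows, set $b_i = M^* f_i$ (and $b_i = 0$ for padded rows), and check that the calculation above runs in reverse to verify both fixed-point equations. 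Finally, the translation $M^* E \subset E \Leftrightarrow ME' \subset E'$ with $E' \deq E^{\perp_\rho}$ is standard adjoint/annihilator duality: for $w\in E'$ and $v\in E$, $\langle Mw, v\rangle_\rho = \langle w, M^* v\rangle_\rho = 0$ since $M^* v \in E$.

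The step I expect to require the most care is the gauge reduction in the non-full-rank case: one must verify both that every bb-FB fixed point can be brought into orthonormal-plus-padding form via some element of $\mathrm{GL}_r$, and that the padded rows, whose $b_i$'s are a priori unconstrained by either fixed-point equation, do not yield extra values of $N$ beyond those of the form $\Pi_E M$. Once that is settled, everything else reduces, essentially, to two applications of the identity $\Delta^* M^* = \Id$.
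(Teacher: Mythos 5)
Your argument is correct, and it reaches the characterization by a genuinely different route than the paper. The paper's proof changes variables to $f\deq F\diagrho^{1/2}$, $b\deq B\diagrho^{1/2}$, observes that the bb-FB fixed-point equations are exactly the ff-FB equations with $f$ and $b$ swapped, and then recycles the proof of Proposition~\ref{prop:ffFB}, whose engine is the Moore--Penrose pseudoinverse; the statement about $\Pi_E M$ and stability of $E'$ is recovered by conjugating back with $\diagrho^{1/2}$. You instead stay in the $L^2(\rho)$ geometry, quotient out the $\mathrm{GL}_r$ redundancy of the factorization (your equivariance claim $\del F\mapsto \transp{(A^{-1})}\,\del F$, $\del B\mapsto A\,\del B$ does check out against the formulas of Proposition~\ref{prop:tabularFB}, so fixed points are preserved and $\transp{F}B\diagrho$ is invariant), and then run a direct row-wise computation whose only algebraic input is $\Delta^\ast M^\ast=\Id$. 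What your route buys is self-containedness (no appeal to the ff-FB proposition or pseudoinverse identities) and transparency: the relations $b_i=M^\ast f_i$ on the active rows and $M^\ast$-stability of $E=\Img\transp{F}$ make the link to Remark~\ref{rem:bbFB} (stable spaces of probability densities) immediate. The price is the gauge-fixing bookkeeping and the zero-padding in the rank-deficient case, which the pseudoinverse absorbs silently in the paper's version. The step you flag as delicate resolves even more cleanly than you expect: bringing any $(F,B)$ into orthonormal-plus-padding form is just Gram--Schmidt on the row space of $F$ together with a basis of the kernel of $x\mapsto\transp{x}F$; and for the padded rows $i>k$, the equation $\del F=0$ actually \emph{does} constrain $b_i$ (substituting $b_j=M^\ast f_j$ for $j\le k$ forces $b_i=\Pi_E b_i$), but since $f_i=0$ these rows contribute nothing to $\transp{F}B\diagrho$, so no extra operators arise; your converse construction ($f_1,\dots,f_k$ a $\rho$-orthonormal basis of $E$, $b_i=M^\ast f_i$, zeros elsewhere) then verifies both equations by the same two-line computation.
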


\begin{rem}[ (Fixed points of bb-FB correspond to eigen-probability
densities of
$M$)]
\label{rem:bbFB}
Stability of $E'$ by $M$ is equivalent to stability of $E$ by 
$\diagrho^{-1}\transp{M}\diagrho$. This corresponds to the Markov operator acting on probability
densities: if the state at time $t$ has probability distribution $f\rho$
for some vector $f$,
then the state at time $t+1$ has probability distribution
$(\diagrho^{-1}\transp{P}\diagrho f)\rho$.

Thus, in bb-FB, the space $E$ is a stable space of probability densities
for $P$ and $M$.
\end{rem}

In contrast, the bf-FB algorithm can stabilize on any subspace of
features. For instance, in rank $1$, set $\transp{F}$ to any vector, then
set $\transp{B}=\alpha \transp{F}$ where $\alpha=1/(F \diagrho
(\Id-\gamma P)\transp{F})$ (assuming this is nonzero).  In fact, fixed
points of bf-FB just compute a weak inverse of $\diagrho (\Id-\gamma P)$
in an arbitrary $r$-dimensional subspace.


\begin{prop}[ (Fixed points of bf-FB)]
\label{prop:bfFB}
The set of approximations $\transp{F}B\diagrho$ of $M$ that appear
as a fixed point of the tabular bf-FB algorithm
is exactly the set of operators such that
there exists a subspace $E$ of $L^2(\rho)$ of
dimension at most $r$ such
that $\transp{F}B\diagrho$ has image $E$ and kernel $E^\bot$, and 
$\transp{F}B\diagrho$ is the inverse of $\Pi(\Id-\gamma P)\Pi$ as
operators from $E$ to $E$, where $\Pi$ is the $L^2(\rho)$-orthogonal
projector on $E$.

Moreover, 
if $\rho$ is an invariant probability distribution of the Markov
process, then
every subspace $E$ of $L^2(\rho)$ of dimension at most $r$ provides such
a fixed point $\transp{F}B\diagrho$.
\end{prop}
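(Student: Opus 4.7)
The plan is to use the tabular bf-FB updates from Proposition~\ref{prop:tabularFB}. Setting the backward $F$-update~\eqref{eq:tabularFB_b} and the forward $B$-update~\eqref{eq:tabularFB_f} to zero and canceling the invertible factor $\diagrho$ yields the operator identities
\begin{equation*}
B = B\transp{(\diagrho\Delta)}\transp{B}F \quad (\mathrm{bF}), \qquad F = F\diagrho\Delta\transp{F}B \quad (\mathrm{fB}).
\end{equation*}
Write $X \deq \transp{F}B\diagrho$, $E_F \deq \Img\transp{F}$, and $E_B \deq \Img\transp{B}$.

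For the forward direction I would first show $E_F = E_B$. Equation (fB) factors $F = KB$ with $K = F\diagrho\Delta\transp{F}$, so $E_F \subset E_B$; transposing (bF) gives $\transp{B} = X\Delta\transp{B}$, so $E_B \subset \Img X \subset \Img\transp{F} = E_F$. Call this common subspace $E$, set $r' \deq \dim E \leq r$, and, using the $\mathrm{GL}_r$-equivariance $(F,B)\mapsto (GF, G^{-\top}B)$ which preserves both $X$ and the fixed-point relations, assume WLOG that $F$ and $B$ both have full rank $r'$. Let $\Pi = \transp{F}\SigmaF^{-1}F\diagrho$ be the $L^2(\rho)$-orthogonal projector onto $E$. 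Then $\Img X = E$ (from $E_B \subset \Img X \subset E_F$) and $\Ker X = \diagrho^{-1}\Ker B = E_B^{\perp_\rho} = E^{\perp_\rho}$, so $X = \Pi X = X\Pi$. Multiplying (fB) on the right by $\diagrho$ gives $F\diagrho(\Id - \Delta X) = 0$, whence $\Pi\Delta X = \Pi$ and hence $(\Pi\Delta\Pi)X = \Pi$. Likewise $X\Delta\transp{B} = \transp{B}$ gives $X\Delta e = e$ for every $e \in E$; combined with $X = X\Pi$ this reads $X(\Pi\Delta\Pi) = \Pi$. Restricting to $E$, both identities exhibit $X|_E$ as a two-sided inverse of $\Pi\Delta\Pi|_E$, giving the stated characterization.

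For the converse, given $E$ of dimension $r' \leq r$ with $\Pi\Delta\Pi|_E$ invertible, set $X_0 \deq (\Pi\Delta\Pi|_E)^{-1}\Pi$. I would pick an $L^2(\rho)$-orthonormal basis $e_1, \ldots, e_{r'}$ of $E$, let $F$ have rows $e_1, \ldots, e_{r'}$ (padded by zero rows to reach size $r$ if $r' < r$), and define $B \deq F\diagrho X_0\diagrho^{-1}$. On the non-zero block, $\SigmaF = \Id$ and $\Pi = \transp{F}F\diagrho$, and $\transp{F}B\diagrho = X_0$. Equation (fB) then reduces to $F\diagrho\Delta X_0 = F\diagrho$, which follows from $\Pi X_0 = X_0$ and $\Pi\Delta\Pi X_0 = \Pi$. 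Equation (bF), upon introducing the $L^2(\rho)$-adjoint $X_0^\dagger \deq \diagrho^{-1}\transp{X_0}\diagrho = (\Pi\Delta^\dagger\Pi|_E)^{-1}\Pi$ with $\Delta^\dagger = \diagrho^{-1}\transp{\Delta}\diagrho$, reduces after a short manipulation to $X_0\Delta^\dagger X_0^\dagger = X_0$, which holds because $\Pi\Delta^\dagger X_0^\dagger = \Pi$ and $\Img X_0 \subset E$.

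For the last sentence of the proposition, under invariance $\transp{\rho}P = \transp{\rho}$, Jensen's inequality combined with invariance yields $\|Pu\|_\rho \leq \|u\|_\rho$, so $\langle u, \Delta u\rangle_\rho = \|u\|_\rho^2 - \gamma\langle u, Pu\rangle_\rho \geq (1-\gamma)\|u\|_\rho^2 > 0$ for $u \neq 0$. The bilinear form $(u,v)\mapsto \langle u, \Delta v\rangle_\rho$ is therefore coercive, hence non-degenerate on every $E\times E$, so $\Pi\Delta\Pi|_E$ is invertible. The main delicate point I anticipate is the verification of (bF) for the constructed $B$, which hinges on the dual characterization $X_0^\dagger = (\Pi\Delta^\dagger\Pi|_E)^{-1}\Pi$; once this is in hand, the rest is routine bookkeeping with projectors and $L^2(\rho)$-adjoints.
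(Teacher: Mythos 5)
Your proof is correct and follows essentially the same route as the paper's: both derive the same two fixed-point identities, characterize a fixed point as the two-sided inverse of the compression $\Pi\Delta\Pi$ on the common image $E=\Img\transp{F}=\Img\transp{B}$, construct $(F,B)$ explicitly for the converse (your $B=F\diagrho X_0\diagrho^{-1}$ is the paper's $b=OX$ after its $\diagrho^{1/2}$ change of variables), and use positivity of the Dirichlet form of $\Delta$ for the invariant-$\rho$ statement. Two cosmetic repairs: when $\operatorname{rank}F=r'<r$ the projector should be written with the pseudoinverse, $\Pi=\transp{F}\SigmaF^{+}F\diagrho$ (the $\mathrm{GL}_r$ action cannot make the $r\times r$ matrix $\SigmaF$ invertible, only zero out redundant rows), and the verification of $X_0\Delta^\dagger X_0^\dagger=X_0$ uses $X_0=X_0\Pi$ (the kernel condition $\Ker X_0=E^\bot$), not $\Img X_0\subset E$ — both hold for $X_0$, so nothing breaks.
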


\begin{proof}[Proof of Proposition~\ref{prop:fbFB}]
Viewing $\tildem$, $F$ and $B$ as matrices, the loss is
\begin{equation}
\ell(F,B)=\sum_{ij} \rho(i)\rho(j) \left(\sum_k
F_{ki}B_{kj}-\tildem_{ij}\right)^2
\end{equation}
so that
\begin{equation}
\frac{\partial \ell(F,B)}{\partial F_{ki}}=2\sum_{j} \rho(i)\rho(j)
B_{kj}\left(\sum_{k'}
F_{k'i}B_{k'j}-\tildem_{ij}\right)
\end{equation}
which is the $ki$ entry of the matrix
$2B\diagrho(\transp{B}F-\transp{\tildem})\diagrho$.

Now, $F$ is a local extremum of this loss if and only if this derivative
is $0$ for every $ki$, namely, if and only if the matrix
$B\diagrho(\transp{B}F-\transp{\tildem})\diagrho$ is $0$.  Now, by definition of
$\tildem$ we have $M=\tildem\diagrho$, namely,
$\tildem=\Delta^{-1}\diagrho^{-1}$. So
$B\diagrho(\transp{B}F-\transp{\tildem})\diagrho=0$ is equivalent to
$B\diagrho\transp{B}F\diagrho-B\transp{(\Delta^{-1})}\diagrho=0$. Since $\diagrho$ and
$\Delta$ are
invertible, by multiplying by $\diagrho^{-1}\transp{\Delta}\diagrho$ on the right, this is equivalent
to $B\diagrho\transp{B}F\transp{\Delta}\diagrho-B\diagrho=0$. This is
equivalent to $\del F=0$ in \eqref{eq:tabularFB_f}, namely, to $F$ being
a fixed point of forward TD.

A similar computation with $B$ proves that $\partial \ell(F,B)/\partial
B=0$ if and only if $\del B=0$ in \eqref{eq:tabularFB_b}, namely, if and
only if $B$ is a fixed point of \emph{backward} TD. Therefore, $F$ and $B$ are a
local optimum of $\ell$ if and only if they are a fixed point of the
fb-FB algorithm.

Let us turn to the statement about singular value decompositions.
Generally speaking, we know that the matrices of a given rank which are
local extrema of the matrix norm of the difference with $\tilde m$ are
truncated singular value decompositions of $\tilde m$; however, here
these matrices are parameterized as $\transp{F}B$, and a priori this
parameterization might change the local extrema, so we give a full proof.

By Lemma~\ref{lem:truncsvd}, the matrix $\transp{F}B\diagrho$ is a truncated SVD
of $M$ if and only if $\transp{F}B\diagrho$ and $M$
coincide on $(\Ker \transp{F}B\diagrho)^\bot$ and $M (\Ker
\transp{F}B\diagrho)\bot
\Img \transp{F}B\diagrho$. Here all orthogonality relations are defined with respect
to the $L^2(\rho)$ inner product, namely, $\langle
x,y\rangle=\transp{x}\diagrho y$.

If $F$ is a fixed point of \eqref{eq:tabularFB_f}, then
$0=B\diagrho-\SigmaB F\transp{\Delta}\diagrho$. Since $\diagrho$ is
invertible and since
$\SigmaB=B\diagrho\transp{B}$, this rewrites as
$B(\Id-\diagrho\transp{B}F\transp{\Delta})=0$. Taking transposes, 
this is $(\Id-\Delta \transp{F}B\diagrho)\transp{B}=0$. By
definition, $M$ is the inverse of $\Delta$;
multiplying by $M$, we find
$(M-\transp{F}B\diagrho)\transp{B}=0$. This implies that $M$
and $\transp{F}B\diagrho$ coincide on the image of $\transp{B}$.
A fortiori, they coincide on the image of $\transp{B}F\diagrho$, which is
included in the image of $\transp{B}$.

In general, for an operator $A$ on a Euclidean space, $\Img A=(\Ker
A^{\ast})^\bot$ with $A^\ast$ the adjoint of $A$. Here, with the
inner product from $L^2(\rho)$, the adjoint of $A$ is $\diagrho^{-1}\transp{A}\diagrho$
(Appendix~\ref{sec:svd}). So the
adjoint of $\transp{B}F\diagrho$ is $\transp{F}B\diagrho$. Therefore, 
$\Img \transp{B}F\diagrho$ is $(\Ker \transp{F}B\diagrho)^\bot$. So
$M$
and $\transp{F}B\diagrho$ coincide on $(\Ker \transp{F}B\diagrho)^\bot$.
This was the first point to be proved.

Next, if $B$ is a fixed point of \eqref{eq:tabularFB_b}, then
$0=F\diagrho-F\diagrho\transp{F}B\diagrho\Delta$. Multiplying on the
right by
$M=\Delta^{-1}$ this is equivalent to
$F\diagrho(M-\transp{F}B\diagrho)=0$. This states that the image of
$M-\transp{F}B\diagrho$ is $\rho$-orthogonal to the image of
$\transp{F}$. So for any $x$, $(M-\transp{F}B\diagrho)x\bot \Img
\transp{F}$.
Take $x\in \Ker \transp{F}B\diagrho$. Then $Mx\bot \Img \transp{F}$. Since
$\Img \transp{F}B\diagrho\subset \Img \transp{F}$, we have $Mx\bot
\Img\transp{F}B\diagrho$ as well. Therefore, the image of $\Ker \transp{F}B\diagrho$ by
$M$ is orthogonal to the image of $\transp{F}B\diagrho$. This was the
second point to be proved.
\end{proof}

\begin{proof}[Proof of Proposition~\ref{prop:ffFB}]
In this proof, we denote
\begin{equation}
f\deq F\diagrho^{1/2},\qquad b\deq B\diagrho^{1/2},\qquad D\deq
\diagrho^{1/2}\Delta \diagrho^{-1/2},
\end{equation}
using that $\rho$ is invertible. Then the fixed point equations $\del
F=0$ and $\del B=0$ for the
forward TD updates 
\eqref{eq:tabularFB_f}
rewrite as
\begin{equation}
\label{eq:tabularFB_f_reduced}
0=b-b\transp{b}f\transp{D},\qquad 0=f-fD\transp{f}b.
\end{equation}
This change of variables cancels the $\rho$ factors and maps
$L^2(\rho)$-orthogonality to usual orthogonality.

$(\Rightarrow)$. Assume that $\transp{F}B\diagrho$ is a fixed point of
ff-FB, so that the fixed point equations above are satisfied.

The first fixed point equation yields $D\transp{f}b\transp{b}=\transp{b}$.
Let $b'$ be
the Moore-Penrose pseudoinverse of $\transp{b}$ (equal to
$(b\transp{b})^{-1}b$ if invertible). By the general properties of the
Moore-Penrose pseudoinverse,
$\transp{b}b'$ is the orthogonal projector onto $\Img
\transp{b}$, and $b\transp{b}b'=b$. Thus, multiplying
$D\transp{f}b\transp{b}=\transp{b}$ by $b'$ on the right, we find
$D\transp{f}b=\Pi$ where $\Pi$ is the orthogonal projector onto $\Img
\transp{b}$. This rewrites as $\transp{f}b=D^{-1}\Pi$, so that
$\transp{f}b$ is equal to $D^{-1}$ on $\Img \Pi$ and to $0$ on its
orthogonal.

The second fixed point equation reads $fD\transp{f}b=f$. Since
$D\transp{f}b=\Pi$ this means that $f\Pi=f$, or $\transp{f}=\Pi
\transp{f}$.
Consequently, $\Img \transp{f}\subset \Img \Pi$, and a fortiori $\Img
\transp{f}b\subset \Img \Pi$. Thus, $\Img D^{-1}\Pi \subset \Img \Pi$,
namely, $\Img \Pi$ is stable by $D^{-1}$.

Note that $\Img \Pi=\Img \transp{b}$, so its dimension is at most the
rank of $b$ which is at most $r$.

Unwinding the change of variables with $\diagrho^{1/2}$, we see that
$\Pi_E\deq \diagrho^{-1/2} \Pi \diagrho^{1/2}$ is an
$L^2(\rho)$-orthogonal projector, whose image $E\deq \Img \Pi_E$ is
stable by $\Delta^{-1}$, and such that $\transp{F}B\rho$ is equal to
$\Delta^{-1}\Pi_E$. Thus $\transp{F}B\rho$ is equal to $\Delta^{-1}$ on
$E$ and to $0$ on its $L^2(\rho)$-orthogonal.

$(\Leftarrow)$.
Let $E$ be a stable subspace of $M$, of dimension at most $r$, such that
$\transp{F}B\rho$ is equal to $M$ on $E$ and to $0$ on the
$L^2(\rho)$-orthogonal $E'$ of $E$.

Let $\Pi_E$ be the $L^2(\rho)$-orthogonal projector onto $E$. 
Since $E$ is stable by $M$, we have $M\Pi_E=\Pi_EM\Pi_E$. Moreover,
the
condition that $\transp{F}B\rho$ is equal to $M$ on $E$ and to $0$ on
$E'$ is equivalent to saying that $\transp{F}B\rho=M \Pi_E$.

Define $H=\diagrho^{1/2}E$ and $H'=\diagrho^{1/2}E'$, so that $H$ and
$H'$ are orthogonal in the usual sense. Note that $H'$ is stable by
$\diagrho^{1/2}M\diagrho^{-1/2}=D^{-1}$. The property $M\Pi_E=\Pi_EM\Pi_E$
rewrites as $D^{-1}\Pi=\Pi D^{-1}\Pi$ with $\Pi$ the orthogonal
projector onto $H$.
Moreover, $\transp{F}B\rho=M
\Pi_E$ rewrites as $\transp{f}b=D^{-1}\Pi$.

Let $b$ be
any matrix such that $\Img \transp{b}=H$ (e.g., made of a basis of $H$
padded with $0$'s up to dimension $r$). Let $b'$ be its Moore--Penrose
pseudoinverse. Define $f\deq b'\transp{(D^{-1})}$. Then
$b\transp{b}f\transp{D}=b\transp{b}b'=b$ so that the first fixed point
equation $0=b-b\transp{b}f\transp{D}$ is satisfied.

Since $D^{-1}\Pi=\Pi D^{-1}\Pi$, we have $\Pi \transp{(D^{-1})}=\Pi
\transp{(D^{-1})}\Pi$, thus $b'\Pi \transp{(D^{-1})}=b'\Pi
\transp{(D^{-1})}\Pi$. As above, $\Pi=\transp{b}b'$. Therefore,
$b'\transp{b}b'\transp{(D^{-1})}=b'\transp{b}b'\transp{(D^{-1})}\Pi$.
Now,
the Moore--Penrose pseudoinverse of $\transp{b}$ satisfies $b'
\transp{b}b'=b'$. Thus $b'\transp{(D^{-1})}=b'\transp{(D^{-1})}\Pi$,
namely, $f=f\Pi$. Since $\transp{f}b=D^{-1}\Pi$ this rewrites as
$f=fD\transp{f}b$, namely, the second fixed point equation is satisfied.

This proves that $\transp{F}B\rho$ satisfies the fixed point equations.
Moreover, given $E$, many such fixed points exist: a fixed point can be built using any matrix $B$ which spans
$E$, then defining $F$ from $B$.
\end{proof}

\begin{proof}[Proof of Proposition~\ref{prop:bbFB}]
Denoting
\begin{equation}
f\deq F\diagrho^{1/2},\qquad b\deq B\diagrho^{1/2},\qquad D\deq
\transp{(\diagrho^{1/2}\Delta \diagrho^{-1/2})},
\end{equation}
the fixed point equations $\del
F=0$ and $\del B=0$ for the
backward TD updates
\eqref{eq:tabularFB_b}
rewrite as
\begin{equation}
0=f-f\transp{f}b\transp{D},\qquad 0=b-bD\transp{b}f.
\end{equation}

These are the same equations as \eqref{eq:tabularFB_f_reduced} with $f$
and $b$ swapped. Therefore, the same proof yields the following. Let
$\Pi$ be the orthogonal projector on $\Img \transp{f}$, we obtain that
$\transp{b}f$ is equal to $D^{-1}\Pi$, and that $\Img \Pi$ is stable by $D^{-1}$.

Equivalently, $\transp{f}b$ is equal to $\Pi \transp{(D^{-1})}$ and $\Ker
\Pi$ is stable by $\transp{(D^{-1})}$.

Set $\Pi_E\deq \diagrho^{-1/2}\Pi \diagrho^{1/2}$. Then $\Ker \Pi_E$ is
stable by $\diagrho^{-1/2}\transp{(D^{-1})}\diagrho^{1/2}$. Moreover,
$\Pi_E$ is an $L^2(\rho)$-orthogonal projector.

Here $\transp{(D^{-1})}=\diagrho^{1/2} M \diagrho^{-1/2}$. Therefore,
$\Ker \Pi_E$ is stable by $M$. Moreover, the relationship
$\transp{f}b=\Pi \transp{(D^{-1})}$ rewrites as $\transp{F}B\rho=\Pi_E
M$.
\end{proof}

\begin{lem}
\label{lem:dirichlet}
Let $\rho$ be an invariant probability distribution of $P$. Then for any
vector $f$,
\begin{align}
\transp{f}\diagrho (\Id-\gamma P)f = (1-\gamma)\E_{s\sim \rho} [f(s)^2]
+\frac{\gamma}{2} \E_{s\sim \rho,\,s'\sim P(s,\d s')} [(f(s)-f(s'))^2]
\end{align}
and in particular, this is positive for any nonzero $f$.
\end{lem}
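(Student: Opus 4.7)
The plan is to verify this identity by expanding both sides into expectations with respect to $\rho$ and $P$, and then using the invariance of $\rho$ under $P$ to equate them. The algebra is straightforward, so I expect no substantial obstacle; the only subtle point is recognizing where the invariance assumption enters.

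First I would expand the left-hand side. By definition of $\diagrho$, we have
\begin{equation}
\transp{f}\diagrho f = \sum_s \rho(s) f(s)^2 = \E_{s\sim\rho}[f(s)^2],
\end{equation}
and similarly
\begin{equation}
\transp{f}\diagrho P f = \sum_{s,s'} \rho(s) P(s,s') f(s) f(s') = \E_{s\sim\rho,\,s'\sim P(s,\cdot)}[f(s)f(s')].
\end{equation}
Therefore
\begin{equation}
\transp{f}\diagrho(\Id-\gamma P)f = \E_{s\sim\rho}[f(s)^2] - \gamma\,\E_{s\sim\rho,\,s'\sim P(s,\cdot)}[f(s)f(s')].
\end{equation}

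Next I would expand the Dirichlet-type term on the right-hand side:
\begin{equation}
\tfrac12 \E_{s\sim\rho,\,s'\sim P(s,\cdot)}[(f(s)-f(s'))^2] = \tfrac12\E[f(s)^2] + \tfrac12\E[f(s')^2] - \E[f(s)f(s')].
\end{equation}
Here is the one place where invariance matters: since $\transp{\rho}P=\transp{\rho}$, the marginal law of $s'$ under $\rho(\d s)P(s,\d s')$ is again $\rho$, so $\E_{s\sim\rho,\,s'\sim P(s,\cdot)}[f(s')^2]=\E_{s\sim\rho}[f(s)^2]$. Hence
\begin{equation}
\tfrac12 \E[(f(s)-f(s'))^2] = \E_{s\sim\rho}[f(s)^2] - \E[f(s)f(s')].
\end{equation}

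Plugging back, the right-hand side becomes
\begin{equation}
(1-\gamma)\E_{s\sim\rho}[f(s)^2] + \gamma\bigl(\E_{s\sim\rho}[f(s)^2]-\E[f(s)f(s')]\bigr) = \E_{s\sim\rho}[f(s)^2] - \gamma\,\E[f(s)f(s')],
\end{equation}
which matches the expression obtained for the left-hand side. Positivity of $\transp{f}\diagrho(\Id-\gamma P)f$ for nonzero $f$ then follows immediately: both terms of the right-hand side are nonnegative since $0\leq\gamma<1$, and the first term $(1-\gamma)\E_\rho[f^2]$ is strictly positive whenever $f\neq 0$ (using that $\rho>0$ on its support, so $f$ is nonzero on a $\rho$-nonnegligible set).
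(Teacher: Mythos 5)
Your computation is correct and is exactly the intended argument: the paper leaves this lemma as an exercise, and the expected proof is precisely this expansion of the square plus the observation that invariance of $\rho$ makes the marginal of $s'$ equal to $\rho$, so that $\E[f(s')^2]=\E_{s\sim\rho}[f(s)^2]$. One small imprecision: your justification of strict positivity (``$\rho>0$ on its support, so $f$ is nonzero on a $\rho$-nonnegligible set'') does not hold for an arbitrary $f$ --- if $f$ were supported on states of $\rho$-measure zero, the quadratic form would vanish. The correct statement is that strict positivity uses the standing assumption, made at the start of the appendix where the lemma is used, that $\rho_s>0$ for every state $s$; under that assumption $f\neq 0$ implies $\E_{s\sim\rho}[f(s)^2]>0$ and your conclusion follows since $0\leq\gamma<1$.
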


\begin{proof}
The proof is left as an exercise. The second term is known as the
Dirichlet form of a Markov chain \citep{DSC_logsob}, and plays an important role in the convergence analysis of
TD in some situations \citep{ollivier2018approximate}.
\end{proof}

\begin{proof}[Proof of Proposition~\ref{prop:bfFB}]
Denoting again
\begin{equation}
f\deq F\diagrho^{1/2},\qquad b\deq B\diagrho^{1/2},\qquad D\deq
\diagrho^{1/2}\Delta \diagrho^{-1/2},
\end{equation}
then the fixed point equations $\del F=0$ and $\del B=0$ in
\eqref{eq:tabularFB_f}--\eqref{eq:tabularFB_b} for backward TD for $F$ and forward TD for
$B$ rewrite as
\begin{equation}
0=f-fD\transp{f}b,\qquad 0=b-b\transp{D}\transp{b}f.
\end{equation}
Moreover, if $\rho$ is an invariant probability
distribution of the Markov process, then
Lemma~\ref{lem:dirichlet} implies
\begin{equation}
\label{eq:dirichlet_reduced}
\transp{x}Dx>0
\end{equation}
for any nonzero vector $x$.

We will work on $f$, $b$, and $D$; the statements on $\transp{F}B\diagrho$ follow by unwinding the change of
variables.

$(\Leftarrow)$. Assume that $X$ is an operator with image $H$ and kernel
$H^\bot$, such that $X$ and $\Pi D\Pi$ are inverses as operators from $H$
to $H$, with $\Pi$ the orthogonal projector onto $H$.  Let $O$ be any
isometry from $H$ to $\R^r$.  Set $f=O\Pi$ and $b=OX$, so that
$\transp{f}b=\Pi X=X$.  Note that $\Img \transp{f}=\Img \transp{b}=H$.
Moreover, $f\Pi=f$, and $b\Pi=b$ because $X\Pi=X$.
So $\transp{f}b$ and $\Pi D\Pi$ are inverses as operators on $H$.
Therefore, for any $x,y\in H$, we have $\transp{x} (\Pi D \Pi)(
\transp{f}b)y=\transp{x}y$ and $\transp{x}(\transp{f}b)(\Pi D \Pi)
y=\transp{x}y$. Since $x$ and $y$ lie in $H$ and $\Img \transp{f}=H$, we can write $x=\transp{f}z$
and $y=\Pi z'$, with $z$ and $z'$ not necessarily in $H$. Then
$\transp{x} (\Pi D \Pi)( \transp{f}b)y=\transp{z} f\Pi D \Pi\transp{f}b
\Pi z'=\transp{x}y=\transp{z}f\Pi z'$ for any $z$ and $z'$ in the whole space. Since $f\Pi=f$ and $b\Pi=b$
this rewrites as $\transp{z}fD\transp{f}bz'=\transp{z}fz'$ for any $z$
and $z'$ in the whole space.  Therefore, $fD\transp{f}b=f$, namely, the first fixed point
equation is satisfied. The second fixed point equation
$\transp{b}=\transp{f}bD\transp{b}$ is similar, starting with
$\transp{x}(\transp{f}b)(\Pi D \Pi) y=\transp{x}y$ and substituting
$x=\Pi z$, $y=\transp{b}z'$.

$(\Rightarrow)$.
Assume that the two
fixed point equations are satisfied. Since $f=fD\transp{f}b$ we have
$\Ker b\subset \Ker f$. Using the other equation proves that $\Ker
f\subset \Ker b$, thus $f$ and $b$ have the same kernel. Therefore
$\transp{f}$ and $\transp{b}$ have the same image. Let $H$ be this image,
and let $\Pi$ be the orthogonal projector onto $H$.

The second fixed point equation is $\transp{b}=\transp{f}bD\transp{b}$.
Thus $H=\Img
\transp{b}=\Img \transp{f}bD\transp{b} \subset \Img \transp{f}b\subset
\Img \transp{f}=H$. Therefore the image of $\transp{f}b$ is $H$.
Likewise, the first equation $f=fD\transp{f}b$ implies that the kernel of
$\transp{f}b$ is $H^\bot$.

Let us prove that $\transp{f}b$ and $\Pi D\Pi$ are inverses as operators
from $H$ to $H$. This is equivalent to proving that for any $x,y\in H$,
we have
$\transp{x} (\Pi D \Pi)( \transp{f}b)y=\transp{x}y$
and
$\transp{x}(\transp{f}b)(\Pi D \Pi) y=\transp{x}y$.
Since $\Img \transp{f}=H$, we can write
$x=\transp{f} z$. Hence $\transp{x}
(\Pi D \Pi)( \transp{f}b)y=\transp{z} f\Pi D \Pi \transp{f}by$. Since
$\Img \transp{f}=H$ we have $\Pi \transp{f}=\transp{f}$ and $f\Pi=\Pi$,
so $\transp{z} f\Pi D \Pi \transp{f}by=\transp{z}fD\transp{f}by$, which
is $\transp{z}fy=\transp{x}y$ by the first fixed point equation.
Therefore, we have $\transp{x}
(\Pi D \Pi)( \transp{f}b)y=\transp{x}y$. For the other equality, since $\Img
\transp{b}=H$, we can write $y=\transp{b}z$. Hence
$\transp{x}(\transp{f}b)(\Pi D \Pi) y=
\transp{x}\transp{f}b\Pi D \Pi\transp{b}z$. Again,
$\Pi\transp{b}=\transp{b}$ and $b\Pi=b$, so $\transp{x}\transp{f}b\Pi
D \Pi\transp{b}z=\transp{x}\transp{f}b D \transp{b}z$. By the second
fixed point equation, this is $\transp{x}\transp{b}z=\transp{x}y$. This
proves the claim.

Finally, let us turn to the statement about realizing any subspace $E$ this
way.
Let $E$ be an arbitrary subspace
of $\R^{\#S}$,
of dimension $r$. Let $H\deq \diagrho^{1/2}E$. Let $\Pi$ be the
\emph{rectangular}
orthogonal projector onto $H$ (namely, its range is $H$
only; its transpose $\transp{\Pi}$ is the inclusion map from $H$ to
$\R^{\#S}$), and let $A$ be any invertible linear
map from $H$ to $\R^r$. Set $f\deq A \Pi$.

First, we claim that the square matrix $fD\transp{f}$ is invertible. 
Indeed, assume there exists a vector $x\in \R^r$ such that
$fD\transp{f}x=0$. Then $\transp{x}fD\transp{f}x=0$. By
\eqref{eq:dirichlet_reduced} this implies $\transp{f}x=0$, or
$\transp{\Pi}\transp{A}x=0$. Since $\transp{A}x\in H$ we have
$\transp{\Pi}\transp{A}x=\transp{A}x$, so $\transp{A}x=0$. But since $A$
is invertible this implies $x=0$. Therefore, $fD\transp{f}$ is
invertible.

Then we set $b\deq (fD\transp{f})^{-1}f$. Let us check that the fixed
point equations are satisfied. Obviously, $f=fD\transp{f}b$, so the first
fixed point equation holds. For the second one, we have
\begin{equation}
b\transp{D}\transp{b}f=(fD\transp{f})^{-1}f\transp{D}\transp{f}(f\transp{D}\transp{f})^{-1}f=(fD\transp{f})^{-1}f=b.
\end{equation}
Therefore, the second equation holds as well, so that $f$ and $b$ are a
fixed point of the bf-FB algorithm.
\end{proof}

\section{The FB Representation and Bellman--Newton}

\subsection{The FB Representation Coincides With Bellman--Newton for
Symmetric $P$}
\label{sec:FBandBN}

Here we prove that the tabular FB updates (all four versions) coincide
with the Bellman--Newton update in the small-learning-rate
(continuous-time) limit, on-policy, with suitable initializations, and
provided that the transition matrix $P$ of the Markov process is
symmetric.

On a finite space, if $P$ is symmetric then the uniform measure is an
invariant distribution of the process. Therefore, being on-policy means
that $\rho$ is uniform.

\begin{thm}[ (The FB update is Bellman--Newton for symmetric $P$)]
\label{thm:FBandBN}
Assume that the state space $S$ is finite, and that the transition matrix
$P$ is symmetric.

Let $\rho$ be the uniform distribution on $S$, which is
invariant under the Markov process. Let $\diagrho=\frac{1}{\#S}\Id$ be the
diagonal matrix with entries $\rho$.

Let $F_0$ and $B_0$ be two $r\times \#S$-matrices
Consider the continuous-time equation
\begin{equation}
\frac{\d F_t}{\d t}=\del F,\qquad \frac{\d B_t}{\d t}=\del B
\end{equation}
initialized at $F_0$ and $B_0$,
where $\del F$ and $\del B$ are the tabular FB updates
of Proposition~\ref{prop:tabularFB}, computed at $F_t$ and $B_t$. Any of
the four variants ff-FB, fb-FB, bf-FB, bb-FB of
Proposition~\ref{prop:tabularFB} may be used.

Assume that $F_0=B_0$. For the ff-FB, fb-FB, and bb-FB variants,
furthermore assume that $\Delta$ commutes with $\transp{F_0}B_0$ (e.g.,
initialize to $F_0=B_0=\Id$).

Let $M_t\deq \transp{F_t}B_t \diagrho$ be the resulting estimate of the
successor state matrix. Then $M_t$ evolves according to the
Bellman--Newton update
\begin{equation}
\frac{\d M_t}{\d t}=\eta M_t-\eta M_t(\Id-\gamma P)M_t
\end{equation}
with learning rate $\eta=2/\#S$.
\end{thm}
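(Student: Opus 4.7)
The plan is to exploit the fact that $\diagrho = \rho_0 \Id$ with $\rho_0 = 1/\#S$ (because $P$ symmetric plus finite $S$ forces the uniform measure to be invariant), so that every occurrence of $\diagrho$ in the tabular FB updates of Proposition~\ref{prop:tabularFB} becomes a harmless scalar factor. Under this simplification, and using $\transp{\Delta}=\Delta$ from the symmetry of $P$, the four updates collapse to very explicit cubic polynomials in $F$, $B$ and $\Delta$. The strategy is (i) to show that under the stated initial conditions the two invariants ``$F_t = B_t$'' and ``$\Delta$ commutes with $\transp{F_t}F_t$'' are preserved for all $t$, and (ii) once these invariants hold, to directly compute $\frac{\d M_t}{\d t}=\rho_0(\transp{(\del F)}B+\transp{F}\del B)$ and recognize the Bellman--Newton right-hand side $\eta M_t-\eta M_t\Delta M_t$ with $\eta=2\rho_0$.

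For step (i), I would substitute $F=B$ and $\diagrho=\rho_0\Id$ into the four update formulas. A direct calculation shows, for each variant, that
$\del F - \del B$ reduces to a multiple of $F(\transp{F}F\,\Delta-\Delta\transp{F}F)$: this difference vanishes identically for bf-FB (whose two asymmetric updates already match without any commutation), and vanishes for ff-FB, fb-FB, bb-FB whenever $\Delta$ commutes with $\transp{F}F$. Hence in each of the four variants the ODE for $F-B$ is identically zero along the invariant manifold, so $F_t=B_t$ is preserved. To show that the commutation invariant is preserved in the three variants that need it, I would compute $\frac{\d}{\d t}(\transp{F_t}F_t)=\transp{(\del F)}F+\transp{F}\del F$ and observe, using $\Delta=\transp{\Delta}$ and the commutation hypothesis, that this derivative is a noncommutative polynomial in $\Id$, $\transp{F_t}F_t$ and $\Delta$ only; since these three matrices pairwise commute (by hypothesis), the whole polynomial commutes with $\Delta$, so the commutation invariant is closed under the flow.

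For step (ii), still with $F=B$, $\diagrho=\rho_0\Id$, $\transp{\Delta}=\Delta$ and $[\Delta,\transp{F}F]=0$ when needed, each of the four computations of $\transp{(\del F)}B+\transp{F}\del B$ simplifies to
\begin{equation}
2\rho_0\transp{F}F-2\rho_0^2\,\transp{F}F\,\Delta\,\transp{F}F.
\end{equation}
Multiplying by $\rho_0$ and using $M_t=\transp{F_t}B_t\diagrho=\rho_0\transp{F_t}F_t$, one has $\rho_0\transp{F}F=M_t$ and $\rho_0^3(\transp{F}F)\Delta(\transp{F}F)=M_t\Delta M_t$, so
\begin{equation}
\frac{\d M_t}{\d t}=\rho_0\bigl(\transp{(\del F)}B+\transp{F}\del B\bigr)=2\rho_0 M_t-2\rho_0 M_t\Delta M_t=\eta M_t-\eta M_t\Delta M_t
\end{equation}
with $\eta=2\rho_0=2/\#S$, which is exactly the Bellman--Newton operator.

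The main obstacle I anticipate is the bookkeeping at step (i): each of the four variants needs to be checked separately, and one must be careful that the commutation invariant really is closed under the flow of each update rule, because the updates mix $\transp{F}F\Delta$ and $\Delta\transp{F}F$ in subtly different orders. Once the invariants are established the core computation in step (ii) is uniform across variants, but verifying that the ``wrong'' asymmetry in ff-FB and bb-FB (which put the $\Delta$ on the opposite side compared with bf-FB/fb-FB) becomes the ``right'' one on the invariant manifold is where the symmetry assumption on $P$ and the commutation hypothesis on $\transp{F_0}B_0$ are both essential.
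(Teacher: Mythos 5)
Your proposal is correct and follows essentially the same route as the paper: reduce $\diagrho$ to the scalar $1/\#S$, use $\transp{\Delta}=\Delta$, show the invariants $F_t=B_t$ (and $[\Delta,\transp{F_t}B_t]=0$ where needed) are preserved by the flow, and then compute $\frac{\d M_t}{\d t}=\rho_0(\transp{(\del F)}B+\transp{F}\del B)=2\rho_0(M_t-M_t\Delta M_t)$; the paper merely organizes this by treating bf-FB first and reducing the other variants to it via the commutation hypothesis. One tiny quibble: for fb-FB the difference $\del F-\del B$ actually vanishes identically (like bf-FB), and the commutation hypothesis is needed only in the final $\d M_t/\d t$ computation, to turn $\Delta(\transp{F}F)^2+(\transp{F}F)^2\Delta$ into $2\transp{F}F\Delta\transp{F}F$ — which your step (ii) covers anyway.
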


This bears some discussion with respect to feature learning. As discussed
elsewhere, the Bellman--Newton update does not learn features (the kernel
and image of $M_t$ are preserved), and neither does the bf-FB variant in
the case of uniform $\rho$. All other variants (ff-FB, bf-FB, bb-FB)
learn features by changing the kernel of $F$ or $B$, and have fixed
points corresponding to various eigenspaces of $\Delta$
(Appendix~\ref{app:FB}). Thus, how is it possible that these FB updates coincide
with Bellman--Newton? This is where the assumption
$[\Delta,\transp{F_0}B_0]=0$ comes in: this commutation occurs, broadly
speaking, if $\transp{F_0}B_0]$ is already aligned with the eigenspaces
of $\Delta$. In that case, the FB updates will coincide with
Bellman--Newton and enjoy its improved asymptotic convergence. If not,
they will avoid the shortcoming of Bellman--Newton and learn features,
stabilizing to such an alignment.

\begin{proof}
We abbreviate $F'_t$ for $\d F_t/\d t$ and likewise for all other
quantities.

According to Proposition~\ref{prop:tabularFB}, the forward-TD equations for
$F$ and $B$ are
\begin{equation}
F'_t=B_t\diagrho - B_t\diagrho \transp{B_t} F_t\transp{\Delta}\diagrho ,\qquad
B'_t=F_t\diagrho-F_t \diagrho \Delta \transp{F_t}B_t\diagrho
\end{equation}
and the backward-TD equations are
\begin{equation}
F'_t=B_t\diagrho-B_t\transp{(\diagrho\Delta)}\transp{B_t}F_t\diagrho,\qquad
B'_t=F_t\diagrho-F_t\diagrho\transp{F_t} B_t\diagrho\Delta
\end{equation}
Here we have $\diagrho=\frac{1}{\# S}\Id$. Moreover, since $P$ is
symmetric, we have $\Delta=\transp{\Delta}$.

Let us start with the bf-FB variant (backward-TD on $F$ and forward-TD on
$B$). In that case, the evolution equations are symmetric between $F$ and
$B$,
because $\Delta=\transp{\Delta}$. Therefore, if $F=B$ at startup then
$F=B$ at all times. Thus, we have $M_t=\transp{F_t}F_t\diagrho$. Since
$\diagrho$ is proportional to $\Id$, it commutes with all other terms.
Thus, using $F_t=B_t$ and $\Delta=\transp{\Delta}$, we find
\begin{align}
M'_t&=\transp{(F'_t)}F_t\diagrho+\transp{F_t}F'_t\diagrho
\\&=2\transp{F_t}F_t\diagrho^2-2\transp{F_t}F_t \Delta \transp{F_t} F_t \diagrho^3
\\&=2M_t\diagrho-2M_t\Delta M_t \diagrho
\\&=\frac{2}{\# S} (M_t-M_t \Delta M_t)
\end{align}
as $\diagrho=\frac{1}{\#S}\Id$.
This is the Bellman--Newton update.

In the other cases there is one more argument, after which the
computation is similar. At startup, by assumption we have $F=B$ and
$\Delta$ commutes with $\transp{F}B$. Assume that, at some particular
time $t$, we have $F_t=B_t$ and $\Delta$ commutes
with $\transp{F_t}B_t$. Then, since $\Delta=\transp{\Delta}$ and $\diagrho$ commutes with everything, all the
updates of $F$ and $B$ at that time $t$ amount to just
\begin{equation}
F'_t=F_t\diagrho-F_t\transp{F_t}F_t\Delta \diagrho^2.
\end{equation}
Therefore, at that time $t$, the derivative of the commutator between
$\Delta$ and $\transp{F_t}B_t$ is
\begin{align}
[\Delta,\transp{F_t}B_t]'&=[\Delta,(\transp{F_t}F_t)']
\\&=[\Delta, 2\transp{F_t}F_t\diagrho-\transp{F_t}F_t\transp{F_t}F_t\Delta
\diagrho^2-\Delta\transp{F_t}F_t\transp{F_t}F_t\diagrho^2]
\\&=0
\end{align}
since $\Delta$ commutes with $\transp{F_t}F_t$ and $\diagrho$ commutes
with everything.

Thus, if at some time $t$ one has $F_t=B_t$ and $\Delta$ commutes
with $\transp{F_t}B_t$, then $F_t$ and $B_t$ have the same derivative at
time $t$, and the derivative of the commutator of $\Delta$ and
$\transp{F_t}B_t$ is $0$. Therefore, if these conditions hold at startup,
they hold at all times $t$. In that case, the evolution equations become
identical to the bf-FB case, and the conclusion holds as above.
\end{proof}

\subsection{The BN-FB update}
\label{sec:BNFB}

Here we introduce Bellman--Newton FB (BN-FB), a variant of the FB updates that coincides with
Bellman--Newton in the tabular case for arbitrary $P$, not only symmetric
$P$. It is compatible with sampling, without the sampling issues of the
Bellman--Newton update, and admits a parametric version.

However, it
still keeps the main shortcoming of the Bellman--Newton update, namely,
that the kernel and image of the estimate of $M$ are fixed (no features
are learned).

In the tabular case, let $F$ and $B$ be two $r\times \#S$-matrices, and
define the updates
\begin{equation}
\del \transp{F}\deq \transp{F}-\transp{F}B\diagrho\Delta
\transp{F},\qquad
\del B=B-B\diagrho\Delta \transp{F}B
\end{equation}
where as usual $\diagrho$ is the diagonal matrix with entries $\rho$, and
$\Delta=\Id-\gamma P$.

The updates with learning rate $\eta$ lead to a Bellman--Newton udpate on
the model $M=\transp{F}B\diagrho$:
\begin{multline}
F\gets F+\eta \del F, \qquad B\gets B+\eta \del B 
\\\Rightarrow \qquad M \gets
(1+\eta)M-\eta M\Delta M +O(\eta^2)
\end{multline}
at first order in $\eta$. In particular, the continuous-time dynamics
will be an exact Bellman--Newton update.

The parametric version is obtained as before, by approximating these
ideal updates in $\rho$-norm, and by sampling $\Delta$ using
$\diagrho\Delta=\E_{s\sim \rho,\,s'\sim P(\d s'|s)}
(\1_s\transp{\1_s}-\gamma \1_s\transp{\1_{s'}})$. Letting $\thetaF$ and
$\thetaB$ be the parameters of $F$ and $B$ respectively, this yields
\begin{equation}
\E_{s_1 \sim \rho}\, (\partial_{\thetaF} \transp{F(s_1)})
(F(s_1)-\transp{D}F(s_1))
\end{equation}
for the update of the parameters of $F$, and
\begin{equation}
\E_{s_1\sim \rho}\,(\partial_\thetaB \transp{B(s_1)})
(B(s_1)-DB(s_1))
\end{equation}
for the parameters of $B$.
Here $D$ is an $r\times r$ matrix (even for continuous states) given by
\begin{equation}
D\deq \E_{s\sim
\rho,\,s'\sim P(s,\d s')}\,
B(s)\transp{(\gamma F(s')-F(s))}.
\end{equation}
It is possible to use a single sampled transition $s\to s'$ for $D$ (this
option
requires no storage of $D$ since the updates simplify), or to estimate
$D$ online using a moving average over a number of past transitions $s\to
s'$. This second option reduces variance but introduces some bias due to
old values in the moving average.

\section{Sampling Simplified States for $s_1$ and $s_2$}
\label{sec:sss}

This section addresses two potential shortcomings of the parametric TD
and Bellman--Newton
algorithms for $M$.

\begin{itemize}
\item The parametric updates for TD and for Bellman--Newton involve
sampling additional states $s_1$ and $s_2$ unrelated to the states $s\to
s'$ currently visited (and actually use \emph{every}
state $s_1$ and $s_2$ for the tabular case). A simple option is to
sample $s_1$ and $s_2$ at random among a dataset of past visited states.
But if actual
states and transitions are few, or complicated to come by, or if it is
inconvenient to store many states (pure online setting), sampling
additional states according to the data distribution $\rho$ may be a
limitation.

We show that $s_1$ and $s_2$ can be sampled according to any
``simple'' distribution $\rhosimple$. This could help learning by
making it possible to use many samples of $s_1$ and $s_2$ for each observed transition
$s\to s'$, thus bringing the
parametric updates closer to the tabular updates (which use every $s_1$
and $s_2$).

\item Defining $m_\theta$ as a density with respect to the unknown
distribution $\rho$ may pose numerical problems: In regions where $M$ is
not small but $\rho$ is small, this attributes a large value to
$m_\theta$, which may perturb learning.
\end{itemize}

Here, we show that using simplified states $s_1,s_2\sim \rhosimple$ in
the parametric updates, and defining $m_\theta$ with respect to an
arbitrary reference measure $\rhoref$ on $\mathcal{S}$, just amounts to
using different learning rates on different parts of the state, and
different norms to define the parametric updates. Thus, these simplified
algorithms still make sense; however, proper factors must be included,
given in \eqref{eq:paramtd_sss}--\eqref{eq:Mparam_sss} below.

We consider three different measures
on states: the main ``data'' measure $\rho(\d s)$ from which we obtain
observations $s\to s'$, and which we do not control; a ``simple'',
user-chosen probability measure $\rhosimple$
from which we can cheaply sample states, real or synthetic (such as
a uniform distribution, or a Gaussian with the same mean and variance as
real states, or real states with added Gaussian noise); and a user-chosen
reference measure $\rhoref$ used to parameterize $M$
via $M(s_1,\d s_2)=\delta_{s_1}(\d s_2)+m_\theta(s_1,s_2)\,\rhoref(\d
s_2)$. The measure $\rhoref$ is not necessarily of mass $1$, and may for instance be
the Lebesgue measure. 

We assume that the ratio $\rhosimple/\rhoref$ is known; this is
the case for instance if we take $\rhoref\deq \rhosimple$, or if $\rhoref$ is the
Lebesgue measure and $\rhosimple$ is Gaussian. The simplest case is to
use an arbitrary $\rhosimple$ and set
$\rhoref=\rhosimple$: in that case all expressions are the same as before, but
they correspond to different learning rates at different states
depending on $\rhosimple$ (since $\rhosimple$ controls how we sample
states), and to learning the density
$m_\theta$ of $M$ with respect to $\rhosimple$ not $\rho$.

The parametric TD update for $M$ becomes
\begin{multline}
\label{eq:paramtd_sss}
\E_{ s\sim \rho, \,s'\sim P(s,\d s'), \,s_2\sim\rhosimple }
\left[
\gamma \,\partial_\theta m_{\theta_t}(s,s')\, \frac{\rhosimple(\d
s')}{\rhoref(\d s')}
\right.
\\+\left.
\partial_\theta m_{\theta_t}(s,s_2)\,
(\gamma m_{\theta_t}(s',s_2)-m_{\theta_t}(s,s_2))
\right].
\end{multline}
\todo{backward TD!}
The parametric update \eqref{eq:Vparam} for $V$ becomes
\begin{equation}
\label{eq:Vparam_sss}
\left(
r_s+\gamma V_{\phi_t}(s')-V_{\phi_t}(s)
\right)
\left(
\partial_\phi V_{\phi_t}(s)
\,\frac{\rhosimple(\d s)}{\rhoref(\d s)}
+
\,\E_{ s_1\sim \rhosimple} [m_{\theta_t}(s_1,s) \,\partial_\phi
V_{\phi_t}(s_1)]
\right).
\end{equation}
The parametric Bellman--Newton update \eqref{eq:Mparam} for $M$ becomes
\begin{multline}
\label{eq:Mparam_sss}
\E_{ s_1\sim \rhosimple,\,s_2\sim\rhosimple}
\left[
\gamma\,\partial_\theta m_{\theta_t}(s,s')
\,\frac{\rhosimple(\d s)}{\rhoref(\d s)}
\frac{\rhosimple(\d s')}{\rhoref(\d s')}
\right.
\\\mathbin{+}
\gamma \, m_{\theta_t}(s_1,s)\,
\partial_\theta m_{\theta_t}(s_1,s')
\,\frac{\rhosimple(\d s')}{\rhoref(\d s')}
\\+\left.
(\gamma m_{\theta_t}(s',s_2)-m_{\theta_t}(s,s_2))
\left(\partial_\theta m_{\theta_t}(s,s_2)
\,\frac{\rhosimple(\d s)}{\rhoref(\d s)}
+
m_{\theta_t}(s_1,s)
\,\partial_\theta m_{\theta_t}(s_1,s_2)\right)
\right].
\end{multline}

We now justify each of these updates in turn, by deriving them in the
same way as above, but using different norms and learning rates.

On the other hand, for various reasons this does not work for backward TD 
(even if $\rho$ is the invariant distribution from the Markov
process). Reversing time changes the parameterization of $M$: instead of
$\Id+m(s_1,s_2)\rhoref(\d s_2)$ with a user-chosen factor on $s_2$, one
gets a user-chosen factor on $s_1$.

Given three measures $\rho_1$, $\rho_2$, and $\rhoref$ (not necessarily of mass $1$), and two measure-valued
functions $M_1(s,\d s')$ and $M_2(s,\d s')$ on $\mathcal{S}$, we define
the norm
\begin{equation}
\label{eq:norm2}
\norm{M_1-M_2}^2_{\rho_1,\rho_2,\rhoref} \deq \iint
(m_1(s,s')-m_2(s,s'))^2\,\rho_1(\d s)\,\rho_2(\d s')
\end{equation}
where $m_1(s,s')\deq M_1(s,\d s')/\rhoref(\d s')$ is the density of $M_1$
with respect to $\rhoref$ (if it exists; if not, the norm is infinite),
and likewise for $M_2$. This generalizes the norm \eqref{eq:norm}.

\begin{thm}[ (TD for successor states with
function approximation and simple sample states)]
\label{thm:paramtd_sss}
Maintain a parametric model of $M$
via
$M_{\theta_t}(s_1,\d s_2)=\delta_{s_1}(\d
s_2)+m_{\theta_t}(s_1,s_2)\rhoref(\d s_2)$, with $\theta_t$ the value
of the parameter at step $t$, and with $m_\theta$ some smooth family of
functions over pairs of states.

Define a target update of $M$ via the Bellman equation, $M^\tar\deq
\Id+\gamma P M_{\theta_t}$.
Define the loss between $M$ and $M^\tar$ via $J(\theta)\deq \frac12
\norm{M_\theta-M^\tar}^2_{\rho,
\rhosimple,\rhoref}$ using the norm \eqref{eq:norm2}.

Then the update \eqref{eq:paramtd_sss} is equal to the
gradient step $-\partial_\theta J(\theta)_{|\theta=\theta_t}$.
\end{thm}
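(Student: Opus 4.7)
The plan is to mirror the proof of Theorem~\ref{thm:paramtd}, with the two modifications that (i) the norm \eqref{eq:norm2} now uses three distinct measures $(\rho, \rhosimple, \rhoref)$, and (ii) the parametric model uses $\rhoref$ instead of $\rho$ as the reference on $s_2$. First I would compute $M^{\tar}(s,\d s_2) = \delta_s(\d s_2) + \gamma P(s,\d s_2) + \gamma \,\E_{s'\sim P(s,\d s')}[m_{\theta_t}(s',s_2)\,\rhoref(\d s_2)]$ exactly as in the original proof, and subtract $M_{\theta}(s,\d s_2) = \delta_s(\d s_2) + m_\theta(s,s_2)\rhoref(\d s_2)$ so that the Dirac terms cancel.

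Next I would introduce $j_\theta(s,s_2) \deq (M^{\tar}(s,\d s_2) - M_\theta(s,\d s_2))/\rhoref(\d s_2)$ as in the proof of Theorem~\ref{thm:paramtd}, and note that by definition of the norm \eqref{eq:norm2}, $J(\theta) = \tfrac12 \iint j_\theta(s,s_2)^2 \,\rho(\d s)\, \rhosimple(\d s_2)$. Since $\partial_\theta j_\theta(s,s_2) = -\partial_\theta m_\theta(s,s_2)$, the chain rule yields
\begin{equation}
-\partial_\theta J(\theta)_{|\theta=\theta_t} = \iint \partial_\theta m_{\theta_t}(s,s_2)\,\frac{M^\tar(s,\d s_2) - M_{\theta_t}(s,\d s_2)}{\rhoref(\d s_2)}\,\rho(\d s)\,\rhosimple(\d s_2).
\end{equation}

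The only nontrivial step is handling the $\gamma P(s,\d s_2)$ contribution: it produces the integral $\gamma \iint \partial_\theta m_{\theta_t}(s,s_2)\,\frac{\rhosimple(\d s_2)}{\rhoref(\d s_2)}\,\rho(\d s)\,P(s,\d s_2)$. Renaming $s_2$ to $s'$, this is exactly $\gamma\,\E_{s\sim\rho,\,s'\sim P(s,\d s')}[\partial_\theta m_{\theta_t}(s,s')\,\rhosimple(\d s')/\rhoref(\d s')]$, the first term of \eqref{eq:paramtd_sss}. The remaining two contributions $\gamma \E_{s'}[m_{\theta_t}(s',s_2)]\rhoref(\d s_2)$ and $-m_{\theta_t}(s,s_2)\rhoref(\d s_2)$ have their $\rhoref(\d s_2)$ factor cancelled by the $1/\rhoref(\d s_2)$ in $j_\theta$, leaving the Bellman-gap expectation under $(s\sim\rho,\,s_2\sim \rhosimple)$ that matches the rest of \eqref{eq:paramtd_sss}.

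The main obstacle will be the measure-theoretic bookkeeping when $P(s,\cdot)$ is not absolutely continuous with respect to $\rhoref$ (so that $j_\theta$ and the norm $J(\theta)$ may be formally infinite, much as in the original Theorem~\ref{thm:paramtd}). As in that proof, this is handled by either a continuity argument — the term $\partial_\theta m_{\theta_t}(s,s_2)\,\frac{P(s,\d s_2)}{\rhoref(\d s_2)}\rhosimple(\d s_2)$ ultimately appears only in the form $\partial_\theta m_{\theta_t}(s,s_2)\,\frac{\rhosimple(\d s_2)}{\rhoref(\d s_2)}P(s,\d s_2)$, which is well-defined whenever $\rhosimple/\rhoref$ is defined — or by replacing $J$ with the auxiliary loss $J'(\theta) = \tfrac12\|M_\theta\|^2_{\rho,\rhosimple,\rhoref} - \langle M_\theta, M^\tar\rangle_{\rho,\rhosimple,\rhoref}$, which shares the same gradient but remains finite. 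Either route yields the announced formula.
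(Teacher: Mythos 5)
Your proof is correct and takes exactly the approach the paper intends: the paper omits the proof with the remark that it is identical to that of Theorem~\ref{thm:paramtd} up to tracking where $\rhosimple$ and $\rhoref$ replace $\rho$, and your computation does precisely this, correctly producing the $\rhosimple(\d s')/\rhoref(\d s')$ factor on the $\gamma P$ term and the plain Bellman-gap term under $(s\sim\rho,\,s_2\sim\rhosimple)$. One minor aside: the fallback loss $J'(\theta)=\tfrac12\norm{M_\theta}^2_{\rho,\rhosimple,\rhoref}-\langle M_\theta,M^\tar\rangle$ is not actually finite for this model (the Dirac in $M_\theta$ itself makes $\norm{M_\theta}^2$ infinite, unlike the $\tildem$ model where that trick is used in the paper), but since the Diracs cancel in $M_\theta-M^\tar$, the continuity argument you give first is the right one and suffices.
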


For the updates of $V$ and $M$, we will assume that we learn the implicit
Markov process $\hat P$ and $\hat R$ with state-dependent learning rates
inversely proportional to $\rhoref$. (The standard case has $1/n_s$ for
the learning rates; since $n_s\sim t \rho_s$, this produces
learning rates inversely proportional to $\rho$.)

Namely, let $\eta_t\to 0$
be an overall learning rate schedule. Upon observing a transition $s\to s'$ with
reward $r_s$, learn $\hat P$ and $\hat R$ via
\begin{align}
\label{eq:Pupdate_sss}
\hat P_{ss_2} &\gets \hat P_{ss_2}+\frac{\eta_t}{\rhoref(s)}(\1_{s_2=s'}-\hat
P_{ss_2})\qquad
\forall s_2
\\
\label{eq:Rupdate_sss}
\hat R_s &\gets \hat R_s +\frac{\eta_t}{\rhoref(s)} (r_s-\hat R_s).
\end{align}
Thus, different states learn at different speeds, but this still converges
to the true values when $t\to \infty$.

\begin{thm}[ (Value function update via successor states with simple
sample states)]
\label{thm:VfromM_sss}
Consider the empirical estimates $\hat P$ and $\hat R$ of a finite Markov
reward process.
%
Let $s\to s'$ be the transition in the Markov process observed at step
$t$, with reward $r_s$. Let $\del V$ be the update of the value function
$(\Id-\gamma \hat P)^{-1}\hat R$
of the estimated process when $\hat P$ and $\hat R$ are updated by
\eqref{eq:Pupdate_sss}--\eqref{eq:Rupdate_sss}.

Given a
parametric model $V_\phi$ of the value function,
define a target update of $V$ via $V^\tar\deq V_{\phi_t}+\del V$ with $\phi_t$ the parameter at
step $t$. Define the loss between $V$ and $V^\tar$ via
$J^V(\phi)\deq\frac{1}{2}
\norm{V_\phi-V^\tar}^2_{L^2(\rhosimple)}$. Assume $\hat M=(\Id-\gamma
\hat P)^{-1}$ is
given by \eqref{eq:Mmodel}.

Then, up to $O(\eta_t^2)$, the gradient step $-\partial_\phi
J^V(\phi)_{\phi=\phi_t}$ is
$\eta_t$ times
\eqref{eq:Vparam_sss}.
\end{thm}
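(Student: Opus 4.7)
The plan is to follow the same template as the proof of Proposition~\ref{prop:VfromM}, but with two modifications: the empirical update is driven by $\eta_t/\rhoref(s)$ instead of $1/n_s$, and the parametric loss uses $L^2(\rhosimple)$ instead of $L^2(\rho)$. First I would derive the analogue of the tabular online update \eqref{eq:onlineV} under the modified process updates \eqref{eq:Pupdate_sss}--\eqref{eq:Rupdate_sss}. Since \eqref{eq:Pupdate_sss} is still a rank-one update of $\hat P$, only with rescaled step size $\eta_t/\rhoref(s)$ replacing $1/n_s$, the Sherman--Morrison argument from the proof of Theorem~\ref{thm:onlineM} goes through verbatim and yields
\begin{equation}
\del V_{s_1} = \tfrac{\eta_t}{\rhoref(s)}\bigl(r_s+\gamma \hat V_{s'}-\hat V_s\bigr)\,\hat M_{s_1 s} + O(\eta_t^2)\qquad \forall s_1.
\end{equation}
This is the expression that will be plugged into the target $V^\tar = V_{\phi_t}+\del V$.

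Next I would compute the gradient of $J^V(\phi)=\tfrac12\norm{V_\phi-V^\tar}^2_{L^2(\rhosimple)}$ at $\phi=\phi_t$. By the same calculation as in the proof of Proposition~\ref{prop:VfromM},
\begin{equation}
-\partial_\phi J^V(\phi)_{|\phi=\phi_t} = \sum_{s_1}\rhosimple(s_1)\,\del V_{s_1}\,\partial_\phi V_{\phi_t}(s_1),
\end{equation}
the only change being that the sampling weight is $\rhosimple$ instead of $\rho$. Substituting the expression for $\del V_{s_1}$ and using $\hat V_s=V_{\phi_t}(s)$, this becomes
\begin{equation}
\tfrac{\eta_t}{\rhoref(s)}\bigl(r_s+\gamma V_{\phi_t}(s')-V_{\phi_t}(s)\bigr)\sum_{s_1}\rhosimple(s_1)\,\hat M_{s_1 s}\,\partial_\phi V_{\phi_t}(s_1) + O(\eta_t^2).
\end{equation}

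Finally I would expand $\hat M_{s_1 s} = \1_{s_1=s} + m_{\theta_t}(s_1,s)\,\rhoref(s)$ coming from the model \eqref{eq:Mmodel}. The Dirac term contributes $\rhosimple(s)\,\partial_\phi V_{\phi_t}(s)$, and the absolutely continuous term contributes $\rhoref(s)\,\E_{s_1\sim\rhosimple}[m_{\theta_t}(s_1,s)\,\partial_\phi V_{\phi_t}(s_1)]$. Dividing through by the prefactor $\rhoref(s)$ yields the ratio $\rhosimple(s)/\rhoref(s)$ in front of $\partial_\phi V_{\phi_t}(s)$ and leaves the expectation term untouched, giving exactly $\eta_t$ times \eqref{eq:Vparam_sss}. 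No step should be technically hard; the main thing to watch is that the rescaling of the learning rate by $1/\rhoref(s)$ and the change of sampling measure from $\rho$ to $\rhosimple$ combine to produce the density ratio $\rhosimple(s)/\rhoref(s)$ in the Dirac-induced term while cancelling out in the $m_{\theta_t}$ term, which is the structural difference between \eqref{eq:Vparam} and \eqref{eq:Vparam_sss}.
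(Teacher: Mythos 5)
Your proposal is correct and matches the paper's intended argument: the paper explicitly omits this proof, stating it is identical to the single-measure case (Theorem~\ref{thm:onlineM} plus Proposition~\ref{prop:VfromM}) up to tracking where $\rhosimple$ and $\rhoref$ replace $\rho$, which is precisely the bookkeeping you carry out. Your derivation of $\del V_{s_1}=\tfrac{\eta_t}{\rhoref(s)}(r_s+\gamma \hat V_{s'}-\hat V_s)\hat M_{s_1s}+O(\eta_t^2)$ via Sherman--Morrison and the subsequent split of $\hat M_{s_1s}=\1_{s_1=s}+m_{\theta_t}(s_1,s)\rhoref(s)$ under the $L^2(\rhosimple)$ loss correctly produces the $\rhosimple(s)/\rhoref(s)$ factor on the Dirac term and the unweighted $m_{\theta_t}$ term of \eqref{eq:Vparam_sss}.
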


\begin{thm}[ (Successor states via online inversion, with
function approximation and simple sample states)]
\label{thm:paramBN_sss}
Maintain a parametric model of $M$ via
$M_{\theta_t}(s_1,\d s_2)=\delta_{s_1}(\d
s_2)+m_{\theta_t}(s_1,s_2)\rhoref(\d s_2)$,
with $\theta_t$ the value
of the parameter at step $t$, and with $m_\theta$ some smooth family of
functions over pairs of states.

Let $s\to s'$ be the transition in the Markov process observed at step
$t$, with reward $r_s$.
Let $\del M$ be the update of $(\Id-\hat
P)^{-1}$ corresponding to the update \eqref{eq:Pupdate_sss} of $\hat P$.

Define a target update of $M$ by $M^\tar\deq M_{\theta_t}+\del M$.
Define the loss between $M$ and $M^\tar$ via $J(\theta)\deq \frac12
\norm{M_\theta-M^\tar}_{\rhosimple,\rhosimple,\rhoref}$ using the norm \eqref{eq:norm2}.

Then, up to $O(\eta_t^2)$ the
gradient step $-\partial_\theta J(\theta)_{|\theta=\theta_t}$ is
$\eta_t$ times \eqref{eq:Mparam_sss}.
\end{thm}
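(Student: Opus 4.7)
}
The overall strategy is to replay the proof of Theorem~\ref{thm:paramBN} but with two modifications: (i) the per-visit learning rate $1/n_s$ is replaced by the state-dependent rate $\eta_t/\rhoref(s)$ inherited from the update rule~\eqref{eq:Pupdate_sss}; and (ii) the loss uses the more general norm~\eqref{eq:norm2} with weights $\rhosimple$ on both state arguments and density taken with respect to $\rhoref$. The factors $\rhosimple(\d s)/\rhoref(\d s)$ that appear in the single-transition terms of~\eqref{eq:Mparam_sss} will arise from the mismatch between the sampling weight $\rhosimple$ used in the norm and the learning-rate weight $1/\rhoref$ coming from the process estimate.

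First I would derive the Sherman--Morrison-style single-transition update of $\hat M=(\Id-\gamma\hat P)^{-1}$ associated to~\eqref{eq:Pupdate_sss}. The update $\hat P\gets \hat P + (\eta_t/\rhoref(s))\1_s(\transp{\1_{s'}}-\transp{\1_s}\hat P)$ is still rank one, so the computation in the proof of Theorem~\ref{thm:onlineM} applies verbatim with the prefactor $1/n_s$ replaced by $\eta_t/\rhoref(s)$. To first order in $\eta_t$ this gives
\begin{equation}
\del M_{s_1s_2}=\tfrac{\eta_t}{\rhoref(s)}\,\hat M_{s_1s}\bigl(\1_{s_2=s}+\gamma \hat M_{s's_2}-\hat M_{ss_2}\bigr)+O(\eta_t^2).
\end{equation}

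Next I would write down the loss
\begin{equation}
J(\theta)=\tfrac12\iint (m_\theta(s_1,s_2)-m^\tar(s_1,s_2))^2\,\rhosimple(\d s_1)\rhosimple(\d s_2),
\end{equation}
where $m^\tar(s_1,s_2)\deq (M^\tar(s_1,\d s_2)-\delta_{s_1}(\d s_2))/\rhoref(\d s_2)$, and differentiate at $\theta=\theta_t$:
\begin{equation}
-\partial_\theta J(\theta)_{|\theta=\theta_t}=\iint \partial_\theta m_{\theta_t}(s_1,s_2)\,\frac{\del M(s_1,\d s_2)}{\rhoref(\d s_2)}\,\rhosimple(\d s_1)\rhosimple(\d s_2),
\end{equation}
using $M^\tar-M_{\theta_t}=\del M$. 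Plugging in the explicit $\del M$, expanding $\hat M_{s_1s_2}=\1_{s_1=s_2}+m_{\theta_t}(s_1,s_2)\rhoref(s_2)$, and distributing over the four cross-terms (as in the tedious expansion~\eqref{eq:tedious}) yields eight contributions. The $\1_{s_1=s}$ and $\1_{s_2=s}/\1_{s'=s_2}$ Kronecker terms collapse the corresponding integrals against $\rhosimple$; these produce the factors $\rhosimple(\d s)/\rhoref(\d s)$ and $\rhosimple(\d s')/\rhoref(\d s')$ in the first two lines of~\eqref{eq:Mparam_sss}, while the remaining terms, where both $s_1$ and $s_2$ survive as integration variables, produce the full expectations $\E_{s_1\sim\rhosimple,\,s_2\sim\rhosimple}$ in the last line, with no extra ratio factor since $\rhoref$ cancels against the density $m_{\theta_t}\rhoref$ inside $\hat M$.

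The main obstacle will be the bookkeeping in this last step: there are four distinct Kronecker/non-Kronecker combinations from expanding $\hat M_{s_1s}$ and $(\1_{s=s_2}+\gamma\hat M_{s's_2}-\hat M_{ss_2})$ into Dirac-plus-density pieces, and each produces a Radon--Nikodym ratio $\rhosimple/\rhoref$ exactly on those arguments that were collapsed by a Kronecker. The subtle point is that the $1/\rhoref(s)$ coming from the learning rate does \emph{not} combine with a sampling weight $\rhosimple(s)$ for the variable $s$ (which is sampled from the true data distribution $\rho$, not from $\rhosimple$), but only combines with Kronecker-induced factors $\rhosimple(s_1)$ or $\rhosimple(s_2)$ when the corresponding integration variable is set equal to $s$ or $s'$. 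Verifying that every one of the three remaining terms in~\eqref{eq:Mparam_sss} carries the correct ratio (one ratio, two ratios, or none) and that the overall prefactor is $\eta_t$ rather than $1/t$ is what the routine but careful calculation has to establish; up to $O(\eta_t^2)$ terms absorbed into the final error, the announced identity then follows, exactly paralleling the proof of Theorem~\ref{thm:paramBN}.
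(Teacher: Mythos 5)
Your proposal is correct and matches the paper's intended argument: the paper omits this proof precisely because it is the proof of Theorem~\ref{thm:paramBN} (Sherman--Morrison single-transition update of $\hat M$, then the gradient of the squared-density loss, then the Dirac-plus-density expansion) with $1/n_s$ replaced by $\eta_t/\rhoref(s)$ and the norm weights $\rhosimple,\rhosimple,\rhoref$ tracked through, which is exactly what you do, and your accounting of where the ratios $\rhosimple/\rhoref$ survive (both, one, or none per term) agrees with \eqref{eq:Mparam_sss}, with the exact cancellation of $\rhoref$ replacing the law-of-large-numbers step $n_s\sim t\rho_s$ so that the prefactor is $\eta_t$ rather than $1/t$.
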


The proofs of these theorems are identical to their analogues with a
single measure $\rho$, up to tracking where
$\rhosimple$ and $\rhoref$ appear instead of $\rho$ at suitable places;
they are omitted.

\section{Formal Approach to Theorem~\ref{thm:paramBN} for Continuous
Environments}
\label{sec:formalsmproof}

Contrary to TD on $M$, for Theorem~\ref{thm:paramBN}, we have defined the update
for a single transition $s\to s'$. The resulting parametric update makes
sense in any state space. But strictly speaking, this restricts
the statement of
Theorem~\ref{thm:paramBN} to discrete spaces, since it is defined via the
tabular update \eqref{eq:onlineM} which is defined only in discrete
spaces.

For TD on $M$ in general spaces (Theorem~\ref{thm:paramtd}), we directly defined the Bellman operator
on any space; the Bellman operator does not depend on a single transition
$s\to s'$, but it updates all states $s$ at once.

It is possible to proceed
analogously for Theorem~\ref{thm:paramBN}: this provides a rigorous proof
of Theorem~\ref{thm:paramBN} for general state spaces, in expectation
over the transition $s\to s'$.

We first have to define the analogue of the Bellman operator. We do this
by going back to discrete states and averaging the updates $\del M$ and
$\del V$ over transitions $s\to s'$. Averaging
\eqref{eq:onlineM} and \eqref{eq:onlineV} yields (omitting again the
$o(1/n_s)=o(1/t)$ terms)
\begin{equation}
\E_{s\sim\rho,\,s'\sim P_{ss'}} \del M_{s_1s_2}=
\sum_s \sum_{s'} \frac{\rho_s}{n_s} P_{ss'} \hat
M_{s_1s}(\1_{s_2=s}+\gamma \hat M_{s's_2}-M_{ss_2})
\end{equation}
and
\begin{equation}
\E_{s\sim\rho,\,s'\sim P_{ss'}} \del V_{s_1}= \sum_s \sum_{s'}
\frac{\rho_s}{n_s} P_{ss'} (r_s+\gamma \hat V_{s'}-\hat V_s) \hat M_{s_1
s}.
\end{equation}
Once more, since $n_s\sim t\rho_s$ when $s\to \infty$, we have
$\frac{\rho_s}{n_s}=1/t+o(1/t)$. Thus, up to $o(1/t)$ terms, the above
rewrite as
\begin{align}
\E_{s\sim\rho,\,s'\sim P_{ss'}} \del M_{s_1s_2}&=\tfrac1t
\sum_s \sum_{s'} P_{ss'} \hat
M_{s_1s}(\1_{s_2=s}+\gamma \hat M_{s's_2}-M_{ss_2})
\\
\E_{s\sim\rho,\,s'\sim P_{ss'}} \del V_{s_1}&= \tfrac1t \sum_s \sum_{s'}
P_{ss'} (r_s+\gamma \hat V_{s'}-\hat V_s) \hat M_{s_1
s}.
\end{align}
Since $\sum_{s'} P_{ss'}=1$ and $\sum_{s,s'} \hat
M_{s_1s}P_{ss'}\hat M_{s's_2}=(\hat MP\hat M)_{s_1s_2}$ and likewise
$\sum_s \hat
M_{s_1s}\hat M_{ss_2} =(\hat M^2)_{s_1s_2}$, the
update of $M$ rewrites as
\begin{equation}
\label{eq:BNproof}
\E_{s\sim\rho,\,s'\sim P_{ss'}}
\del M=\tfrac1t (\hat M +\gamma \hat MP\hat M-\hat M^2).
\end{equation}
Likewise, for the update of $\hat V$, since $\E r_s=R_s$ and $\sum_{s'}
P_{ss'}(\gamma \hat V_{s'}-\hat V_s)=(\gamma P\hat V-\hat V)_s$, we have
\begin{equation}
\E_{s\sim\rho,\,s'\sim P_{ss'}}\del V=\tfrac1t \hat M(R+\gamma P\hat
V-\hat V).
\end{equation}

Thus, in the continuous case, we can define target updates at step $t$ by
\begin{equation}
\label{eq:smtar}
M^\tar\deq M_{\theta_t} + \frac1t (M_{\theta_t}-M_{\theta_t}(\Id-\gamma
P)M_{\theta_t})
\end{equation}
(well-defined for continuous states as an operator on functions)
and
\begin{equation}
V^\tar\deq V_{\phi_t}+\frac1t M_{\theta_t} (R+\gamma P
V_{\phi_t}-V_{\phi_t})
\end{equation}
and define, as before, the losses
\begin{equation}
J(\theta)\deq \frac12 \norm{M_\theta-M^\tar}^2_{\rho}
\end{equation}
and
\begin{equation}
J^V(\phi)\deq \frac12 \norm{V_\phi-V^\tar}^2_{L^2(\rho)}.
\end{equation}

From now on we only work with $M$, as the computation for $V$ is similar
but simpler.

As in the proof of Theorem~\ref{thm:paramtd}, 
by definition of $J(\theta)$ and of the norm
$\norm{\cdot}_{\rho}$, we have
\begin{equation}
J(\theta)=\frac12 \iint j_\theta(s_1,s_2)^2\,\rho(\d s_1)\rho(\d s_2)
\end{equation}
and
\begin{equation}
\label{eq:dj2}
\partial_\theta J(\theta)=\iint j_\theta(s_1,s_2)\,\partial_\theta
j_\theta(s_1,s_2) \rho(\d s_1)\rho(\d s_2)
\end{equation}
where 
\begin{equation}
\label{eq:defj2}
j_\theta(s_1,s_2)\deq (M^\tar(s_1,\d s_2)-M_\theta(s_1,\d
s_2))/\rho(\d s_2)
\end{equation}

Since $M^\tar$ does not depend on $\theta$ (it depends on $\theta_t$, but
we optimize with respect to $\theta$ for fixed $M^\tar$), we have
\begin{equation}
\label{eq:djism2}
\partial_\theta j_\theta(s_1,s_2)=\partial_\theta
\left(-\frac{M_\theta(s_1,\d s_2)}{\rho(\d s_2)}\right)
=-\partial_\theta m_\theta(s_1,s_2)
\end{equation}
in the parameteriation $M_{\theta}(s_1,\d
s_2)=\Id+m_{\theta}(s_1,s_2)\rho(\d s_2)$.

Consequently, by \eqref{eq:dj2}, \eqref{eq:defj2} and \eqref{eq:djism2}, at $\theta=\theta_t$ we have
\begin{align}
-\partial_\theta J(\theta)_{|\theta=\theta_t}&=
\iint \left(
\frac{M^\tar(s_1,\d s_2)-M_{\theta_t}(s_1,\d s_2)}{\rho(\d s_2)}
\right)
\partial_\theta m_{\theta_t}(s_1,s_2) \rho(\d s_1)\rho(\d s_2)
\\&=
\label{eq:abstractsmstep}
\iint \left(
M^\tar(s_1,\d s_2)-M_{\theta_t}(s_1,\d s_2)
\right)
\partial_\theta m_{\theta_t}(s_1,s_2) \rho(\d s_1)
\end{align}

Define $A_\theta\deq M_\theta-\Id$. By definition of the parameterization
$M_{\theta}(s_1,\d s_2)=\Id+m_{\theta}(s_1,s_2)\rho(\d s_2)$, we have
\begin{equation}
\label{eq:defA}
A_\theta(s_1,\d s_2)=m_{\theta}(s_1,s_2)\rho(\d s_2).
\end{equation}

By a direct but tedious substitution
of $M_{\theta_t}=\Id+A_{\theta_t}$ in the expression \eqref{eq:smtar} for $M^\tar$, we find
\begin{equation}
M^\tar-M_{\theta_t}=\tfrac1t(
\gamma P+\gamma A_{\theta_t} P+ \gamma PA_{\theta_t}-A_{\theta_t}+A_{\theta_t}\gamma PA_{\theta_t}-A_{\theta_t}^2
)
\end{equation}
as operators, with the product of operator denoting composition. (For instance, for a function $f$, the operator $PA$ acts
by $(PAf)(s)=\int P(s,\d
s') A(s',\d s_2)f(s_2)$.)

Let us study the contributions of all the terms of $M^\tar-M_{\theta_t}$
to the gradient step \eqref{eq:abstractsmstep}. The $\gamma P$ term
provides a contribution
\begin{equation}
\iint \gamma P(s_1,\d s_2) \,\partial_\theta m_{\theta_t} (s_1,s_2)
\rho(\d s_1)=\gamma \E_{s\sim \rho,\,s'\sim P(s,\d s')} \,\partial_\theta
m_{\theta_t} (s,s').
\end{equation}
Next, by \eqref{eq:defA} we have
\begin{align}
(A_{\theta_t}P)(s_1,\d s_2)&=
\int  A_{\theta_t}(s_1,\d s){P(s,\d s_2)}
\\
&=\int  m_{\theta_t}(s_1,s)\rho(\d s) {P(s,\d s_2)}
\end{align}
and therefore, the
$\gamma A_{\theta_t}P$ term provides a contribution
\begin{multline}
\iint \gamma (A_{\theta_t}P)(s_1,\d s_2) \,\partial_\theta m_{\theta_t}
(s_1,s_2)\rho(\d s_1)
\\=\gamma 
\iiint m_{\theta_t}(s_1,s)\rho(\d s) {P(s,\d s_2)}\,\partial_\theta
m_{\theta_t}
(s_1,s_2)\rho(\d s_1)
\\=
\gamma \E_{s_1\sim \rho,\, s\sim \rho,\,s'\sim P(s,\d s')}
[m_{\theta_t}(s_1,s)\,\partial_\theta m_{\theta_t}(s_1,s')].
\end{multline}
Next, by \eqref{eq:defA} we have
\begin{equation}
(PA_{\theta_t})(s,\d s_2)=\int {P(s,\d s')} A_{\theta_t}(s',\d s_2)=\E_{s'\sim P(s,\d
s')} \,m_{\theta_t}(s',s_2)\rho(\d s_2)
\end{equation}
and therefore, the
$\gamma P A_{\theta_t}$ term provides a contribution
\begin{equation}
\iint \gamma (PA_{\theta_t})(s_1,\d s_2) \,\partial_\theta m_{\theta_t}
(s_1,s_2)\rho(\d s_1)
=
\gamma \E_{s\sim \rho,\,s'\sim P(s,\d s'),\,s_2\sim
\rho}\,m_{\theta_t}(s',s_2)\,\partial_\theta m_{\theta_t}(s,s_2)
\end{equation}
and by \eqref{eq:defA}, the term $-A_{\theta_t}$ provides a contribution
\begin{equation}
-\iint A_{\theta_t}(s_1,\d s_2) \,\partial_\theta m_{\theta_t}
(s_1,s_2)\rho(\d s_1)
=
-\E_{s\sim \rho,\,s_2\sim
\rho}\,m_{\theta_t}(s,s_2)\,\partial_\theta m_{\theta_t}(s,s_2).
\end{equation}
Next, the term $-A_{\theta_t}^2$ provides a contribution
\begin{multline}
-\iint (A_{\theta_t}^2)(s_1,\d s_2) \,\partial_\theta m_{\theta_t}
(s_1,s_2)\rho(\d s_1)
\\=-\iiint A_{\theta_t}(s_1,\d s) A_{\theta_t}(s,\d s_2) \,\partial_\theta
m_{\theta_t}
(s_1,s_2)\rho(\d s_1)
\\=
-\iiint m_{\theta_t}(s_1,s)\rho(\d s) m_{\theta_t}(s,s_2)\rho(\d
s_2)\,\partial_\theta
m_{\theta_t}
(s_1,s_2)\rho(\d s_1)
\\=-\E_{s\sim \rho,\, s_1\sim \rho,\, s_2\sim \rho}
\,m_{\theta_t}(s_1,s)m_{\theta_t}(s,s_2)
\,\partial_\theta
m_{\theta_t}
(s_1,s_2)
.
\end{multline}
For the final tem $\gamma A_{\theta_t}P
A_{\theta_t}$, observe that
\begin{align}
(A_{\theta_t}P
A_{\theta_t})(s_1,\d s_2)
&=\iint A_{\theta_t}(s_1,\d s)P(s,\d s')A_{\theta_t(s',\d s_2)}
\\&= \iint m_{\theta_t}(s_1,s)\rho(\d s)P(s,\d s')m_{\theta_t(s',s_2)}\rho(\d s_2)
\\&= \E_{s\sim \rho,\,s'\sim P(s,\d s')}\,
m_{\theta_t}(s_1,s)m_{\theta_t}(s',s_2)\rho(\d s_2)
\end{align}
and therefore, the contribution of the term $\gamma A_{\theta_t}P
A_{\theta_t}$ is
\begin{multline}
\gamma \iint (A_{\theta_t}PA_{\theta_t})(s_1,\d s_2) \,\partial_\theta m_{\theta_t}
(s_1,s_2)\rho(\d s_1)
\\=\gamma \E_{s_1\sim \rho,\,s_2\sim \rho,\,s\sim \rho,\,s'\sim P(s,\d s')}
\, m_{\theta_t}(s_1,s)m_{\theta_t(s',s_2)}\,\partial_\theta m_{\theta_t}
(s_1,s_2).
\end{multline}
Collecting everything, we find
\begin{multline}
-\partial_\theta J(\theta)_{|\theta=\theta_t}=
\E_{s_1\sim \rho,\,s_2\sim \rho,\,s\sim \rho,\,s'\sim P(s,\d s')}
\left[
\gamma \partial_\theta m_{\theta_t}(s,s')
+\gamma m_{\theta_t}(s_1,s)\,\partial m_{\theta_t}(s_1,s')
\right.
\\\mathbin{+}
\left.
\gamma m_{\theta_t}(s',s_2)\,\partial m_{\theta_t}(s,s_2)
- m_{\theta_t}(s,s_2)\,\partial m_{\theta_t}(s,s_2)
\right.
\\\mathbin{-}
\left.
m_{\theta_t}(s_1,s)m_{\theta_t}(s,s_2)
\,\partial_\theta
m_{\theta_t}
(s_1,s_2)
+\gamma m_{\theta_t}(s_1,s)m_{\theta_t}(s',s_2)\,\partial_\theta m_{\theta_t}
(s_1,s_2)
\right].
\end{multline}
This is the expectation over $s\sim \rho,\,s'\sim P(s,\d s')$, of the
update \eqref{eq:Mparam}. This formally proves Theorem~\ref{thm:paramBN}
for general state spaces, in expectation over $(s,s')$.


\section{Background on Singular Value Decompositions}
\label{sec:svd}

In the text, we often work with the space of functions over $S$ equipped
with the $L^2(\rho)$ norm. Since $\rho\neq \Id$, we include here a
reminder on how the usual notions of Euclidean vector spaces play out in
non-orthonormal bases. We also include details on what constitutes a
``truncated singular value decomposition''.

A Euclidean vector space $E$ is a finite-dimensional vector space equipped with a dot product; the
dot product is given by some
symmetric, positive definite matrix $q$ in some basis, namely, $\langle
x,y\rangle_E=\transp{x}qy$ for any vectors $x$ and $y$.

If $A\from E_1 \to E_2$ is a linear map between two Euclidean spaces, its
adjoint $A^\ast$ is the map from $E_2$ to $E_1$ such that $\langle
y,Ax\rangle_{E_2}=\langle A^\ast y,x\rangle_{E_1}$ for any vectors $x\in
E_1$
and $y\in E_2$. Expressed in bases of $E_1$ and $E_2$, its matrix is $A^\ast=q_1^{-1}
\transp{A} q_2$, or just $\transp{A}$ if the bases are orthonormal.

Such a map $A$ is orthogonal if $AA^{\ast}=\Id_{E_2}$ and $A^\ast
A=\Id_{E_1}$.

The Hilbert--Schmidt norm for an operator $M$ on a Euclidean vector space
is $\Tr(M^\ast M )$ where $M^\ast$ is the adjoint of $M$. In an
orthonormal basis this is $\Tr(\transp{M}M)$ viewing $M$ as a matrix, but
in a non-orthonormal basis this is $\Tr(q^{-1}\transp{M}qM)$ where $q$ is
the matrix defining the norm in the basis.

A \emph{singular value decomposition} of such a map $A$ is a triplet of
linear maps $U\from \R^{\dim(E_2)}\to E_2$, $D\from \R^{\dim(E_1)}\to
\R^{\dim(E_2)}$ and $V\from \R^{\dim(E_1)}\to E_1$ such that
$A=UDV^\ast$, $U$ and $V$ are orthogonal, and $D$ is rectangular
diagonal. Equivalently, a singular value decomposition can be written as
$Ax=\sum_i u_i d_i \langle v_i,x\rangle_{E_1}$ where each $d_i> 0$, the $u_i$'s make an
orthonormal family in $E_2$, and the $v_i$'s make an orthonormal family
in $E_1$ (or equivalently, an orthonormal family of linear forms on $E_1$
by identifying $v_i$ with the map $x\mapsto \langle v_i,x\rangle_{E_1}$).

\begin{defi}[ (Truncated SVD)]
\label{def:truncsvd}
A linear map $B$ is a \emph{truncated singular value decomposition} of a
linear map $A\from E_1\to E_2$ if there is a singular value decomposition
$A=UDV^\ast$ of $A$ and a rectangular diagonal matrix $D'$ such that $D'$
is obtained from $D$ by replacing some elements with $0$, and
$B=UD'V^\ast$.
\end{defi}

\begin{lem}
\label{lem:truncsvd}
A linear map $B\from E_1\to E_2$ is a truncated singular value
decomposition of $A\from E_1\to E_2$ if and only if $A$ and $B$ are equal
on $(\Ker B)^\bot$ and the image of $\Ker B$ by $A$ is orthogonal to the
image of $B$.
\end{lem}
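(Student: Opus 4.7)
The plan is to prove both implications by tracking where the SVD structure decomposes $E_1$ as $(\Ker B)^\bot \oplus \Ker B$ and correspondingly decomposes $E_2$ as $\Img B \oplus (\Img B)^\bot$. The orthogonality condition in the statement is exactly what lets one glue two independent SVDs (one on $(\Ker B)^\bot$, one on $\Ker B$) into a single SVD of $A$.

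For the forward direction, I would start from $A = UDV^\ast$ and $B = UD'V^\ast$ with $D'$ obtained from $D$ by zeroing entries at indices outside some set $I$. Writing $Ax=\sum_i d_i u_i \langle v_i,x\rangle_{E_1}$ and $Bx=\sum_{i\in I} d_i u_i \langle v_i,x\rangle_{E_1}$, the orthonormality of the $v_i$'s gives
\begin{equation}
\Ker B = \mathrm{span}(v_i:i\in I)^\bot,\qquad (\Ker B)^\bot = \mathrm{span}(v_i:i\in I).
\end{equation}
On $(\Ker B)^\bot$ the two sums agree term by term, so $A=B$ there. Moreover $A(\Ker B)\subseteq \mathrm{span}(u_i:i\notin I)$ while $\Img B\subseteq \mathrm{span}(u_i:i\in I)$, so the two are orthogonal by orthonormality of the $u_i$'s.

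For the reverse direction, assume the two geometric conditions hold. First, take an SVD of the restriction $B|_{(\Ker B)^\bot}:(\Ker B)^\bot\to \Img B$, which is a bijection between finite-dimensional Euclidean spaces: this yields an orthonormal basis $(v_i)_{i\in I}$ of $(\Ker B)^\bot$, an orthonormal basis $(u_i)_{i\in I}$ of $\Img B$, and positive $d_i$'s with $Bv_i=d_i u_i$. Because $A$ agrees with $B$ on $(\Ker B)^\bot$, we also have $Av_i=d_i u_i$. Next, take an SVD of $A|_{\Ker B}:\Ker B \to A(\Ker B)$, giving an orthonormal family $(v_j)_{j\in J}\subset \Ker B$, an orthonormal family $(u_j)_{j\in J}\subset A(\Ker B)$, and positive $d_j$'s with $Av_j=d_j u_j$. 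The families $(v_i)_{i\in I}$ and $(v_j)_{j\in J}$ are jointly orthonormal because $(\Ker B)^\bot \perp \Ker B$; the families $(u_i)_{i\in I}$ and $(u_j)_{j\in J}$ are jointly orthonormal thanks to the assumption $A(\Ker B)\perp \Img B$.

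It remains to check that $A = \sum_{i\in I} d_i u_i \langle v_i,\cdot\rangle + \sum_{j\in J} d_j u_j \langle v_j,\cdot\rangle$ (which, after truncating the $J$-terms, gives exactly $B$). For this, decompose any $x\in E_1$ as $x=x_1+x_2$ with $x_1\in(\Ker B)^\bot$ and $x_2\in \Ker B$: $Ax_1$ is computed by the first SVD and $Ax_2$ by the second, and one checks that the cross terms $\langle v_i,x_2\rangle$ and $\langle v_j,x_1\rangle$ vanish. The only remaining point is that $x_2$ may have a component in $(\text{span}(v_j:j\in J))^\bot\cap \Ker B$, but this component lies in $\Ker(A|_{\Ker B})$ by the defining property of the SVD of $A|_{\Ker B}$, so $A$ sends it to $0$ as required.

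The main obstacle will be the glueing step: orthonormality of the combined right singular vectors is automatic from $\Ker B \perp (\Ker B)^\bot$, but orthonormality of the combined left singular vectors uses precisely the hypothesis $A(\Ker B)\perp \Img B$. Without that hypothesis one cannot merge the two partial SVDs into one.
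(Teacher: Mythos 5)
Your proposal is correct and follows essentially the same route as the paper: the reverse direction glues SVDs of the restrictions to $\Ker B$ and $(\Ker B)^\bot$ (using the hypothesis $A(\Ker B)\perp\Img B$ exactly where the paper does), and the forward direction reads the kernel/image conditions off the decomposition, which you do directly on the sum form while the paper argues on $D,D'$ and transports through $U,V$. The only nitpick is that $\Ker B=\mathrm{span}(v_i:i\in I)^\bot$ requires $I$ to contain only indices with $d_i>0$ (harmless, since zero terms can be dropped from $I$), the same convention the paper itself adopts.
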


\begin{proof}
$(\Leftarrow)$
Define $E'_1=\Ker B$ and $E''_1=(\Ker B)^\bot$ so that $E_1=E'_1\oplus
E''_1$.
Let $A'$ and $A''$ be the restrictions of $A$ to $E'_1$ and $E''_1$
respectively, so that $A=A'+A''$. Define $B'$ and $B''$ likewise.

Since $E'_1$ is $\Ker B$, we have $B'=0$ so $B=B''$.

By assumption, $A$ and $B$ are equal on $E''_1$. Therefore,
$A''=B''$, so $B=A''$.

By assumption, the image of $E'_1$ by $A$ is orthogonal to the image of
$B$. The former is $\Img A'$ while the latter is $\Img A''$. Therefore,
$\Img A'\bot \Img A''$.

Consider singular value decompositions of $A'$ and $A''$ as
$A'=\sum_i u'_i d'_i v'_i$ and $A''=\sum_j u''_j d''_j v''_j$, where
the $d'_i$ are positive real numbers, the $u'_i$ are an orthonormal basis
of $\Img A'$, the $v'_i$ are an orthonormal set of linear forms on $E'_1$, and
likewise for $A''$. (Any zero singular values have been dropped in this
decomposition.)

Since $\Img A'\bot \Img A''$, the $u'_i\,$'s are orthogonal to the
$u''_j\,$'s. Likewise, since the decomposition $E_1=E'_1\oplus E''_1$ is
orthogonal, the $v'_i\,$'s are orthogonal to the $v''_j\,$'s as linear
forms on $E_1$.

It follows that $\sum_i u'_i d'_i v'_i+\sum_j u''_j d''_j v''_j$ is a
singular value decomposition of $A$ (with the zero singular values
omitted). Since $B=A''$, $\sum_j u''_j
d''_j v''_j$ is a singular value decomposition of $B$, so that $B$ is a
truncated SVD of $A$.

$(\Rightarrow)$ Let $A=UDV^\ast$ and $B=UD'V^\ast$ as in
Definition~\ref{def:truncsvd}.
Up to swapping rows and columns, we can assume that the
nonzero entries of $D$ and $D'$ are located in the first rows. Let
$k$ be the number of nonzero entries in $D'$. Then $\Ker D'$ is spanned
by the last $\dim(E_1)-k$ basis vectors in $\R^{\dim(E_1)}$, and $(\Ker
D')^\bot$ is spanned by the first $k$ basis vectors. Thus, by construction, $D$
and $D'$ coincide on $(\Ker
D')^\bot$. Moreover, $\Img D'$ is spanned by the first $k$ basis vectors,
and $D(\Ker D')$ is spanned by the last $\dim(E_1)-k$ basis vectors, so
$\Img D'$ and $D(\Ker D')$ are orthogonal.

Since $A=UDV^\ast$ and $B=UD'V^\ast$, and since $U$ is invertible, $A$ and $B$ are equal on $(\Ker
B)^\bot$ if and only if $DV^\ast$ and $D'V^\ast$ are equal on $(\Ker
B)^\bot$. Since $V^\ast$ is invertible, this happens if and only
if $D$ and $D'$ are equal on $V^\ast((\Ker B)^\bot)$. Since $V^\ast$ is orthogonal,
the latter is $(V^\ast(\Ker B))^\bot$.

Since $U$ and
$V$ are orthogonal, hence invertible, one has $\Ker B=\Ker(UD'V^\ast)=\Ker(D'V^\ast)=V(\Ker
D')$. Hence $V^\ast (\Ker B)=\Ker D'$. Thus, we need $D$ and $D'$ to be
orthogonal on $\Ker D'$, which we have established above.

Next, let us prove that $A(\Ker B)\bot \Img B$, namely, that $UDV^\ast(\Ker
B)\bot \Img (UD'V^\ast)$. Since $U$ is orthogonal, this is equivalent to
$DV^\ast(\Ker B)\bot \Img (D'V^\ast)$. We have seen that $V^\ast(\Ker
B)=\Ker D'$; moreover $\Img (D'V^\ast)\subset \Img(D')$, so it is enough
to prove that $D(\Ker D')\bot \Img D'$, which we have established above.
This proves the first part of the equivalence.
\end{proof}

\end{document}